\newcommand{\pluseq}{\mathrel{+}=}
\newcommand{\mineq}{\mathrel{-}=}
\def\R{\mathbb{R}}
\def\L{\mathbb{L}}
\def\T{\mathcal{T}}
\def\Primal{\sf{Primal-CR}\xspace}
\def\Primalpp{\sf{Primal-CR++}\xspace}
\def\R{\mathbb{R}}
\newtheorem{theorem}{Theorem}
\pgfplotsset{compat=1.14}
      \DeclareSymbolFontAlphabet{\mathbbl}{bbold}
      \DeclareSymbolFontAlphabet{\mathbb}{AMSb}
  \setlist{noitemsep}
  \setlist[1]{noitemsep}
  \setlist[1]{nosep}
\newcommand*{\addFileDependency}[1]{
        \typeout{(#1)}
        \@addtofilelist{#1}
        \IfFileExists{#1}{}{\typeout{No file #1.}}
}
\definecolor{codegreen}{rgb}{0,0.6,0}
\definecolor{codegray}{rgb}{0.5,0.5,0.5}
\definecolor{codepurple}{rgb}{0.58,0,0.82}
\definecolor{backcolour}{rgb}{0.95,0.95,0.92}
\lstdefinestyle{mystyle}{
    backgroundcolor=\color{backcolour},   
    commentstyle=\color{codegreen},
    keywordstyle=\color{magenta},
    numberstyle=\tiny\color{codegray},
    stringstyle=\color{codepurple},
    basicstyle=\footnotesize,
    breakatwhitespace=false,         
    breaklines=true,                 
    captionpos=b,                    
    keepspaces=true,                 
    numbers=left,                    
    numbersep=5pt,                  
    showspaces=false,                
    showstringspaces=false,
    showtabs=false,                  
    tabsize=2
}
\title          {Advances in Collaborative Filtering and Ranking}
\author         {Liwei Wu}
\abstract{
In this dissertation, we cover some recent advances in collaborative filtering and ranking. 
In chapter 1, we give a brief introduction of the history and the current landscape of collaborative filtering and ranking; chapter 2 we first talk about pointwise collaborative filtering problem with graph information, and how our proposed new method can encode very deep graph information which helps four existing graph collaborative filtering algorithms; chapter 3 is on the pairwise approach for collaborative ranking and how we speed up the algorithm to near-linear time complexity; chapter 4 is on the new listwise approach for collaborative ranking and how the listwise approach is a better choice of loss for both explicit and implicit feedback over pointwise and pairwise loss; chapter 5 is about the new regularization technique Stochastic Shared Embeddings (SSE) we proposed for embedding layers and how it is both  theoretically sound and empirically effectively for 6 different tasks across recommendation and natural language processing; chapter 6 is how we introduce personalization for the state-of-the-art sequential recommendation model with the help of SSE, which plays an important role in preventing our personalized model from overfitting to the training data; chapter 7, we summarize what we have achieved so far and predict what the future directions can be; chapter 8 is the appendix to all the chapters.
}
\begin{document}

% need this command to keep single spacing in the bibliography when using natbib

\bibliographystyle{IEEEtran}
%many other bibliography styles are available (IEEEtran, mla, etc.). Use one appropriate for your field.

\makeintropages %Processes/produces the preliminary pages

% chapter 1

\def\Primal{\sf{Primal-CR}\xspace}
\def\Primalpp{\sf{Primal-CR++}\xspace}
\def\cpp{{C\nolinebreak[4]\hspace{-.05em}\raisebox{.4ex}{\tiny\bf ++ }}}

\chapter{Introduction}
\section{Overview}
Nowadays in online retail and online content delivery applications, it is commonplace to have embedded recommendation systems algorithms that recommend items to users based on previous user behaviors and ratings. The field of recommender systems has gained more and more popularity ever since the famous Netflix competition \cite{Bennett07thenetflix}, in which competitors utilize user ratings to predict ratings for each user-movie pair and the final winner takes home 1 million dollars. During the competition, 2 distinct approaches stand out: one being the Restricted Boltzmann Machines \cite{salakhutdinov2007restricted} and the other being matrix factorization \cite{mnih2008probabilistic, koren2009matrix}. The combination of both approaches work well during the competition, but due to the ease of training and inference, matrix factorization approaches have dominated the collaborative filtering field before the widespread adoption of deep learning methods \cite{schafer2007collaborative}. Collaborative filtering refers to making automatic predictions (filtering) about the interests of a user by collecting preferences or taste information from many users (collaborating). Usually this does not require approaching the recommendation problem as a ranking problem but rather pretends it is a regression or classification problem. Of course, there may be some loss incurred when using a regression or classification loss for ultimately what is a ranking problem. 
% this is too ideal in practice. 
Because at the end of day, the ordering is the most important thing, which is directly associated with the recommender system performance. Collaborative ranking approaches mitigate the concerns by using a ranking loss. The ranking loss can be either pairwise or listwise. The difference among pointwise, pairwise and listwise are rooted in the distinct interpretations of the same data points. Pointwise approaches assume each user-item rating datum is independent; pairwise approaches assume that pairwise comparisons for two items by the same user are independent; the listwise approaches view the list of item preferences as a whole and treats different users' list as independent data points. In a strict definition, the Collaborative Filtering approaches refers to the pointwise approach while the Collaborative Ranking refers to the pairwise and listwise approaches. In a loose definition, Collaborative Ranking sometimes is viewed as a sub-field of Collaborative Filtering. 

% In another orthogonal, 
In another classification of these approaches, recommender systems can be divided into those designed for explicit feedback, such as ratings \cite{koren2009matrix}, and those for implicit feedback, based on user engagement \cite{hu2008collaborative}. Recently, implicit feedback datasets, such as user clicks of web pages, check-in’s of restaurants, likes of posts, listening behavior of music, watching history and purchase history, are increasingly prevalent. Unlike explicit feedback, implicit feedback datasets can be obtained without users noticing or active participation. All collaborative filtering and collaborative ranking  approaches can be divided into these 6 (3 by 2) categories.

So far, we have not touched the features used in recommender systems. Most open datasets do not have good user-side features due to privacy concerns. Even for item side, most datasets including Netflix, Movielens datasets do not have features prepared. However, feature information is crucial for better recommendation and ranking performances besides the fundamental approaches and data. There are many useful features but among them, probably the most challenging ones are including graphs that encode item or user relationships \cite{rao2015collaborative, berg2017graph} and temporal information \cite{hidasi2015session, kang2018self}. The ways of constructing graphs can vary. One way to define graph over users is to exploit the friendship relationships among friends. But there are many ways to define such graphs. If there is more than one type of relationship, then we call it knowledge graphs instead of graphs. Knowledge graphs widely exist: for example, between 2 movies, they could be starred by the same actors or they could be the same genre of movies. In the field of collaborative filtering with graphs, \cite{rao2015collaborative, berg2017graph} find using graph indeed improves recommendation performances. As to temporal information, it is motivated by the fact that user-item interactions do not happen at one time and then stay static. Instead, usually they occur in a temporal order. This means standard train/validation/test splits may not be realistic. Because during inference, we want to predict future interactions only based on historical interactions. In sequential recommendation \cite{kang2018self}, train/validation/test are split in temporal ordering so they do not overlap. It has been shown in such setting, temporal ordering plays a large role in final ranking results \cite{hidasi2015session, kang2018self}.

\begin{table*}
  \caption{Summary of Different Fundamental Approaches Before This Dissertation.} 
%   \vskip -0.1in
  \label{tab:approaches_before}
  \resizebox{0.5\textwidth}{!}{
\begin{tabular}{ |c | c| c| }
\hline
  & Explicit Feedback & Implicit Feedback \\ 
  \hline
 Point-wise Approach & \cite{koren2009matrix, mnih2008probabilistic}  &   \cite{hu2008collaborative, pan2008one} \\\hline
 Pair-wise Approach & \cite{park2015preference} & \cite{rendle2009bpr}  \\ \hline
 List-wise Approach & \cite{xia2008listwise, huang2015listwise} & \\ \hline
\end{tabular}
}
\end{table*}

\begin{table*}
  \caption{Summary of Different Fundamental Approaches After This Dissertation.} 
%   \vskip -0.1in
  \label{tab:approaches_after}
  \resizebox{0.5\textwidth}{!}{
\begin{tabular}{ |c | c| c| }
\hline
  & Explicit Feedback & Implicit Feedback \\ 
  \hline
 Point-wise Approach & \cite{koren2009matrix, mnih2008probabilistic}  &   \cite{hu2008collaborative, pan2008one} \\\hline
 Pair-wise Approach & \cite{park2015preference} \textbf{\cite{wu2017large}} & \cite{rendle2009bpr}  \\ \hline
 List-wise Approach & \cite{xia2008listwise, huang2015listwise} & \textbf{\cite{wu2018sql}} \\ \hline
\end{tabular}
}
\end{table*}

\section{Contributions and Outline of This Thesis}
This dissertation summarizes several published and under-review works that advance the field of collaborative filtering and ranking.
Our first main contribution is that we fill out the void in Table~\ref{tab:approaches_before}. Before this dissertation (the chapter 4 \cite{wu2018sql}), no one had successfully applied the listwise approach to the implicit feedback setting due to the difficulty of the problem, because in implicit feedback contain only 1's and 0's without different level of ratings. A second contribution is that we speed up the previous best pairwise approaches for explicit feedback significantly. We achieved near-linear time complexity, which allows us to scale up to the full Netflix dataset without subsampling \cite{wu2017large}. We not only contribute to the fundamental approaches but also to extended approaches that utilize extra information, including graph and temporal ordering information. On one hand, we propose a novel way to encode long range graph interactions without require any training using bloom filters as backbone \cite{wu2019graph}. 
On the other hand, with the help of a new embedding-layer regularization called Stochastic Shared Embeddings (SSE) \cite{wu2019stochastic}, we can also introduce personalization for the state-of-the-art sequential recommendation model and achieve much better ranking performance with our personalized model \cite{wu2019temporal}, where personalization is crucial for the success of recommender systems unlike most natural language tasks. This new regularization not only helps existing collaborative filtering and collaborative ranking algorithms but also benefits methods in natural language processing in fields like machine translation and sentiment analysis \cite{wu2019stochastic}. 

The outline of this thesis is as follows: chapter 2 we first talk about pointwise collaborative filtering problem with graph information, and how our proposed new method can encode very deep graph information which helps four existing graph collaborative filtering algorithms; chapter 3 is on the pairwise approach for collaborative ranking and how we speed up the algorithm to near-linear time complexity; chapter 4 is on the new listwise approach for collaborative ranking and how the listwise approach is a better choice of loss for both explicit and implicit feedback over pointwise and pairwise loss; chapter 5 is about the new regularization technique Stochastic Shared Embeddings (SSE) we proposed for embedding layers and how it is both  theoretically sound and empirically effectively for 6 different tasks across recommendation and natural language processing; chapter 6 is how we introduce personalization for the state-of-the-art sequential recommendation model with the help of SSE, which plays an important role in preventing our personalized model from overfitting to the training data.

\subsection*{Summary Chapter 2:}
In this chapter, we consider recommender systems with side information in the form of graphs. Existing collaborative filtering algorithms mainly utilize only immediate neighborhood information and do not efficiently take advantage of deeper neighborhoods beyond 1-2 hops. The main issue with exploiting deeper graph information is the rapidly growing time and space complexity when incorporating information from these neighborhoods. In this chapter, we propose using Graph DNA, a novel Deep Neighborhood Aware graph encoding algorithm, for exploiting multi-hop neighborhood information. DNA encoding computes approximate deep neighborhood information in linear time using Bloom filters, %a space-efficient probabilistic data structure 
and results in a per-node encoding whose dimension is logarithmic in the number of nodes in the graph. It can be used in conjunction with both feature-based and graph-regularization-based collaborative filtering algorithms.  Graph DNA has the advantages of being memory and time efficient and providing additional regularization when compared to directly using higher order graph information. Code is open-sourced at \url{https://github.com/wuliwei9278/Graph-DNA}.
% We provide theoretical performance bounds for graph DNA encoding, 
% and experimentally show that graph DNA can be used with 4 popular collaborative filtering algorithms to consistently boost their performances with little computational and memory overhead. 
This work is going to be published at the 23rd International Conference on Artificial Intelligence and Statistics (AISTATS 2020).

\subsection*{Summary Chapter 3:}
In this chapter, we consider the Collaborative Ranking (CR) problem for recommendation systems. Given a set of pairwise preferences between items for each user, collaborative ranking can be used to rank un-rated items for each user, and this ranking can be naturally used for recommendation. 
It is observed that collaborative ranking algorithms usually achieve better performance since they directly minimize the ranking loss; however, they are rarely used in practice due to the poor scalability. 
All the existing CR algorithms have time complexity at least $O(|\Omega|r)$ per iteration, where $r$ is the target rank and 
$|\Omega|$ is number of pairs which grows quadratically with number of ratings per user. 
For example, the Netflix data contains totally 20 billion rating pairs, and at this scale all the current algorithms have to work with significant subsampling, resulting in poor prediction on testing data. 

In this chapter, we propose a new collaborative ranking algorithm called Primal-CR that reduces the time complexity to $O(|\Omega|+d_1 \bar{d}_2 r)$, where $d_1$ is number of users
and $\bar{d}_2$ is the averaged number of items rated by a user. Note that $d_1 \bar{d}_2$ 
is strictly smaller and often much smaller than $|\Omega|$. 

Furthermore, by exploiting the fact that most data is in the form of numerical ratings instead of pairwise comparisons, 
we propose Primal-CR++ with $O(d_1\bar{d}_2 (r+ \log \bar{d}_2 ))$ time complexity. 
Both algorithms have better theoretical time complexity than existing approaches and also 
outperform existing approaches in terms of NDCG and pairwise error on real data sets. 
To the best of our knowledge, this is the first collaborative ranking algorithm capable of working on the full Netflix dataset using all the 20 billion rating pairs, and this leads to a model with much better recommendation compared with previous models trained on subsamples. 
Finally, compared with classical matrix factorization algorithm which also requires $O(d_1 \bar{d}_2 r)$ time, our algorithm has almost the same efficiency while making much better recommendations since we consider the ranking loss. Code is open-sourced at \url{https://github.com/wuliwei9278/ml-1m} (Julia version) and \url{https://github.com/wuliwei9278/primalCR} (\cpp version).
This work has been published at the 23rd ACM SIGKDD International Conference on Knowledge Discovery and Data Mining (KDD 2017).

\subsection*{Summary Chapter 4:}
In this chapter, we propose a listwise approach for constructing user-specific rankings in recommendation systems in a collaborative fashion.
We contrast the listwise approach to previous pointwise and pairwise approaches, which are based on treating either each rating or each pairwise comparison as an independent instance respectively.
By extending the work of \cite{cao2007learning}, we cast listwise collaborative ranking as maximum likelihood under a permutation model which applies probability mass to permutations based on a low rank latent score matrix.
We present a novel algorithm called SQL-Rank, which can accommodate ties and missing data and can run in linear time. 
%To our best of knowledge, it is the first listwise collaborative ranking algorithm that utilizes a likelihood loss and is general enough to apply in both explicit and implicit feedback setting.
We develop a theoretical framework for analyzing listwise ranking methods based on a novel representation theory for the permutation model.
Applying this framework to collaborative ranking, we derive asymptotic statistical rates as the number of users and items grow together.
We conclude by demonstrating that our SQL-Rank method often outperforms current state-of-the-art algorithms for implicit feedback such as Weighted-MF and BPR and achieve favorable results when compared to explicit feedback algorithms such as matrix factorization and collaborative ranking. Code is open-sourced at \url{https://github.com/wuliwei9278/SQL-Rank}.
This work has been published at the Thirty-fifth International Conference on Machine Learning (ICML 2018).

\subsection*{Summary Chapter 5:}
In deep neural nets, lower level embedding layers account for a large portion of the total number of parameters. Tikhonov regularization, graph-based regularization, and hard parameter sharing are approaches that introduce explicit biases into training in a hope to reduce statistical complexity. Alternatively, we propose stochastic shared embeddings (SSE), a data-driven approach to regularizing embedding layers, which stochastically transitions between embeddings during stochastic gradient descent (SGD). Because SSE integrates seamlessly with existing SGD algorithms, it can be used with only minor modifications when training large scale neural networks. We develop two versions of SSE: SSE-Graph using knowledge graphs of embeddings; SSE-SE using no prior information. We provide theoretical guarantees for our method and show its empirical effectiveness on 6 distinct tasks, from simple neural networks with one hidden layer in recommender systems, to the transformer and BERT in natural languages. We find that when used along with widely-used regularization methods such as weight decay and dropout, our proposed SSE can further reduce overfitting, which often leads to more favorable generalization results. Code is open-sourced at \url{https://github.com/wuliwei9278/SSE}.
This work has been published at the Thirty-third Annual Conference on Neural Information Processing Systems (NeurIPS 2019).

\subsection*{Summary Chapter 6:}
Temporal information is crucial for recommendation problems because user preferences are naturally dynamic in the real world. Recent advances in deep learning, especially the discovery of various attention mechanisms and newer architectures in addition to widely used RNN and CNN in natural language processing, have allowed for better use of the temporal ordering of items that each user has engaged with. In particular, the SASRec model, inspired by the popular Transformer model in natural languages processing, has achieved state-of-the-art results. However, SASRec, just like the original Transformer model, is inherently an un-personalized model and does not include personalized user embeddings. To overcome this limitation, we propose a Personalized Transformer (SSE-PT) model, outperforming SASRec by almost 5\% in terms of NDCG@10 on 5 real-world datasets. Furthermore, after examining some random users' engagement history, we find our model not only more interpretable but also able to focus on recent engagement patterns for each user. Moreover, our SSE-PT model with a slight modification, which we call SSE-PT++, can handle extremely long sequences and outperform SASRec in ranking results with comparable training speed, striking a balance between performance and speed requirements. Our novel application of the Stochastic Shared Embeddings (SSE) regularization is essential to the success of personalization. Code and data are open-sourced at \url{https://github.com/wuliwei9278/SSE-PT}. This work is currently still under review.

\newcommand{\graphbloom}{{\it GraphBloom}\xspace}

\newenvironment{proof}{\paragraph{Proof:}}{\hfill$\square$}

\chapter{Collaborative Filtering with Graph Encoding}

 \section{Introduction}

Recommendation systems are increasingly prevalent due to content delivery platforms, e-commerce websites, and mobile apps \cite{shani2008mining}.
Classical collaborative filtering algorithms use matrix factorization to identify latent features that describe the user preferences and item meta-topics from partially observed ratings \cite{koren2009matrix}.
In addition to rating information, many real-world recommendation datasets also have a wealth of side information in the form of graphs, and incorporating this information often leads to performance gains \cite{rao2015collaborative, zhou2012kernelized, liang2016factorization}.
However, each of these only utilizes the immediate neighborhood information of each node in the side information graph. %, relying on the immediate neighbors of a node. 
 More recently, \cite{berg2017graph} incorporated graph information when learning features with a Graph Convolution Network (GCN) based recommendation algorithm.
GCNs \cite{kipf2016semi} constitute flexible methods for incorporating graph structure beyond first-order neighborhoods, but their training complexity typically scales rapidly with the depth, even with sub-sampling techniques \cite{chen2018stochastic}.
% they have been incorporated into ?? encodes side graph into features into the Graph Convolution Network (GCN) based recommendation algorithm. 
Intuitively, exploiting higher-order neighborhood information could benefit the generalization performance, especially when the graph is sparse, which is usually the case in practice. The main caveat of exploiting higher-order graph information is the high computational and memory cost when computing higher-order neighbors since the number of $t$-hop neighbors typically grows exponentially with $t$. 

We aim to utilize higher order graph information without introducing much computational and memory overhead. We propose % to encode the higher order graph information into 
a Graph Deep Neighborhood Aware (Graph DNA) encoding, which approximately captures the higher-order neighborhood information of each node via Bloom filters \cite{bloom1970space}.
Bloom filters encode neighborhood sets as $c$ dimensional 0/1 vectors, where $c = O(\log n)$ for a graph with $n$ nodes, which approximately preserves membership information.
This encoding can then be combined with both graph regularized or feature based collaborative filtering algorithms, with little computational and memory overhead. 
In addition to computational speedups, we find that Graph DNA achieves better performance over competitors. We show that our Graph DNA encoding can be used with several collaborative filtering algorithms: graph-regularized matrix factorization with explicit and implicit feedback \cite{zhou2012kernelized, rao2015collaborative}, co-factoring \cite{liang2016factorization}, and GCN-based recommendation systems \cite{monti2017geometric}. %For some real-world datasets
In some cases, using information from deeper neighborhoods (like $4^{th}$ order) yields a 15x increase in performance, with graph DNA encoding yielding a 6x speedup compared to directly using the $4^{th}$ power of the graph adjacency matrix.  %, which we hypothesize is due to the unique nature of Graph DNA and its connection to the shortest path length distance. We make this connection precise with theoretical bounds in Section \ref{sec:graph-bloom}.

\section{Related Work}
\label{sec:bloom-filter}
\vspace{-2mm}
% \subsection{Collaborative Filtering}
Matrix factorization has been used extensively in recommendation systems with both explicit \cite{koren2009matrix} and implicit \cite{hu2008collaborative} feedback. Such methods compute low dimensional user and item representations; their inner product approximates the observed (or to be predicted) entry in the target matrix. To incorporate graph side information in these systems, \cite{rao2015collaborative, zhou2012kernelized} used a graph Laplacian based regularization framework that forces a pair of node representations to be similar if they are connected via an edge in the graph. In \cite{yu2017unified}, this was extended to the implicit feedback setting. \cite{liang2016factorization} proposed a method that incorporates first-order information of the rating bipartite graph into the model by considering item co-occurrences. More recently, GC-MC \cite{berg2017graph} used a GCN approach performing convolutions on the main bipartite graph by treating the first-order side graph information as features, and \cite{monti2017geometric} proposed combining GCNs and RNNs for the same task. 
%{\color{red}Looks like this only uses item co-occurrences so should be first order information? }
%{\color{red}Also talk about GCN paper, and say the above methods are using first order information on side graph. }

Methods that use higher order graph information are typically based on taking random walks on the graphs \cite{gori2007itemrank}. %In \cite{gori2007itemrank} the authors view collaborative filtering as functions of random walks on graphs. 
\cite{jamali2009trustwalker}  extended this method to include graph side information in the model. Finally, the PageRank \cite{page1999pagerank} algorithm can be seen as computing the steady state distribution of a Markov network, and similar methods for recommender systems was proposed in \cite{abbassi2007recommender, xie2015edge}. 

For a complete list of related works of representation learning on graphs, we refer the interested user to \cite{hamilton2017representation}. For the collaborative filtering  setting, \cite{monti2017geometric, berg2017graph} use Graph Convolutional Neural Networks (GCN) \cite{defferrard2016convolutional}, but with some modifications. Standard GCN methods without substantial modifications cannot be directly applied to collaborative filtering rating datasets, including well-known approaches like GCN \cite{kipf2016semi} and GraphSage \cite{hamilton2017inductive}, because they are intended to solve semi-supervised classification problem over graphs with nodes' features. %, whilst we are dealing with collaborative filtering problems. 
PinSage \cite{ying2018graph} is the GraphSage extension to non-personalized graph-based recommendation algorithm but is not meant for collaborative filtering problems. GC-MC \cite{berg2017graph} extends GCN to collaborative filtering, albeit it is less scalable than \cite{ying2018graph}. Our Graph DNA scheme can be used to obtain graph features in these extensions.
% \nikhil{this bit needs to be clarified. if GCNs cannot be directly used, the above methods make some modifications to make it usable for collab. filtering. }
% {\color{red}We need to explicitly say why they can't apply to our case. }
% {\color{blue} liwei: maybe we want to cite PinSage? They can be applied with modifications, like PinSage. But We already compared with GC-MC, which is similar to PinSage idea (though latter does not require bipartite graph fitting in memory). We can say that our Graph DNA can also be used in PinSage just like the usage for GC-MC. }
%Our paper belongs to the group of neighborhood aggregation algorithms but 
In contrast to the above-mentioned methods involving GCNs, we do not use the data driven loss function to train our graph encoder. This property makes our graph DNA suitable for both transductive as well as inductive problems.

% \subsection{Bloom Filter}
% {\color{red}(Cho: I think we also need related work for recommender systems? This is more like background. )}
%Since their introuction in 1970, Bloom filters \cite{bloom1970space}, have been widely used in database and computer networks applications \cite{broder2004network}. In machine learning, Bloom filters have been used in multi-label classification tasks \cite{cisse2013robust}, and to compress input and output embedding representations in deep neural networks \cite{serra2017getting} similar to other hashing-based neural network compression works \cite{shi2009hash, han2015deep, courbariaux2015binaryconnect} 
Bloom filters have been used in Machine Learning for multi-label classification \cite{cisse2013robust}, and for hashing deep neural network models representations \cite{shi2009hash, han2015deep, courbariaux2015binaryconnect}. However, to the best of our knowledge, until now, they have not been used to encode graphs, nor has this encoding been applied to recommender systems. So it would be interesting to extend our work to other recommender systems settings, such as \cite{wu2019temporal} and \cite{wu2019stochastic}.

%to reduce it to a binary classification in which label sets are represented by low dimensional binary vectors. This has proven to be very useful when the number of labels is large. Another interesting application is to compress input and output embedding representations in deep neural networks \cite{serra2017getting}, which is an extension of previous hashing-based neural networks compression works \cite{shi2009hash, han2015deep, courbariaux2015binaryconnect}.

\section{Methodology}
\label{sec:dna-algo}
We consider the recommender system problem with a partially observed rating matrix $R$ and a %side information 
Graph that encodes side information $G$. In this section, we will introduce the Graph DNA algorithm for encoding deep neighborhood information in $G$. In the next section, we will show how this encoded information can be applied to various graph based recommender systems. 

\subsection{Bloom Filter}
The Bloom filter~\cite{bloom1970space} is a probabilistic data structure designed
to represent a set of elements. Thanks to its space-efficiency and
simplicity, Bloom filters are applied in many real-world applications such as
database systems \cite{borthakur2011apache, chang2008bigtable}.
% (e.g., Apache HBase~\cite{borthakur2011apache}, Google Bigtable~\cite{chang2008bigtable}), and the
% identification of malicious URLs in Chrome Browser~\cite{kate2015two}. \nikhil{do we need to talk about Chrome and HBase here? maybe remove it if we need space} 
A Bloom filter $\cB$
consists of $k$ independent hash functions
$h_t(x) \rightarrow \cbr{1, \ldots, c}$. 
%, which maps an input $x$ to an number among $\cbr{1,\ldots,c}$. 
The Bloom filter $\cB$ of size $c$ can be represented as a length $c$ bit-array $\bb$. More details about Bloom filters can be found in~\cite{broder2004network}.
Here we highlight a few desirable properties of Bloom filters essential to
our graph DNA encoding: %\nikhil{remove 'enumerate' to use less space} {\color{blue} liwei: did a A/B test here, it turns out enumerate saves one line of space...}
\begin{enumerate}
  \item Space efficiency: classic Bloom filters use $1.44 \log_2 (1/\epsilon)$
    of space per inserted key, where $\epsilon $ is the false positive rate
    associated with this Bloom filter.
  \item Support for the union operation of two Bloom filters: the Bloom filter
    for the union of two sets can be obtained by performing bitwise `OR' operations
    on the underlying bit-arrays of the two Bloom filters.
  \item Size of the Bloom filter can be approximated by the number of
    nonzeros in the underlying bit array: in particular, given a Bloom filter
    representation $\cB(A)$ of a set $A$: the number of elements of $A$ can be
    estimated as
    % \begin{equation}\label{eq:bf}
        $\abs{A} \approx - \frac{c}{k} \log\rbr{ 1 - \frac{\nnz(\bb)}{c}}$,
    % \end{equation} 
    where $\nnz(\bb)$ is the number of non-zero elements in array $\bb$.
    As a result, the number of common nonzero bits of $\cB(A_1)$ and $\cB(A_2)$
    can be used as a proxy for $\abs{A_1 \cap A_2}$.
\end{enumerate}
% Algorithm~\ref{alg:bloom-filter} gives pseudo-code of adding a new element to the Bloom filter or unioning one Bloom filter with the other Bloom filter.
%about how to add a new element to a Bloom filter and how to union it with another given Bloom filter. 

\begin{algorithm}[H]
  \caption{Graph DNA Encoding with Bloom Filters}
\label{alg:graph-bloom}
\begin{algorithmic}[1]
\Require $G$: a graph of $n$ nodes, $c$: the length of codes, $k$: the number of hash functions, $d$: the number of iterations, $\theta$: tuning parameter to control the number of elements hashed.
\Ensure $B \in \cbr{0,1}^{n \times c}$: a boolean matrix to denote the bipartite relationship between $n$ nodes and $c$ bits.
  \begin{compactitem}
    \item $\cH \leftarrow \cbr{\mathtt{h_t}(\cdot): t = 1,\ldots, k}$ \Comment{Pick $k$ hash functions}
    \item ${\bf for}$ $i = 1,\ldots,n$: \Comment{\graphbloom Initialization}
      \begin{compactitem}
        \item $\cB^{0}\mathtt{[i]} \leftarrow \mathtt{BloomFilter}(c, \cH)$
        \item $\cB^{0}\mathtt{[i].add}(i)$
      \end{compactitem}
    \item ${\bf for}$ $s = 1,\ldots,d$: \Comment{$d$ times neighborhood propagations }
      \begin{compactitem}
      \item ${\bf for}$ $i = 1,\ldots,n$:
          \begin{compactitem}
            \item ${\bf for } $ $j \in \cN_1(i)$: \Comment{degree-1 neighbors}
              \begin{compactitem}
                \item ${\bf if}$ $|\cB^{s}\mathtt{[i]}| > \theta$: break;
                \item $\cB^{s}\mathtt{[i].union}(\cB^{s-1}\mathtt{[j]})$
              \end{compactitem}
          \end{compactitem}
      \end{compactitem}
    \item $B_{ij}\leftarrow \cB^{d}\mathtt{[i].b[j]}\ \forall (i,j) \in [n] \times [c]$
  \end{compactitem}
\end{algorithmic}
\end{algorithm}

\newcommand{\BF}[1]{\cB\mathtt{[#1]}}
\subsection{Graph DNA Encoding Via Bloom Filters}
\label{sec:graph-bloom}

Now we introduce our Graph DNA encoding. The main idea is to encode the deep (multi-hop) neighborhood aware embedding for each node in the graph
% \nikhil{are we encoding the neighborhood, or an embedding of the neighborhood?} 
approximately using the Bloom filter, which helps avoid performing computationally expensive graph adjacency matrix multiplications.
%neither taking a large number of random walk samples, nor using a lengthy node embedding learning phase, therefore enjoying memory and computational advantages. Instead, 
In Graph DNA, we have Bloom filters $\BF{i}, i=1,...,n$ for the $n$ graph nodes. All the Bloom filters $\BF{i}$ share the same $k$ hash functions. The role of $\BF{i}$ is to store the deep neighborhood information of the $i$-th node.  Taking advantage of the union operations of Bloom filters, one node's neighborhood information can be propagated to its neighbors in an iterative manner using gossip algorithms \cite{shah2009gossip}. Initially, each $\BF{i}$ contains only the node itself. At the $s$-th iteration, $\BF{i}$ is updated by taking union with node $i$'s immediate neighbors' Bloom filters $\BF{j}$. By induction, we see that after the $d$ iterations, $\BF{i}$ represents $\cN_d(i) :=  \cbr{j: \text{distance}_{G}(i, j) \le d}$,
%, which is defined as
where $\text{distance}_{G}(i, j)$ is the shortest path distance between nodes $i$ and $j$ in $G$.
As the last step, we stack array representations of all Bloom filters and form a sparse matrix $B \in \cbr{0, 1}^{n \times c}$, where 
the $i$-th row of $B$ is the bit representation of $\BF{i}$. As a practical measure, to prevent over-saturation of Bloom filters for popular nodes in the graph, we add a hyper-parameter $\theta$ to control the max saturation level allowed for Bloom filters. This would also prevent hub nodes dominating in graph DNA encoding. The pseudo-code for the proposed encoding algorithm is given in Algorithm~\ref{alg:graph-bloom}. 
We use graph DNA-$d$ to denote our obtained graph encoding after applying Algorithm~\ref{alg:graph-bloom} with $s$ looping from 1 to $d$. We also give a simple example to illustrate how the graph DNA is encoded into Bloom filter representations in Figure~\ref{fig:graph-bloom}. Our usage of Bloom filters is very different from previous works in \cite{pozo2016item, serra2017getting, shinde2016user}, which use Bloom filter for standard hashing and is unrelated to graph encoding. 

\begin{figure*}[ht]
\begin{center}
\centerline{\includegraphics[width=1.0\columnwidth]{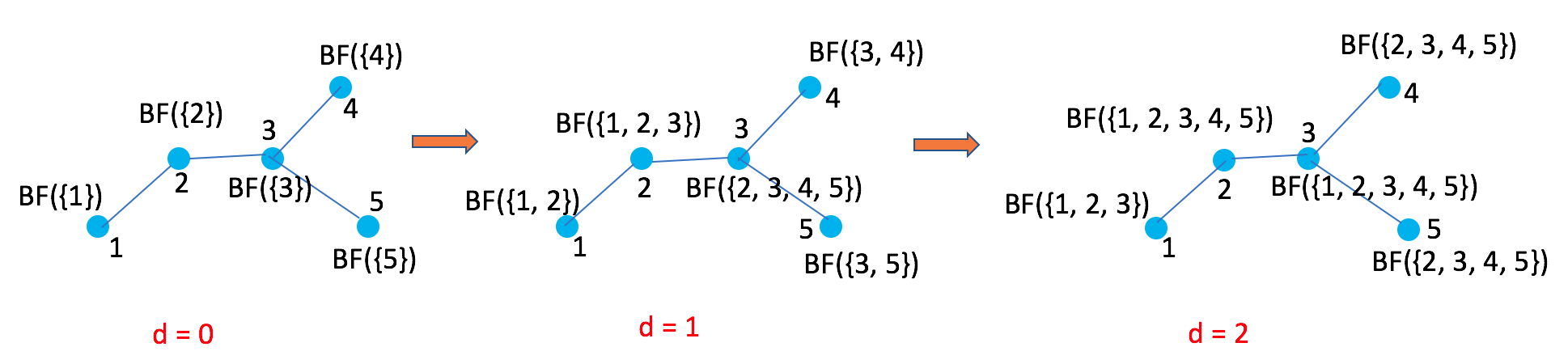}}
\end{center}
\caption{Illustration of Algorithm~\ref{alg:graph-bloom}: the graph DNA encoding procedure. The curly brackets at each node indicate the nodes encoded at a particular step.  At $d=0$ each node's Bloom filter only encodes itself, and multi-hop neighbors are included as d increases.}
\label{fig:graph-bloom}
%\vspace{-10pt}
\end{figure*}

\begin{figure}[ht]
\begin{center}
\centerline{\includegraphics[width=1.0\columnwidth]{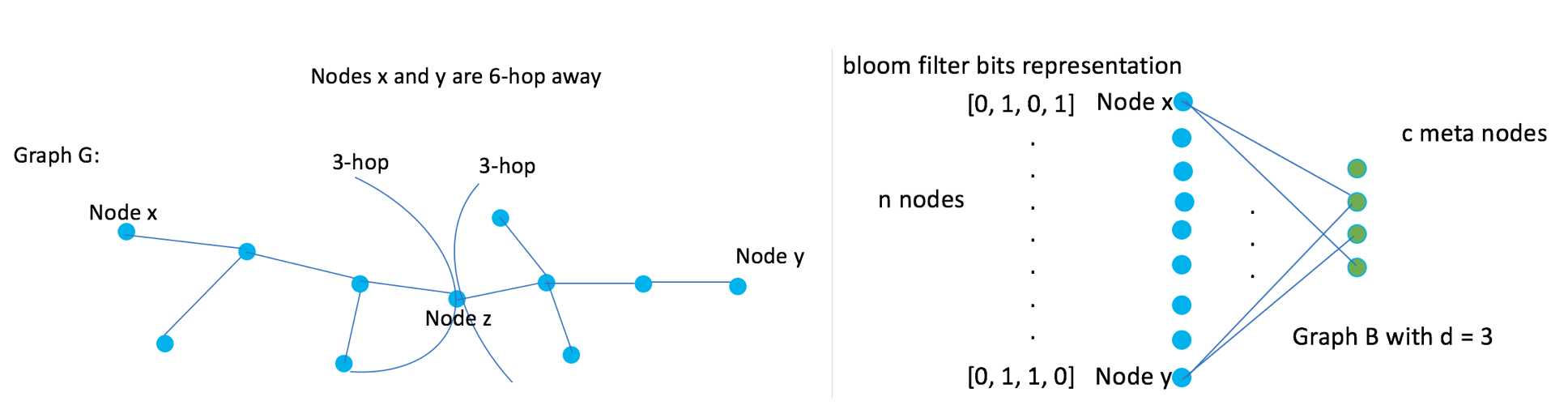}}
\end{center}
%\vspace{-20pt}
\caption{Illustration of our proposed DNA encoding method (DNA-3), with the corresponding bipartite graph representation.}
\label{fig:DNA}
\vspace{-10pt}\end{figure}

\section{Collaborative Filtering with Graph DNA}
\label{sec:app}
%In this section, we discuss how to apply our encoded graph DNA with the state-of-the-art collaborative filtering methods with side information. 
Suppose we are given the sparse rating matrix $R \in \dR^{n \times m}$  with $n$ users and $m$ items, and a graph $G \in \dR^{n\times n}$ encoding relationships between users. For simplicity, we do not assume a graph on the $m$ items, though including it should be straightforward. 
% We consider three popular approaches for performing recommendations in this setting: graph regularized matrix factorization~\cite{cai2011graph, rao2015collaborative, zhou2012kernelized}, Graph convolutional networks based matrix completion method~\cite{berg2017graph} and co-factor method~\cite{liang2016factorization}. 
% {\color{red}(Cho: we may put co-factor in appendix)} \nikhil{+1}
%
%We first discuss how to apply our graph DNA to the graph regularized matrix factorization \cite{cai2011graph, rao2015collaborative, zhou2012kernelized} problem. Next, we show how graph DNA can be applied to the co-factor method \cite{liang2016factorization} and weighted matrix factorization \cite{hu2008collaborative} in the implicit feedback setting. Last, we incorporate graph DNA into graph convolutional network based geometric matrix completion method \cite{berg2017graph}, outperforming current state-of-the-art graph convolutional networks \cite{berg2017graph, monti2017geometric} on the 3 real datasets.       

\subsection{Graph Regularized Matrix Factorization}\label{sec:grmf}

%The Graph Regularized Matrix Factorization (GRMF) method  \cite{cai2011graph, rao2015collaborative, zhou2012kernelized} assumes that we have not only observed the sparse rating matrix $R \in \dR^{n \times m}$ of $n$ users and $m$ items but also had side information of user and item graphs $G_u$ and $G_i$.

\paragraph{Explicit Feedback :}
The objective function of Graph Regularized Matrix Factorization (GRMF)  \cite{cai2011graph, rao2015collaborative, zhou2012kernelized} is: 
\begin{align}\label{eq:grmf}
\min_{U,V} \sum_{(i,j) \in \Omega} &\left(R_{i,j} - u_i^\top v_j \right)^2 + \frac{\lambda}{2} (\|U\|_F^2 + \|V\|_F^2) \\
\notag
 & +\mu \trace(U^\top \text{Lap}(G) U)
%\sum_{(i,j) \in G} G_{i,j} \| u_i - u_j \|^2,
%\min_{U,V} \sum_{(i,j) \in \Omega} \left(R_{i,j} - u_i^\top v_j \right)^2 &+ \frac{\lambda}{2} (\|U\|_F^2 + \|V\|_F^2) + \trace(U^\top \text{Lap}(G_u) U) + \trace(V^\top \text{Lap}(G_i) V),
\end{align}
where $U \in \dR^{n \times r}, V \in \dR^{m \times r}$ are the embeddings associated with users and items respectively, $n$ is the number of users and $m$ is the number of items, $R \in \dR^{n \times m}$ is the sparse rating matrix, $\trace()$ is the trace operator, $\lambda, \mu$ are tuning coefficients, and $\text{Lap}(\cdot)$ is the graph Laplacian operator.  

%and $\text{Lap}(G)$ is the normalized graph Laplacian of the user graph $G_u$. 
%From here on, without loss of generality, we assume we only have user-side graph. 
%All formulations and results can easily generalize to the case in which graphs exist for both user and item. 
%Note that Equation~\ref{eq:grmf} will only be able to force $u_i$ and $u_j$ to be close to each other when $(i, j) \in G$ \footnote{We drop the subscript from the graph here for ease of explanation}. More specifically, 
%Note that the regularization term can be simplified as:
%\begin{equation}
%    \sum_{(i,j) \in G} G_{i,j} \| u_i - u_j \|^2.
%\end{equation}
%{\color{red}(Cho: maybe we can remove Laplacian and directly use this in the objective function.)}
The last term is called graph regularization, which
tries to enforce similar nodes (measured by edge weights in $G$) to have similar embeddings. 
%aims to minimize the Euclidean distance between user $i$'s embedding $u_i$ and some weighted average of its neighbors' embeddings, where the weights are decided by the graph Laplacian of graph $G$.
%There are two naive ways to extend this to higher-order graph regularization: the first
One naive way \cite{cao2015grarep} to extend this to higher-order graph regularization is to replace the graph $G$ with $\sum_{i=1}^K w_i \cdot G^i$ and then use the graph Laplacian of $\sum_{i=1}^K w_i \cdot G^i$ to replace $G$ in \eqref{eq:grmf}. Computing $G^i$ for even small $i$ is computationally infeasible for most real-world applications, and we will soon lose the sparsity of the graph, leading to memory issues. 
%, if not computationally intractable.
%In the higher-order case, the resulting exponentially increasing number of non-zero entries in the graph Laplacian matrix makes it impossible to scale up the graph regularized matrix factorization methods for a large dataset due to the memory and computational constraints.
%Moreover, the polynomial coefficients of the various order graph adjacency matrices are hard to choose in this case. 
Sampling or thresholding could mitigate the problem but suffers from performance degradation.

In contrast, our graph DNA obtained from Algorithm~\ref{alg:graph-bloom} does not suffer from any of these issues. 
% Theorem~\ref{thm:bf} implies that 
The space complexity of our method is only of order $O(n \log n)$ for a graph with $n$ nodes, instead of $O(n^2)$. The reduced number of non-zero elements using graph DNA leads to a significant speed-up in many cases.  

% The number of bits $c$ is the only hyper-parameters that requires minimal tuning \nikhil(what about $\theta$?). As a rule of thumb, we set $c$ to prevent the ratio $\frac{nnz(B)}{|B|}$ being too close to 1. \nikhil{did you use log(n) here?}

We can easily use graph DNA in GRMF as follows: we treat the $c$ bits as $c$ new pseudo-nodes and add them to the original graph $G$. We then have $n+c$ nodes in a modified graph $\dot{G}$: 

%We can easily use graph DNA in GRMF algorithm without making any code changes to GRMF code bases by using following trick:
%if we treat the $c$ bits as $c$ new pseudo-nodes and add to the original graph $G$, then we would have $n + c$ nodes in total in the graph $\dot{G}$. 
%It is straightforward to write the adjacency matrix of $\dot{G}\in \dR^{(n + c) \times (n + c)}$ in terms of that of $G$:
\begin{equation}\label{eq:G1}
\dot{G} =
\begin{bmatrix}
    G \in \dR^{n \times n}      & B\in \dR^{n \times c} \\
    B^\top\in \dR^{c \times n}       & \mathbf{0}\in \dR^{c \times c}
\end{bmatrix}.
\end{equation}

To account for the $c$ new nodes, we expand $U \in \dR^{n \times r}$ to $\dot{U} \in \dR^{(n+c) \times r}$ by appending parameters for the meta-nodes.
%$U'\in \dR^{c \times r}$ to form $\dot U^\top = (U^\top, U'^\top)$.
% \begin{equation}\label{eq:U}
% \dot{U} = \begin{bmatrix}U\in \dR^{n \times r} \\ U'\in \dR^{c \times r}\end{bmatrix}.
% \end{equation}
The objective function for GRMF with Graph DNA with be the same as \eqref{eq:grmf} except replacing $U$ and $G$ with $\dot{U}$ and $\dot{G}$.
At the prediction stage, we discard the meta-node embeddings. 
% can be written as:
% \begin{align}\label{eq:bf-grmf}
% \min_{U,V} \sum_{(i,j) \in \Omega} \left(R_{i,j} - u_i^\top v_j \right)^2 &+ \frac{\lambda}{2} (\| \dot{U}\|_F^2 + \|V\|_F^2) + \mu\trace(\dot{U}^\top \text{Lap}(\dot{G}) \dot{U}).
% \end{align}
% {\color{red} JS: I would write the above equation in words to save space, refer to previous equation}

% {\color{red}(Cho: rewrite this formula without using Lapliacian. )} \nikhil{wait i changed the previous formula to use Laplacian, since the text is talking about Laplacians. Either way let's be consistent.}
% %where $\dot{U}, \dot{G}$ are defined earlier and $\text{Lap}$ is the graph Laplacian operator associated with graph $\dot{G}$.

\paragraph{Implicit Feedback :}
For implicit feedback data, when $R$ is a 0/1 matrix, weighted matrix factorization is a widely used algorithm~\cite{hu2008collaborative, hsieh2015pu}. The only difference is that the loss function in \eqref{eq:grmf} 
is replaced by $\sum_{(i, j): R_{ij}=1} (R_{ij}-u_i^T v_j)^2 + \sum_{(i, j): R_{ij}=0} \rho (R_{ij}-u_i^T v_j)^2$ where $\rho< 1$ is a hyper-parameter reflecting the confidence of zero entries. In this case, we can apply the Graph DNA encoding as before trivially. 

% We also describe how to apply graph DNA towards Co-Factor \cite{singh2008relational, liang2016factorization} and Graph Convolutional Matrix Completion \cite{berg2017graph} in the Appendix.

\subsection{Co-Factorization with Graph Information}
%Similarly, need to bridge Section~\ref{sec:dna-code} and Section~\ref{sec:app}. Give objective function for both methods.

Co-Factorization of Rating and Graph Information (Co-Factor) \cite{singh2008relational, liang2016factorization} is ideologically very different from GRMF and GRWMF, because it does not use graph information as regularization term. Instead it treats the graph adjacency matrix as another rating matrix, sharing one-sided latent factors with the original rating matrix. Co-Factor minimizes the following objective function:
$
\min_{U,V} \sum_{(i,j) \in \Omega_R} \left(R_{i,j} - u_i^\top v_j \right)^2 + \frac{\lambda}{2} (\|U\|_F^2 + \|V\|_F^2 + \|V'\|_F^2) 
\notag + \sum_{(i,j) \in \Omega_G} \left(G_{i,j} - u_i^\top v'_j \right)^2,
$
where $U \in \dR^{n \times r}, V \in \dR^{m \times r}, V' \in \dR^{n \times r}$.
%and $G\in \dR^{n \times n}$ is the user graph with $n$ nodes.
We can extend Co-Factor to incorporate our DNA-d %in a similar way 
by replacing $G$ with $B$ in the equation above, where $B\in \dR^{n \times c}$ is the Bloom filter bipartite graph adjacency matrix of $n$ real-user nodes and $c$ pseudo-user nodes, similar to $B$ as in \eqref{eq:G1}. We call the extension Co-Factor\_DNA-$d$.

\subsection{Graph Convolutional Matrix Completion}
Graph Convolutional Matrix Completion (GC-MC) is a graph convolutional network (GCN) based geometric matrix completion method \cite{berg2017graph}. In \cite{berg2017graph}, 
the rating matrix $R$ is treated as an adjacency matrix in GCN while side information $G$ is treated as feature matrix for nodes — each user has an n-dimensional 0/1 feature that corresponds to a column of $G$.  The GCN model then performs convolutions of these features on the bipartite rating graph. 
% the side information graph is represented as the adjacency matrix of the side graph and these one-hot encodings are treated as features for nodes in the graph. 
Convolutions of these features are performed on the bipartite rating graph. We find in our experiments that using these one-hot encodings of the graph as feature is an inferior choice both in terms of performance and speed. To capture higher order side graph information, it is better to use $G + \alpha G^2$ for some constant $\alpha$ and this alternate choice usually gives smaller generalization error than the original GC-MC method. However, it is hard to explicitly calculate $G + \alpha G^2$ and store the entire matrix for a large graph for the same reason described in Section~\ref{sec:grmf}. Again, we can use graph DNA to efficiently encode and store the higher order information before feeding it into GC-MC. We show in our experiments that this outperforms current state-of-the-art GCN methods \cite{berg2017graph, monti2017geometric} as well as GC-MC with graph encoding methods that require training, such as Node2vec \cite{grover2016node2vec} and Deepwalk \cite{perozzi2014deepwalk}. %on the same 3 real datasets.
Our encoding scheme does not require training and therefore is a lot faster than previous encoding methods. More details are discussed in the experiment section~\ref{sec:co-factor}.

\section{Experiments}
\label{sec:exp}
We show that our Graph DNA encoding technique can improve the performance of 4 popular graph-based recommendation algorithms: graph-regularized matrix factorization, co-factorization, weighted matrix factorization, and GCN-based graph convolution matrix factorization.  
% Specifically, we consider both the explicit and implicit feedback settings, and show the proposed approach can consistently improve upon graph-regularized matrix factorization, co-factorization, weighted matrix factorization and GCN-based graph convolution matrix factorization.  
All experiments except GCN are conducted on a server with Intel Xeon E5-2699 v3 @ 2.30GHz CPU and 256$G$ RAM. The GCN experiments are conducted on Google Cloud with Nvidia V100 GPU.

\subsection{Simulation Study}

\label{sec:exp-syn}

We first simulate a user/item rating dataset with user graph as side information, generate its graph DNA, and use it on a downstream task: matrix factorization. 

We randomly generate user and item embeddings from standard Gaussian distributions, and construct an Erd\H{o}s-R\'{e}nyi Random graph of users. User embeddings are generated using Algorithm~\ref{alg:sim} in Appendix: at each propagation step, each user's embedding is updated by an average of its current embedding and its neighbors' embeddings.
Based on user and item embeddings after $T=3$ iterations of propagation,
we generate the underlying ratings for each user-item pairs according to the inner product of their embeddings, and then sample a small portion of the dense rating matrix as  training and test sets.

We implement our graph DNA encoding algorithm in python using a scalable python library \cite{almeida2007scalable}
% \footnote{\url{https://github.com/jaybaird/python-bloomfilter}}
to generate Bloom filter matrix $B$.  We adapt the GRMF C++ code to solve the objective function of GRMF\_DNA-K with our Bloom filter enhanced graph $\dot{G}$.
We compare the following variants:
%When we compare different methods, we fix the Graph Regularized Matrix Factorization (GRMF) part while varying the graph information part:
\begin{enumerate}
    \item MF: classical matrix factorization only with $\ell_2$ regularization without graph information.
    \item GRMF\_$G^d$: GRMF with $\ell_2$ regularization and using $G$, $G^2$, \dots, $G^d$ \cite{cao2015grarep}.
    \item GRMF\_DNA-$d$: GRMF with $\ell_2$ but using our proposed graph DNA-$d$.
    % , compression of up to $2K$-hop graph information in 0/1 bits.
   
\end{enumerate}
%{\color{red}Cho: I think we want to have a specific name for our method, let's discuss with Hsiang-Fu and Nikhil. }

We report the prediction performance with Root Mean Squared Error (RMSE) on test data.
% :
% \begin{equation}
%     \text{RMSE} = \frac{1}{|\Omega|} \sum_{(i, j)\in \Omega} \left( R_{ij} - u_i^\top v_j \right)^2
% \end{equation}
% \nr{the paragraph below is out of place. We talk about synthetic data above, and suddenly switch to real datasets here, before describing results on synth data.} 
All results are reported on the test set, with all relevant hyperparameters tuned on a held-out validation set. 
To accurately measure how large the relative gain is from using deeper information, we introduce a new metric called Relative Graph Gain (RGG) for using information $X$, which is defined as:
\begin{align}\label{eq:rgg}
    &\text{RGG}(X) \% =  \\
    \notag
    &\left(\frac{\text{RMSE without Graph} - \text{RMSE with } X}{\text{RMSE without Graph} - \text{RMSE with } G} 
    - 1 \right) \times 100, 
\end{align} where RMSE is measured for the same method with different graph information. This metric would be 0 if only first order graph information is utilized and is only defined when the denominator is positive.

In Table~\ref{tab:mf_res}, we can easily see that using a deeper neighborhood helps the recommendation performances on this synthetic dataset. Graph DNA-3's gain is 166\%  larger than that of using first-order graph $G$. We can see an increase in performance gain for an increase in depth $d$ when $d \leq 3$. This is expected because we set $T = 3$ during our creation of this dataset.

%%%%%%%%%%%%%%%%%%%%%%%%%%%%%%%%%%%%%%%%%%%%%%%%%%%%%%%%%%%
%%%%%%%%%%%%%%%%%%%%%%%%%%%%%%%%%%%%%%%%%%%%%%%%%%%%%%%%%%%
%%%%%%%%%%%%%%%%%%%%%%%%%%%%%%%%%%%%%%%%%%%%%%%%%%%%%%%%%%%
%%%%%%%%%%%%%%%%%%%%%%%%%%%%%%%%%%%%%%%%%%%%%%%%%%%%%%%%%%%
%%%%%%%%%%%%%%%%%%%%%%%%%%%%%%%%%%%%%%%%%%%%%%%%%%%%%%%%%%%
%%%%%%%%%%%%%%%%%%%%%%%%%%%%%%%%%%%%%%%%%%%%%%%%%%%%%%%%%%%
%%%%%%%%%%%%%%%%%%%%%%%%%%%%%%%%%%%%%%%%%%%%%%%%%%%%%%%%%%%
%%%%%%%%%%%%%%%%%%%%%%%%%%%%%%%%%%%%%%%%%%%%%%%%%%%%%%%%%%%
%%%%%%%%%%%%%%%%%%%%%%%%%%%%%%%%%%%%%%%%%%%%%%%%%%%%%%%%%%%
%%%%%%%%%%%%%%%%%%%%%%%%%%%%%%%%%%%%%%%%%%%%%%%%%%%%%%%%%%%
%%%%%%%%%%%%%%%%%%%%%%%%%%%%%%%%%%%%%%%%%%%%%%%%%%%%%%%%%%%

\subsection{Graph Regularized Matrix Factorization for Explicit Feedback}
%\subsection{Explicit feedback}
\label{sec:exp-explicit}
% In Section~\ref{sec:exp-explicit} and Section~\ref{sec:co-factor}, we show that the proposed approach DNA can improve the performance of two graph-based state-of-the-art recommendation algorithms for explicit feedback: Graph Regularized Matrix Factorization (GRMF)~\cite{yu2017unified} and Co-factorization with Graph (co-factor)~\cite{singh2008relational, liang2016factorization}. Note that it has been shown that GRMF outperforms previous approaches including~\cite{kalofolias2014matrix,zhong2015efficient}  so we do not include them into comparison.

Next, we show that graph DNA can improve the performance of GRMF for explicit feedback. We conduct experiments on two real datasets:  Douban  \cite{ma2011recommender} and Flixster  \cite{Zafarani+Liu:2009}.
Both datasets contain explicit feedback with ratings from 1 to 5. There are 129,490 users, 58,541 items in Douban. There are 147,612 users, 48,794 items in Flixster. Both datasets have a graph defined on the respective sets of users.  
% There are 129,490 users, 58,541 items in Douban dataset with user friendship graph of size 129,490 by 129,490 and density 0.0102\% and 147,612 users, 48,794 items in Flixster dataset with user friendship graph of size 147,612 by 147,612 and density 0.0117\%. 
% Our graph DNA has number of bits $c = 3,351$ when depth $d=2$ and $c=10,050$ when $d=3$.

% \begin{wrapfigure}{r}{0.4\textwidth}
%   \begin{center}
%     \includegraphics[width=\textwidth]{flixster_speed.png}
%   \end{center}
%   \caption{Compare Training Speed of GRMF, with and without Graph DNA.}
%   \label{fig:speed2}
% \end{wrapfigure}

We pre-processed Douban and Flixster following the same procedure in \cite{rao2015collaborative, wu2017large}. The experimental setups and comparisons are almost identical to the synthetic data experiment (see details in section~\ref{sec:exp-syn}). Due to the exponentially growing non-zero elements in the graph as we go deeper (see Table~\ref{tab:nnz}), we are unable to run full
GRMF\_$G^4$ and GRMF\_$G^5$ for these datasets. In fact,  GRMF\_$G^3$ itself is too slow so we thresholded $G^3$ by only considering entries whose values are equal to or larger than 4. For the Bloom filter, we set a false positive rate of 0.1 and use capacity of 500 for Bloom filters, resulting in $c = 4,796$. %\nikhil{was this tuned? where did this number come from?}

%We perform a parameter sweep for $\lambda_l \in \{0.01, 0.1, 1, 10, 100\}$ and $\lambda_g \in \{0.01, 0.1, 1, 10, 100\}$ as well as for $\alpha \in \{0.0001, 0.001, 0.01, 0.1, 0.3, 0.7, 1\}$, for $\beta \in \{0.005, 0.01, 0.03, 0.05, 0.1 \}$ when needed. We run all combinations of tuning parameters for each method on validation set for $40$ epochs and choose the best combination as the parameters to use on test data. We then report the best test RMSE in first $40$ epochs on test data with the chosen parameter combination. \nr{can again put the hyperparam details in appendix. Will save space }

We can see from Table~\ref{tab:mf_res} that deeper graph information always helps. For Douban, graph DNA-3 is most effective, giving a relative graph gain of 82.79\% compared to only 2\% gain when using $G^2$ or $G^3$ naively. Interestingly for Flixster, using $G^2$ is better than using $G^3$. However, Graph DNA-3 and DNA-4 yield $10$x and $15$x performance improvements respectively, lending credence to the implicit regularization property of graph DNA. 
%one may easily mistakenly think that deeper information does not help and $G^2$ is the best one to use. But we actually find our Graph DNA-3 and DNA-4 are giving us 10 times and 15 times more gain than using first-order graph information $G$. Even compared to second-order $G^2$, our graph DNA-4 still achieves 5 times more performance gain. 
For a fixed size Bloom filter, the computational complexity of graph DNA scales linearly with depth $d$, as compared to exponentially for GRMF\_$G^d$.
%Another advantage is that our graph DNA encoding is very fast and its time complexity scales up linearly in terms of depth $d$ for fixed-size Bloom filters. 
We measure the speed in Table~\ref{tab:dnaspeed}. The memory cost is only a fraction of $n^2$ after hashing. Such low memory and computational complexity allow us to scale to larger $d$, compared to baseline methods.

\begin{table*}
  \caption{Comparison of Graph Regularized Matrix Factorization Variants for Explicit Feedback on Synthetic, Douban and Flixster data. We use rank $r = 10$. RGG is the Relative Graph Gain defined in  \eqref{eq:rgg}.} 
%   \vskip -0.1in
  \label{tab:mf_res}
  \resizebox{1.0\textwidth}{!}{
  \begin{tabular}{ccccccc}
    \toprule
    &\multicolumn{2}{c}{Synthetic} & \multicolumn{2}{c}{Douban} & \multicolumn{2}{c}{Flixster}\\
    \cmidrule(r){2-3} \cmidrule(r){4-5}  \cmidrule(r){6-7} 
    Dataset & RMSE ($ \times 10^{-1}$)  & \% RGG & RMSE ($ \times 10^{-1}$) & \% RGG & RMSE ($ \times 10^{-1}$)  & \% RGG \\
    \midrule
    
      MF        & 2.9971  & -  & 7.3107   & -  & 8.8111     & - \\
    GRMF\_$G$   & 2.7823 & 0  & 7.2398  & 0  & 8.8049   & 0\\
    GRMF\_$G^2$  & 2.6543  & 59.5903 & 7.2381  & 2.3977 & 8.7849  & 322.5806 \\
    GRMF\_$G^3$ & 2.5687  & 99.4413  & 7.2432  & -4.7954 & 8.7932  & 188.7097 \\
    GRMF\_$G^4$ & 2.5562 & 105.2607 & - & - & - & -\\
    GRMF\_$G^5$  & 2.4853  & 138.2682 & - & - & - & -\\
    GRMF\_$G^6$  & 2.4852  & 138.3147 & - & - & - & -\\
    GRMF\_DNA-1   & 2.4303  & 163.8734  & 7.2191  & 29.1960 & 8.8013   & 58.0645\\
    GRMF\_DNA-2  & 2.4510  & 154.2365 & 7.2359 & 5.5007 & 8.8007  & 67.7419 \\
    GRMF\_DNA-3  & \bfseries{2.4247}  &\bfseries{166.4804}  & \bfseries{7.1811}  & \bfseries{82.7927} & 8.7383  & 1074.1935\\
    GRMF\_DNA-4  & 2.4466   & 156.2849 & 7.1971 & 60.2257 & \bfseries{8.7122}  & \bfseries{1495.1613} \\
    \midrule
    Co-Factor\_$G$    & - & -   & 7.2743    & 0  & 8.7957    & 0   \\
    Co-Factor\_DNA-3  & - & - &\bfseries{7.2623}    & \bfseries{32.9670} & \bfseries{8.7354}   & \bfseries{391.5584} \\
    %  \midrule
    % \multirow{8}{*}{Douban}
    % &MF              & 7.3107   & -     \\
    % &GRMF\_$G$                 & 7.2398  & 0\\
    % &GRMF\_$G^2$ & 7.2381  & 2.3977\\
    % &GRMF\_$G^3$ (full)  & 7.2432  & -4.7954\\
    % &GRMF\_$G^3$ (thresholded) & 7.2382  & 2.2566\\
    % &GRMF\_DNA-1    & 7.2191  & 29.1960 \\
    % &GRMF\_DNA-2   & 7.2359 & 5.5007 \\
    % % &GRMF\_DNA-3  & \bfseries{7.2095}  & \bfseries{1.3843}\\
    % &GRMF\_DNA-3  & \bfseries{7.1811}  & \bfseries{82.7927}\\
    % &GRMF\_DNA-4 & 7.1971 & 60.2257  \\ 
    % \cdashline{2-5}
    % &Co-Factor\_$G$                    & 7.2743    & 0     \\
    % &Co-Factor\_DNA-3  & \bfseries{7.2623}    & \bfseries{32.9670}\\
    
    % \midrule
    % \multirow{8}{*}{Flixster}
    % &MF                & 8.8111     & -      \\
    % &GRMF\_$G$               & 8.8049   & 0 \\
    % &GRMF\_$G^2$ & 8.7849  & 322.5806\\
    % &GRMF\_$G^3$  (full) & 8.7932  & 188.7097\\
    % &GRMF\_$G^3$  (thresholded) & 8.7920  & 208.0645\\
    % &GRMF\_DNA-1    & 8.8013   & 58.0645  \\
    % &GRMF\_DNA-2   & 8.8007  & 67.7419 \\
    % % &GRMF\_DNA-3 & \bfseries{8.7453}  & \bfseries{0.7468}\\
    % &GRMF\_DNA-3 & 8.7383  & 1074.1935 \\
    %  &GRMF\_DNA-4 & \bfseries{8.7122}  & \bfseries{1495.1613}\\
    %   \cdashline{2-5}
    %   &Co-Factor\_$G$                    & 8.7957    & 0     \\
    % &Co-Factor\_DNA-3  & \bfseries{8.7354}   & \bfseries{391.5584}\\
    
  \bottomrule
\end{tabular}
}
% \vskip -0.1in
\end{table*}

\begin{table*}
  \caption{Graph DNA (Algorithm~\ref{alg:graph-bloom}) Encoding Speed. We set number $c = 500$ and implement Graph DNA using single-core python. We can scale up linearly in terms of depth $d$ for a fixed $c$. } 
%   \vskip -0.1in
  \label{tab:dnaspeed}
\resizebox{0.8\textwidth}{!}{
  \begin{tabular}{ccccccc}
    \toprule
    & \multicolumn{2}{c}{Graph Statistics} & \multicolumn{4}{c}{Graph DNA Encoding Time (secs)}\\
    \cmidrule(r){2-3} \cmidrule(r){4-7}
    Dataset & Number of Nodes & Graph Density & DNA-1 & DNA-2 & DNA-3 & DNA-4  \\
    \midrule
   
    Douban & 129,490 & 0.0102\% &132.2717 &   266.3740 & 403.9747 & 580.1547   \\
   Flixster & 147,612 & 0.0117\% & 157.3103  &  317.7706   & 482.0360   & 686.8048    \\
  \bottomrule
\end{tabular}
 }
% \vskip -0.1in
\end{table*}

\begin{table}
  \caption{Comparison of GRWMF Variants for Implicit Feedback on Douban and Flixster datasets. P stands for precision and N stands for NDCG. We use rank $r = 10$ and all results are in $\%$.}
%   \vskip -0.1in
  \label{tab:implicit_res}
   \resizebox{0.8\textwidth}{!}{
  \begin{tabular}{llccccccc}
    \toprule
    Dataset & Methods & MAP & HLU & P@$1$ & P@$5$ & N@$1$ & N@$5$  \\
    \midrule
    \multirow{2}{*}{Douban}
    &GRWMF\_$G$               & 8.340 & 13.033 & 14.944 & 10.371 & 14.944 & 12.564    \\
    &GRWMF\_DNA-3 & \bfseries{8.400} & \bfseries{13.110} & \bfseries{14.991} & \bfseries{10.397} & \bfseries{14.991} & \bfseries{12.619}     \\
     \midrule
    \multirow{2}{*}{Flixster}
    &GRWMF\_$G$              & 10.889 & 14.909 & 12.303 & 7.9927 & 12.303 & 12.734   \\
     &GRWMF\_DNA-3 & \bfseries{11.612} & \bfseries{15.687} & \bfseries{12.644} & \bfseries{8.1583} & \bfseries{12.644} & \bfseries{13.399}    \\
  \bottomrule
\end{tabular}
 }
% \vskip -0.15in
\end{table}

% \begin{figure*}
% \begin{tabular}{cc}
% \hspace{-8pt}
% \includegraphics[width=0.4\linewidth]{speed_douban_v1.png} &
% %\hspace{-8pt}
% \includegraphics[width=0.4\linewidth]{flixster_speed.png} 
% \end{tabular}
% \caption{Compare Training Speed of GRMF, with and without Graph DNA. }
% \label{fig:speed}
% \vspace{-10pt}\end{figure*}

% \begin{figure*}
% \begin{tabular}{ccc}
% \hspace{-14pt}\includegraphics[width=0.35\linewidth]{} &
% \hspace{-14pt}\includegraphics[width=0.35\linewidth]{} &
% \hspace{-14pt}\includegraphics[width=0.35\linewidth]{flixster_speed.png}
% \end{tabular}
% \vspace{-10pt}\caption{Compare speeds of %Bloom filter of depth $2$
% GRMF\_DNA-2 (compressed $4$-hop) versus  %$l_2 + G + \alpha \cdot G^2 + \beta \cdot G^3$
% GRMF\_$G^3$ (full $3$-hop). Note that the left plot is using full $G^3$ while the center and right plots are using thresholded $G^3$: thresholded $G^3$ only contains about $15\%$ most important edges of full $G^3$ in Douban and about $20\%$ in Flixster.}
% \label{fig:compare_speed}
% %\vspace{-10pt}
% \end{figure*}

% \begin{figure}[h]
%   \centering
%   \includegraphics[width=0.5\linewidth]{speed_douban_v1.png}
%   %\includegraphics[height=40mm, width=50mm]{speed_douban_v1.png}
%   \vspace{-5pt}
%   \caption{Compare speeds of %Bloom filter of depth $1$
%   GRMF\_DNA-1 (compressed $2$-hop) versus %$l_2 + G$
%   GRMF\_$G$ ($1$-hop) and %$l_2 + G + \alpha \cdot G^2$
%   GRMF\_$G^2$ ($2$-hop).}
%   \label{fig:speed2}
% \end{figure}

\begin{table}
  \caption{Comparison of GCN Methods for Explicit Feedback on Douban, Flixster and Yahoo Music datasets (3000 by 3000 as in \cite{berg2017graph, monti2017geometric}). All the methods except GC-MC utilize side graph information. } 
%   \vskip -0.1in
  \label{tab:gcn}
  \resizebox{0.8\textwidth}{!}{
  \begin{tabular}{llcccc}
    \toprule
    Dataset & Methods & Test RMSE ($ \times 10^{-1}$)   & \% RGG   \\
    \midrule
    \multirow{2}{*}{Douban}
     & SRGCNN (reported by \cite{berg2017graph})   & - & - \\
     % 0.73092896, 0.72942615, 0.7339049, 0.7311488, 0.73060095, 0.7305521
     &GC-MC                  &  \bfseries{7.3109} $\pm$ \bfseries{0.0150}   & -   \\
    % 96 bits for Bloom filter
    % 0.74091303, 0.7319858, 0.73237324, 0.7503901, 0.73310775, 0.73313046
    &GC-MC\_$G$                    & 7.3698 $\pm$  0.0737   & N/A   \\
    % 0.7303592, 0.73384154, 0.7307822, 0.73060936, 0.7316812, 0.7300763
    &GC-MC\_$G^2$                 & 7.3123  $\pm$  0.0139  & N/A     \\
    % 0.7356967, 0.7695349, 0.735761, 0.7372746, 0.73574847, 0.7327338
    %&GCN\_DNA-1  &  &  & \\
    % 0.73710644, 0.734771, 0.7352888, 0.7402396, 0.7359167
    % GC-MC\_Node2vec & 0.7367 & 0.0022
    % 0.7291845, 0.7365878, 0.73622227, 0.7363115, 0.73137707
    % GC-MC\_Deepwalk & 0.7339 & 0.0034
     & GC-MC\_Node2vec  & 7.3666 $\pm$ 0.0218 & N/A \\
    & GC-MC\_Deepwalk & 7.3394 $\pm$  0.0343   & N/A  \\
    % 0.731276, 0.72953117, 0.7318065, 0.7322485, 0.73248154, 0.7296482
    &GC-MC\_DNA-2  & 7.3117 $\pm$ 0.0129    & N/A   \\
    % 0.731765, 0.7312282, 0.7303599, 0.7960406, 0.7335143, 0.7339625
    %&GCN\_DNA-3  &     &  & \bfseries{}\\
    \midrule
    \multirow{2}{*}{Flixster}
    & SRGCNN (reported by \cite{berg2017graph}) & 9.2600   & -  \\
     % 0.9259653, 0.92441434, 0.91977745, 0.9257486, 0.93701893, 0.9239148
     &GC-MC                   & 9.2614  $\pm$ 0.0578     & -     \\
    % 960 bits for Bloom filter
    % 0.9201633, 0.90717405, 0.9353648, 0.9184822, 0.93154585, 0.92967963
     &GC-MC\_$G$                    &  9.2374 $\pm$  0.1045    & 0     \\
     % both 0.8931691, 0.89541256, 0.89488345, 0.88770956, 0.89730567, 0.89216304
      &GC-MC\_$G^2$                    & \bfseries{8.9344} $\pm$  \bfseries{0.0333}   &   \bfseries{1262.4999}  \\
       & GC-MC\_Node2vec  & 12.0370 $\pm$ 1.9474 & N/A \\
   
      & GC-MC\_Deepwalk & 9.0507  $\pm$  0.1692   & 777.9167 \\
     % 1.3796362, 1.2038777, 1.4032732, 1.0878159, 0.9439175
     % GC-MC\_Node2vec  1.2037 & 0.1947
     % 0.8936558, 0.89771426, 0.90977806, 0.93259937, 0.891632
     % GC-MC\_Deepwalk & 0.9051 & 0.0170
     % 0.88736755, 0.89740115, 0.9094576, 0.8949458, 0.8912808, 0.89173573
    &GC-MC\_DNA-2  & 8.9536  $\pm$ 0.0770   & 1182.4999 \\
    \midrule
    \multirow{2}{*}{Yahoo Music}
     & SRGCNN (reported by \cite{berg2017graph})  & - & - \\
     
     %22.462591,22.640402,22.370317,23.36072,22.592682,22.591621
      &GC-MC  &  22.6697 $\pm$  0.3530   & -    \\
    %21.419628, 21.81573, 21.203478, 21.83184, 21.2006, 20.732025 (0.11981s)
     &GC-MC\_$G$  &  21.3672 $\pm$  0.4190  & 0   \\
    %20.249907, 21.298706, 19.475515,  19.377634, 19.676825,21.234802    (0.11770s)
    &GC-MC\_$G^2$    & 20.2189 $\pm$  0.8664  & 88.1612    \\
     & GC-MC\_Node2vec  & 19.8901 $\pm$ 0.7948 & 113.4050\\
   
   & GC-MC\_Deepwalk & 20.1603 $\pm$ 0.9342  &  92.6603\\
    % 19.465242, 19.521105, 21.30948, 19.582201, 19.572277
    % GC-MC\_Node2Vec 19.890061, 0.794845
    % 19.4, 21.202202, 21.162071, 19.508257, 19.529083 
    % GC-MC\_Deepwalk 20.16032, 0.9341782
    %19.296806, 19.914392, 19.081821, 19.42358, 19.208466, 19.402206 (0.08960s)
    &GC-MC\_DNA-2  & \bfseries{19.3879} $\pm$ \bfseries{0.2874}     & \bfseries{151.9616}  \\
    %\midrule
    % \multirow{2}{*}{ML-1M}
    % % 0.9771128, 0.98043394, 0.9824135, 0.97495896, 0.9800857, 0.97696203, 0.9787316, 0.9715796, 0.97390634, 0.9776105
    % &GCN without NLP features  &9.7738   $\pm$ 0.0326  & \bfseries{} & \\
    % % 0.97218347, 0.9806544, 0.968386,  0.968386, 0.9708162, 0.96817553, 0.9847441, 0.97471017, 0.9756308, 0.9691416
    % &GCN with NLP features  & 9.7328  $\pm$ 0.0569  & \bfseries{} & \\
  \bottomrule
\end{tabular}
}
% \vskip -0.1in
\end{table}

\subsection{Co-Factorization with Graph for Explicit Feedback}\label{sec:co-factor} 
% \nikhil{you can remove the first 4 lines of this paragraph to get space}
% Co-Factorization of Rating and Graph Information (Co-Factor) \cite{singh2008relational, liang2016factorization} is ideologically very different from GRMF and GRWMF, because it does not use graph information as regularization term. Instead it treats the graph adjacency matrix as another rating matrix, sharing one-sided latent factors with the original rating matrix.
% As introduced in Section~\ref{sec:app}, Co-Factorization with graph information (Co-Factor) is another method for graph-based recommender system
%a method that decompose rating matrix and graph adjacency matrix jointly with the shared underlying user embeddings but distinct embeddings on the other side
% \cite{singh2008relational, liang2016factorization}, and 
We show our graph DNA can improve Co-Factor \cite{singh2008relational, liang2016factorization} as well. 
% We follow similar procedure as in GRMF: we run all combinations of tuning parameters of $\lambda_l \in \{0.01, 0.1, 1, 10, 100\}$ and $\lambda_g \in \{0.01, 0.1, 1, 10, 100\}$ for each method on validation data for fixed number $40$ epochs and choose the best combination as the parameters to use on test data. We then report the best test RMSE in first $40$ epochs on test data with the chosen parameter combination.
%We perform experiments on the same douban and flixster datasets and observe similar performance gain in Table~\ref{tab:mf_res}.
The results are in Table~\ref{tab:mf_res}. We find that applying DNA-3 to the Co-Factor method improves performance on both the datasets, more so for Flixster. 
%Despite the fact that Co-Factor is not performing as well as GRMF on both datasets, we still find that our graph DNA with depth $3$ helps to reduce test RMSE on both datasets, especially for Flixster.
%: 32\% relative graph gain for Douban and 391\% for Flixster.
% by $0.59228\%$ on Douban dataset and reduce test RMSE by $0.85914\%$ on Flixster dataset over baseline $G$,
This is consistent with our observations for GRMF in Table~\ref{tab:mf_res}: deep graph information is more helpful for Flixster than Douban. Applying Graph DNA to Co-Factor is detailed in the Appendix.

\subsection{Graph Regularized Weighted Matrix Factorization for Implicit Feedback}
%\subsection{Implicit feedback}
\label{sec:exp-implicit}
%For implicit feedback, the ratings are '$0$/$1$' instead of $1$ through $5$. 
We follow the same procedure as in \cite{wu2018sql} to set ratings of 4 and above to 1, and the rest to 0. %between $4$ and $5$ in Douban and Flixster datasets into $1$ and mask the rest ratings into $0$. 
We compare the baseline graph based weighted matrix factorization \cite{hu2008collaborative, hsieh2015pu} with our proposed weighted matrix factorization with DNA-3. We do not compare with Bayesian personalized ranking \cite{rendle2009bpr} and the recently proposed SQL-rank \cite{wu2018sql} %or try to enhance their performance with our Graph DNA 
as they cannot easily utilize graph information.
%The third one is Weighted Matrix Factorization \cite{hu2008collaborative, hsieh2015pu} with Graph for implicit feedback. For implicit feedback, the ratings are in the form of $0$/$1$.

% We use the rank $r = 10$, negatives' weight $\rho = 0.01$ and measure the prediction performance with metrics MAP, HLU, Precision@$k$ and NDCG$@k$ (see definitions of metrics in Appendix~\ref{sec:metric}).

% We follow the similar procedure to what is done before in GRMF and co-factor: we run all combinations of tuning parameters of $\lambda_l \in \{0.01, 0.1, 1, 10, 100\}$ and $\lambda_g \in \{0.01, 0.1, 1, 10, 100\}$ for each method on validation data for fixed number $40$ epochs and choose the best combination as the parameters to use on test data. We then report the best prediction results during first $40$ epochs on test data with the chosen parameter combination.

The results are summarized in Table~\ref{tab:implicit_res} with experimental details in the Appendix. Again, using DNA-3 achieves better prediction results over the baseline in terms of every single metric on both Douban and Flixster datasets.

\subsection{Graph Convolutional Matrix Factorization}
%Graph Convolutional Matrix Completion (GC-MC) is a graph convolutional network (GCN) based geometric matrix completion method \cite{berg2017graph}. In \cite{berg2017graph}, the side graphs over users and items are represented as the adjacency matrices and these one-hot encodings are treated as features for nodes in the graph. Convolutions of these features are performed on the bipartite rating graph. We find in our experiments that using these one-hot encodings of the graph as feature is an inferior choice both in terms of performance and speed. To capture higher order side graph information, it is better to use $G + \alpha G^2$ for some constant $\alpha$. %and this alternate choice usually gives smaller generalization error than the original GC-MC method. 
We can use graph DNA instead to efficiently encode and store the higher order information before feeding it into GC-MC. 

%\begin{wrapfigure}{r}{0.5\textwidth}
\begin{figure}
  \begin{center}
    \includegraphics[width=80mm, height = 60mm]{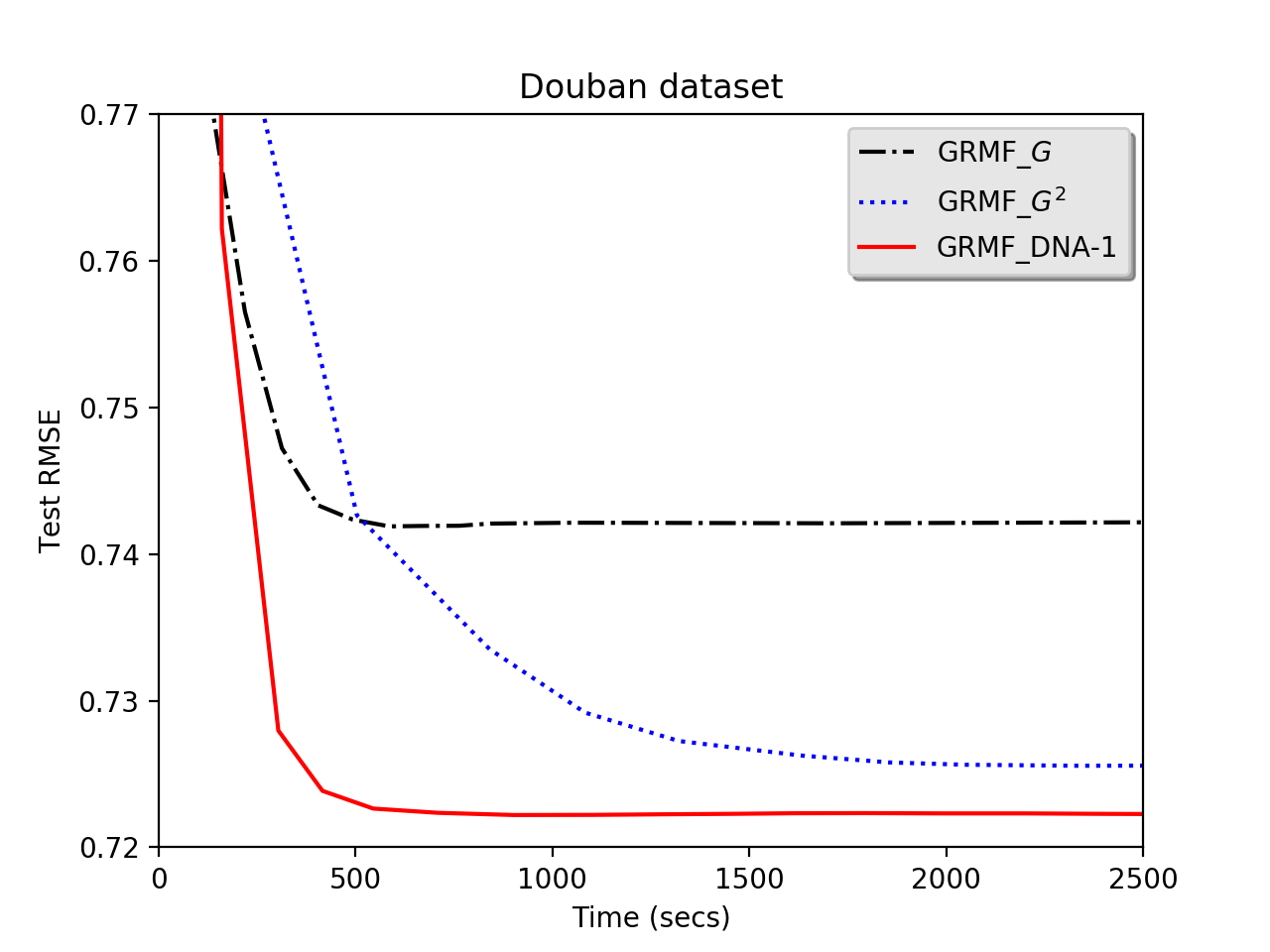}
  \end{center}
  \vspace{-5pt}
  \caption{Compare Training Speed of GRMF, with and without Graph DNA.}
  \label{fig:speed}
\end{figure}
%\end{wrapfigure}
We use the same split of three real-world datasets and follow the exact procedures as in \cite{berg2017graph, monti2017geometric}.
% : we use rank $r = 64$ for features, 500 hidden units for graph convolutional layer and 75 units for fully-connected layer in both runs. 
We tuned hyperparameters using a validation dataset and obtain the best test results found within 200 epochs using optimal parameters. We repeated the experiments 6 times and report the mean and standard deviation of test RMSE. After some tuning, we use the capacity of 10 Bloom filters for Douban and 60 for Flixster, as the latter has a much denser second-order graph. With a false positive rate of 0.1, this implies that we use 96-bits Bloom filters for Douban and 960 bits for Flixster. We use the resulting bloom filter bitarrays as the node features, and pass that as the input to GC-MC. Using Graph DNA-2, the input feature dimensions are thus reduced from 3000 to 96 and 960, 
%So the feature dimension is reduced from 3000 to 96 and 960 when using our graph DNA-2, 
which leads to a significant speed-up. The original GC-MC method did not scale up well beyond 3000 by 3000 rating matrices with the user and the item side graphs as it requires using normalized adjacency matrix as user/item features. PinSage \cite{ying2018graph}, while scalable, does not utilize the user/item side graphs. Furthermore, it is not feasible to have $O(n)$ dimensional features for the nodes, where $n$ is the number of nodes in side graphs. In contrast, our method only requires $O(\log(n))$ dimensional features. We can see from Table~\ref{tab:gcn} that we outperform both GCN-based methods \cite{berg2017graph} and \cite{monti2017geometric} in terms of performance by a large margin. % for the same datasets they used.

Note that another potential way to improve over GC-MC is to use other graph encoding schemes like Node2Vec \cite{grover2016node2vec} and DeepWalk \cite{perozzi2014deepwalk} to encode the user-user graph into node features. One clear drawback is that those graph embedding methods are time-consuming. 
% Graph encoding schemes like Node2Vec \cite{grover2016node2vec} that require training can be time-consuming. 
Using the official Node2vec implementation, excluding reading and writing, it takes 416.13 seconds to encode the 3K by 3K subsampled Yahoo-Music item graph and obtain resulting 760-d node embeddings. For our method, it only takes 7.55 seconds to obtain the same 760-d features. Similarly, it takes over 15 mins to run the official C++ codes for DeepWalk \cite{perozzi2014deepwalk} using the same parameters as Node2Vec to encode the graph. In fact, fast encoding via hashing and bitwise-or that does not require training is one of the main advantages of our method.  

Furthermore, even without considering the time overhead, we found our graph DNA encoding outperforms Node2Vec and DeepWalk in terms of test RMSE. Details can be found in Table~\ref{tab:gcn}. This could be due to that encoding higher-order information is more important for graph-regularized recommendation tasks, and graph DNA is a better and more direct way to encode higher order information compared with Node2Vec and DeepWalk. 
% Using our encoding scheme, we see that we outperform Node2Vec and DeepWalk encoding schemes in Table~\ref{tab:gcn}.
\vspace{-0.5em}

\paragraph{Speed Comparisons}\label{sec:speed}
Next, we compare the speed-ups obtained by graph DNA-$d$ with GRMF $G^d$ (a naive way to encode higher order information by computing powers of G).
Figure 3 suggests that graph DNA-1 (which encodes hop-2 information) scales better than directly computing $G^2$ in GRMF.
% Since both algorithms scale with the number of edges in the constructed graph, we see that the Bloom filter based method scales substantially better compared to computing and using $G^2$ in Figure~\ref{fig:speed}.

\begin{table}
%\vskip -0.15in
  \caption{Comparison of GRMF Methods of different ranks for Explicit Feedback on Flixster Dataset.}
  \label{tab:rank}
%   \vskip -0.15in
  \resizebox{0.6\textwidth}{!}{
  \begin{tabular}{llccc}
    \toprule
    Rank & methods & test RMSE ($ \times 10^{-1}$)  & \% gain   \\
    \midrule
    \multirow{2}{*}{10}
    &GRMF\_$G^2$                    & 8.7849    & -     \\
    &GRMF\_DNA-3  & \bfseries{8.7383}  & \bfseries{0.8262}\\
    \midrule
    \multirow{2}{*}{20}
    &GRMF\_$G^2$                    & 8.9179   & -     \\
    &GRMF\_DNA-3  & \bfseries{8.7565}  & \bfseries{1.8098}\\
    \midrule
    \multirow{2}{*}{30}
    &GRMF\_$G^2$                   & 9.0865    & -     \\
    &GRMF\_DNA-3  & \bfseries{8.9255}    & \bfseries{1.7719}\\
  \bottomrule
\end{tabular}
}
% \vskip -0.15in
\end{table}

\paragraph{Exploring Effects of Rank}\label{rank}
Finally,  we investigate whether the proposed DNA coding can achieve consistent improvements when varying the rank in the GRMF algorithm.
In Table~\ref{tab:rank}, we compare the proposed GRMF\_DNA-3 with GRMF\_$G^2$, which achieves the best RMSE without using DNA coding in the previous tables.
The results clearly show that the improvement of the proposed DNA coding is consistent over different ranks and works even better when rank is larger.

\vspace{-1em}
 \section{Conclusion}
 \label{sec:conclusion}
%\vspace{-1em}
In this chapter, we proposed Graph DNA, a deep neighborhood aware encoding scheme for collaborative filtering with graph information. We make use of Bloom filters to incorporate higher order graph information, without the need to explicitly minimize a loss function. The resulting encoding is extremely space and computationally efficient, and lends itself well to multiple algorithms that make use of graph information, including Graph Convolutional Networks. Experiments show that Graph DNA encoding outperforms several baseline methods on multiple datasets in both speed and performance.

\clearpage
\newpage

\chapter{Large-scale Pairwise Collaborative Ranking in Near-Linear Time}

\section{Introduction}

%\begin{itemize}
%\item Recommendation systems are important. 
%Matrix factorization approach is widely used. 

%The problem of matrix factorization (the following paragraph 
%is moved from section 4, I think it's good to mention this in the introduction): 

%Although most real data, like Netflix data, has the form of user, item, rating, we still want to transform the input form into pairwise comparisons, because fitting the rating matrix explicitly leads to many issues. One issue is that different users may have very different understanding of ratings: some people tend to give very high ratings while some people tend to give lower ratings in general. For example, in the case of user giving ratings from 1 to 5 to movies, the rating of 3 for one user may actually be comparable for rating 5 for another user. Another issue with fitting the rating directly is that even for one user, he may give inconsistent ratings. Sometimes users give ratings overly generous (or harsh) in the beginning and then regret about their initial ratings. This is called {\bf calibration drawback} for point-wise collaborative models fitting explicit ratings such as what matrix factorization models do. \cite{hacker2009matchin} \\

%\item Collaborative ranking. Define the problem. 
%\item Existing approaches, their scalability, why they are slow
%\item We propose new algorithms to reduce time copmlexity
%\item Briefly summarize the experimental results
%\end{itemize}

In online retail and online content delivery applications, it is commonplace to have embedded recommendation systems-- algorithms that recommend items to users based on previous user behaviors and ratings.
Online retail companies develop sophisticated recommendation systems based on purchase behavior, item context, and shifting trends. 
The Netflix prize \cite{Bennett07thenetflix}, in which competitors utilize user ratings to recommend movies, accelerated research in recommendation systems.
While the winning submissions agglomerated several existing methods, one essential methodology, latent factor models, emerged as a critical component.
The latent factor model means that the approximated rating for user $i$ and item $j$ is given by $u_i^\top v_j$ where $u_i,v_j$ are $k$-dimensional vectors.
One interpretation is that there are $k$ latent topics and the approximated rating can be reconstructed as a combination of factor weights.
By minimizing the square error loss of this reconstruction we arrive at the incomplete SVD,
\begin{equation}\label{eq:MF}
\min_{U,V} \sum_{i,j \in \Omega} \left(R_{i,j} - u_i^\top v_j \right)^2 + \frac{\lambda}{2} (\|U\|_F^2 + \|V\|_F^2),
\end{equation}
where $\Omega$ contains sampled indices of the rating matrix, $R$.

Often the performance of recommendation systems is not measured by the quality of rating prediction, but rather the ranking of the items that the system returns for a given user.
The task of finding a ranking based on ratings or relative rankings is called Collaborative Ranking.
Recommendation systems can be trained with ratings, that may be passively or actively collected, or by relative rankings, in which a user is asked to rank a number of items.
A simple way to unify the framework is to convert the ratings into rankings by making pairwise comparisons of ratings.
Specifically, the algorithm takes as input the pairwise comparisons, $Y_{i,j,k}$
% something wrong this equation, so i commented out by liwei
%\begin{equation}
%Y_{i,j,k} = R_{i,j} - R_{i,j,k}
%\end{equation}
for each user $i$ and item pairs $j,k$.
This approach confers several advantages.
Users may have different standards for their ratings, some users are more generous with their ratings than others.
This is known as the calibration drawback, and to deal with this we must make a departure from standard matrix factorization methods.
Because we focus on ranking and not predicting ratings, we can expect improved performance when recommending the top items.
Our goal in this chapter is to provide a collaborative ranking algorithm that can scale to the size of the full Netflix dataset, a heretofore open problem.

%Recommendation systems have been widely used online. For example, Netflix need to recommend movies accurately to keep the loyal customers; Amazon.com want to recommend products to increase sales; iTunes music hope to recommend music to users according to their taste. Many recommendation systems use explicit ratings as a metric for user preferences and solicit users to rate items before making any personalized recommendation. It is an important goal for recommendation systems to obtain top $k$ items in correct order to make personalized recommendation for a user based on historical ratings of users with similar tastes. Correct order of recommendation is important because we want to make the most relevant items to appear in the front of the recommendation list. This is called Collaborative Ranking problem. Obviously, this is a harder problem than simply recommending a top $k$ list. 

The existing collaborative ranking algorithms, (for a summary see section~\ref{sec:related}), are limited by the number of observed ratings per user in the training data and cannot scale to massive datasets, therefore, making the recommendation results less accurate and less useful in practice. 
%None of these can apply to the full Netflix dataset using all the ratings, which makes collaborative ranking algorithms less attractive in real life. 
This motivates our algorithm, which can make use of the entire Netflix dataset without sub-sampling. 
Our contribution can be summarized below: 
\begin{itemize}
\item For input data in the form of pairwise preference comparisons, we propose a new algorithm Primal-CR that alternatively minimizes latent factors
using Newton's method in the primal space. By carefully designing the computation of gradient and Hessian vector product, our algorithm reduces the sample complexity per iteration to $O(|\Omega| + d_1 \bar{d}_2 r)$, while the state-of-the-art approach~\cite{park2015preference} have $O(|\Omega| r)$ complexity. Here $|\Omega|$ (total number of pairs), is much larger than $d_1 \bar{d}_2$ ($d_1$ is number of users and $\bar{d}_2$ is averaged number of items rated by a user). For the Netflix problem, $|\Omega|=2\times 10^{10}$ while $d_1\bar{d}_2=10^8$. 
\item For input data in the form of ratings, we can further exploit the structure to speedup the gradient and Hessian computation. The resulting
algorithm, Primal-CR++, can further reduce the time complexity to $O(d_1 \bar{d}_2(r+\log \bar{d}_2))$
per iteration. 
In this setting, our algorithm has time complexity near-linear to the input size, and have comparable speed with classical matrix factorization model that takes $O(d_1 \bar{d}_2 r)$ time, 
while we can achieve much better recommendation by minimizing the ranking loss. 
\end{itemize}
We show that our algorithms outperform existing algorithms on real world datasets and can be easily parallelized. 
%achieves significant speedups as number of cores increases.
%For input data in the form of pairwise preference comparisons, our Primal-CR algorithm has a better computational complexity than one proposed earlier for optimizing the same model using pairwise comparisons \cite{park2015preference}. 
%For input data in the form of ratings, our Primal-CR++ algorithm which optimizes the same model without a transformation to pairwise comparisons but also has theoretical time complexity further reduced on the top of Primal-CR. We show that our algorithms outperform existing collaborative ranking algorithms on three different datasets and achieves significant speedups as number of cores increases. 

\section{Related Work} \label{sec:related}
Collaborative filtering methodologies are summarized in \cite{schafer2007collaborative} (see \cite{deshpande2004item} for an early work).
Among them, matrix factorization~\cite{koren2009matrix} has been widely used due to the success in the Netflix Prize. 
Many algorithms have been developed based on matrix factorization~\cite{rendle2009bpr,rendle2010factorization,chiang2015matrix,hsieh2015pu,si2016goal}, and many scalable algorithms have been developed~\cite{koren2009matrix,gemulla2011large}. 
However, they are not suitable for ranking top items for a user due to the fact that their goal is to minimize the mean-square error (MSE) instead of 
ranking loss. 
%rather than the precision and normalized discounted cumulative gain \cite{balakrishnan2012collaborative}. 
In fact, MSE is not a good metric for recommendation when we want to recommend the top $K$ items to a user. 
%because it places equal emphasis on all the ratings but items out of top $\ell$ to recommend should be emphasized more. 
This has been pointed out in several papers~\cite{balakrishnan2012collaborative} which argue normalized discounted
cumulative gain (NDCG) should be used instead of MSE, 
and our experimental results also confirm this finding by showing that minimizing the ranking loss results in better precision and NDCG compared with the traditional matrix factorization approach that is targeting squared error. 
%The paper argues for collaborative ranking as a better way to make recommendations, and our experiments have also confirmed this point. 

Ranking is a well studied problem, and there has been a long line of research focuses on learning one ranking function, which is called Learning to Rank.
%For one ranking setting (i.e. predict ranking for one user), 
For example, RankSVM~\cite{TJ02a} is a well-known pair-wise model, and an efficient solver has been proposed in~\cite{chapelle2010efficient} for solving rankSVM. 
\cite{cao2007learning} is a list-wise model implemented using neural networks. 
Another class of point-wise models fit the ratings explicitly but has the issue of calibration drawback (see~\cite{hacker2009matchin}). 
%List-wise models, implemented using neural networks, has been proposed as well \cite{cao2007learning}. One class, called the point-wise model, fits the ratings explicitly but has the issue called the calibration drawback, as mentioned earlier \cite{hacker2009matchin}.

The collaborative ranking (CR) problem is essentially trying to learn multiple rankings together, 
%All three different classes have been extended to many rankings setting (i.e. predicting personalized rankings for different users), 
and several models and algorithms have been proposed in literature. 
The Cofirank algorithm~\cite{weimer2007maximum}, which tailors maximum margin matrix factorization~\cite{srebro2004maximum} for collaborative ranking, is a point-wise model for CR, and is regarded as the performance benchmark for this task. 
If the ratings are 1-bit, a weighting scheme is proposed to improve the usual point-wise Matrix Factorization approach~\cite{pan2008one}.
%Out of the point-wise models, the state of the art is the Cofirank Algorithm \cite{weimer2007maximum}, which is regarded as performance benchmark to compare with by many algorithms proposed later. 
%Our algorithms outperform Cofirank in terms of speed and accuracy, as shown in the experiments. 
%than Cofirank as shown later. 
List-wise models for Learning to Rank can also be extended to many rankings setting,  
%and one example is the ListRank-MF, a list-wise learning-to-rank algorithm with
%matrix factorization~
\cite{shi2010list}. 
%where the loss function is cross-entropy between empirical top one
%probabilities and underlying score. 
However it is still quite similar to a point-wise approach since they only consider the top-1 probabilities. 
%the cross-entropy loss will be smallest when the score matrix explicitly fits the rating matrix.  
%, which is evaluated against Cofirank \cite{shi2010list}. 

For pairwise models in collaborative ranking, it is well known that they do not encounter the calibration drawback as do point-wise models, but they are computationally intensive and cannot scale well to large data sets~\cite{shi2010list}. 
The scalability problem for pairwise models is mainly due to the fact that their time complexity is at least proportional to $|\Omega|$, the number of pairwise preference comparisons, which grows quadratically with number of rated items for each user. 
Recently, \cite{park2015preference} proposed a new Collrank algorithm, and they showed that 
Collrank has better precision and NDCG as well as being much faster compared with other CR methods on real world datasets, including Bayesian Personalized Ranking (BPR)~\cite{rendle2009bpr}. 
Unfortunately their scalability is still constrained by number of pairs, so they can only run on subsamples for large datasets, such as Netflix. 
In this chapter, our algorithm Primal-CR and Primal-CR++ also belong to the family of pairwise models, but due to cleverly re-arranging the computation, 
we are able to have much better time complexity than existing ones, and as a result our algorithm can scale to very large datasets. 

%The model we use in this paper fits into pairwise model class but our new algorithms has near-linear time complexity for rating data. The model has been proven to have some mathematical guarantees and was proposed to be solved using Collrank algorithm, which is claimed to be intended for large-scale collaborative ranking but actually has to do subsampling of the training data and cannot use all the training data for Netflix dataset\cite{park2015preference}. 
%Our algorithms called Primal-CR and {\bf Primal++}, on the contrary, can use all the ratings in Netflix and achieve better recommendation results. 

There are many other algorithms proposed for many rankings setting but none of these mentioned below can scale up to the extent of using all the ratings in the full Netflix data. There are a few using Bayesian frameworks to model the problem  \cite{rendle2009bpr}, \cite{pan2013gbpr}, \cite{volkovs2012collaborative}, the last of which requires many specified parameters. Another one proposed retargeted matrix factorization to get ranking by monotonically transforming the ratings \cite{koyejo2013retargeted}.  \cite{gunasekar2016preference} proposes a similar model without making generative assumptions on ratings besides assuming low-rank and correctness of the ranking order. 
%It is not clear if the implemented proximal gradient descent algorithm can scale to massive datasets.

%For the evaluation of the model on test data, mean NDCG is argued to be used instead of RMSE as the metric for recommendation systems \cite{balakrishnan2012collaborative}. , second is that the fitted rating value is not even shown to the user in many cases such as in Netflix. The paper also argues for collaborative ranking as a better way to make recommendations. The reason given is that, most of the time, we just use the predicted ratings to generate a top-$k$ list to make recommendations to the user, getting the correct order for the top of the list is crucial. 

\section{Problem Formulation}

%The input of the problem is some pairwise preferences provided by users and we are interested in predict the rankings of all items such that we can predict top $k$ items for each particular user. We aim for personalization, hoping to recommend different items in different order for different users according the particular user's taste. It is called Collaborative Ranking, since we use other users with similar taste to help predict the preferences for that user. \\
We first formally define the collaborative ranking problem using the example of
item recommender system. 
Assume we have $d_1$ users and $d_2$ items, the input data is given in the form of
%We denote the total number of users to be $d_1$ and total number of items to be $d_2$. The input data is in the form 
``for user $i$, item $j$ is preferred over item $k$" and thus can be represented by a set of tuples $(i, j, k)$. 
We use $\Omega$ to denote the set of observed
tuples, and the observed pairwise preferences are denoted as 
$\{Y_{ijk} \mid (i,j,k)\in \Omega\}$, 
where 
$Y_{ijk} = 1$ denotes that item $j$ is preferred over item $k$ for a particular user $i$ and $Y_{ijk} = -1$ to denote that item $k$ is preferred over item $j$ for user $i$.  

The goal of collaborative ranking is to rank all the unseen items for each user $i$
based on these partial observations, which can be done by 
fitting a scoring matrix $X \in \mathbb{R}^{d_1 \times d_2}$. If the scoring matrix has $X_{ij} > X_{ik}$, it implies that item $j$ is preferred over item $k$ by the particular user $i$ and therefore we should give higher rank for item $j$ than item $k$. After we estimate the scoring matrix $X$ by solving the optimization problem described below, 
%applying the optimization techniques to the objective function described below, 
we can then recommend top $k$ items for any particular user. 

The Collaborative Ranking Model referred to in this chapter is the one proposed recently in~\cite{park2015preference}. It belongs to the family of pairwise models for  collaborative ranking because it uses pairwise training losses~\cite{balakrishnan2012collaborative}. 
%The model is given as 
%\begin{align} \label{eq:original}
%\min_{X} \sum_{(i, j, k) \in \Omega} \L(Y_{ijk} (X_{ij} - X_{ik})) \\
%\text{subject to} \quad \text{rank}(X) \leq r
%\end{align}
%\\, where $\L(.)$ can be any monotonically non-increasing loss function, but the $\L_2$ hinge loss is the one giving the best performance in practice \cite{park2015preference}. In this paper, when we describe our algorithms, we assume we use the $\L_2$ loss for simplicity reason. The $\L_2$ hinge loss function is given by $\L(a) = max(0, 1 - a)^2$. $X$ here is assumed to be low-rank because the feature set on which the users base their rating decisions on should be small and common across the items. \\
%
%Due to the rank constraint, it is NP-hard to solve the above objective function listed above in equation~\ref{eq:original}. Replacing the rank constraint with the nuclear norm constraint gives the convex relaxation of the previous model, which is a common technique in matrix completion to give rise to a convex surrogate for low-rank structure:
%\begin{align}
%\min_{X} \sum_{(i, j, k) \in \Omega} \L(Y_{ijk} (X_{ij} - X_{ik})) \\
%\text{subject to} \quad ||X||_* \leq \sqrt{\lambda d_1 d_2}
%\end{align}
%\\, 
%or written as 
%\begin{equation} \label{eq:convex3}
%\min_{X} \sum_{(i, j, k) \in \Omega} \L(Y_{ijk} (X_{ij} - X_{ik})) + \lambda \|X\|_*
%\end{equation}
%\\, 
The model is given as 
\begin{equation} \label{eq:convex3}
\min_{X} \sum_{(i, j, k) \in \Omega} \\L(Y_{ijk} (X_{ij} - X_{ik})) + \lambda ||X||_*, 
\end{equation}
where $\\L(.)$ is the loss function, $\|X\|_*$ is the nuclear norm regularization defined by the sum of all the singular value of the matrix $X$, and $\lambda$ is a regularization parameter. 
%controls the balance of how well in fitting the observed data and how strict on forcing the low-rank structure. 
%In practice, $lambda$ should be tuned to achieve the best recommendation results using cross-validation. 
The ranking loss defined in the first term of~\eqref{eq:convex3} penalizes the pairs when $Y_{ijk}=1$ but $X_{ij}-X_{ik}$ is positive but small, and penalizes even more when the difference is negative. The second term in the loss function is based on the assumption that there are only a small
number of latent factors contributing to the users' preferences which is analogous to the idea behind incomplete SVD for matrix factorization mentioned in the introduction.
%The objective function makes sense, because we want $X_{ij} - X_{ik} > 0$ and difference to be large (greater than 1) when $Y_{ijk} = 1$. 
In general we can use any loss function, but since $\\L_2$-hinge loss defined as 
\begin{equation}
\\L(a) = \max(0, 1-a)^2
\label{eq:l2_hinge}
\end{equation} 
gives the best performance in practice~\cite{park2015preference} and enjoys many nice
properties, such as smoothness and differentiable, we will focus on $L_2$-hinge loss in this chapter. In fact, our first algorithm Primal-CR
can be applied to any loss function, while Primal-CR++ can only be applied to $L_2$-hinge loss. 
%while in general our Primal-CR approach can be used for other 
%twice differentiable functions. 
%paper, when we describe our algorithms, we assume we use the $\L_2$ loss for simplicity reason. The $\L_2$ hinge loss function is given by $\L(a) = max(0, 1 - a)^2$.

Despite the advantage of the objective function in equation~\eqref{eq:convex3} being convex, it is still not feasible for large-scale problems since $d_1$ and $d_2$ can be very large so that the scoring matrix $X$ cannot be stored in memory, not to mention how to solve it. %For example, the Netflix data $d_1 = 2649430$ and $d_2 = 17771$. The matrix $X$ in this case is too large to optimize with respect to.
Therefore, in practice people usually transform~\eqref{eq:convex3} to a non-convex form by replacing
$X=UV^T$, and in that case since $\|X\|_* = min_{X = U^T V} \frac{1}{2}(\|U\|_F^2 + \|V\|_F^2)$ \cite{srebro2004maximum}, problem~\eqref{eq:convex3} can be reformulated as
%solve the non-convex form of~\eqref{eq:convex3}
% in matrix completion literature, usually $X$ is replaced with the product of two low-dimensional matrix and the original convex problem is transformed into a non-convex one \cite{srebro2004maximum}. 
%
%For the minimization problem in equation~\ref{eq:convex3}, if $X$ is replaced with $X = U^T V$, it is known that $||X||_* = min_{X = U^T V} \frac{1}{2}(||U||_F^2 + ||V||_F^2)$ \cite{srebro2004maximum}. 
%This gives a new non-convex problem: 
\begin{equation}
\min_{U, V} \sum_{(i,j,k)\in \Omega}  \\L(Y_{ijk} \cdot u_i^T (v_j - v_k)) + \frac{\lambda}{2} (\|U\|^2_F + \|V\|^2_F), 
\label{eq:obj}
\end{equation}
%where $\bar{d}_2(i)$ denotes the subset of observed triples $\Omega$ that are associated with user $i$ and $U\in \R^{r \times d_1}$, $V \in \R^{r \times d_2}$. 
%so that $X$ can be recovered as $U^T V$ in the end. 
We use $u_i$ and $v_j$ denote columns of $U$ and $V$ respectively. 
%and we propose a new speed-up algorithm to solve the optimization problem above. 
Note that~\cite{park2015preference} also solves the non-convex form~\eqref{eq:obj} 
in their experiments, and in the rest of the paper we will propose a faster algorithm 
for solving~\eqref{eq:obj}. 

\section{Proposed Algorithms}
\subsection{Motivation and Overview}
%Although most real data, like Netflix data, has the form of user, item, rating, we still want to transform the input form into pairwise comparisons, because fitting the rating matrix explicitly leads to many issues. One issue is that different users may have very different understanding of ratings: some people tend to give very high ratings while some people tend to give lower ratings in general. For example, in the case of user giving ratings from 1 to 5 to movies, the rating of 3 for one user may actually be comparable for rating 5 for another user. Another issue with fitting the rating directly is that even for one user, he may give inconsistent ratings. Sometimes users give ratings overly generous (or harsh) in the beginning and then regret about their initial ratings. This is called {\bf calibration drawback} for point-wise collaborative models fitting explicit ratings such as what matrix factorization models do. \cite{hacker2009matchin} \\

Although collaborative ranking assumes that input data is given in the form of
pairwise comparisons, in reality almost all the datasets (Netflix, Yahoo-Music, MovieLens, etc) contain user ratings to items in the form 
of $\{R_{ij} \mid (i,j)\in \bar{\Omega} \}$, where $\bar{\Omega}$ is the subset of observed user-item pairs. 
Therefore, in practice we have to transform the rating-based data into pair-wise comparisons by generating
all the item pairs rated by the same user: 
%, so the input set of collaborative ranking is
\begin{equation}
   \Omega = \{ (i,j,k) \mid j,k \in \bar{\Omega}_i \}, 
   \label{eq:pairs}
\end{equation}
where $\bar{\Omega}_i:=\{j\mid (i,j)\in \bar{\Omega} \}$ is the set of items rated by user $i$. 
Assume there are averagely $\bar{d_2}$ items rated by a user (i.e., $\bar{d_2}=\text{mean}(|\bar{\Omega}_i|)$), 
then the collaborative ranking problem will have $O(d_1 \bar{d_2}^2)$ pairs and thus the size of $\Omega$ 
grows quadratically. 

%By transforming the data into pairwise comparisons, the accuracy of recommendation system will be higher but the challenge is that the input file is a set of triples $\Omega \subset [d_1] \times [d_2] \times [d_2]$. 
Unfortunately, all the existing algorithms have $O(|\Omega| r)$ complexity, so they cannot scale to large number
of items. For example, the AltSVM (or referred to as Collrank) Algorithm in~\cite{park2015preference} will run out of memory when we subsample 500 rated items per user on Netflix dataset since its implementation\footnote{Collrank code is available on \url{https://github.com/dhpark22/collranking}.} stores all the pairs in memory and therefore requires $O(|\Omega|)$ memory. So it cannot be used for the full Netflix dataset which has
more than {\it 20 billion pairs} and requires 300GB memory space. 
%does not work for $N = 500$ for Netflix data and is extremely slow even for $N = 200$, where $N$ is the number of items rated by one user and users with less than $N + 10$ ratings are thrown away (Out of $2649430$, only $143205$ users are kept in the training and test set)\cite{park2015preference}.  That is why in literature people usually set a limit on the number of ratings one user give in the training data and treat the rest as testing data. \\
%If every pair of preference comparison is stored and used in training data set, the input file for Collrank codes is more than 300GB. 
%The codes provided by the author proposing the model cannot even successfully read in the training data. 
To the best of our knowledge, no collaborative ranking algorithms have been applied to the full Netflix data set. But in real life, we hope to make use of as much information as possible to make better recommendation. As shown in our experiments later, using full training data instead of sub-sampling (such as selecting a fixed number of rated items per user) achieves higher prediction and recommendation accuracy for the same test data.
%Our algorithm is the first one to apply to the full Netflix data set without setting a limit on the number of ratings per user in training data set. As shown later in experiments section, more ratings per user gives better recommendation accuracy in terms of NDCG and pairwise error for the same test data set. 

%This is the motivation for our newly proposed algorithm, which has a better theoretical time complexity. We hope that our algorithms will make the Collaborative Ranking Model a better choice than standard Collaborative Filtering techniques. So far Collaborative Ranking Model has not been widely used due to the long computing time. 
To overcome this scalability issue, we propose two novel algorithms for solving problem~\eqref{eq:obj}, and both of them significantly
reduce the time complexity over existing methods. 
If the input file is in the form of $|\Omega|$ pairwise comparisons, our proposed algorithm, Primal-CR, can reduce the time and space complexity from $O( |\Omega| r) $ to $O( |\Omega| + d_1 \bar{d_2} r)$, where $\bar{d_2}$ is the average number of items compared by one user. 
If the input data is given as user-item ratings (e.g., Netflix, Yahoo-Music), the complexity is reduced 
from $O(d_1  \bar{d_2}^2  r)$ to $O( d_1 \bar{d_2} r + d_1 \bar{d_2} ^2  ) $. 
%so the improvement is in the order of $\min(\bar{d_2},r)$. 

%We call this algorithm Primal-CR, because we use primal method instead of the dual method proposed before. 
If the input file is given in ratings, we can further reduce the time complexity to $O( d_1 \bar{d_2} r + d_1 \bar{d_2} \log\bar{d_2} )$ using 
exactly the same optimization algorithm but smarter ways to compute gradient and Hessian vector product. This time complexity is much smaller than the number of comparisons $|\Omega|=O(d_1\bar{d}_2^2)$, and we call this algorithm Primal-CR++. 
%In this setting, the input is given by a rating matrix $R \in \R^{d_1 \times d_2}$, where $R_{ij}$ denotes the rating user $i$ gives to item $j$, and our algorithm has time complexity almost-linear to the input size except for a log factor. 
%This one works extremely well when the number of pairwise preference comparisons is large and the number of levels for rating is small. Both  Primal-CR and  Primal-CR++ can be parallelized well and scales up favorably to enormous data set. 

We will first introduce Primal-CR in Section~\ref{sec:Primal}, and then present Primal-CR++ in Section~\ref{sec:Primal++}. 
%, and briefly discuss how to parallelize them in Section~\ref{sec:parallel}. 

\subsection{Primal-CR: the proposed algorithm for pairwise input data}
\label{sec:Primal}
\begin{algorithm}
\caption{Primal-CR / Primal-CR++: General Framework \label{alg:primal_general}}
\begin{algorithmic}[1]
\Require $\Omega$, $\{Y_{ijk}: (i, j, k) \in \Omega \}$, $\lambda\in \R^+$ \Comment{for Primal-CR}
\Require $M \in \R^{d_1 \times d_2}$, $\lambda\in \R^+$ \Comment{for Primal-CR++}
\Ensure $U\in \R^{r \times d_1}$ and $V \in \R^{r \times d_2}$
\State Randomly initialize $U, V$ from Gaussian Distribution
\While{not converged}
\Procedure{Fix $U$ and update $V$}{}
	\While{not converged}
	\State Apply truncated Newton update (Algorithm~\ref{alg:newton})
	\EndWhile
\EndProcedure
\Procedure{Fix $V$ and update $U$}{}
	\While{not converged}
	\State Apply truncated Newton update (Algorithm~\ref{alg:newton})
	%Use Truncated Newton Method
	\EndWhile
\EndProcedure
\EndWhile
\State \textbf{return} $U, V$\Comment{recover score matrix $X$}
\end{algorithmic}
\end{algorithm}

\begin{algorithm}
\caption{Truncated Newton Update for $V$ (same procedure can be used for updating $U$)%side {\bf Procedure} UPDATE($V$) and UPDATE($U$). The Newton step is computed with linear conjugate gradient.
\label{alg:newton}}
\begin{algorithmic}[1]
%\Require Computed gradient $g$ for $U$ or $V$   %\Comment{varies for Primal and Primal++}
\Require Current solution $U, V$
%, to Compute $H a$ for any vector $a$ % \Comment{varies for Primal and Primal++}
\Ensure $V$
%\Repeat
\State Compute $g = \text{vec}(\nabla f(V))$
\State Let $H=\nabla^2 f(V)$ (do not explicitly compute $H$)
\Procedure{Linear Conjugate Gradient}{g, F}
\State Initialize $\delta_0 = 0$ 
\State $r_0 = H \delta_0 - g$, $p_0 = - r_0$
	\For{$k = 0, 1, ..., maxiter$}
	    \State Compute the Hessian-vector product $q = H p_k$ 
		\State $\alpha_k = - r_k ^ T p_k / p_k^T q$ 
		\State $\delta_{k+1} = \delta_k + \alpha_k p_k$
		\State $r_{k + 1} = r_k + \alpha_k q$
		\If{$||r_{k + 1}||_2 < ||r_0||_2 \cdot 10^{-2}$}
			\State \textbf{break}
		\EndIf
		\State $\beta_{k + 1} = (r_{k + 1} q) / p_k^T q$
		\State $p_{k + 1} = -r_{k + 1} + \beta_{k + 1} p_k$
	\EndFor
	\State \textbf{return} $\delta$ 
\EndProcedure
\State  $V = V - s\delta$ (stepsize $s$ found by line search) 
%\Until{Convergence (of Newton)}
\State \textbf{return} $U$ or $V$ 
\end{algorithmic}
\end{algorithm}

In the first setting, we consider the case where the pairwise comparisons $\{Y_{ijk} \mid (i,j,k)\in \Omega\}$ are given as input. 
To solve problem~\eqref{eq:obj}, 
we alternatively minimize $U$ and $V$ in the primal space (see Algorithm~\ref{alg:primal_general}). %while~\cite{park2015preference} 
%was trying to optimize~\eqref{eq:obj} in the dual space. 
%Since there are two matrices $U$ and $V$ to optimize with respect to in the objective function and it is hard to optimize them at the same time, we decide to use the alternating minimization framework. \\
First, we fix U and update V, and the subproblem for V while U is fixed can be written as follows: 
\begin{equation} 
V = \argmin_{V \in \R^{r \times d_2}} \bigg\{ \frac{\lambda}{2} ||V||^2_F + \sum_{(i,j,k)\in \Omega}   \\L(Y_{ijk}\cdot u_i^T (v_j - v_k)) \bigg\} := f(V)
 \label{eq:problem_V} 
\end{equation} 
In~\cite{park2015preference}, this subproblem is solved by stochastic dual coordinate descent, which requires  $O( |\Omega| r)$ time and $O(|\Omega|)$ 
space complexity. Furthermore, the objective function decreases for the dual problem sometimes does not imply the decrease of primal objective function value, which often results in slow convergence.  
%Sometimes the AltSVM codes stops after a few iterations only before reaching the optimum point of the objective function, giving rise to bad recommendation results. 
We therefore propose to solve this subproblem for $V$ using the primal truncated Newton method (Algorithm~\ref{alg:newton}).

Newton method is a classical second-order optimization algorithm. 
For minimizing a vector-valued function $f(x)$, Newton method iteratively updates the 
solution by $x\leftarrow x- (\nabla^2 f(x))^{-1} \nabla f(x)$. However, the matrix inversion is usually 
hard to compute, so a truncated Newton method computes the update direction by solving the linear system
$\nabla^2 f(x) a = \nabla f(x) $ up to a certain accuracy, usually using a linear conjugate gradient method. 
If we vectorized the problem for updating $V$ in eq~\eqref{eq:problem_V}, the gradient is a $(rd_2)$-sized vector
and the Hessian is an $(rd_2)$-by-$(rd_2)$ matrix, so explicitly forming the Hessian is impossible. Below we discuss
how to apply the truncated Newton method to solve our problem, and discuss efficient computations for each part. 

{\bf Derivation of Gradient. }
When applying the truncated Newton method, 
the gradient $\nabla f(V)$ is a $\R^{r\times d_2}$ matrix and can be computed explicitly: 
\begin{equation} 
\nabla f(V) = \sum_{i=1}^{d_1} \sum_{(j, k) \in \Omega_i}   \\L'(Y_{ijk} \cdot u_i^T (v_j - v_k))(u_i e_j^T - u_i e_k^T) Y_{ijk} + \lambda V,  
\label{eq:gradient}
\end{equation}
where $\nabla f(V) \in \R^{r \times d_2}$, 
$\Omega_i:=\{(j,k)\mid (i,j,k)\in \Omega\}$ is the subset of pairs that associates with user $i$, 
and $e_j$ is the indicator vector used to add the $u_i$ vector to the $j$-th column of the output matrix. 
%specifics which column in $\nabla f(V)$ to add to. 
The first derivative for $L_2$-hinge loss function~\eqref{eq:l2_hinge} is 
\begin{equation}
\\L'(a) = 2 \min(a - 1, 0)
%\begin{cases}
%2(a - 1) &\text{if $a \leq 1$} \\ 0 &\text{if $a > 1$}
%\end{cases}
\end{equation}
For convenience, we define $g:=\text{vec} (\nabla f(V))$ to be the vectorized form of gradient. 
One can easily see that computing $g$ naively by going through all the pairwise comparisons $(j, k)$ and adding up arrays is time-consuming and has $O(|\Omega| r)$
%\approx O(d_1  \bar{d_2}^2  r)$ 
time complexity, which is the same with Collrank~\cite{park2015preference}. 

\paragraph{\bf Fast computation for gradient}
Fortunately, we can reduce the time complexity to $O(|\Omega| + d_1 \bar{d}_2 r)$ by smartly rearranging the computations, so that the time is only linear to 
$|\Omega|$  and $r$, but not to $|\Omega|r$. The method is described below. 

First, for each $i$, the first term of~\eqref{eq:gradient} can be represented by
\begin{equation}
    \sum_{(j, k) \in \Omega_i}   \\L'(Y_{ijk} \cdot u_i^T (v_j - v_k))(u_i e_j^T - u_i e_k^T) Y_{ijk} = \sum_{j\in \bar{d}_2(i)} t_j u_i e_j^T, 
    \label{eq:gradient_inner}
\end{equation}
where $\bar{d}_2(i) := \{j\mid \exists k \text{ s.t. } (i,j,k)\in \Omega \}$ and
$t_j$ is some coefficient computed by summing over all the pairs in $\Omega_i$. 
%The first term of eq~\eqref{eq:gradient} can be written as $\sum_i \sum_{j\in \bar{d}_2(i)} t_{ij} u_i E_j$, so 
If we have  $t_{j}$, the overall gradient
can be computed by $O(\bar{d}_2(i) r)$ time for each $i$. 
To compute $t_j$, we first compute $u_i^T v_j$ for all $j\in \bar{d}_2(i)$ in $O(\bar{d}_2(i)r)$ time, and then go through all the $(j,k)$ pairs while keep adding the coefficient related
to this pair to $t_j$ and $t_k$. Since there is no vector operations when we go through all pairs, this step only
takes $O(\Omega_i)$ time. After getting all $t_j$, we can then conduct $\sum_{j\in\bar{d}_2(i)} t_j u_i e_j^T$ in $O(\bar{d}_2(i)r)$ time.  
Therefore, the overall complexity can be reduced to 
$O(|\Omega| + d_1 \bar{d_2} r)$. The pseudo code is presented in Algorithm~\ref{alg:compute_g}. 
%We will describe in details how Primal-CR++ algorithm in section~\ref{sec:Primal} solves the gradient $g$ using precomputed information. \\

%We will show how to compute it in a faster way in Section~\ref{sec:Primal} and \ref{sec:Primal++}.

\begin{algorithm}
\caption{Primal-CR: efficient way to  compute $\nabla f(V)$ \label{alg:compute_g}}
\begin{algorithmic}[1]
\Require $\Omega$, $\{Y_{ijk}: (i, j, k) \in \Omega \}$, $\lambda\in \R^+$, 
current variables $U,V$
%\Require $M \in \R^{d_1 \times d_2}$, $\lambda\in \R^+$ \Comment{for Primal++}
\Ensure $g,m$ \Comment{$g\in \R^{d_2 r}$ is the gradient for $f(V)$}
\State Initialize $g = 0$ \Comment{$g \in \R^{r \times d_2}$}
\For {$i = 1, 2, \dots, d_1$}
	\ForAll{$j \in \bar{d_2}(i)$}
	\State precompute $u_i^T v_j$ and store in a vector $m_i$ %it in matrix $m$
	\EndFor
%\EndFor
%\For {$i = 1, 2, 3, ..., d_1$}
%	\State Obtain $m_i$, the row of $m$ for user $i$ 
		\State Initialize a zero array $t$ of size $d_2$
 	\ForAll {$(j, k) \in \bar{d_2}(i)$}
	\If{$Y_{ijk}(m_i[j] - m_i[k]) < 1$}
		%\State $mask = m_i[j] - m_i[k]$
		%\If{$mask < 1.0$} 
		\State $s = 2 (Y_{ijk} (m_i[j]-m_i[k])-1)$
		%\cdot (mask - 1.0)$
		\State $t[j] \pluseq Y_{ijk}s$
		\State $t[k] \mineq Y_{ijk}s$  \Comment{O(1) time per for loop iteration}
		%\EndIf
	\EndIf
	\EndFor
	
	\ForAll{$j\in \bar{d}_2(i)$ }
		\State $g[:, j] \pluseq  t[j] \cdot u_i$
	\EndFor
\EndFor
\State $g =  vec(g + \lambda V)$ \Comment{vectorize matrix $g\in \R^{r \times d_2}$}
\State Form a sparse matrix $m=[m_1 \dots m_{d_1}]$  \Comment{$m$ can be reused later}
%\State $g = vec(g)$ 
\State \textbf{return} $g,m$
\end{algorithmic}
\end{algorithm}

\paragraph{\bf Derivation of Hessian-vector product}
Now we derive the Hessian $\nabla^2 f(V)$ for $f(V)$. We define $\nabla_j f(V):=\frac{\partial}{\partial v_j} f(V) \in \R^r $ and $\nabla^2_{j,k}f(V) := \frac{\partial^2}{\partial v_j \partial v_k} f(V) \in\R^{r\times r}$ in the following derivations. 
From the gradient derivation, we have
%$\nabla f(V) \in \R^{r \times d_2}$, and its $j$-th column can be written as
%of $\nabla f(V)$ is zero vector if $j \notin \bar{d_2}(i)$ and when $j \in \bar{d_2}(i)$, the $j$-th column of $\nabla f(V)$ can be
%written as follows:
\begin{equation*} 
\nabla_j f(V) = \sum_{i : j\in \bar{d_2}(i)} \sum_{\substack{k \in \bar{d_2}(i) \\ k \neq j}}   \\L'(Y_{ijk} \cdot u_i^T (v_j - v_k)) u_i  Y_{ijk} + \lambda v_j.
\end{equation*}
%where $j$ is between $1$ and $d_2$, $V_j \in \R^{r \times 1}$ represents the $j$-th column of $V$ and $\nabla_j f(V) \in \R^{r \times 1}$. 
%The first sum is over all the $i$ such that $j\in \bar{d_2}(i)$ and second sum is over for all the other items $k$ compared by user $i$ distinct from the item $j$.
Taking derivative again we can obtain
%with respect to $\nabla_j f(V)$, for any $j\neq k$ we can obtain 
\begin{align*}
&\nabla^2_{j, k} f(V) = \\
&\begin{cases} 
\sum_{i : (j, k)\in \bar{d_2}(i)}   \\L''(Y_{ijk} \cdot u_i^T (v_j - v_k)) (-u_i u_i^T) 
&\text{ if $j\neq k$ } \\
\sum_{i : j\in \bar{d_2}(i)} \sum_{\substack{k \in \bar{d_2}(i),  k \neq j}}   \\L''(Y_{ijk} \cdot u_i^T (v_j - v_k)) u_i u_i^T + \lambda I_{r \times r} &\text{ if $j=k$} 
\end{cases}
\end{align*}
%\end{equation*}
%Similarly, for $j = k$, we have:
%\begin{equation*}
%\nabla^2_{v_j, v_k} f(V) = \sum_{i : j\in \bar{d_2}(i)} \sum_{\substack{k \in \bar{d_2}(i) \\ k \neq j}}   \L''(Y_{ijk} \cdot u_i^T (v_j - v_k)) u_i u_i^T + \lambda I_{r \times r}, 
%\end{equation*}
%where both $\nabla^2_{j, k} f(V) $ and $\nabla^2_{j, j} f(V) \in \R^{r \times r}$ 
and the second derivative for $\\L_2$ hinge loss function is given by:
\begin{equation}
\\L''(a) = \begin{cases}
2 &\text{if $a \leq 1$} \\ 0 &\text{if $a > 1$}. 
\end{cases}
\end{equation}
Note that if we write the full Hessian $H$ as a $(d_2 r)$ by $(d_2 r)$ matrix, then $\nabla^2_{j, k} f(V) $
is an $r \times r$ block in $H$, where there are totally $d_2^2$ of these blocks. 
%As shown in the psedocodes for Truncated Newton Method, we don't want to compute $H$ explicitly,
In the CG update for solving $H^{-1}g$, we only need to compute $H\cdot a$ for some $a  \in \R^{d_2 r}$. 
For convenience, we also partition this $a$ into $d_2$ blocks, each subvector $a_j$ has size $r$, 
so $a=[a_1; \cdots; a_j]$. 
%index $(j - 1) \cdot r + 1$ to index $j \cdot r$). 
Similarly we can use subscript to denote the subarray $(H\cdot a)_j$ of the array $H\cdot a$, which becomes
\begin{align}
(H \cdot a)_j &= \sum_{k \neq j} \nabla^2_{j, k} f(V) \cdot a_k + \nabla^2_{j, j} f(V) \cdot a_j \\
&= \lambda a_j + \sum_{i: j\in \bar{d}_2(i) } u_i \sum_{\substack{k\in \bar{d}_2(i) \\ k\neq j}} 
\\
&+ L''(Y_{ijk}\cdot u_i^T (v_j - v_k)) (u_i^T a_j - u_i^T a_k). 
\end{align}
Therefore, we have
\begin{align}
H \cdot a &= \sum_j E_j (H\cdot a)_j \nonumber\\
&= \lambda a + \sum_i \sum_{j \in \bar{d_2}(i)} E_j u_i \sum_{\substack{k \in \bar{d_2}(i)\\ k \neq j}} \\
&= L''(Y_{ijk}\cdot u_i^T (v_j - v_k)) (u_i^T a_j - u_i^T a_k)
\label{eq:hessian_full}
%&= \sum_j E_j  \sum_{k \neq j} \nabla^2_{j, k} f(V) \cdot a_k + \sum_j E_j \nabla^2_{j, j} f(V) \cdot a_j, 
\end{align} 
where $E_j$ is the projection matrix to the $j$-th block, indicating that we are only adding $(H\cdot a)_j$ to the $j$-th block of matrix, and setting $0$ elsewhere. 
%from index $(j - 1) \cdot r + 1$ to index $j \cdot r$ and adding $0$ elsewhere to the zero initialized vector $\in \R^{d_2 r}$. 

\begin{algorithm}
\caption{Primal-CR: efficient way to compute Hessian vector product \label{alg:compute_H}}
\begin{algorithmic}[1]
\Require $\Omega$, $\{Y_{ijk}: (i, j, k) \in \Omega \}$, $\lambda\in \R^+$, $a \in \R^{d_2 r}, m$, $U,V$
\Ensure $Ha$ \Comment{$Ha\in \R^{d_2 r}$ is needed in Linear CG}
\State $Ha = 0$  	\Comment{$Ha \in \R^{d_2 r}$}
\For {$i = 1, 2, \dots, d_1$}
	\ForAll{$j \in \bar{d_2}(i)$}
		\State precompute $u_i^T a_j$ and store it in array $b$
	\EndFor
%	\State Obtain $m_i$, the row of $m$ for user $i$ 
	\State Initialize a zero array t of size $d_2$
 	\ForAll {$(j, k) \in \bar{d_2}(i)$} 		
	%	\State $mask = Y_{ijk} (m_i[j] - m_i[k])$
		\If{$Y_{ijk} (m_i[j] - m_i[k]) < 1.0$} 
			\State $s_{jk} = 2.0 \cdot (b[j] - b[k])$
			\State $t[j] \pluseq s_{jk}$
			\State $t[k] \mineq s_{jk}$	\Comment{O(1) time per for loop iteration}
		\EndIf

	\EndFor
	
	\ForAll{$j\in \bar{d}_2(i)$}
		\State $(Ha)[(p - 1)\cdot r + 1: p \cdot r] \pluseq t[j] \cdot u_i$
	\EndFor
\EndFor
\State \textbf{return} $Ha$
\end{algorithmic}
\end{algorithm}

\paragraph{\bf Fast computation for Hessian-vector product }
Similar to the case of gradient computation, using a naive way to compute $H\cdot a$ requires 
$O(|\Omega|r)$ time since we need to go through all the $(i,j,k)$ tuples, and each 
of them requires $O(r)$ time. 
However, we can apply the similar trick in gradient computation to reduce the time complexity to $O(|\Omega|+ d_1 \bar{d}_2 r)$
by pre-computing $u_i^T a_j$ and caching the coefficient using the array $t$. The detailed algorithm is given in Algorithm~\ref{alg:compute_H}. 

Note that in Algorithm~\ref{alg:compute_H}, we can reuse the $m$ (sparse array storing the current prediction) which has been pre-computed in the gradient computation
(Algorithm~\ref{alg:compute_g}), and that will cost only $O(d_1 \bar{d}_2)$ memory. Even without storing the $m$ matrix, we can compute $m$ in the loop of line 4 in Algorithm~\ref{alg:compute_H}, which will not increase the overall computational complexity. 

%However, there are ways to speedup the computation, which will be discussed
%in Section~\ref{sec:Primal} and \ref{sec:Primal++}. 

%Omitting some derivation steps, the product of $H \in \R^{d_2 r \times d_2 r}$ and any vector $a \in \R^{d_2 r}$, $H \cdot a$ can be expressed explicitly and is given by:
%\begin{multline}
%H\cdot a = \lambda a + \sum_i \sum_{j \in \bar{d_2}(i)} E_j u_i \sum_{\substack{k \in \bar{d_2}(i)\\ k \neq j}} %\L''(Y_{ijk}\cdot u_i^T (v_j \\- v_k)) (u_i^T a_j - u_i^T a_k)
%\end{multline}
%Here $E_j$ has the same meaning as before and using $E_j$ is to indicate which subarray of the zero initialized $H\cdot a \in \R^{d_2 r}$ should be added to in the algorithm. Just like there are two ways for computing $g$, using the same ideas as before there are also two ways to compute $H\cdot a$, with exactly the same time complexity: $O(|\Omega| r) \approx O(d_1  \bar{d_2}^2  r)$ for Primal-CR++ and $O( d_1 \cdot (\bar{d_2} r + \bar{d_2} \log\bar{d_2}) )$ for Primal-CR++ as those for computing $g$.\\

\paragraph{\bf Fix $V$ and Update $U$}
After updating $V$ by truncated Newton, 
%After finish updating $V$ while fixing $U$ until the convergence of Newton, 
we need to fix $V$ and update $U$. The subproblem for $U$ can be written as:
\begin{equation}\label{eq:solveU}
U = \argmin_{U \in \R^{r \times d_2}} \{ \frac{\lambda}{2} ||U||^2_F + \sum_{i=1}^{d_1} \sum_{(j, k) \in \bar{d}_2(i)}   \\L(Y_{ijk}\cdot u_i^T (v_j - v_k)) \}  
\end{equation}

Since $u_i$, the $i$-th column of $U$, is independent from the rest of columns, equation~\ref{eq:solveU} can be decomposed into $d_1$ independent problems for $u_i$: 
\begin{equation} 
u_i = \argmin_{u \in \R^r} \frac{\lambda}{2} ||u||^2_2 + \sum_{(j, k) \in \bar{d_2}(i)}  \\L(Y_{ijk} \cdot u^T (v_j - v_k)) 
:= h(u)
\label{eq:ranksvm}
\end{equation}
Eq~\eqref{eq:ranksvm} is equivalent to an $r$-dimensional rankSVM problem. Since $r$ is usually small, 
the problems are easy to solve. 
In fact, we can directly apply an efficient rankSVM algorithm proposed in~\cite{chapelle2010efficient} to solve each $r$-dimensional rankSVM problem. 
This algorithm requires $O(|\Omega_i| + r|\bar{d}_2(i)|)$
time for solving each subproblem with respect to $u_i$, so the overall
complexity is $O(|\Omega| + rd_1 \bar{d}_2)$ time per iteration. 

%In the following we use similar algorithm as~\cite{chapelle2010efficient}. 
%%and since $u \in \R^{r}$, so it is much easier to derive the gradient and Hessian for $h(u)$. 
%First, the gradient for $h(u)$ is given by:
%\begin{equation} 
%\nabla h(u) = \sum_{(j, k) \in \bar{d}_2(i)}   \L'(Y_{ijk} \cdot u_i^T (v_j - v_k))(v_j - v_k) Y_{ijk} + \lambda u, 
%\end{equation}
%where $\nabla h(u) \in \R^r$.
%Similarly Hessian matrix for $h(u)$ is given by:
%\begin{multline} 
%\nabla^2 h(u) = \sum_{(j, k) \in \bar{d}_2(i)}   \L''(Y_{ijk} \cdot u_i^T (v_j - v_k))(v_j - v_k)(v_j - v_k)^T \\+ \lambda I_{r \times r}, 
%\end{multline}
%where $\nabla^2 h(u) \in \R^{r\times r}$. 
%Therefore, similarly, the product of $\nabla^2 h(u) \in \R^{r\times r}$ and any vector $a \in \R^r$ can be written as:
%\begin{multline} 
%\nabla^2 h(u) \cdot a = \sum_{(j, k) \in \bar{d}_2(i)}   \L''(Y_{ijk} \cdot u_i^T (v_j - v_k)) (v_j \cdot a \\ - v_k \cdot a) (v_j - v_k) + \lambda a
%\end{multline}
%Updating $u_i$ while fixing $V$ is almost the same as updating $V$ while fixing $U$, but is a simpler version, since the problem can be decomposed into each $i$. The computation for each $u_i$ is also much faster than $V$ since for this side the gradient is of dimension $\R^r$ and Hessian is of dimension $\R^{r\times r}$ while for the other side the 
%gradient is of dimension $\R^{d_2 r}$ and Hessian is of dimension $\R^{d_2 r\times d_2 r}$.  
%The overall complexity for this part is also $O(|\Omega| + d_1 \bar{d}_2 r)$. 

\paragraph{\bf Summary of time and space complexity}
When updating $V$, we first compute gradient by Algorithm~\ref{alg:compute_g}, which takes $O(|\Omega|+d_1\bar{d}_2 r)$
time, and each Hessian-vector product in~\ref{alg:compute_H} also takes the same time. The updates for $U$ takes the same
time complexity with updating $V$, so the overall time complexity is $O(|\Omega|+d_1\bar{d}_2 r)$ per iteration. The whole algorithm only needs
to store size $d_1\times r$ and $d_2\times r$ matrices for gradient and conjugate gradient method. 
The $m$ matrix in Algorithm~\ref{alg:compute_g} is not needed, but in practice we find it can speedup the code
by around 25\%, and it only takes $d_1 \bar{d}_2\leq |\Omega|$ memory space (less than the input size). Therefore, our
algorithm is very memory-efficient. 

Before going to Primal-CR++, we discuss the time complexity of Primal-CR when the input data is the user-item rating matrix. Assume $\bar{d}_2$ is the averaged number of rated items per user, then there will be $|\Omega|= O(d_1 \bar{d}_2^2)$ pairs, leading to $O(d_1\bar{d}_2^2 + d_1 \bar{d}_2 r)$ time complexity for Primal-CR. This is much better than the $O(d_1\bar{d}_2^2 r)$ complexity for all the existing algorithms. 

\begin{algorithm}
\caption{Primal-CR++: compute gradient part for $f(V)$}
\begin{algorithmic}[1]
\Require $M \in \R^{d_1 \times d_2}$, $\lambda\in \R^+$, current $U, V$ 
\Ensure $g$ \Comment{$g\in \R^{d_2 r}$ is the gradient for $f(V)$}
\State Initialize $g = 0$ \Comment{$g \in \R^{r \times d_2}$}
\For {$i = 1, 2,  ..., d_1$}
\State Let $\bar{d}_2 = |\bar{d}_2(i)|$ and $r_j=R_{i,j}$ for all $j$.  
	\State Compute $m[j] = u_i^T v_j$ for all $j\in \bar{d}_2(i)$
	\State Sort $\bar{d}_2(i)$ according to the ascending order of $m_i$, so
	$m[\pi(1)] \leq \dots \leq m[\pi(\bar{d}_2)]$
	\State Initialize $s[1], \dots, s[L]$ and $c[1], \dots, c[L]$ with 0
	\State (Store in segment tree or Fenwick tree. )
	\State $p\leftarrow 1$ 
%	\State Obtain the sorted list of all unique rating levels, denoted as $l$ and denote the size of list $l$ as $n$
%	\State Use $allsum[k]$ to store sum of all the values of $m_i[j]$ s.t. item $j$ is at rating level $k$	
%	\State Sort nonzero part of $m_i$ into $m_i$, reorder the row $M_i$ of $M$ accordingly and obtain the size of nonzero sorted $m_i$: $len$
%	\State Initialize $countleft $ as zero array size of $n$ to store the number of pairs incurring loss on the left hand side of $u_i^T v_j + 1$
%	\State Initialize $countright $ as zero array size of $n$ to store the number of pairs incurring loss on the right hand side of $u_i^T v_j - 1$
%	\State Initialize $nowleftsum $ as zero array size of $n$ and $nowrightsum = allsum$
	%\State Initialize two pointers: $nowleft = 1$, $nowright = 1$
	\ForAll{$j =1, \dots, \bar{d}_2 $}
	\While{$m[\pi(p)] \leq m_j+1$}
	    \State $s[r_{\pi(p)}] += m[\pi(p)]$, $c[r_{\pi(p)}]+= 1$  
	    \State $p+=1$
	\EndWhile
	\State $\text{S} = \sum_{\ell\geq r_{\pi(p)}} s[\ell], \text{C} = \sum_{\ell\geq r_{\pi(p)}} c[\ell]$ % \Comment{$\log L$ query time }
	\State $t^+[\pi(j)] = 2 (C \cdot (m[\pi(j)]+1) - S)$
	\EndFor
		
	\State Do another scan $j$ from $\bar{d}_2$ to $1$ to compute $t^-[\pi(j)]$ for all $j$
	%    \State 
	%	\State Obtain the rating for item $j$ from $M_i$: $nowval$
	%	\State $nowcut = m_i[j]$   
	%	\While{$nowleft \leq len$ and $m_i[nowleft] <  nowcut + 1.0$}
	%		\State Obtain the rating for item $nowleft$: $k$
	%		\State $nowleftsum[k] \pluseq m_i[nowleft]$
	%		\State $countleft[k] \pluseq 1$
	%		\State $nowleft \pluseq 1$
	%	\EndWhile
	%	\While{$nowright \leq len$ and $m_i[nowright] \leq  nowcut - 1.0$}
	%		\State Obtain the rating for item $nowright$: $k$
	%		\State $nowrightsum[k] \mineq m_i[nowright]$
	%		\State $countright[k] \mineq 1$
	%		\State $nowright \pluseq 1$
	%	\EndWhile
	%	\State $s_j$ = 0
	%	\For{$c = 1, 2, ..., nowval - 1$}
	%		\State $s_j \pluseq 2\cdot (countright[c] \cdot (m_i[j] - 1) - nowrightsum[k])$
	%	\EndFor
	%	\For{$c = nowval + 1, ..., n$}
	%		\State $s_j \pluseq 2\cdot (countleft[c] \cdot (m_i[j] + 1) - nowleftsum[k])$
	%	\EndFor
		\State $g[:, j] \pluseq (t^+[j]+t^-[j]) \cdot u_i$ for all $j$
	%\EndFor
	\EndFor
\State $g = vec(g+\lambda V)$ 	 \Comment{vectorize matrix $g\in \R^{r \times d_2}$}
\State \textbf{return} $g$  
\end{algorithmic}
\end{algorithm}

\subsection{Primal-CR++: the proposed algorithm for rating data}\label{sec:Primal++}

Now we discuss a more realistic scenario, where the input data is a rating matrix 
$\{R_{ij}\mid (i,j) \in \bar{\Omega}\}$ and $\bar{\Omega}$ is the observed set of user-item
ratings. We assume there are only $L$ levels of ratings, so $R_{ij}\in \{1, 2, \dots, L\}$. 
Also, we use $\bar{d}_2(i):=\{j\mid (i,j)\in \bar{\Omega}\}$ to denote the rated items for 
user $i$.

Given this data, the goal is to solve the collaborative ranking problem~\eqref{eq:obj} with 
all the pairwise comparisons in the rating dataset as defined in~\eqref{eq:pairs}. There are totally $O(d_1 \bar{d}_2^2)$ pairs, and the question is: Can we have an algorithm with near-linear
time with respect to number of observed ratings $|\bar{\Omega}|=d_1 \bar{d}_2$? 
We answer
this question in the affirmative by proposing Primal-CR++, a near-linear time
algorithm for solving problem~\eqref{eq:obj} with L2-hinge loss. 

The algorithm of Primal-CR++ is exactly the same with Primal-CR, but we use a smarter
algorithm to compute gradient and Hessian vector product in near-linear time, by exploiting
the structure of the input data. 

We first discuss how to speed up the gradient computation of~\eqref{eq:gradient}, where
the main computation is to compute~\eqref{eq:gradient_inner} for each $i$. 
When the loss function is L2-hinge loss, 
we can explicitly write down the coefficients $t_j$ in~\eqref{eq:gradient_inner} by
\begin{equation}
    t_j = \sum_{k\in \bar{d}_2(i)} 2 (m_j - m_k-Y_{ijk}) I[Y_{ijk} (m_j-m_k) \leq 1], 
    \label{eq:aabb}
\end{equation}
where $m_j:=u_i^T v_j$ and $I[\cdot]$ is an indicator function such that $I[a\leq b]=1$ if $a\leq b$, and $I[a\leq b]=0$ otherwise. 
By splitting the cases of $Y_{ijk}=1$ and $Y_{ijk}=-1$, we get
\begin{align}
    t_j &= t_j^{+} + t_j^{-} \nonumber\\
    &= \!\!\!\!\sum_{\substack{k\in \bar{d}_2(i) \\ m_k\leq m_j+1, \  Y_{ijk}=-1} }
    \!\!\!\! 2(m_j - m_k+1) +\!\!\!\! \sum_{\substack{k\in \bar{d}_2(i) \\ m_k\geq m_j-1 , \ Y_{ijk}=1} }
    \!\!\!\!2(m_j - m_k-1). 
    \label{eq:ccdd}
\end{align}
Assume the indexes in $\bar{d}_2(i)$ are sorted by the the ascending order of $m_j$. Then we can scan from left to right, and maintain
the current accumulated sum $s_1, ..., s_L$ and the current index counts $c_1, ..., c_L$ for each rating level. If the current pointer is $p$, then
\begin{equation*}
    s_\ell[p] = \sum_{j: m_j\leq p, R_{ij}=\ell} m_j \ \text{ and } \ c_\ell[p] = |\{j: m_j\leq p, R_{ij} = \ell\}|. 
\end{equation*}
Since we scan from left to right, these numbers can be maintained in constant time
at each step. Now assume we scan over the numbers $m_1+1, m_2+1, \dots$, then at each 
point we can compute
\begin{equation*}
    t_j^{+} = \sum_{\ell = R_{i,j}+1}^L  2\{(m_j+1) c_\ell[m_j+1] - s_\ell[m_j+1]\},  
\end{equation*}
which can be computed in $O(L)$ time. 

Although we observe that $O(L)$ time is already small in practice (since $L$ usually smaller than 10), in the following we show there is a way to remove the dependency on $L$ by using 
a simple Fenwick tree~\cite{PF94a}, F+tree~\cite{yu2015scalable} or segment tree. 
If we store the set $\{s_1, \dots, s_L\}$ in Fenwick tree, 
then each query of 
%Both interval tree and Fenwick tree supportsy using an interval tree or a Fenwick tree to store $\{s_1, \dots, s_L\}$, which supports the query of 
$\sum_{\ell \geq r} s_i$ can be done in 
in $O(\log L)$ time, and since each step we only need to change one element into the set, the updating
time is also $O(\log L)$. Note that $t_j^-$ can be computed in the same way by scanning from largest $m_j$ to the smallest one. 

To sum up, the algorithm first computes all $m_j$ in $O(\bar{d}_2 r)$ time, then sort these numbers using $O(\bar{d}_2 \log \bar{d}_2)$ time, and then compute $t_j$ for all $j$ using two linear scans in $O(\bar{d}_2 \log L)$ time. Here $\log L$ is dominated by $\log \bar{d}_2$
since $L$ can be the number of unique rating levels in the current set $\bar{d}_2(i)$. Therefore, after computing this for all users $i=1, \dots, d_1$, the
time complexity for computing gradient is
\begin{equation*}
    O(d_1 \bar{d}_2 \log \bar{d}_2 + d_1 \bar{d}_2 r ) 
    = O(|\bar{\Omega}| ( \log \bar{d}_2 + r )). 
\end{equation*}

\begin{figure*}
\begin{tabular}{ccc}
\hspace{-8pt}\includegraphics[width=0.33\linewidth]{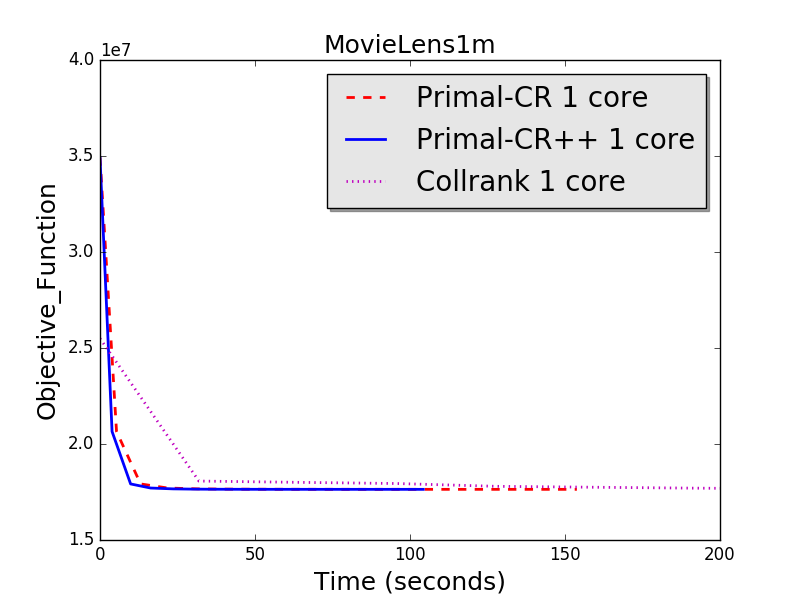} &
\hspace{-14pt}\includegraphics[width=0.33\linewidth]{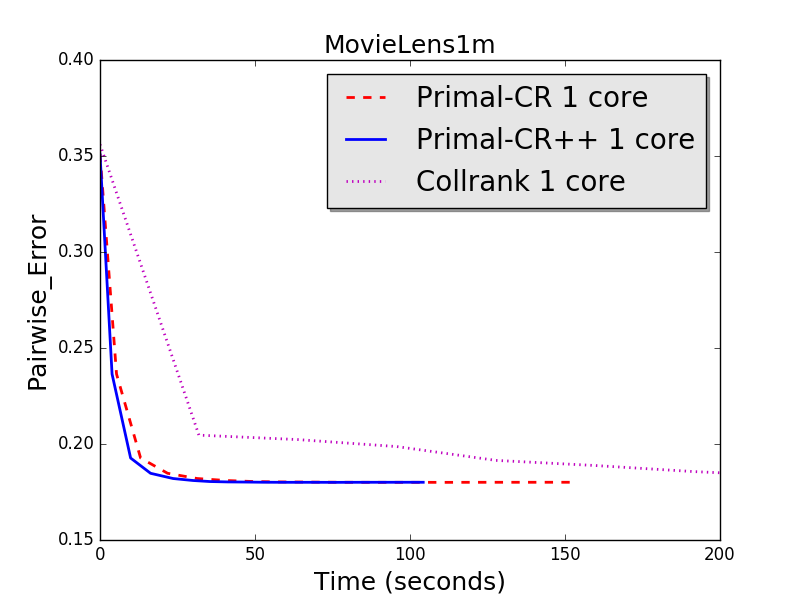} &
\hspace{-14pt}\includegraphics[width=0.33\linewidth]{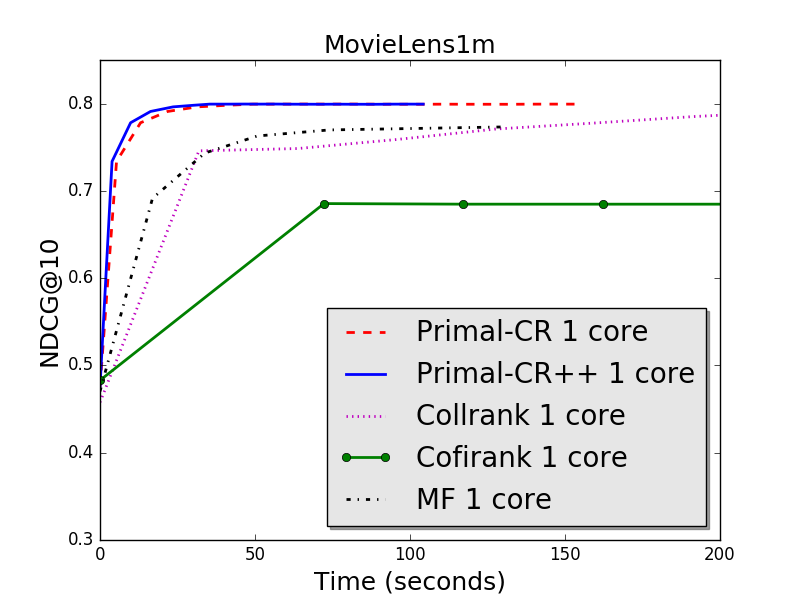}
\end{tabular}
\caption{Comparing Primal-CR, Primal-CR++ and Collrank, MovieLens1m data, 200 ratings/user, rank 100, lambda = 5000 \label{fig:serial_ml1m}}
\end{figure*}

\begin{figure*}
\begin{tabular}{ccc}
\hspace{-8pt}\includegraphics[width=0.33\linewidth]{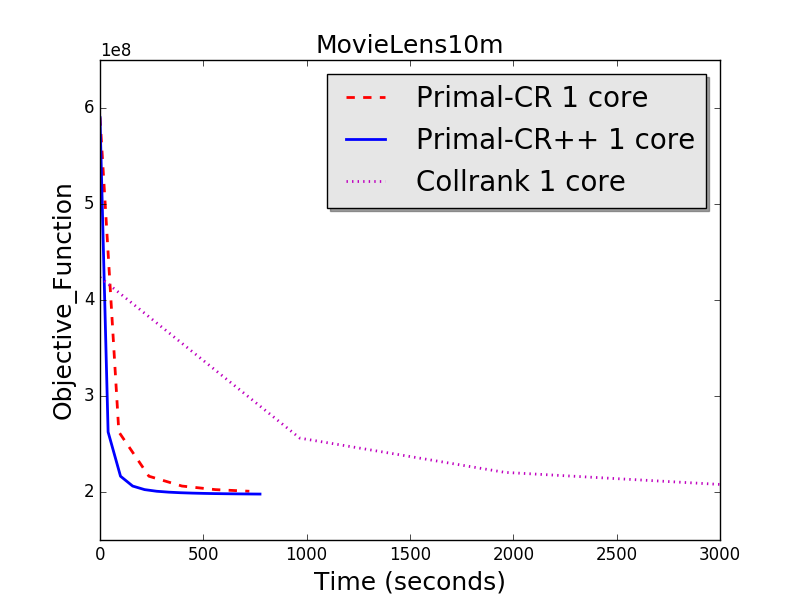} &
\hspace{-14pt}\includegraphics[width=0.33\linewidth]{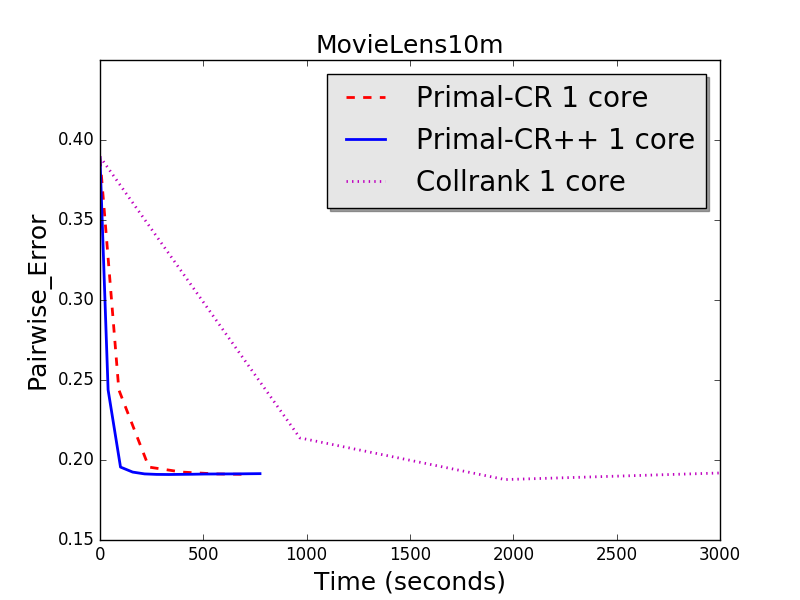} &
\hspace{-14pt}\includegraphics[width=0.33\linewidth]{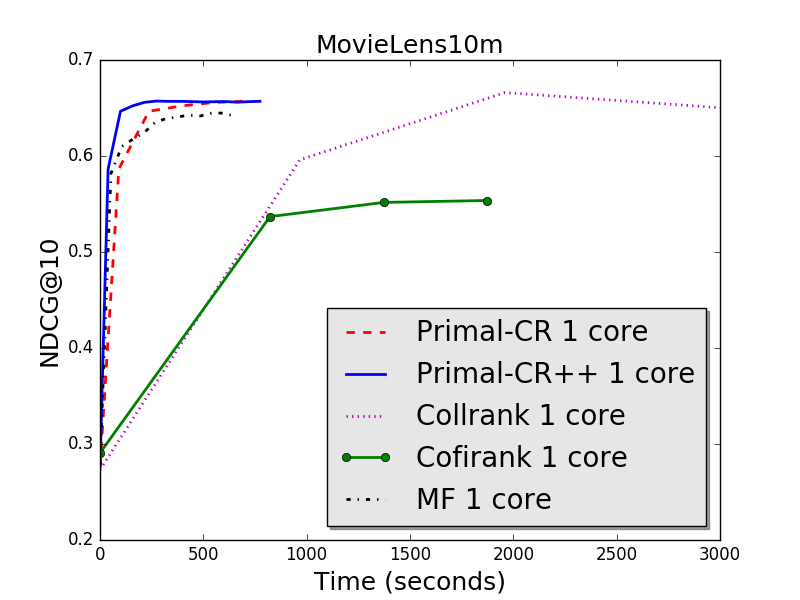}
\end{tabular}
\caption{Comparing Primal-CR, Primal-CR++ and Collrank, MovieLens10m data, 500 ratings/user, rank 100, lambda = 7000 \label{fig:serial_ml10m}}
\end{figure*}

\begin{figure*}
\begin{tabular}{ccc}
\hspace{-8pt}\includegraphics[width=0.33\linewidth]{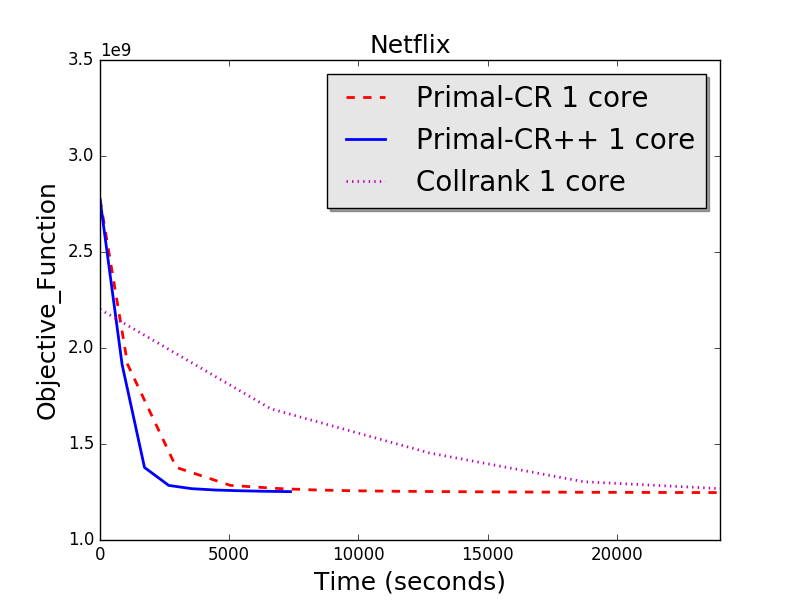} &
\hspace{-14pt}\includegraphics[width=0.33\linewidth]{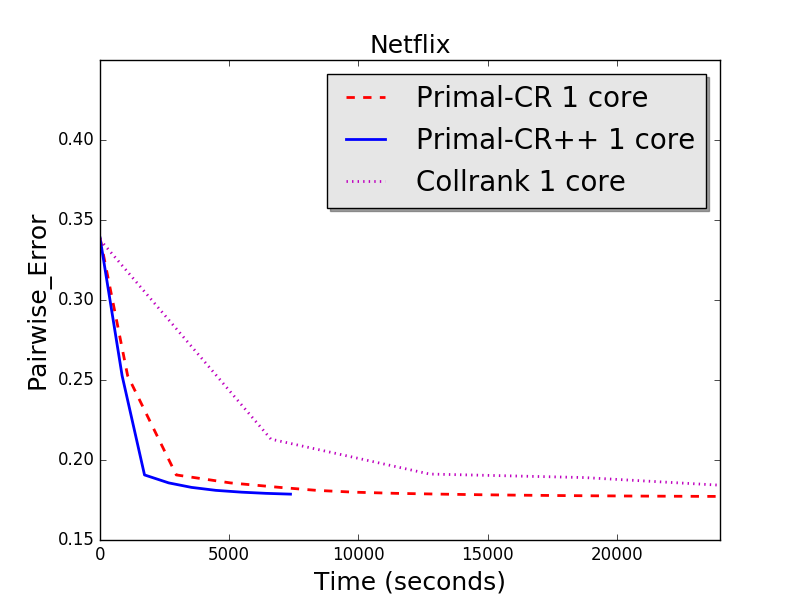} &
\hspace{-14pt}\includegraphics[width=0.33\linewidth]{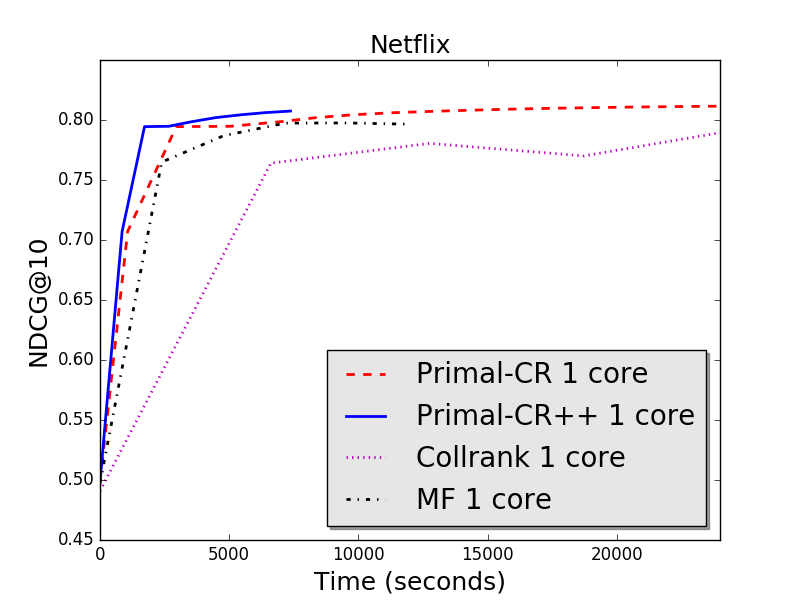}
\end{tabular}
\caption{Comparing Primal-CR, Primal-CR++ and Collrank, Netflix data, 200 ratings/user, rank 100, lambda = 10000 \label{fig:serial_netflix}}
\end{figure*}

A similar procedure can also be used for computing the Hessian-vector product, and the computation of updating $U$ with fixed $V$ is simplier since the problem becomes decomposable to $d_1$ independent problems, see eq~\eqref{eq:ranksvm}.  Due to the page
limit we omit the details here; interesting readers can check our code on github. 

Compared with the classical matrix factorization, where both ALS and SGD requires
$O(|\bar{\Omega}|r)$ time per iteration~\cite{koren2009matrix}, our algorithm 
has almost the same complexity, since $\log \bar{d}_2$ is usually smaller than $r$ (typically
$r=100$). 
Also, since all the temporary memory when computing user $i$ can be released immediately, 
the only memory cost is still the same with Primal-CR++, which is $O(d_1r + d_2 r)$.

\subsection{Parallelization}
\label{sec:parallel}
Updating $U$ while fixing $V$ can be parallelized easily because each column of $U$ is independent and we can actually solve $d_1$ independent subproblems at the same time. For the other side, updating $V$ while fixing $U$ can also be parallelized by parallelizing ``computing $g$" part and ``computing $Ha$" part respectively. We implemented the algorithm using parallel computing techniques in Julia by computing $g$ and $Ha$ distributedly and summing them up in the end. We show in section~\ref{sec:cparallel} that our parallel version of the proposed new algorithm works better than the paralleled version of Collrank algorithm \cite{park2015preference}.

\section{Experiments}
% \label{sec:exp}

\begin{figure}\label{fig:parallel1}
\begin{tabular}{cc}
\hspace{-12pt}\includegraphics[width=0.5\linewidth]{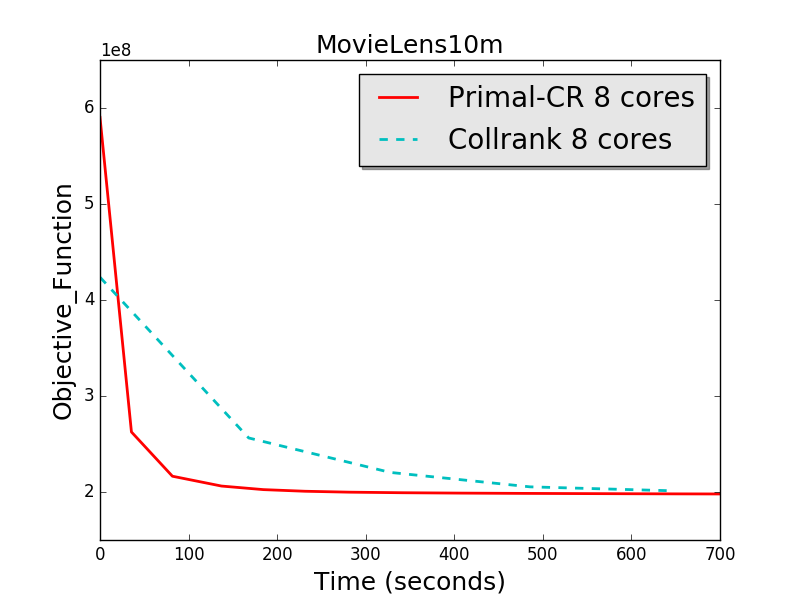} &
%\hspace{-14pt}\includegraphics[width=0.35\linewidth]{} &
\hspace{-14pt}\includegraphics[width=0.5\linewidth]{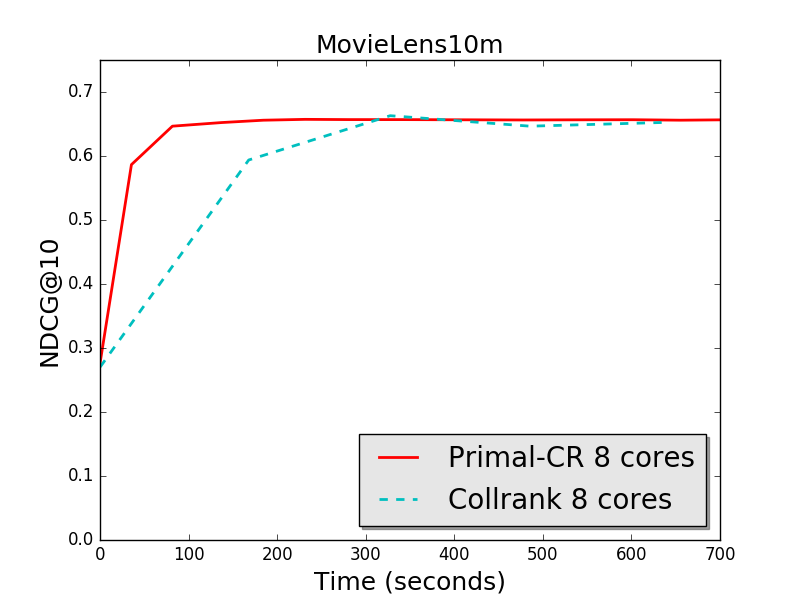}
\end{tabular}
\caption{Comparing parallel version of Primal-CR and Collrank, MovieLens10m data, 500 ratings/user, rank 100, lambda = 7000} 
\end{figure}

% \begin{figure}\label{fig:parallel}
% \begin{tabular}{cccc}
% \hspace{-12pt}\includegraphics[width=0.27\linewidth]{figs/21obj.png} &
% %\hspace{-14pt}\includegraphics[width=0.35\linewidth]{21error} &
% \hspace{-14pt}\includegraphics[width=0.27\linewidth]{figs/21NDCG.png} &
% \hspace{-14pt}\includegraphics[width=0.27\linewidth]{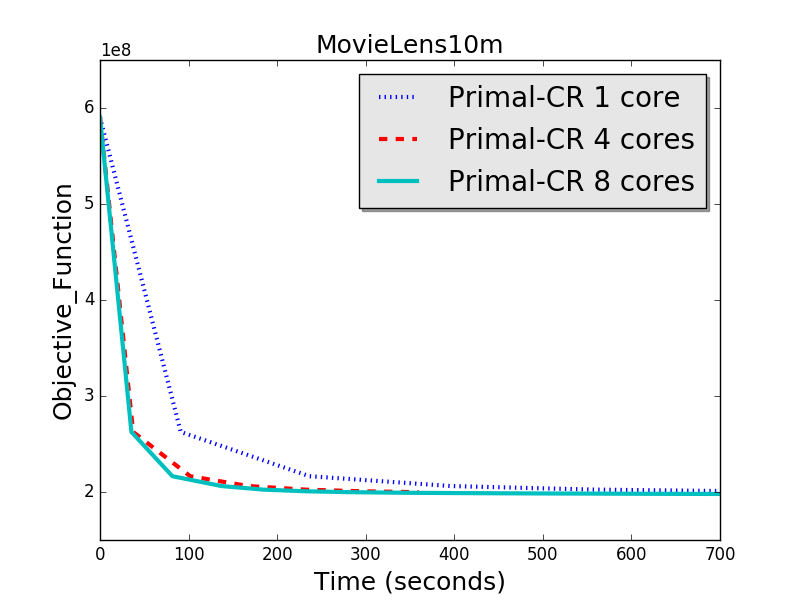} &
% \hspace{-14pt}\includegraphics[width=0.27\linewidth]{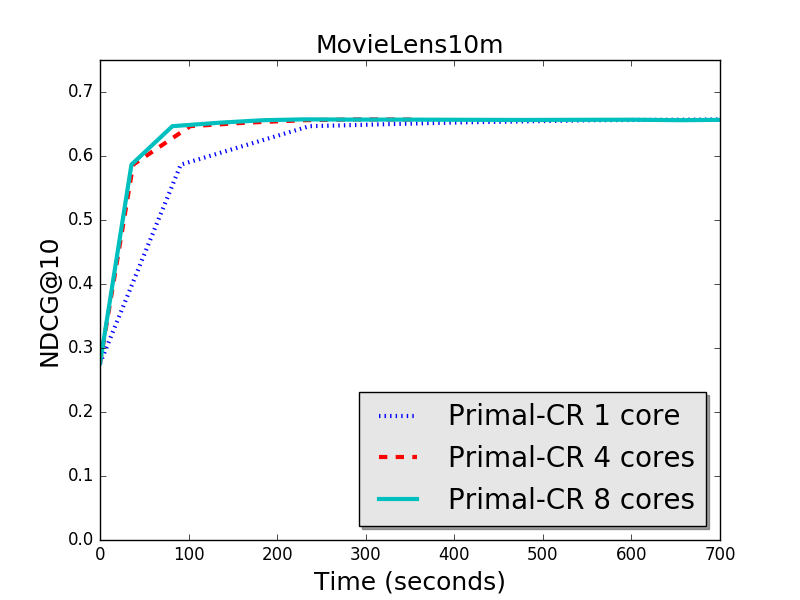}
% \end{tabular}
% \caption{Comparing parallel version of Primal-CR and Collrank and Speedup of Primal-CR, MovieLens10m data, 500 ratings/user, rank 100, lambda = 7000} 
% \end{figure}

\begin{figure}\label{fig:parallel2}
\begin{tabular}{cc}
\hspace{-14pt}\includegraphics[width=0.5\linewidth]{figs/22obj.png} &
\hspace{-14pt}\includegraphics[width=0.5\linewidth]{figs/22NDCG.png}
\end{tabular}
\caption{Speedup of Primal-CR, MovieLens10m data, 500 ratings/user, rank 100, lambda = 7000} 
\end{figure}

In this section, we test the performance of our proposed algorithms Primal-CR and Primal-CR++ on real world datasets, and compare with existing methods.  All experiments are conducted on the UC Davis Illidan server with an Intel Xeon E5-2640 2.40GHz CPU and 64G RAM. We compare the following methods: 
\begin{itemize}
    \item Primal-CR and Primal-CR++: our proposed methods implemented in Julia. 
    \footnote{Our code is available on \url{https://github.com/wuliwei9278/ml-1m}.}
    \item Collrank: the collaborative ranking algorithm proposed in~\cite{park2015preference}. We use the C++ code released
    by the authors, and they parallelized their algorithm using OpenMP. 
    \item Cofirank: the classical collaborative ranking algorithm proposed in~\cite{weimer2007maximum}. We use the C++ code released
    by the authors.
    \item MF: the classical matrix factorization model in \eqref{eq:MF} solved by SGD~\cite{koren2009matrix}. 
    %(Should we include more methods into comparison?)
\end{itemize}
We used three data sets (MovieLens1m, Movielens10m, Netflix data) to compare these algorithms. The dataset statistics
are summarized in Table~\ref{tab:data}. The regularization parameter $\lambda$ used for each datasets are chosen by a random sampled validation set. 
For the pair-wise based algorithms, we covert the ratings into pair-wise comparisons, 
by saying that item $j$ is preferred over item $k$ by user $i$ if user $i$ gives a higher rating to item $j$ over item $k$, and there will be no pair between two items if they have the same rating. 
%We used three data sets (MovieLens1m, Movielens10m, Netflix data) to compare our algorithms Primal-CR and Primal-CR++ with Collrank algorithm using the same subsamples with both algorithms in single thread version. All the three data sets contain a large number of ratings given by a lot of users on a large set of movies. For Primal-CR++ Algorithm, we converted the ratings into preference data by saying that item $j$ is preferred over item $k$ by user $i$ if user $i$ gives a higher rating to item $j$ over item $k$ and if two items have the same ratings, we just treat it as the case that the user did not provide a preference for these two items. For Primal-CR++ Algorithm, we directly take the rating matrix as the input. 

We compare the algorithms in the following three different ways: 
\begin{itemize}
\item Objective function: since Collrank, Primal-CR, Primal-CR++ have the same objective function, we can compare the convergence speed in terms
of the objective function~\eqref{eq:obj} with squared hinge loss. 
\item Predicted pairwise error: the proportion of pairwise preference comparisons that we predicted correctly out of all the pairwise comparisons in the testing data:
\begin{equation} 
\text{pairwise error} = \frac{1}{|\T|} \sum_{\substack{(i, j, k) \in \T \\ Y_{ijk} = 1}}  \mathbb{1}(X_{ij} > X_{ik}), 
\end{equation}
where $\T$ represents the test data set and $|\T|$ denotes the size of test data set. 
\item NDCG$@k$: a standard performance measure of ranking, defined as:
\begin{equation} 
\text{NDCG}@k = \frac{1}{d_1} \sum_{i = 1}^{d_1} \frac{\text{DCG}@k(i, \pi_i)}{\text{DCG}@k(i, \pi_i^*)}, 
\end{equation}
where $i$ represents $i$-th user and
\begin{equation} 
\text{DCG}@k(i, \pi_i)= \sum_{l = 1}^{k} \frac{2^{M_i\pi_i(l)} - 1}{log_2(l + 1)}. 
\end{equation}
In the DCG definition, $\pi_i(l)$ represents the index of the $l$-th ranked item for user $i$ in test data based on the score matrix $X = U^T V$ generated, $M$ is the rating matrix and $M_{ij}$ is the rating given to item $j$ by user $i$. $\pi_i^*$ is the ordering provided by the underlying ground truth of the rating.
\end{itemize}

\subsection{Compare single thread versions using the same subsamples}

Since Collrank cannot scale to the full dataset of Movielens10m and Netflix, we sub-sample data using the 
same approach in their paper~\cite{park2015preference} and compare all the methods using the smaller training sets. 
More specifically, for each data set, we subsampled $N$ ratings for training data and used the rest of ratings as test data. For this subsampled data, we discard users with less than $N+10$ ratings, 
since we need at least 10 ratings for test data to compute the NDCG@10. 

As shown in Figure~\ref{fig:serial_ml1m}, \ref{fig:serial_ml10m}, \ref{fig:serial_netflix}, both Primal-CR and Primal-CR++ perform considerably better than the existing Collrank algorithm. As data size increases, the performance gap becomes larger.  As one can see, for Netflix data where $N = 200$, the speedup is more than 10 times compared to Collrank.

For Cofirank, we observe that it is even slower than Collrank, which confirms the experiments conducted in~\cite{park2015preference}. Furthermore, Cofirank cannot scale to larger datasets, so we omit the results in Figure~\ref{fig:serial_ml10m}
and \ref{fig:serial_netflix}. 

We also include the classical matrix factorization algorithm in the NDCG comparisons. 
As shown in our complexity analysis, our proposed algorithms are competitive with MF 
in terms of speed, and MF is much faster than other collaborative ranking algorithms. Also, we observe that MF converges to a slightly worse solution in MovieLens10m
and Netflix datasets, and converges to a much worse solution in MovieLens1m. The reason
is that MF minimizes a simple mean square error, while our algorithms are minimizing
ranking loss. 
Based on the experimental results, our algorithm Primal-CR++ should be able 
to replace MF in many real world recommender systems. 

\begin{table}
\begin{tabular}{| l | l | l | l |}
    \hline
     & MovieLens1m &  Movielens10m & Netflix \\ \hline
    Users & 6,040  &  71,567 & 2,649,430 \\ \hline
    Items & 3,952 & 65,134 & 17,771 \\ \hline
    Ratings & 1,000,209 & 10,000,054  & 99,072,112  \\ \hline
     $\lambda$ & 5,000 & 7,000  & 10,000 \\
    
    \hline
    \end{tabular}
    %\vspace{-10pt}
    \caption{Datasets used for experiments}\label{tab:data}

    \end{table}

\begin{table}
\begin{tabular}{| l | l | l | l |}
    \hline
     \# cores & 1 &  4 & 8 \\ \hline
    Speedup for Primal-CR & 1x & 2.46x  &  3.12x  \\ \hline
    Speedup for Collrank & 1x & 2.95x & 3.47x \\ \hline
    \end{tabular}
    \caption{Scability of Primal-CR and Collrank on Movielens10m} \label{tab:speedup}
    %\vspace{10pt}
\end{table}

\begin{figure*}
\begin{tabular}{cc}
\hspace{-12pt}\includegraphics[width=0.5\linewidth]{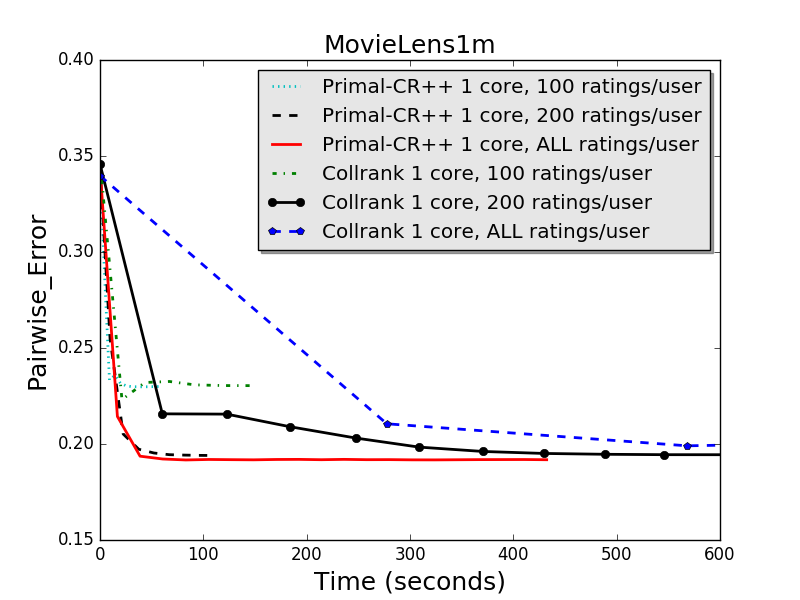} &
\hspace{-14pt}\includegraphics[width=0.5\linewidth]{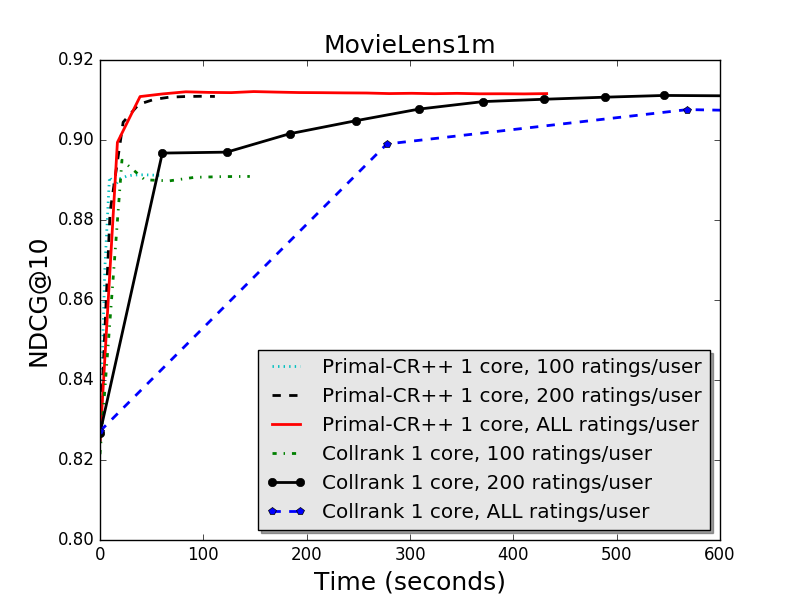} 
\end{tabular}
\caption{Varies number of ratings per user in training data, MovieLens1m data, rank 100} \label{fig:vary1m}
\end{figure*}

% \begin{figure*}
% \begin{tabular}{cccc}
% \hspace{-12pt}\includegraphics[width=0.27\linewidth]{figs/31error.png} &
% \hspace{-14pt}\includegraphics[width=0.27\linewidth]{figs/31NDCG.png} &
% \hspace{-14pt}\includegraphics[width=0.27\linewidth]{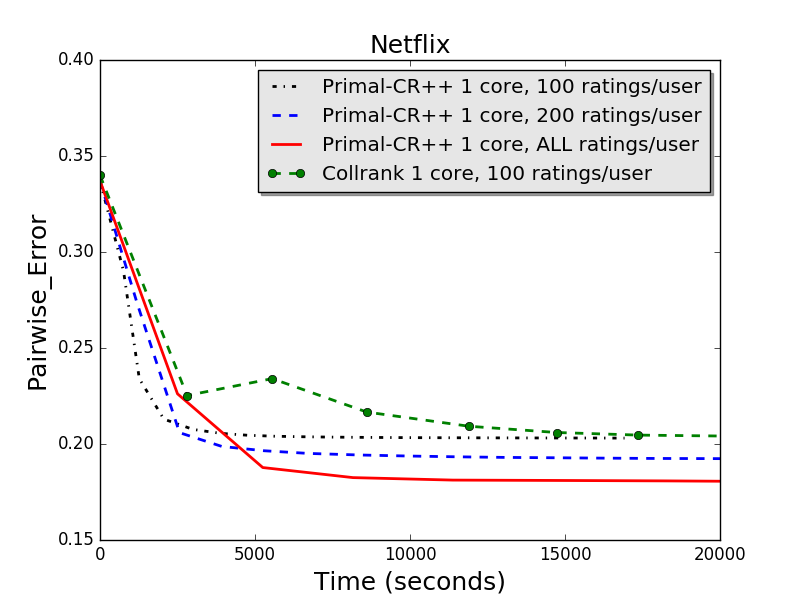} &
% \hspace{-14pt}\includegraphics[width=0.27\linewidth]{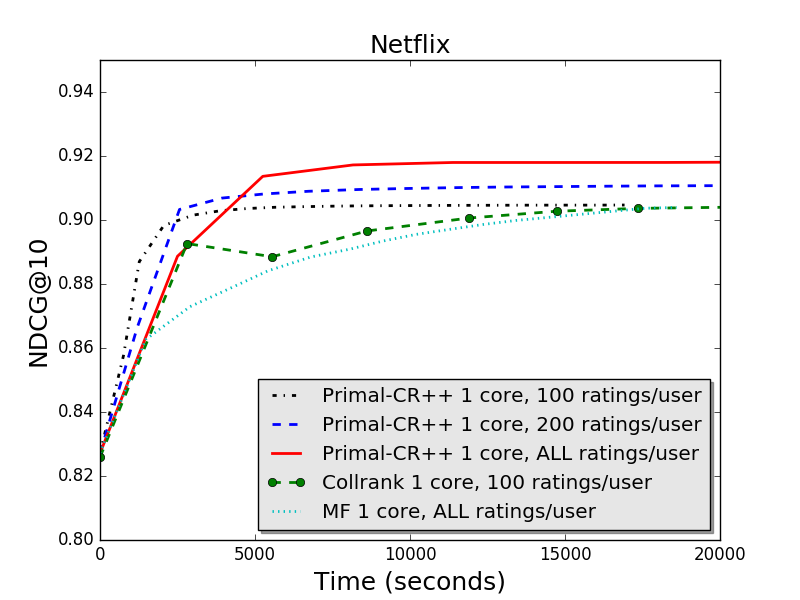} 
% \end{tabular}
% \caption{Varies number of ratings per user in training data, MovieLens1m and Netflix data, rank 100} \label{fig:vary1m}
% \end{figure*}

\begin{figure*}
\begin{tabular}{cc}
\hspace{-14pt}\includegraphics[width=0.5\linewidth]{figs/32error.png} &
\hspace{-14pt}\includegraphics[width=0.5\linewidth]{figs/32NDCG.png} 
\end{tabular}
\caption{Varies number of ratings per user in training data, Netflix data, rank 100} \label{fig:varynetflix}
\end{figure*}

\subsection{Compare parallel versions} \label{sec:cparallel}
Since Collrank can be implemented in a parallel fashion, we also implemented the parallel version of our algorithm in Julia. We want to show our algorithm scales up well and is still much faster than Collrank in the multi-core shared memory setting. As shown in Figure~\ref{fig:parallel1}, Primal-CR is still much faster than  Collrank when 8 cores are used. Comparing our Primal-CR algorithm in 1 core, 4 cores and 8 cores on the same machine in Figure~\ref{fig:parallel2}, the speedup is desirable. The speedup of Primal-CR and Collrank is summarized in the Table~\ref{tab:speedup}. One can see from the table that the speedup of our Primal-CR algorithm is comparable to Collrank.

\subsection{Performance using Full Training Data}

Due to the $O(|\Omega| k)$ complexity, existing algorithms cannot deal with large number of pairs, so they 
always sub-sample a limited number of pairs per user when solving MovieLens10m or Netflix data. 
%
%For existing collaborative rankings algorithmys when they report performance, they always set a limit on the number of pairs observed per user in the training data set. 
For example, for Collrank, the authors fixed number of ratings per user in training as $N$ and only reported $N$ up to 100 for Netflix data. When we tried to apply their code for $N = 200$, the algorithm gets very slow and reports memory error for $N = 500$. 

%\begin{figure*}
%\begin{multicols}{2}
%    \includegraphics[width=\linewidth]{32error}\par 
%    \includegraphics[width=\linewidth]{32NDCG}\par 
%\end{multicols}
%\caption{Varies number of ratings per user in training data, Netflix data, rank 100, lambda = 10000} \label{fig:varynetflix}
%\end{figure*}

Using our algorithm, we have the ability to solve the full Netflix problem, 
so a natural question to ask is: 
Does using more training data help us predict and recommend better? The answer is yes! 
We conduct the following experiments to verify this: 
For all the users with more than 20 ratings, we randomly choose 10 ratings as test data and out of the rest ratings we randomly choose up to $C$ ratings per user as training data. One can see in Figure~\ref{fig:vary1m}, for the same test data, more training data leads to better prediction performance in terms of pairwise error and NDCG. Using all available ratings ($C = d_2$) gives lowest pairwise error and highest NDCG@10, using up to 200 ratings per user ($C = 200$) gives second lowest pairwise error and second highest NDCG@10, and using up to 100 ratings per user ($C = 100$) has the highest pairwise error and lowest NDCG@10. Similar phenomenon is observed for Netflix data in Figure~\ref{fig:varynetflix}. Collrank code does not work for $C = 200$ and $C = d_2$ and even for $C = 100$, it takes more than $20,000$ secs to converge while our Primal-CR++ takes less than $5,000$ secs for the full Netflix data. The speedup of our algorithm will be even more for a larger $C$ or larger data size $d_1$ and $d_2$. We tried to create input file without subsampling for Collrank, we created 344GB input data file and Collrank reported memory error message "Segmentation Fault". We also tried $C = 200$, still got the same error message. It is possible to implement Collrank algorithm
by directly working on the rating data, but the time complexity remains the same, so it is clear that our proposed Primal-CR and Primal-CR++ algorithms are much faster. 

To the best of our knowledge, our algorithm is the first ranking-based algorithm that can scale to full Netflix data set using a single core, and without sub-sampling. Our proposed algorithm makes the Collaborative Ranking Model in \eqref{eq:obj} a clear better choice for large-scale recommendation system over standard Matrix Factorization techniques, since we have the same scalability but achieve much better accuracy. Also, our experiments suggest that in practice, when we are given a set of training data, we should try to use all the training data instead of doing sub-sampling as existing algorithms do, and only Primal-CR and Primal-CR++ can scale up to all the ratings. 
%By using all the training data, we can achieve better prediction and recommendation results. 

\section{Conclusions}
We considered the collaborative ranking problem setting in which a low-rank matrix is fitted to the data in the form of pairwise comparisons or numerical ratings. We proposed our new optimization algorithms Primal-CR and Primal-CR++ where the time complexity is much better than all the existing approaches. We showed that our algorithms are much faster than state-of-the-art collaborative ranking algorithms on real data sets (MovieLens1m, Movielens10m and Netflix) using same subsampling scheme, and moreover our algorithm is the only one that can scale to the full Movielens10m and Netflix data. We observed that 
our algorithm has the same efficiency with matrix factorization, while
achieving better NDCG since we minimize ranking loss. As a result, we expect our
algorithm to be able to replace matrix factorization in many real applications. 
%using all the ratings on training data without subsampling gives much better prediction accuracy, 
%so the scalability of our methods can lead to significant better accuracy for real world recommender systems. 

% chapter 4

\chapter{SQL-Rank: A Listwise Approach to Collaborative Ranking}

\section{Introduction}
\label{intro}

We study a novel approach to collaborative ranking---the personalized ranking of items for users based on their observed preferences---through the use of listwise losses, which are dependent only on the observed rankings of items by users.
We propose the SQL-Rank algorithm, which can handle ties and missingness, incorporate both explicit ratings and more implicit feedback, provides personalized rankings, and is based on the relative rankings of items.
To better understand the proposed contributions, let us begin with a brief history of the topic.

\subsection{A brief history of collaborative ranking}

%Ranking is an inherently relative exercise.
Recommendation systems, found in many modern web applications, movie streaming services, and social media, rank new items for users and are judged based on user engagement (implicit feedback) and ratings (explicit feedback) of the recommended items.
A high-quality recommendation system must understand the popularity of an item and infer a user's specific preferences with limited data. 
Collaborative filtering, introduced in \cite{hill1995recommending}, refers to the use of an entire community's preferences to better predict the preferences of an individual (see \cite{schafer2007collaborative} for an overview).
In systems where users provide ratings of items, collaborative filtering can be approached as a point-wise prediction task, in which we attempt to predict the unobserved ratings \cite{pan2017transfer}.
Low rank methods, in which the rating distribution is parametrized by a low rank matrix (meaning that there are a few latent factors) provides a powerful framework for estimating ratings \cite{mnih2008probabilistic, koren2008factorization}.
There are several issues with this approach.
One issue is that the feedback may not be representative of the unobserved entries due to a sampling bias, an effect that is prevalent when the items are only `liked' or the feedback is implicit because it is inferred from user engagement.
Augmenting techniques like weighting were introduced to the matrix factorization objective to overcome this problem \cite{hsieh2015pu, hu2008collaborative}. Many other techniques are also introduced \cite{kabbur2013fism, wang2017irgan, wu2016collaborative}.
Another methodology worth noting is the CofiRank algorithm of \cite{weimer2008cofi} which minimizes a convex surrogate of the normalized discounted cumulative gain (NDCG).
The pointwise framework has other flaws, chief among them is that in recommendation systems we are not interested in predicting ratings or engagement, but rather we must rank the items. 

Ranking is an inherently relative exercise.
Because users have different standards for ratings, it is often desirable for ranking algorithms to rely only on relative rankings and not absolute ratings.
%Pointwise methods, which predict the absolute ratings, suffer from this defect (known as the calibration drawback).
A ranking loss is one that only considers a user's relative preferences between items, and ignores the absolute value of the ratings entirely, thus deviating from the pointwise framework.
Ranking losses can be characterized as pairwise and listwise.
A pairwise method decomposes the objective into pairs of items $j,k$ for a user $i$, and effectively asks `did we successfully predict the comparison between $j$ and $k$ for user $i$?'.
The comparison is a binary response---user $i$ liked $j$ more than or less than $k$---with possible missing values in the event of ties or unobserved preferences.
Because the pairwise model has cast the problem in the classification framework, then tools like support vector machines were used to learn rankings; \cite{joachims2002optimizing} introduces rankSVM and efficient solvers can be found in \cite{chapelle2010efficient}.
Much of the existing literature focuses on learning a single ranking for all users, which we will call simple ranking \cite{freund2003efficient,agarwal2006ranking, pahikkala2009efficient}.
This work will focus on the personalized ranking setting, in which the ranking is dependent on the user.

Pairwise methods for personalized ranking have seen great advances in recent years, with the AltSVM algorithm of \cite{park2015preference}, Bayesian personalized ranking (BPR) of \cite{rendle2009bpr}, and the near linear-time algorithm of \cite{wu2017large}.
Nevertheless, pairwise algorithms implicitly assume that the item comparisons are independent, because the objective can be decomposed where each comparison has equal weight.
Listwise losses instead assign a loss, via a generative model, to the entire observed ranking, which can be thought of as a permutation of the $m$ items, instead of each comparison independently.
The listwise permutation model, introduced in \cite{cao2007learning}, can be thought of as a weighted urn model, where items correspond to balls in an urn and they are sequentially plucked from the urn with probability proportional to $\phi(X_{ij})$ where $X_{ij}$ is the latent score for user $i$ and item $j$ and $\phi$ is some non-negative function.
They proposed to learn rankings by optimizing a cross entropy between the probability of $k$ items being at the top of the ranking and the observed ranking, which they combine with a neural network, resulting in the ListNet algorithm. 
\cite{shi2010list} applies this idea to collaborative ranking, but uses only the top-1 probability because of the computational complexity of using top-k in this setting.  
This was extended in \cite{huang2015listwise} to incorporate neighborhood information.
\cite{xia2008listwise} instead proposes a maximum likelihood framework that uses the permutation probability directly, which enjoyed some empirical success.

Very little is understood about the theoretical performance of listwise methods.  
\cite{cao2007learning} demonstrates that the listwise loss has some basic desirable properties such as monotonicity, i.e.~increasing the score of an item will tend to make it more highly ranked.
\cite{lan2009generalization} studies the generalizability of several listwise losses, using the local Rademacher complexity, and found that the excess risk could be bounded by a \smash{$1/\sqrt n$} term (recall, $n$ is the number of users).
Two main issues with this work are that no dependence on the number of items is given---it seems these results do not hold when $m$ is increasing---and the scores are not personalized to specific users, meaning that they assume that each user is an independent and identically distributed observation.
A simple open problem is: can we consistently learn preferences from a single user's data if we are given item features and we assume a simple parametric model?  ($n = 1, m\rightarrow \infty$.)

\subsection{Contributions of this work}

We can summarize the shortcomings of the existing work: current listwise methods for collaborative ranking rely on the top-$1$ loss, algorithms involving the full permutation probability are computationally expensive, little is known about the theoretical performance of listwise methods, and few frameworks are flexible enough to handle explicit and implicit data with ties and missingness.
This chapter addresses each of these in turn by proposing and analyzing the SQL-rank algorithm.

\begin{compactitem}
    \item We propose the SQL-Rank method, which is motivated by the permutation probability, and has advantages over the previous listwise method using cross entropy loss. 
    \item We provide an $O(\text{iter} \cdot (|\Omega| r))$ linear algorithm based on stochastic gradient descent, where $\Omega$ is the set of observed ratings and $r$ is the rank.
    \item The methodology can incorporate both implicit and explicit feedback, and can gracefully handle ties and missing data.
    \item We provide a theoretical framework for analyzing listwise methods, and apply this to the simple ranking and personalized ranking settings, highlighting the dependence on the number of users and items.
\end{compactitem}

\section{Methodology}
\label{method}
\subsection{Permutation probability}
The permutation probability, \cite{cao2007learning}, is a generative model for the ranking parametrized by latent scores. 
First assume there exists a ranking function that assigns scores to all the items. Let's say we have $m$ items, then the scores assigned can be represented as a vector $s = (s_1, s_2, ..., s_m)$. Denote a particular permutation (or ordering) of the $m$ items as $\pi$, which is a random variable and takes values from the set of all possible permutations $S_m$ (the symmetric group on $m$ elements). $\pi_1$ denotes the index of highest ranked item and $\pi_m$ is the lowest ranked.
The probability of obtaining $\pi$ is defined to be
\begin{equation}
\label{eq:permprob}
    P_{s}(\pi) := \prod_{j=1}^m \frac{\phi(s_{\pi_j})}{\sum_{l=j}^{m} \phi(s_{\pi_l})},
\end{equation}
where $\phi(.)$ is an increasing and strictly positive function. 
An interpretation of this model is that each item is drawn without replacement with probability proportional to $\phi(s_i)$ for item $i$ in each step.
One can easily show that $P_{s}(\pi)$ is a valid probability distribution, i.e.~$\sum_{\pi \in S_m} P_{s}(\pi) = 1, P_{s}(\pi) > 0, \forall \pi$. Furthermore, this definition of permutation probability enjoys several favorable properties (see \cite{cao2007learning}). 
For any permutation $\pi$ if you swap two elements ranked at $i<j$ generating the permutation $\pi'$ ($\pi'_i = \pi_j$, $\pi'_j = \pi_i$, $\pi_k = \pi'_k, k\ne i,j$), if $s_{\pi_i} > s_{\pi_j}$ then $P_s(\pi) > P_s(\pi')$.
Also, if permutation $\pi$ satisfies $s_{\pi_i} > s_{\pi_{i+1}}$, $\forall i$, then we have $\pi = \arg\max_{\pi' \in S_m} P_{s}(\pi')$.
Both of these properties can be summarized: larger scores will tend to be ranked more highly than lower scores.
These properties are required for the negative log-likelihood to be considered sound for ranking \cite{xia2008listwise}.
% In plain words, the first property says that if in a permutation one item with larger score is ranked higher than the other item with smaller score, one exchanges the order of the two items and get a new permutation then the permutation probability for the new permutation will be smaller. 
% The second property says that ordering $m$ items in the order from largest score to smallest score gives the largest permutation probability one can achieve. 
% %On the other hand, ordering items in the order from smallest score to largest score gives the lowest permutation probability. 
% The third property says that if the function $\phi$ is exponential then adding a constant to the scores does not impact the probability of a permutation. 

% The third property says that adding a constant to scores of $m$ items at the same time does not affect the permutation probability distribution. 
% This allows us not to worry about the bias term for users later. But the fact that $\phi(.) = \exp(.)$ permutation probability is not scaling invariant motivates us to use sigmoid function to bound each entry of scoring matrix to be between 0 and 1 such that across users the scores for items are still on the same scale. 

In recommendation systems, the top ranked items can be more impactful for the performance.
In order to focus on the top $k$ ranked items, we can compute the partial-ranking marginal probability,
\begin{equation}
\label{eq:topk_prob}
    P^{(k,\bar{m})}_{s}(\pi) = \prod_{j=1}^{\min\{k, \bar{m}\}} \frac{\phi(s_{\pi_j})}{\sum_{l=j}^{\bar{m}} \phi(s_{\pi_l})}.
\end{equation}
It is a common occurrence that only a proportion of the $m$ items are ranked, and in that case we will allow $\bar{m} \le m$ to be the number of observed rankings (we assume that $\pi_1,\ldots,\pi_{\bar{m}}$ are the complete list of ranked items).
% If we are only interested in the probability of top $k$ items being in some specific order (i.e. $\pi_k(l) = j_l$, for $l = 1, ..., k$) without fixing the rest of the items, then we have the so-called top $k$ probability \cite{cao2007learning}, defined as
% \begin{equation}{\label{eq:topkprob}}
%     P_{s}(\pi_k) := \sum_{\pi \in \{ \pi: \pi(1) = j_1, ..., \pi(k) = j_k \}} P_s(\pi)
% \end{equation}
% With some derivation, despite that there is $m!/(k!(m-k)!)$ terms to sum over in Equation~\ref{eq:topkprob}, it turns out the right hand side can be simplified as: 
% \begin{equation}
%     P_{s}(\pi_k) = \prod_{j=1}^k \frac{\phi(s_{\pi(j)})}{\sum_{l=j}^{m} \phi(s_{\pi(l)})},
% \end{equation},
% in which the only difference from the previous formula is that in the first sum instead of summing till $m$ now we only sum till $k$.
When $k=1$, the first summation vanishes and top-$1$ probability can be calculated straightforwardly, which is why $k=1$ is widely used in previous listwise approaches for collaborative ranking. 
Counter-intuitively, we demonstrate that using a larger $k$ tends to improve the ranking performance.

We see that computing the likelihood loss is linear in the number of ranked items, which is in contrast to the cross-entropy loss used in \cite{cao2007learning}, which takes exponential time in $k$.
The cross-entropy loss is also not sound, i.e.~it can rank worse scoring permutations more highly, but the 
negative log-likelihood is sound.
We will discuss how we can deal with ties in the following subsection, namely, when the ranking is derived from ratings and multiple items receive the same rating, then there is ambiguity as to the order of the tied items.
This is a common occurrence when the data is implicit, namely the output is whether the user engaged with the item or not, yet did not provide explicit feedback.
Because the output is binary, the cross-entropy loss (which is based on top-$k$ probability with $k$ very small) will perform very poorly because there will be many ties for the top ranked items.
To this end, we propose a collaborative ranking algorithm using the listwise likelihood that can accommodate ties and missingness, which we call Stochastic Queuing Listwise Ranking, or SQL-Rank.

\begin{figure}[ht]
%\vskip 0.2in
%\vspace{-0.1in}
\begin{center}
\centerline{\includegraphics[width=\columnwidth]{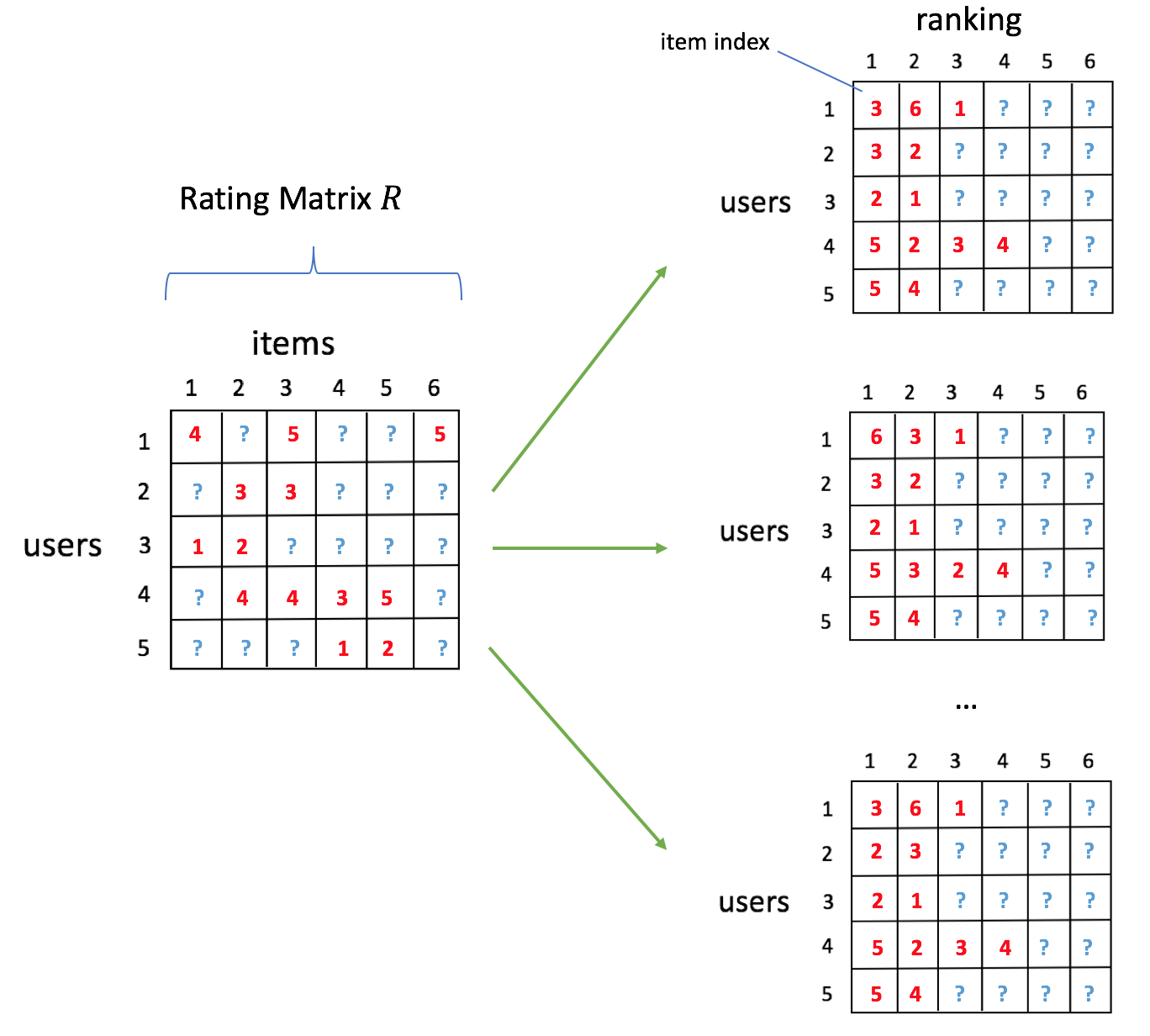}}
\caption{Demonstration of Stochastic Queuing Process---the rating matrix $R$ (left) generates multiple possible rankings $\Pi$'s (right), $\Pi \in \mathcal S(R,\Omega)$ by breaking ties randomly.}
\label{demo}
\end{center}
%\vskip -0.3in
\end{figure}

\subsection{Deriving objective function for SQL-Rank}
%We propose the SQL-Rank algorithm that addresses the above-mentioned issues. %First let's consider the case in which there are no ties and no missing data. 
The goal of collaborative ranking is to predict a personalized score $X_{ij}$ that reflects the preference level of user $i$ towards item $j$, where $1 \leq i \leq n$ and $1 \leq j \leq m$. It is reasonable to assume the matrix $X \in \R^{n \times m}$ to be low rank because there are only a small number of latent factors contributing to users' preferences. The input data is given in the form of ``user $i$ 
gives item $j$ a relevance score $R_{ij}$''. 
Note that for simplicity we assume all the users have the same number $\bar{m}$ of ratings, but this can be easily generalized
to the non-uniform case 
by replacing $\bar{m}$ with $m_i$ (number of ratings for user $i$).

With our scores $X$ and our ratings $R$, we can specify our collaborative ranking model using the permutation probability \eqref{eq:topk_prob}.
Let $\Pi_i$ be a ranking permutation of items for user $i$ (extracted from $R$), we can stack $\Pi_1, \dots \Pi_n$, row by row, to get the permutation matrix $\Pi \in \R^{n\times m}$. 
Assuming users are independent with each other, the probability of observing a particular $\Pi$ given the scoring matrix $X$
can be written as
\begin{equation}
    P_{X}^{(k, \bar{m})}(\Pi) = \prod_{i=1}^{n} P^{(k,\bar{m})}_{X_i}(\Pi_i).
\end{equation}
 We will assume that $\log \phi(x) = 1/(1 + \exp(-x))$ is the sigmoid function.
%We will assume that $\phi(.) := \exp \circ g()$, where $g = 1/(1 + \exp(-x))$ is the sigmoid function.  
This has the advantage of bounding the resulting weights, \smash{$\phi(X_{ij})$}, and maintaining their positivity without adding additional constraints.

Typical rating data will contain many ties within each row.
%The rating matrix from real  usually have a lot of {\it ties} in each row. 
In such cases, the permutation $\Pi$ is no longer unique and there is a set of permutations
that coincides with rating because with any candidate $\Pi$ we can arbitrarily shuffle the ordering
of items with the same relevance scores to generate a new candidate matrix $\Pi'$ which is still valid (see Figure~\ref{demo}). We denote the set of valid permutations as  $\mathcal S (R, \Omega)$, where $\Omega$ is the set of all pairs $(i,j)$ such that $R_{i,j}$ is observed.
We call this shuffling process the {\it Stochastic Queuing Process}, since one can imagine that by permuting ties we are stochastically queuing new $\Pi$'s for future use in the algorithm.

%When there are {\it ties}, the matrix $\Pi$ is no longer unique and there is a set $\mathcal S (R, \Omega)$ of matrices $\Pi$'s that coincides with ordering of entries in observed relevance matrix $R$ because with any candidate $\Pi$ we can arbitrarily shuffle the ordering of items with the same relevance scores to generate a new candidate matrix $\Pi'$ which is still valid (see Figure~\ref{demo}). The shuffling process is what we refer as ``{\it Stochastic Queuing Process}" since one can think the items of ties are stochastically queuing to generate new $\Pi$'s for future use in the algorithm.

The probability of observing $R$ therefore should be defined as
% \begin{equation}
%     P_{X}(R) = \prod_{i=1}^{n} \sum_{\Pi \in \Omega_{R}} \prod_{j=1}^{m} \frac{\phi(g(X)_{i\Pi_{ij}})}{\sum_{l=j}^{m} \phi(g(X)_{i\Pi_{il}})}
% \end{equation}
%\begin{equation}
    $P_{X}^{(k, \bar{m})}(R) =  \sum_{\Pi \in \mathcal S (R, \Omega)}P_X(\Pi)$. 
    %\prod_{i=1}^{n} P^{(k,\bar{m})}_{g(X_i)}(\Pi_i)
%\end{equation}
To learn the scoring matrix $X$, we can naturally solve the following maximum likelihood estimator with low-rank constraint: 
%The empirical risk minimization (ERM) framework can be naturally formulated as minimizing the resulting negative log-likelihood subject to the low rank constraint:
%Minimizing Now the negative log-likelihood becomes
% \begin{equation}
%     \min_X -\sum_{i=1}^{n} \log  \sum_{\Pi \in \Omega_{R}} \prod_{j=1}^{m} \frac{\phi(g(X)_{i\Pi_{ij}})}{\sum_{l=j}^{m} \phi(g(X)_{i\Pi_{il}})},
% \end{equation}
\begin{equation}
%    \min_{X \in \mathcal X} - \log  \sum_{\Pi \in \mathcal S (R, \Omega)}\prod_{i=1}^{n} P^{(k,\bar{m})}_{g(X_i)}(\Pi_i),
   \min_{X \in \mathcal X} - \log  \sum_{\Pi \in \mathcal S (R, \Omega)} P_X^{(k, \bar{m})}(\Pi), 
    \label{eq:aaa}
\end{equation}
 where $\mathcal{X}$ is the structural constraint of the scoring matrix. 
 To enforce low-rankness, we use the nuclear norm regularization $\mathcal{X}=\{X: \|X\|_*\leq r\}$. 
  % = \{ X: \text{rank}(X) \leq r\}$. 

Eq~\eqref{eq:aaa} is hard to optimize since there is a summation inside the $\log$. 
But by Jensen's inequality and convexity of $-\log$ function, we can move the summation outside $\log$ and obtain an upper bound of the original negative log-likelihood, leading to the following optimization problem: 
%\begin{equation}
%    \min_X -\sum_{i=1}^{n} \sum_{\Pi \in \Omega_{\Pi}}  \log \prod_{j=1}^{m} \frac{\phi(g(X)_{i\Pi_{ij}})}{\sum_{l=j}^{m} \phi(g(X)_{i\Pi_{il}})},
%\end{equation}
%{\color{red}Cho: can skip this if necessary.}
%and if we switch the order of two summation, we have:
% \begin{equation}
%     \min_X \sum_{\Pi \in \Omega_{R}} -\sum_{i=1}^{n} \sum_{j=1}^{m}   \log  \frac{\phi(g(X)_{i\Pi_{ij}})}{\sum_{l=j}^{m} \phi(g(X)_{i\Pi_{il}})}.
% \end{equation}
\begin{equation}
\min_{X \in \mathcal X} -\sum_{\Pi \in \mathcal S (R, \Omega)} \log P_X^{(k, \bar{m})}(\Pi)
%    \min_{X \in \mathcal X} -\sum_{\Pi \in \mathcal S (R, \Omega)} \sum_{i=1}^{n}    \log  P^{(k,\bar{m})}_{g(X_i)}(\Pi_i).
    \label{eq:convex}
\end{equation}
This upper bound is much easier to optimize and can be solved using Stochastic Gradient Descent (SGD). 
% The above equation is based on the assumption that there is no missing data: observed matrix $R$ is dense. This is not computationally feasible and also not the realistic scenario. 

Next we discuss how to apply our model for explicit and implicit feedback settings. In the explicit feedback setting, it is assumed that the matrix $R$ is partially observed
and the observed entries are explicit ratings in a range (e.g., $1$ to $5$). 
We will show in the experiments that $k=\bar{m}$ (using the full list) leads to the best results. 
\cite{huang2015listwise} also observed that increasing $k$ is useful for their cross-entropy loss, but they were not able to increase $k$ since their model
has time complexity exponential to $k$. 
%{\color{red}Liwei, check if this is true. Liwei: I think it is the 2015 paper says this, which is extension of 2010 paper but with neighborhood info. it only compares k=3 and k=1, i believe}
%This has also been observed in XXX, 
%however their algorithm has time complexity exponential to $k$, so they are

In the implicit feedback setting each element of $R_{ij}$ is either $1$ or $0$, where $1$ means positive actions (e.g., click or like) and $0$ means no action is observed.
Directly solving~\eqref{eq:convex} will be expensive since $\bar{m}=m$ and the computation
will involve all the $mn$ elements at each iteration. Moreover, the $0$'s in the matrix 
could mean either a lower relevance score or missing, thus should contribute less to the objective function. 
Therefore, we adopt the idea of negative sampling \cite{mikolov2013distributed} in our list-wise formulation. 
For each user (row of $R$), assume there are $\tilde{m}$ $1$'s, we then 
sample $\rho \tilde{m}$ unobserved entries uniformly from the same row and append to the back of the list. This then becomes the problem with $\bar{m}=(1+\rho) \tilde{m}$ and then we use the same algorithm
in explicit feedback setting to conduct updates. 
We then repeat the sampling process at the end of each iteration, so the update will be based on different set of $0$'s at each time. 
%with ratio $1$:$\rho$ for observed vs unobserved entries, append the unobserved to the back of the $1$'s and repeat the sampling process at end of each iteration. This is based on a common assumption for implicit feedback data: an item being unobserved may suggest that user has lower relevance level for that item than the observed ones. The objective function for implicit feedback is similar to the one for the explicit feedback except that $\bar{m}$ is replaced by $(1 + \rho) \bar{m}$.
%since we now incorporate the missing data in our model. 
%\begin{equation}\label{eq:implicit}
%    \min_X \sum_{\Pi \in \Omega_{\Pi}} -\sum_{i=1}^{n} \sum_{j=1}^{(1 + \rho)\bar{m}}   \log  \frac{\phi(g(X)_{i\Pi_{ij}})}{\sum_{l=j}^{(1 + \rho)\bar{m}} \phi(g(X)_{i\Pi_{il}})} + \lambda \|X\|_*,
%\end{equation}

\begin{algorithm}[tb]
   \caption{SQL-Rank: General Framework}
   \label{alg:sqlrank}
\begin{algorithmic}
   \State {\bfseries Input:} $\Omega$, $\{R_{ij}: (i, j) \in \Omega \}$, $\lambda\in \R^+$, $ss$, $rate$, $\rho$
   \State {\bfseries Output:} $U\in \R^{r \times n}$ and $V \in \R^{r \times m}$
   \State Randomly initialize $U, V$ from Gaussian Distribution
   \Repeat
   \State Generate a new permutation matrix $\Pi$ \Comment{see alg~\ref{alg:sq}} 
   \State Apply gradient update to U while fixing V 
   \State Apply gradient update to V while fixing U \Comment{see alg~\ref{alg:update}}
   
   \Until{performance for validation set is good}
   \State \textbf{return} $U, V$\Comment{recover score matrix $X$}
\end{algorithmic}
\end{algorithm}

\begin{algorithm}[tb]
   \caption{Stochastic Queuing Process}
   \label{alg:sq}
\begin{algorithmic}
   \State {\bfseries Input:} $\Omega$, $\{R_{ij}: (i, j) \in \Omega \}$, $\rho$  
   \State {\bfseries Output:} $\Pi \in \R^{n \times m}$
   \For{$i = 1$ {\bfseries to} $n$}
    \State Sort items based on observed relevance levels $R_i$
    \State Form $\Pi_i$ based on indices of items in the sorted list
    \State Shuffle $\Pi_i$ for items within the same relevance level
    \If{Dataset is implicit feedback}
        \State Uniformly sample $\rho \tilde{m}$ items from unobserved items 
        \State Append sampled indices to the back of $\Pi_i$
    \EndIf
   \EndFor 
   \State Stack $\Pi_i$ as rows to form matrix $\Pi$
   \State {\bfseries Return} $\Pi$ \Comment{Used later to compute gradient}
\end{algorithmic}
\end{algorithm}

\subsection{Non-convex implementation}
Despite the advantage of the objective function in equation~\eqref{eq:convex} being convex, it is still not feasible for large-scale problems since the scoring matrix $X\in \R^{n\times m}$ leads to high computational and memory cost. We follow a common trick to transform~\eqref{eq:convex} to the non-convex form by replacing $X=U^TV$:
with  $U \in \R^{r \times n}, V\in \R^{r \times m}$ so that the objective is,
%, where $U \in \R^{r \times n}, V\in \R^{r \times m}$, and in that case since $\|X\|_* = min_{X = U^T V} \frac{1}{2}(\|U\|_F^2 + \|V\|_F^2)$ \cite{srebro2004maximum}, problem~\eqref{eq:convex} can be reformulated as
\begin{align*}
    \! \sum_{\Pi \in \mathcal S (R, \Omega)}\!\! \underbrace{-\sum_{i=1}^{n} \sum_{j=1}^{\bar{m}}   \log  \frac{\phi(u_i^T v_{\Pi_{ij}})}{\sum_{l=j}^{\bar{m}} \phi(u_i^T v_{\Pi_{il}})}}_{f(U, V)} %\nonumber\\
    %& 
   \! +\! \frac{\lambda}{2} (\|U\|_F^2 \!+\! \|V\|_F^2), 
   \label{eq:nonconvex}
\end{align*}
%The gradient of the term   
%\begin{equation}
%    f(U, V) := -\sum_{i=1}^{n} \sum_{j=1}^{\bar{m}}   \log  \frac{\phi(g(u_i^T v_{\Pi_{ij}}))}{\sum_{k=j}^{\bar{m}} \phi(g(u_i^T v_{\Pi_{ik}})})
%\end{equation}
where $u_i, v_j$ are columns of $U, V$ respectively. We apply stochastic gradient descent to solve this problem. At each step, we choose a permutation matrix $\Pi\in\mathcal S (R, \Omega) $ 
using the stochastic queuing process (Algorithm \ref{alg:sq}) and then update $U, V$ by $\nabla f(U, V)$. For example, the gradient with respect to $V$ is ($g = \log \phi$ is the sigmoid function),
%and the gradient for a fixed permutation can be computed by
%The gradient of $f(U, V)$ with respect to $V$ is
%be derived respectively:
%\begin{align*}
%    \frac{\partial f}{\partial u_i} &= \sum_{t=1}^{\bar{m}} \bigg\{  -g(u_i^T v_{\Pi_{it}}) V_{\Pi_{it}} \\
%    & + \frac{\sum_{l=t}^{\bar{m}}\exp(g(u_i^T v_{\Pi_{il}})) g'(u_i^T v_{\Pi_{il}}) v_{\Pi_{il}}}{\sum_{l=t}^{\bar{m}} %\exp(u_i^T v_{j_l})}  \bigg\},
%\end{align*} for $1 \leq i \leq n$,
%and 
\begin{align*}
% \frac{\partial f}{\partial v_j} &= \sum_{i=1}^{n} \sum_{t=1}^{\bar{m}}\bigg\{ -g'(u_i^T v_t) u_i \\
\frac{\partial f}{\partial v_j} &= \sum_{i\in \Omega_j} \sum_{t=1}^{\bar{m}}\bigg\{ -g'(u_i^T v_t) u_i \\
& + \frac{\mathbbm{1}(\text{rank}_i(j)\geq t) \phi(u_i^T v_j)}{\sum_{l=t}^{\bar{m}}\phi(u_i^T v_{\Pi_{il}})} g'(u_i^T v_j)u_i\bigg\}
\end{align*}
%for $1 \leq \text{rank}(j) \leq \bar{m}$ and $\frac{\partial f}{\partial v_j} = 0$ otherwise, 
where $\Omega_j$ denotes the set of users that have rated the item $j$ and $\text{rank}_i(j)$ is a function gives the rank of the item $j$ for that user $i$. Because $g$ is the sigmoid function, $g' = g \cdot (1 - g)$. 
The gradient with respect to $U$ can be derived similarly. 

As one can see, a naive way to compute the gradient of $f$ requires  $O(n \bar{m}^2 r)$ time, which is very slow even for one iteration. However, 
we show in Algorithm~\ref{alg:gradv} (in the appendix) that there is a smart way to re-arranging the computation so that $\nabla_V f(U, V)$ can be computed in $O(n \bar{m} r)$
time, which makes our SQL-Rank a linear-time algorithm (with the same per-iteration complexity as classical matrix factorization). 

\section{Experiments}
\label{exp}
In this section, we compare our proposed algorithm (SQL-Rank) with other state-of-the-art algorithms on real world datasets. 
Note that our algorithm works for both implicit feedback and explicit feedback settings. In the implicit feedback setting, all the ratings are $0$ or $1$; 
in the explicit feedback setting, explicit ratings (e.g., $1$ to $5$) are given but only to a subset of user-item pairs. 
%is giveneach entry in the $n$ by $m$ user-item matrix 
%is either $0$ or $1$, where $1$ indicates positive feedback and all the rest are $0$. In the explicit feedback setting, 
%the explicit scores (e.g., $1$ to $5$) are given to a subset of entries and the rest unrated entries are not used in learning. 
%only contai data only contains information based on users' unintentional behavior, such as a user clicks a page or watches a movie, so the input data only consist $1$'s and $0$'s.  While in the explicit feedback setting, the items are explicitly rated by users with some rating levels (e.g., between $1$ and $5$), and the rest unrated pairs are not used in learning. 
Since many real world recommendation systems follow the implicit feedback setting (e.g., purchases, clicks, or checkins), we will first compare SQL-Rank on implicit feedback datasets and show it outperforms state-of-the-art algorithms. Then we will verify that our algorithm also performs well on explicit feedback problems. 
%in both explicit feedback setting and implicit feedback setting on real world datasets. 
%For the explicit feedback setting, the items are explicitly rated by users with some rating levels (e.g., between $1$ and $5$), and the rest unrated pairs are not used in the model. In the implicit feedback setting,
%data only contains information based on users' unintentional behavior, such as a user clicks a page or watches a movie, so the input data only consist $1$'s and $0$'s (e.g., $1$ means click and all the rest pairs are $0$). 
%{\color{red} Cho: Did we mention this in other place? If so then probably no need to repeat.}
%The main distinction between explicit feedback and implicit feedback lies in that explicit feedback are ratings given by users explicitly and input data are ratings of some rating levels such as ratings between $1$ and $5$ so no $0$'s can be used in the model. Implicit feedback data, on the other hand, are information based on users' unintentional behavior, such as a user clicks a page or watches a movie. So the input data for implicit feedback usually only consist $0$'s and $1$'s and $0$'s have to be incorporated in the model somehow. 
All experiments are conducted on a server with an Intel Xeon E5-2640 2.40GHz CPU and 64G RAM.

\subsection{Implicit Feedback}
In the implicit feedback setting we compare the following methods:
\begin{compactitem}
    \item SQL-Rank: our proposed algorithm implemented in Julia \footnote{\url{https://github.com/wuliwei9278/SQL-Rank}}. 
    \item Weighted-MF: the weighted matrix factorization algorithm by putting different weights on $0$ and $1$'s \cite{hu2008collaborative,hsieh2015pu}.
    \item BPR: the Bayesian personalized ranking method motivated by MLE \cite{rendle2009bpr}. For both Weighted-MF and BPR, we use the C++ code by Quora \footnote{\url{https://github.com/quora/qmf}}. 
\end{compactitem}
Note that other collaborative ranking methods such as Pirmal-CR++~\cite{wu2017large} and List-MF~\cite{shi2010list}
do not work for implicit feedback data, and we will compare with them later in the explicit feedback experiments. 
For the performance metric, we use precision@$k$ for $k = 1, 5, 10$ defined by 
\begin{equation}
    \text{precision}@k = \frac{\sum_{i=1}^{n}|\{1 \leq l \leq k: R_{i\Pi_{il}} = 1\}|}{n \cdot k},
\end{equation} where $R$ is the rating matrix and $\Pi_{il}$ gives the index of the $l$-th ranked item for user $i$ among all the items not rated by user $i$ in the training set. 
% (including both test items containing ratings and unobserved items).

% XXX {\color{red} Add equation to edefine precision here}.  
We use rank $r = 100$ and tune regularization parameters for all three algorithms using a random sampled validation set. For Weighted-MF, we also tune the confidence weights on unobserved data. For BPR and SQL-Rank, we fix the ratio of subsampled unobserved $0$'s versus observed $1$'s to be $3:1$, which gives the best performance for both BPR and SQL-rank in practice. 
%since we find that the ratio $\rho=3$ gives the best performance in practice for both BPR and SQL-Rank across different data sets. 

We experiment on the following four datasets. 
Note that the original data of Movielens1m, Amazon and Yahoo-music are
ratings from $1$ to $5$, so we follow the procedure in \cite{rendle2009bpr,yu2017selection} to preprocess the data. 
% {\color{red}Cho: cite}
We transform ratings of $4, 5$ into $1$'s and the rest entries (with rating $1, 2, 3$ and unknown) as $0$'s.
Also, we remove users with very few $1$'s in the corresponding row to make sure there are enough $1$'s for both training and testing. 
For Amazon, Yahoo-music and Foursquare, we discard users with less than $20$ ratings and randomly select $10$ $1$'s as training and use the rest as testing. 
Movielens1m has more ratings than others, so we keep users with more than $60$ ratings, and randomly sample 50 of them as training. 
\begin{compactitem}
\item Movielens1m: a popular movie recommendation data with $6,040$ users and $3,952$ items.% \footnote{\url{https://grouplens.org/datasets/movielens/1m/}}.
%We discard users with less than $50 + 10$ ratings and randomly select $50$ $1$'s as training data and use the rest as test data, because we need at least $10$ ratings in test data to compute precision@$10$. 
\item Amazon: the Amazon purchase rating data for musical instruments \footnote{\url{http://jmcauley.ucsd.edu/data/amazon/}} with $339,232$ users and $83,047$ items. 
%We discard users with less than $10 + 10$ ratings and randomly select $10$ $1$'s as training data and use the rest as test data. 
\item Yahoo-music: the Yahoo music rating data set \footnote{\url{https://webscope.sandbox.yahoo.com/catalog.php?datatype=r&did=3}} which contains $15,400$ users and $1,000$ items. 
%We discard users with less than $10 + 10$ ratings and randomly select $10$ $1$'s as training data and use the rest as test data. 
% {\color{red} Cho: did we discard any user?}
\item Foursquare: 
a location check-in data\footnote{\url{https://sites.google.com/site/yangdingqi/home/foursquare-dataset}}. The data set contains $3,112$ users and $3,298$ venues with $27,149$ check-ins. The data set is already in the form of ``0/1'' so we do not need to do any transformation. 
%We discard users with fewer than $10 + 10$ check-ins for the same reason. 
\end{compactitem}
%Note that we use $50$ ratings per user as training data for Movielens1m dataset but use $10$ ratings per user as training for all other three datasets. This is because we find that the latter three datasets have many fewer users with more than $50 + 10$ ratings and we don't want to discard too many users. 

The experimental results are shown in Table~\ref{implicit-combined}. We find that SQL-Rank outperforms both Weighted-MF and BPR in most cases.

\begin{table}[t]
\caption{Comparing implicit feedback methods on various datasets.}
\label{implicit-combined}
%\vskip 0.15in
\begin{center}
\begin{small}
\begin{sc}
\resizebox{0.8\textwidth}{!}{
\begin{tabular}{lccccr}
\toprule
Dataset & Method &  P@1 & P@5 & P@10 \\
\midrule
\multirow{3}{*}{Movielens1m} & SQL-Rank   &  {\bfseries 0.73685}  & {\bfseries 0.67167}   & 0.61833 \\
& Weighted-MF & 0.54686 & 0.49423 &  0.46123  \\
& BPR   & 0.69951 & 0.65608 &  {\bfseries 0.62494}  \\
\midrule
\multirow{3}{*}{Amazon} & SQL-Rank   &  0.04255  & {\bfseries 0.02978}   & {\bfseries 0.02158} \\
& Weighted-MF & 0.03647 & 0.02492 &  0.01914  \\
& BPR   & {\bfseries 0.04863} & 0.01762 &   0.01306  \\
\midrule
\multirow{3}{*}{Yahoo music} & SQL-Rank   &  {\bfseries 0.45512}  & {\bfseries 0.36137}   & {\bfseries 0.30689} \\
& Weighted-MF &  0.39075 & 0.31024 & 0.27008  \\
& BPR   & 0.37624 & 0.32184 &   0.28105  \\
\midrule
\multirow{3}{*}{Foursquare} & SQL-Rank   &  {\bfseries 0.05825}  & {\bfseries 0.01941}   & {\bfseries 0.01699} \\
& Weighted-MF & 0.02184 & 0.01553 &  0.01407  \\
& BPR   & 0.03398 & 0.01796 &   0.01359  \\
\bottomrule
\end{tabular}
}
\end{sc}
\end{small}
\end{center}
%\vskip -0.1in
\end{table}

\subsection{Explicit Feedback}
Next we compare the following methods in the explicit feedback setting: 
\begin{compactitem}
    \item SQL-Rank: our proposed algorithm implemented in Julia. Note that in the explicit feedback setting our algorithm
    only considers pairs with explicit ratings. 
    \item List-MF: the listwise algorithm using the cross entropy loss between observed rating and top $1$ probability \cite{shi2010list}. We use the C++ implementation on github\footnote{\url{https://github.com/gpoesia/listrankmf}}.
    \item MF: the classical matrix factorization algorithm in \cite{koren2008factorization} utilizing a pointwise loss solved by SGD. We implemented SGD in Julia. 
    \item Primal-CR++: the recently proposed pairwise algorithm in \cite{wu2017large}. We use the Julia implementation released by the authors\footnote{\url{https://github.com/wuliwei9278/ml-1m}}.
\end{compactitem}

% We use the dataset Movielen1m\footnote{\url{https://grouplens.org/datasets/movielens/1m/}} to compare these algorithms. It contains $6,040$ users, $3,952$ items and about $1$ million ratings from $1$ to $5$. We discard users with less than $50 + 10$ ratings and use the $50$ ratings as training data and the rest as test, since we need at least $10$ ratings for test data to compute the NDCG@$10$. 
Experiments are conducted on Movielens1m and Yahoo-music datasets. We perform the same procedure as in implicit feedback setting except that we do not need to mask the ratings into ``0/1''. 
%for Movielens1m, we discard users with less than $50 + 10$ ratings and use the $50$ ratings as training data and the rest as test and for Yahoo music, we discard users with less than $10 + 10$ ratings and use the $50$ ratings as training data and the rest as test. The reason for discarding users is same as before, we need at least $10$ ratings for test data to compute the NDCG@$10$.
% For each algorithm, the regularization parameters are chosen by a random sampled validation set. We use rank $r=5$ for all the above methods. 
% {\color{red}Cho: will things be different when you choose different $r$?}

We measure the performance in the following two ways:
\begin{compactitem}
    \item NDCG$@k$: defined as:
        \begin{equation*} 
            \text{NDCG}@k = \frac{1}{n} \sum_{i = 1}^{n} \frac{\text{DCG}@k(i, \Pi_i)}{\text{DCG}@k(i, \Pi_i^*)}, 
        \end{equation*} where $i$ represents $i$-th user and
        \begin{equation*} 
        \text{DCG}@k(i, \Pi_i)= \sum_{l = 1}^{k} \frac{2^{R_{i\Pi_{il}}} - 1}{log_2(l + 1)}. 
        \end{equation*}
        In the DCG definition, $\Pi_{il}$ represents the index of the $l$-th ranked item for user $i$ in test data based on the learned score matrix $X$. $R$ is the rating matrix and $R_{ij}$ is the rating given to item $j$ by user $i$. $\Pi_i^*$ is the ordering provided by the ground truth rating.
    \item Precision@$k$: defined as a fraction of relevant items among the top $k$ recommended items: 
        \begin{equation*}
            \text{precision}@k = \frac{\sum_{i=1}^{n}|\{1 \leq l \leq k: 4 \leq R_{i\Pi_{il}} \leq 5\}|}{n \cdot k},
        \end{equation*}
        here we consider items with ratings assigned as $4$ or $5$ as relevant. $R_{ij}$ follows the same definitions above but unlike before $\Pi_{il}$ gives the index of the $l$-th ranked item for user $i$ among all the items that are not rated by user $i$ in the training set (including both rated test items and unobserved items). 
\end{compactitem}

As shown in Table~\ref{explicit-combined}, our proposed listwise algorithm SQL-Rank outperforms previous listwise method List-MF in both NDCG@$10$ and precision@$1,5,10$. It verifies the claim that log-likelihood loss outperforms the cross entropy loss if we use it correctly. When listwise algorithm SQL-Rank is compared with pairwise algorithm Primal-CR++, the performances between SQL-Rank and Primal-CR++ are quite similar, slightly lower for NDCG@$10$ but higher for precision@$1,5,10$. Pointwise method MF is doing okay in NDCG but really bad in terms of precision. %We try to explore the cause of this behavior and find that the ones with highest predicted scores are almost all unobserved and are not items rated $4$ or $5$ in the test data. 
Despite having comparable NDCG, the predicted top $k$ items given by MF are quite different from those given by other algorithms utilizing a ranking loss. The ordered lists based on SQL-Rank, Primal-CR++ and List-MF, on the other hand, share a lot of similarity and only have minor difference in ranking of some items. It is an interesting phenomenon that we think is worth exploring further in the future. 

\begin{table}[t]
\caption{Comparing explicit feedback methods on various datasets.}
\label{explicit-combined}
\begin{center}
\begin{small}
\begin{sc}
\resizebox{0.8\textwidth}{!}{
\begin{tabular}{lccccr}
\toprule
Dataset & Method & NDCG@$10$ & P@1 & P@5 & P@10 \\
\midrule
\multirow{3}{*}{Movielens1m} & SQL-Rank &  0.75076   &  {\bfseries 0.50736}  & {\bfseries 0.43692}   & {\bfseries 0.40248} \\
& List-MF & 0.73307 & 0.45226 &  0.40482   & 0.38958 \\
& Primal-CR++    & {\bfseries 0.76826} & 0.49365 & 0.43098   & 0.39779 \\
& MF    & 0.74661 & 0.00050 & 0.00096   & 0.00134 \\
\midrule
\multirow{3}{*}{Yahoo music} & SQL-Rank &  0.66150   &  {\bfseries 0.14983}  & {\bfseries 0.12144}   & {\bfseries 0.10192} \\
& List-MF & 0.67490 & 0.12646 &  0.11301   & 0.09865 \\
& Primal-CR++    &  0.66420 & 0.14291 & 0.10787   & 0.09104 \\
& MF    & {\bfseries 0.69916} & 0.04944 & 0.03105   & 0.04787 \\

\bottomrule
\end{tabular}
}
\end{sc}
\end{small}
\end{center}
%\vskip -0.2in
\end{table}

% \begin{figure}[ht]
% \vskip 0.2in
% \begin{center}
% \centerline{\includegraphics[width=\columnwidth]{figures/ml1m_b.png}}
% \caption{Comparing implicit feedback methods.}
% \label{ml1m_b}
% \end{center}
% \vskip -0.2in
% \end{figure}

\subsection{Training speed}

To illustrate the training speed of our algorithm, we plot precision@$1$ versus training time for the Movielen1m dataset and the Foursquare dataset. Figure~\ref{time} and Figure~\ref{time2} (in the appendix) show that our algorithm SQL-Rank is faster than BPR and Weighted-MF. Note that our algorithm is implemented in Julia while BPR and Weighted-MF are highly-optimized C++ codes (usually at least 2 times faster than Julia) released by Quora.  
This speed difference makes sense as our algorithm takes $O(n \bar{m} r)$ time, which is linearly to the observed ratings. In comparison, pair-wise model such as BPR
has $O(n\bar{m}^2)$ pairs, so will take $O(n \bar{m}^2 r)$ time for each epoch. 
%because the pairwise comparisons in loss functions are in order of $O(n \bar{m}^2)$. 
%However, it is claimed that the BPR usually converges after a subsample of $c \cdot n \bar{m}$ pairwise comparisons, where $c$ is some constant. Only until very recently, time complexity for pairwise loss collaborative ranking approach can be reduced to near-linear $O(\text{iter} \cdot (n \bar{m}(r + \log(\bar{m}))))$ but it is still not linear \cite{wu2017large}.
% On the other hand, the time complexity for pointwise loss approach for implicit feedback known as Weighted-MF is $O(\text{iter}\cdot(n \bar{m} r^2 + (n + m) r^3))$ \cite{hu2008collaborative}. As to pairwise loss approach for implicit feedback known as BPR, it has theoretical time complexity $O(\text{iter} \cdot (n \bar{m}^2 r))$ because the pairwise comparisons in loss functions are in order of $O(n \bar{m}^2)$ if one only subsamples comparable number of unobserved data to observed data. It, however, is claimed that the BPR usually converges after a subsample of $c \cdot n \bar{m}$ pairwise comparisons, where $c$ is some constant. Only until very recently, time complexity for pairwise loss collaborative ranking approach can be reduced to near-linear $O(\text{iter} \cdot (n \bar{m}(r + \log(\bar{m}))))$ but it is still not linear \cite{wu2017large}.

 \begin{figure}[ht]
%\vskip -0.1in
\begin{center}
\centerline{\includegraphics[width=0.8\columnwidth]{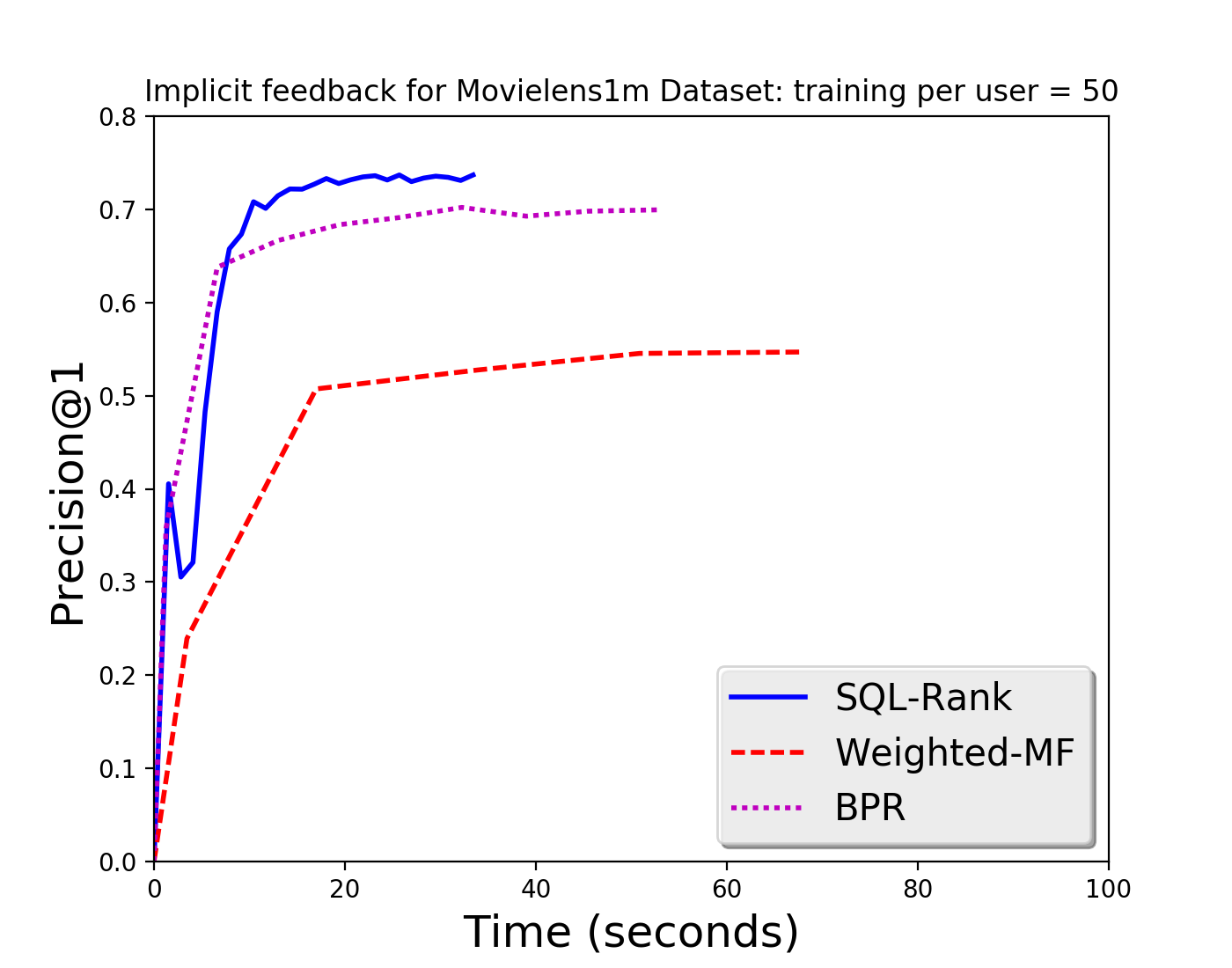}}
\caption{Training time of implicit feedback methods.}
\label{time}
\end{center}
%\vskip -0.4in
\end{figure}

\subsection{Effectiveness of Stochastic Queuing (SQ)}

One important innovation in our SQL-Rank algorithm is the Stochastic Queuing (SQ) Process for handling ties. 
To illustrate the effectiveness of the SQ process, we compare our algorithm with and without SQ. Recall that without SQ means we fix a certain permutation matrix $\Pi$ and optimize with respect to it throughout all iterations without generating new $\Pi$, while SQ allows us to update using a new permutation at each time. As shown 
% in Figure~\ref{sqp},  
Table~\ref{implicit-sq} and Figure~\ref{sqp} (in the appendix),
the performance gain from SQ in terms of precision is substantial (more than $10\%$) on Movielen1m dataset. It verifies the claim that our way of handling ties and missing data is very effective and improves the ranking results by a lot. 

\begin{table}[t]
\caption{Effectiveness of Stochastic Queuing Process.}
\label{implicit-sq}
%\vskip 0.15in
\begin{center}
\begin{small}
\begin{sc}
\resizebox{0.5\textwidth}{!}{
\begin{tabular}{lcccr}
\toprule
Method &  P@1 & P@5 & P@10 \\
\midrule
With SQ   &  {\bfseries 0.73685}  & {\bfseries 0.67167}   & {\bfseries 0.61833} \\
Without SQ & 0.62763 & 0.58420 &  0.55036  \\
\bottomrule
\end{tabular}
}
\end{sc}
\end{small}
\end{center}
%\vskip -0.2in
\end{table}

% \vspace{-10pt}
\begin{table}[th]
\caption{Comparing different $k$ on Movielens1m data set using 50 training data per user.}
\label{explicit-ml1m-topk}
%\vskip 0.15in
\begin{center}
\begin{small}
\begin{sc}
% \vspace{-15pt}
\resizebox{0.8\textwidth}{!}{
\begin{tabular}{lcccr}
\toprule
$k$ & NDCG@$10$ & P@1 & P@5 & P@10 \\
\midrule
5 &  0.64807   &  0.39156  & 0.33591   & 0.29855 \\
10 & 0.67746 & 0.43118 &  0.34220   & 0.33339 \\
25    & 0.74589 & 0.47003 & 0.42874   & 0.39796 \\
50 (full list) & {\bfseries 0.75076} & {\bfseries 0.50736} & {\bfseries 0.43692}  & {\bfseries 0.40248} \\
\bottomrule
\end{tabular}
}
\end{sc}
\end{small}
\end{center}
%\vskip -0.2in
\end{table}

%\begin{table}[t]
%\caption{Comparing with and without stochastic queuing (SQ) on Movielens1m data set using 50 training data per user.}
%\label{implicit-sq}
%\vskip 0.15in
%\begin{center}
%\begin{small}
%\begin{sc}
%\begin{tabular}{lcccr}
%\toprule
%Method &  P@1 & P@5 & P@10 \\
%\midrule
%With SQ   &  {\bfseries 0.73685}  & {\bfseries 0.67167}   & {\bfseries 0.61833} \\
%Without SQ & 0.62763 & 0.58420 &  0.55036  \\
%
%\bottomrule
%\end{tabular}
%\end{sc}
%\end{small}
%\end{center}
%\vskip -0.1in
%\end{table}

% \begin{figure}[ht]
% \vskip 0.2in
% \begin{center}
% \centerline{\includegraphics[width=\columnwidth]{figures/stochasticqueuing.png}}
% \caption{Effectiveness of Stochastic Queuing Process.}
% \label{sqp}
% \end{center}
% \vskip -0.2in
% \end{figure}

\subsection{Effectiveness of using the Full List}

%We mentioned before that other papers suggest using a larger $k$ in top $k$ probability leads to better ranking results in both learning to rank and collaborative ranking setting. 
Another benefit of our algorithm is that we are able to minimize top $k$ probability with much larger $k$ and without much overhead. Previous approaches 
\cite{huang2015listwise}
% {\color{red} cite} 
already pointed out increasing $k$ leads to better ranking results, but
their complexity is exponential to $k$ so they were not able to have $k > 1$. 
% In Equation~\eqref{eq:convex} we find when $k$ takes the maximum and equals the length of the fully observed list, the model gives the best results.
% , we actually have our model in Equation~\eqref{eq:convex}. 
%For the data set we use to get results in Table~\ref{explicit-ml1m}, if we don't use unobserved data, maximum $k$ is 50, because we split the data in a way for the training data to be $50$ ratings per user. 
To show the effectiveness of using permutation probability for full lists rather than using the top $k$ probability for top $k$ partial lists in the likelihood loss, we fix everything else to be the same and only vary $k$ in Equation~\eqref{eq:convex}. We obtain the results in Table~\ref{explicit-ml1m-topk} and Figure~\ref{explicit_topk} (in the appendix). It shows that the larger $k$ we use, the better the results we can get. Therefore, in the final model, we set $k$ to be the maximum number (length of the  observed list.)   
%This motivates us to use full lists for explicit feedback and makes us to think about using even longer lists for implicit feedback by appending unobserved data to the end. 

% \begin{figure}[ht]
% \vskip 0.2in
% \begin{center}
% \centerline{\includegraphics[width=\columnwidth]{figures/explicit_topk.png}}
% \caption{Effectiveness of using full lists.}
% \label{explicit_topk}
% \end{center}
% \vskip -0.2in
% \end{figure}

\section{Conclusions}

In this chapter, we propose a listwise approach for collaborative ranking and provide
an efficient algorithm to solve it. 
Our methodology can incorporate both implicit and explicit feedback, and can gracefully handle ties and missing data. 
In experiments, we demonstrate our algorithm outperforms existing state-of-the art methods in terms of top $k$ recommendation precision. 
We also provide a theoretical framework for analyzing listwise methods highlighting the dependence on the number of users and items.

\chapter{Stochastic Shared Embeddings: Data-driven Regularization of Embedding Layers}

\section{Introduction}
Recently, embedding representations have been widely used in almost all AI-related fields, from feature maps \cite{krizhevsky2012imagenet} in computer vision, to word embeddings \cite{mikolov2013distributed, pennington2014glove}  in natural language processing, to user/item embeddings \cite{mnih2008probabilistic, hu2008collaborative} in recommender systems. Usually, the embeddings are high-dimensional vectors. Take language models for example, in GPT \cite{radford2018improving} and Bert-Base model \cite{devlin2018bert}, 768-dimensional vectors are used to represent words. Bert-Large model utilizes 1024-dimensional vectors and GPT-2 \cite{radford2019language} may have used even higher dimensions in their unreleased large models. In recommender systems, things are slightly different: the dimension of user/item embeddings are usually set to be reasonably small, 50 or 100. But the number of users and items is on a much bigger scale. Contrast this with the fact that the size of word vocabulary that normally ranges from 50,000 to 150,000, the number of users and items can be millions or even billions in large-scale real-world commercial recommender systems \cite{bennett2007netflix}. 

Given the massive number of parameters in modern neural networks with embedding layers, mitigating over-parameterization can play a big role in preventing over-fitting in deep learning. 
We propose a regularization method, Stochastic Shared Embeddings (SSE), that uses prior information about similarities between embeddings, such as semantically and grammatically related words in natural languages or real-world users who share social relationships. Critically, SSE progresses by stochastically transitioning between embeddings as opposed to a more brute-force regularization such as graph-based Laplacian regularization and ridge regularization.
Thus, SSE integrates seamlessly with existing stochastic optimization methods and the resulting regularization is data-driven.
%To achieve the goal, we think we first need to understand the sharing mechanisms of embeddings for those semantically and grammatically related words in natural languages or real-world users who share complicated social relationships beyond the Internet.

% This paper is intended as a first try from the angle of stochastic sharing mechanisms and is ideologically very different from non-stochastic $l_2$ regularization (i.e. weight decay) \cite{krogh1992simple} and hard/soft parameter sharing \cite{nowlan1992simplifying}. We do not claim that our method to be the optimal solution to this problem, and to be honest, we do not think we have figured out what the ideal sharing mechanisms should be for embeddings. But we do think our proposed stochastically sharing among highly related embeddings is a reasonable solution, which makes a lot of senses and is strongly supported by empirical results. Moreover, it is very simple to use without the need to significantly modify the existing code bases.  

We will begin the paper with the mathematical formulation of the problem, propose SSE, and provide the motivations behind SSE. 
We provide a theoretical analysis of SSE that can be compared with excess risk bounds based on empirical Rademacher complexity.
%We see that our analogous notions of complexity for SSE may be significantly smaller than the Rademacher complexity because the transition probabilities have the effect of `smoothing' the complexity.
We then conducted experiments for a total of 6 tasks from simple neural networks with one hidden layer in recommender systems, to the transformer and BERT in natural languages and find that when used along with widely-used regularization methods such as weight decay and dropout, our proposed methods can further reduce over-fitting, which often leads to more favorable generalization results.  

\section{Related Work}
Regularization techniques are used  to control model complexity and avoid over-fitting. 
$\ell_2$ regularization~\cite{hoerl1970ridge} is the most widely used approach and has been used in many matrix factorization models in recommender systems; $\ell_1$ regularization~\cite{tibshirani1996regression} is used when a sparse model is preferred. For deep neural networks, it has been shown that $\ell_p$  regularizations are often too weak, while dropout~\cite{hinton2012improving,srivastava2014dropout} is more effective in practice.  There are many other regularization techniques, including parameter sharing \cite{goodfellow2016deep}, max-norm regularization \cite{srebro2005maximum}, gradient clipping \cite{pascanu2013difficulty}, etc.

Our proposed SSE-graph is very different from graph Laplacian regularization \cite{cai2011graph}, in which the distances of any two embeddings connected over the graph are directly penalized.
%, but there is no direct relationship between this and the final loss. 
Hard parameter sharing uses one embedding to replace all distinct embeddings in the same group, which inevitably introduces a significant bias. 
Soft parameter sharing \cite{nowlan1992simplifying} is similar to the graph Laplacian, penalizing the $l_2$ distances between any two embeddings. 
These methods have no dependence on the loss, while the proposed SSE-graph method is data-driven in that the loss influences the effect of regularization.
Unlike graph Laplacian regularization, hard and soft parameter sharing, our method is stochastic by nature.
This allows our model to enjoy similar advantages as dropout \cite{srivastava2014dropout}.

% One way to understand our SSE algorithm is that we are training exponentially many such neural networks at the same time, in which each network has an additional remapping layer. The function of such a layer is to map embeddings from the original embedding table to a new embedding table in a predefined and deterministic fashion. 
% However, it is obvious that directly optimizing the sum of losses of exponentially many networks is highly inefficient and computationally unfeasible. Motivated by the dropout paper, we can take advantage of stochasticity and solve the original neural network with a slight modification on the backpropagation procedure.

Interestingly, in the original BERT model's pre-training stage \cite{devlin2018bert}, a variant of SSE-SE is already implicitly used for token embeddings but for a different reason. 
In \cite{devlin2018bert}, the authors masked 15\% of words and 10\% of the time replaced the [mask] token with a random token. 
In the next section, we discuss how SSE-SE differs from this heuristic.
% That is roughly equivalent to a SSE probability of 0.015 for replacing input word-piece embeddings.
% We do not think the BERT authors explained the benefits accurately here.
% The authors stated their motivation of doing so is to mitigate a mismatch between pre-training and fine-tuning, since [mask] token is never seen in the fine-tuning stage. But we will argue the gain is due to the fact that replacing [mask] token with a random token has the same effects of replacing the tokens' embeddings as our SSE-SE does and this would allow us to mitigate over-parameterization in the embedding layer and to learn better embedding representations without the need to reduce the number of parameters of the model.
Another closely related technique to ours is the label smoothing \cite{szegedy2016rethinking}, which is widely used in the computer vision community.
We find that in the classification setting if we apply SSE-SE to one-hot encodings associated with output $y_i$ only, our SSE-SE is closely related to the label smoothing, which can be treated as a special case of our proposed method. 
 
% \section{Motivation}
% {\color{red}Cho: maybe we can combine motivation and formulations and maybe remove the story.}
% The most related works are weight decay, hard/soft parameter sharing and dropout. Both can be used to mitigate over-parameterization in embedding layer of neural nets. However, stochasticity is missing from weight decay, hard and soft parameter sharing, and we find existing regularization techniques are not sufficient to complete resolve over-parameterization in embedding layer. So we improve upon hard/soft parameter sharing and propose a new method called Stochastic Shared Embeddings (\PassOptionsToPackage{}{}SSE) to force embeddings to share common information stochastically.

% See the motivation section of dropout paper and come up with a good story.
% What about the story of Hua Mulan? 

\begin{algorithm}[tb]

  \caption{SSE-Graph for Neural Networks with Embeddings}
  \label{alg:see-graph}

\begin{algorithmic}[1]
  \State {\bfseries Input:} input $x_i$, label $y_i$, backpropagate $T$ steps, mini-batch size $m$, knowledge graphs on embeddings $\{E_1, \dots, E_M \}$
  \State Define $p_l(., . | \Phi)$ based on knowledge graphs on embeddings, $l = 1,\ldots, M$
  \For{$t=1$ {\bfseries to} $T$}
    \State Sample one mini-batch $\{x_1, \dots, x_m\}$
     \For{$i=1$ {\bfseries to} $m$}
        \State Identify the set of embeddings $\mathcal{S}_i = \{E_1[j^i_1], \dots, E_M[j^i_M]\}$ for input $x_i$ and label $y_i$
        \For{each embedding $E_l[j^i_l] \in \mathcal{S}_i$}
            \State  Replace $E_l[j^i_l]$ with $E_l[k_l]$, where $k_l \sim p_l(j^i_l, . | \Phi)$
        \EndFor
     \EndFor
    \State Forward and backward pass with the new embeddings
  \EndFor
  \State Return embeddings $\{E_1, \dots, E_M \}$, and neural network parameters $\Theta$
 \end{algorithmic}

\end{algorithm}

% Our method takes motivation from an ancient Chinese tale. In the tale, Hua Mulan, disguised as a man, takes her aged father's place in the army and becomes a legendary Chinese warrior without anyone knowing she is a girl and returns home when the war is won. Hua Mulan replaces her father but can still achieve the same task potentially because they share many characteristics with each other. If Hua Mulan is represented as an embedding and her father is as another embedding, then Hua Mulan temporarily uses her own embedding for the task in which her father's embedding is supposed to work best without sacrificing the quality of accomplishing the same task. It in fact turns out that she achieves the same task even better and becomes a legendary general. Does this mean sometimes replacing the embedding meant for the task with a different but similar embedding would improve the performance of achieving the same original task?

\section{Stochastic Shared Embeddings}
% \begin{itemize}
%     \item View matrix factorization as a simple example of one hidden layer neural network: 2 embedding tables, 2 embeddings each sgd update.
%     \item View BPR as another example of one hidden layer neural network but a different loss, 3 embeddings each sgd upate (maybe omit?).
%     \item View SASRec as 7 layers neural network with 2 transformer blocks, each Adam update involving $m \times T $ embeddings. 
% \end{itemize}

Throughout this chapter, the network input $x_i$ and label $y_i$ will be encoded into indices $j_1^i,\ldots, j_M^i$ which are elements of $\mathcal I_1 \times \ldots \mathcal I_M$, the index sets of embedding tables.
A typical choice is that the indices are the encoding of a dictionary for words in natural language applications, or user and item tables in recommendation systems. 
Each index, $j_l$, within the $l$th table, is associated with an embedding $E_l[j_l]$ which is a trainable vector in $\mathbb R^{d_l}$.
The embeddings associated with label $y_i$ are usually non-trainable one-hot vectors corresponding to label look-up tables while embeddings associated with input $x_i$ are trainable embedding vectors for embedding look-up tables. 
In natural language applications, we appropriately modify this framework to accommodate sequences such as sentences.

The loss function can be written as the functions of embeddings:
\begin{equation}\label{eq:obj5}
    R_n(\Theta) = \sum_i \ell(x_i, y_i | \Theta) = \sum_i \ell(E_1[j^i_1], \dots, E_M[j_M^i] | \Theta),
\end{equation}
\vspace{-.1cm}
%{\color{red}(Cho: some problem in notation. $(e_j, \dots, e_k)$ should be depend on $i$ (otherwise $i$ does not appear in the function))}
%where $e_1(j_1^i), \dots, e_E(j_E^i) \in \mathbb{R}^{d_1} \times \ldots \times \mathbb{R}^{d_E}$ are embeddings associated with input $x_i$, 
where $y_i$ is the label and $\Theta$ encompasses all trainable parameters including the embeddings, $\{ E_l[j_l]: j_l \in \mathcal I_l\}$.
%and embedding indices $j_1^i, \ldots, j_E^i$ for the $E$ embedding tables. 
The loss function $\ell$ is a mapping from embedding spaces to the reals.
% and is parametrized by trainable parameters $\Theta$ such as weights and biases in neural networks.
For text input, each $E_l[j^i_l]$ is a word embedding vector in the input sentence or document. For recommender systems, usually there are two embedding look-up tables: one for users and one for items \cite{he2017neural}. So the objective function, such as mean squared loss or some ranking losses, will comprise both user and item embeddings for each input. 
We can more succinctly write the matrix of all embeddings for the $i$th sample as $\mathbf{E}[\mathbf{j}^i] = (E_1[j_1^i], \ldots, E_M[j_M^i])$ where $\mathbf{j}^i = (j^i_1,\ldots, j^i_M) \in \mathcal{I}$.
By an abuse of notation we write the loss as a function of the embedding matrix, $\ell(\mathbf{E}[\mathbf{j}^i] | \Theta)$.

\begin{figure}
  \centering
  \includegraphics[width=\linewidth]{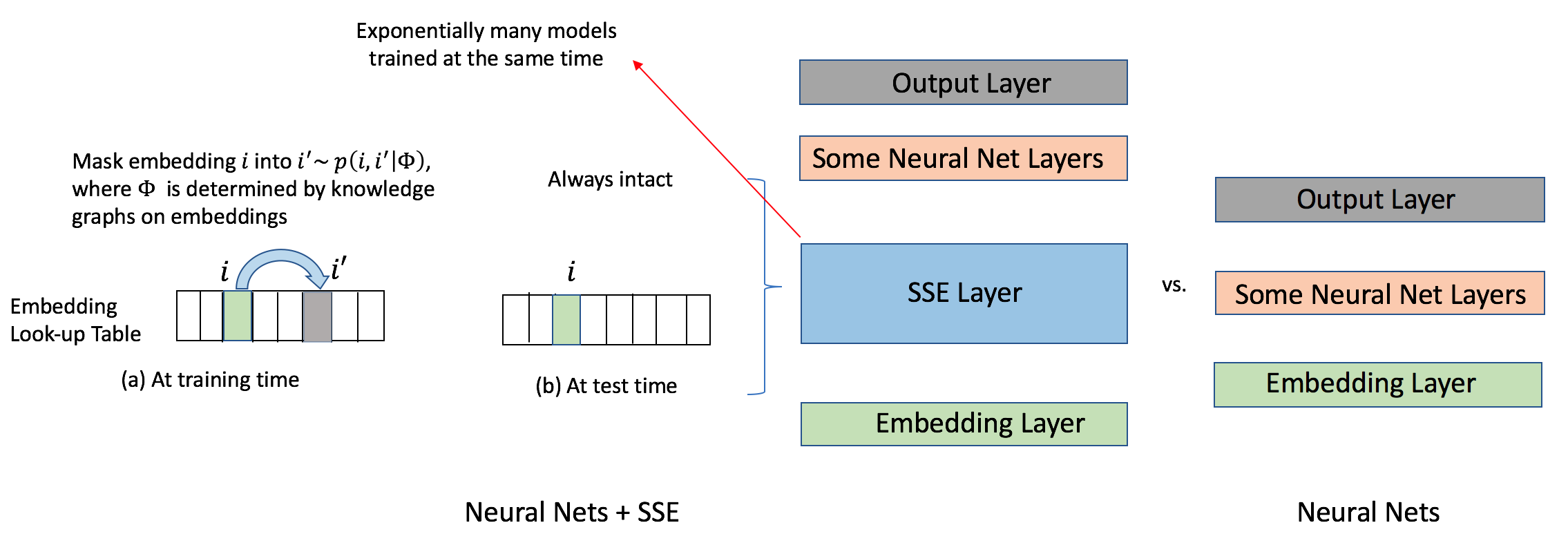}
  %\vspace{-10pt}
  \caption{SSE-Graph described in Algorithm~\ref{alg:see-graph} and Figure~\ref{fig:sse-graph} can be viewed as adding exponentially many distinct reordering layers above the embedding layer. 
%   The function of such a reordering layer is to reorder an embedding look-up table defined in embedding layer into a different but sensible embedding table with some slices of embedding table replaced, before feeding into next layer. 
  A modified backpropagation procedure in Algorithm~\ref{alg:see-graph} is used to train exponentially many such neural networks at the same time.
  }
  %\vspace{-12pt}
  \label{fig:train_test}
\end{figure}

%Add plot here. Copy SQ plot in SQL-Rank paper but in neural networks architecture like figure 1 in dropout paper. 
%\subsection{Motivations}

Suppose that we have access to knowledge graphs \cite{miller1995wordnet, lehmann2015dbpedia} over embeddings, and we have a prior belief that two embeddings will share information and replacing one with the other should not incur a significant change in the loss distribution. For example, if two movies are both comedies and they are starred by the same actors, it is very likely that for the same user, replacing one comedy movie with the other comedy movie will result in little change in the loss distribution. 
In stochastic optimization, we can replace the loss gradient for one movie's embedding with the other similar movie's embedding, and this will not significantly bias the gradient if the prior belief is accurate.
On the other hand, if this exchange is stochastic, then it will act to smooth the gradient steps in the long run, thus regularizing the gradient updates.

\begin{figure}
  \centering
  \includegraphics[width=\linewidth]{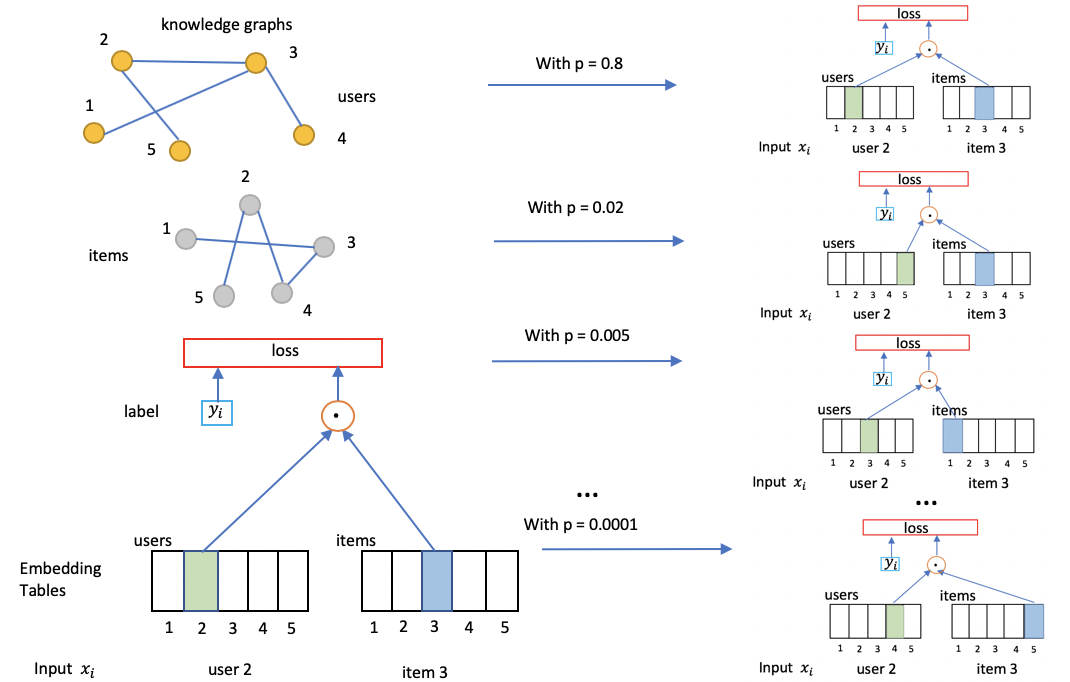}
  \caption{Illustration of how SSE-Graph algorithm in Figure~\ref{fig:train_test} works for a simple neural network.
%   The actual transition probabilities to be used is determined by the knowledge graphs and specific datasets and requires only minimal tuning.
  }%\vspace{-12pt}
  \label{fig:sse-graph}
\end{figure}

\subsection{General SSE with Knowledge Graphs: SSE-Graph}
% Need to add plots here, like figure 2 and 3 in dropout paper. 

%Due to the large amount of parameters used in embedding look-up tables, neural networks are easy to overfit when trained on tasks with a large number of input embeddings or large output space. 
%In the following, we will introduce Stochastic Shared Embeddings (SSE), a simple and novel regularization method specifically developed for large embedding tables to improve the generalization performance. The core idea of SSE is to replace each embedding {\it stochastically} 
% {\color{red}(What do you mean by temporarily?} {\color{blue} i guess i mean embeddings are not replaced permanently)} 
%at each iteration with another closely related embedding, forcing common information to be shared stochastically across embeddings. 
%On contrast, our method is stochastic and shares many similarities with dropout \cite{srivastava2014dropout}. The main difference is that we do not omit some hidden units in embeddings but rather we treat one embedding as a whole in the stochastic sharing process. We find stochasticity is key to the success of both methods and using both simultaneously leads to better performances. }{\color{red}(Cho: maybe we can remove this sentence or move to other place? It seems wierd to say how our algorithm is different from others before introducing it. )}
%We name our new method Stochastic Shared Embeddings (SSE), whose core idea is to replace each embedding {\it temporarily and stochastically} with another closely related embedding to force common information to be shared stochastically across embeddings. 

Instead of optimizing objective function $R_n(\Theta)$ in \eqref{eq:obj5}, SSE-Graph described in Algorithm~\ref{alg:see-graph}, Figure~\ref{fig:train_test}, and Figure~\ref{fig:sse-graph} is approximately optimizing the objective function below:
\begin{equation}\label{eq:single_obj}
    S_n(\Theta) = \sum_i \sum_{\textbf{k} \in \mathcal{I}} p(\textbf{j}^i, \textbf{k}|\Phi) \ell(\mathbf{E}[\mathbf{k}] | \Theta),
\end{equation}
where $p(\mathbf{j}, \mathbf{k} | \Phi)$ is the transition probability (with parameters $\Phi$) of exchanging the encoding vector $\mathbf{j} \in \mathcal I$ with a new encoding vector $\mathbf{k} \in \mathcal I$ in the Cartesian product index set of all embedding tables.
When there is a single embedding table ($M =1$) then there are no hard restrictions on the transition probabilities, $p(.,.)$, but when there are multiple tables ($M > 1$) then we will enforce that $p(.,.)$ takes a tensor product form (see \eqref{eq:multiple_obj}).
When we are assuming that there is only a single embedding table ($M=1$) we will not bold $j, E[j]$ and suppress their indices.
%The form above implies that there is a single embedding table and a single embedding per data point $(x_i, y_i)$, but as we will see this can accommodate multiple embedding tables and multiple embeddings per $(x_i, y_i)$.

In the single embedding table case, $M=1$, there are many ways to define transition probability from $j$ to $k$. One simple and effective way is to use a random walk (with random restart and self-loops) on a knowledge graph $\mathcal{G}$, i.e.~when embedding $j$ is connected with $k$ but not with $l$, we can set the ratio of  $p(j, k|\Phi)$ and  $p(j, l|\Phi)$ to be a constant greater than $1$. In more formal notation, we have  \begin{equation}\label{eq:transition1}
   j \sim k, j \not\sim l \longrightarrow  p(j, k|\Phi)/p(j, l|\Phi) = \rho, 
\end{equation} where $\rho > 1$ and is a tuning parameter. It is motivated by the fact that embeddings connected with each other in knowledge graphs should bear more resemblance and thus be more likely replaced by each other. 
Also, we let 
%\begin{equation}\label{eq:transition2}
    $p(j, j|\Phi) = 1 - p_0,$
%\end{equation} 
where $p_0$ is called the {\it SSE probability} and embedding retainment probability is $1-p_0$. We treat both $p_0$ and $\rho$ as tuning hyper-parameters in experiments. With \eqref{eq:transition1} and $\sum_k p(j, k|\Phi) = 1$, we can derive transition probabilities between any two embeddings to fill out the transition probability table.

When there are multiple embedding tables, $M > 1$, then we will force that the transition from $\mathbf{j}$ to $\mathbf{k}$ can be thought of as independent transitions from $j_l$ to $k_l$ within embedding table $l$ (and index set $\mathcal{I}_l$).
Each table may have its own knowledge graph, resulting in its own transition probabilities $p_l(.,.)$.
The more general form of the SSE-graph objective is given below:

\begin{equation}\label{eq:multiple_obj}
    S_n(\Theta) =  \sum_i \sum_{k_1, \dots, k_M} p_1(j^i_1, k_1 |\Phi)\cdots p_M(j^i_M, k_M |\Phi) \ell(E_1[k_1], \dots, E_M[k_M] | \Theta),
\end{equation}
Intuitively, this SSE objective could reduce the variance of the estimator.
%and can serve as a regularizer to prevent over-fitting. 

Optimizing \eqref{eq:multiple_obj} with SGD or its variants (Adagrad \cite{duchi2011adaptive}, Adam \cite{kingma2014adam}) is simple. We just need to randomly switch each original embedding tensor $\mathbf{E}[\mathbf{j}^i]$ with another embedding tensor $\mathbf{E}[\mathbf{k}]$ randomly sampled according to the transition probability (see Algorithm~\ref{alg:see-graph}). This is equivalent to have a randomized embedding look-up layer as shown in Figure~\ref{fig:train_test}. 

We can also accommodate sequences of embeddings, which commonly occur in natural language application, by considering
 $(j^i_{l,1}, k_{l,1}), \dots, (j^i_{l,n^i_l}, k_{l,n^i_l})$ instead of $(j^i_l, k_l)$ for $l$-th embedding table in \eqref{eq:multiple_obj}, where $1 \leq l \leq M$ and $n^i_l$ is the number of embeddings in table $l$ that are associated with $(x_i, y_i)$.
 When there is more than one embedding look-up table, we sometimes prefer to use different $p_0$ and $\rho$ for different look-up tables in \eqref{eq:transition1} and the SSE probability constraint. For example, in recommender systems, we would use $p_u, \rho_u$ for user embedding table and $p_i, \rho_i$ for item embedding table.
% either thinking of the index sets $\mathcal I_l$ as containing sequences, or by generating sequences of embedding tables.

We find that SSE with knowledge graphs, i.e., SSE-Graph, can force similar embeddings to cluster when compared to the original neural network without SSE-Graph. In Figure~\ref{fig:pca}, one can easily see that more embeddings tend to cluster into 2 black holes after applying SSE-Graph when embeddings are projected into 3D spaces using PCA. Interestingly, a similar phenomenon occurs when assuming the knowledge graph is a complete graph, which we would introduce as SSE-SE below.

\begin{figure*}
\centering
\begin{tabular}{ccc}
\hspace{-8pt}\includegraphics[width=0.3\linewidth]{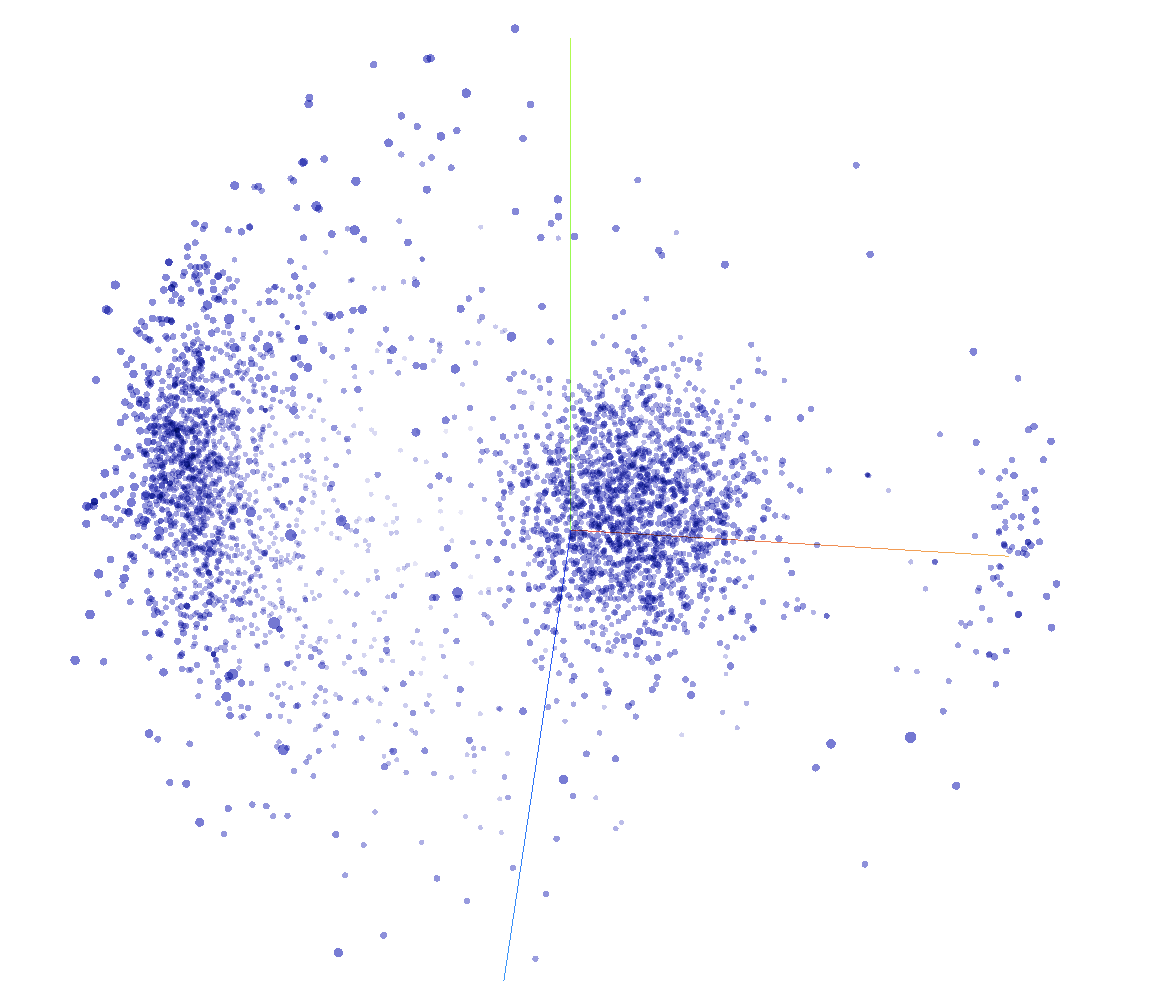} &
\hspace{-14pt}\includegraphics[width=0.3\linewidth]{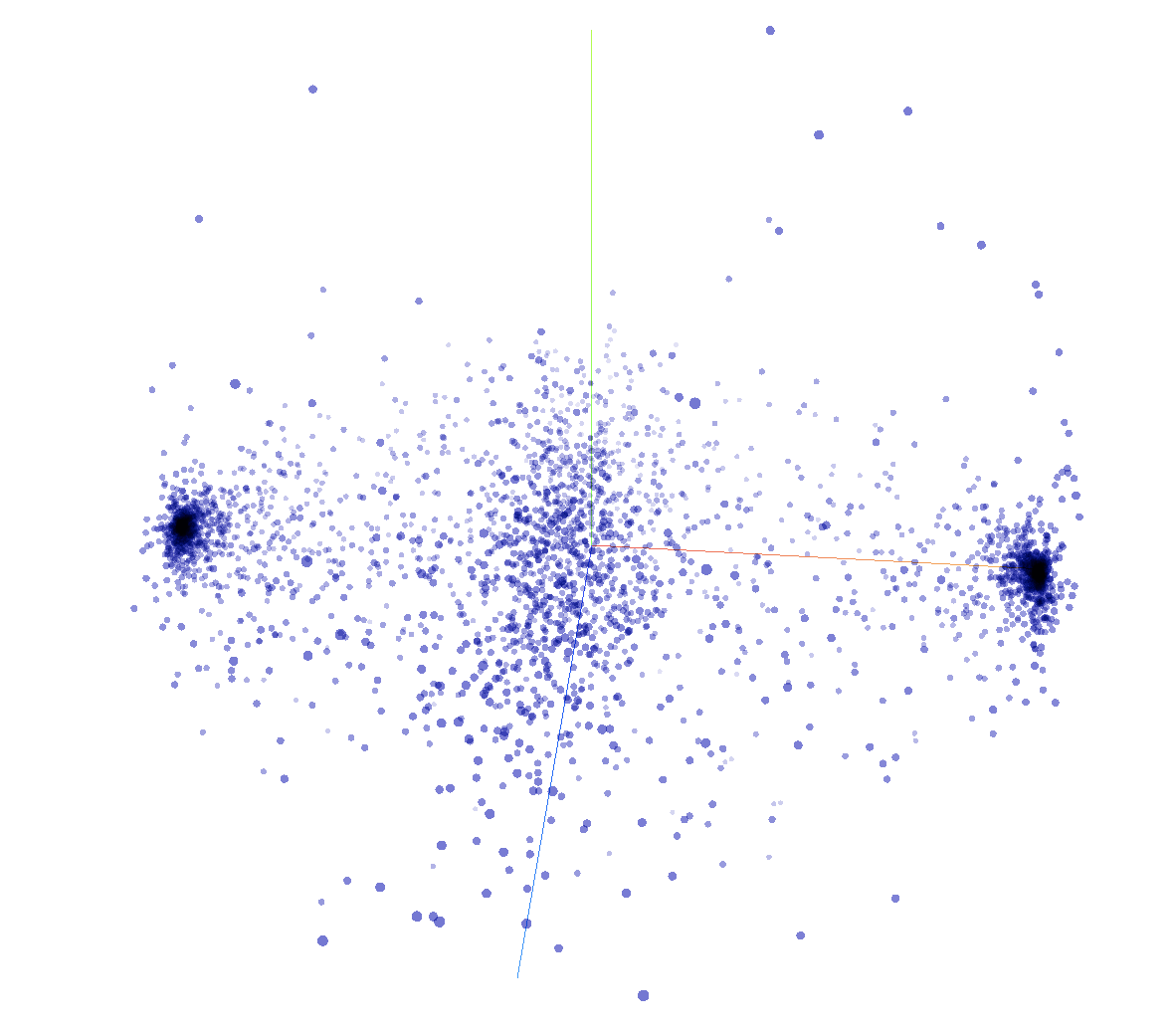} &
\hspace{-14pt}\includegraphics[width=0.3\linewidth]{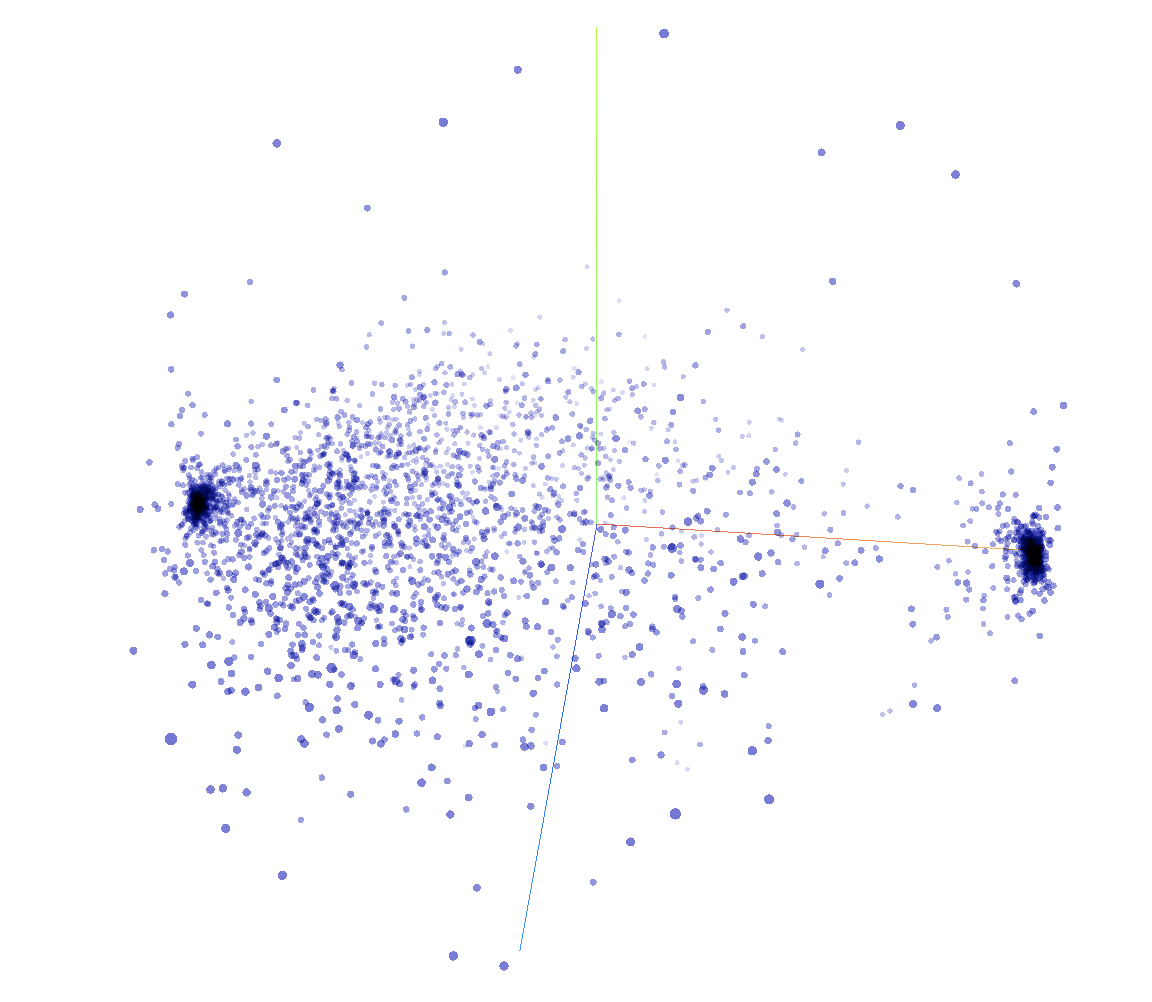}
\end{tabular}
\vspace{-10pt}\caption{Projecting 50-dimensional embeddings obtained by training a simple neural network without SSE (Left), and with SSE-Graph (Center) , SSE-SE (Right) into 3D space using PCA.}
\label{fig:pca}
%\vspace{-10pt}
\end{figure*}

\subsection{Simplified SSE with Complete Graph: SSE-SE}
One clear limitation of applying the SSE-Graph is that not every dataset comes with good-quality knowledge graphs on embeddings.
For those cases, we could assume there is a complete graph over all embeddings so there is a small transition probability between every pair of different embeddings:
%Moreover, storing a large knowledge graph and sampling replacement candidates can be a bottleneck in terms of both memory and speed in many cases. 
%Sometimes, even worse, we do not have a good knowledge on what the graphs should be for some embeddings. For example, we do not have a good understanding on what knowledge graphs for embeddings of words should be like. Due to all these mentioned limitations, 
%we come up with an alternative algorithm to replace SSE-Graph in such scenarios. We call it SSE-SE, an abbreviation for {\it Stochastic Shared Embeddings - Simple and Easy}. The core idea behind SSE-SE assumes there always exists a complete graph over all embeddings on top of SSE-Graph. SSE-SE is a special case of SSE-Graph with $\rho=1$ in Equation~\ref{eq:transition1}. In this case, Equation~\ref{eq:transition2} remains intact, but we can rewrite Equation~\ref{eq:transition1} as:
\begin{equation}\label{eq:sse-se}
    p(j, k|\Phi) = \frac{p_0}{N - 1}, \quad \forall 1 \leq k \neq j \leq N,
\end{equation} where $N$ is the size of the embedding table.
The SGD procedure in Algorithm~\ref{alg:see-graph} can still be applied and we call this algorithm SSE-SE (Stochastic Shared Embeddings - Simple and Easy).
It is worth noting that SSE-Graph and SSE-SE are applied to embeddings associated with not only input $x_i$ but also those with output $y_i$. Unless there are considerably many more embeddings than data points and model is significantly overfitting, normally $p_0 = 0.01$ gives reasonably good results.
%A very special case is that if the embeddings associated with output $y_i$ are one-hot encodings and cross entropy loss is used (or any loss linear in terms of $y_i$), 

Interestingly, we found that the SSE-SE framework is related to several techniques used in practice. 
For example, BERT pre-training unintentionally applied a method similar to SSE-SE to input $x_i$ by replacing the masked word with a random word. 
This would implicitly introduce an SSE layer for input $x_i$ in Figure~\ref{fig:train_test}, because now embeddings associated with input $x_i$ be stochastically mapped according to \eqref{eq:sse-se}. 
The main difference between this and SSE-SE is that it merely augments the input once, while SSE introduces randomization at every iteration, and we can also accommodate label embeddings.
%, and it is only for embeddings associated with input $x_i$ but not for output $y_i$, so it suffers from an avoidable performance degradation. 
In experimental Section~\ref{sec: bert}, we will show that SSE-SE would improve original BERT pre-training procedure as well as fine-tuning procedure. 

\subsection{Theoretical Guarantees}
% MAKE NOTATION CONSISTENT WITH MAIN BODY.

We explain why SSE can reduce the variance of estimators and thus leads to better generalization performance. 
For simplicity, we consider the SSE-graph objective \eqref{eq:single_obj} where there is no transition associated with the label $y_i$, and only the embeddings associated with the input $x_i$ undergo a transition.
When this is the case, we can think of the loss as a function of the $x_i$ embedding and the label, $\ell(\mathbf{E}[\mathbf{j}^i], y_i; \Theta)$.
We take this approach because it is more straightforward to compare our resulting theory to existing excess risk bounds.

%Thus, we may write $p(i,j) := p(i,j|\Theta)$ more succinctly, and let $i,j \in \{1,\ldots,n\}$ (in the case of SSE-graph $n$ is the number of nodes). 
%Let $n$ be the size of the 
The SSE objective in the case of only input transitions can be written as,
\begin{equation}
    S_n(\Theta) = \sum_i \sum_{\mathbf{k}} p(\mathbf{j}^i, \mathbf{k}) \cdot \ell(\mathbf{E}[\mathbf{k}],y_i|\Theta), 
\end{equation}
and there may be some constraint on $\Theta$.  
Let $\hat \Theta$ denote the minimizer of $S_n$ subject to this constraint. 
We will show in the subsequent theory that minimizing $S_n$ will get us close to a minimizer of $S(\Theta) = \mathbb E S_n(\Theta)$, and that under some conditions this will get us close to the Bayes risk.
We will use the standard definitions of empirical and true risk, $R_n(\Theta) = \sum_{i} \ell(x_i,y_i|\Theta)$ and $R(\Theta) = \mathbb E R_n(\Theta)$.

Our results depend on the following decomposition of the risk.  By optimality of $\hat \Theta$, 
\begin{equation}
    R(\hat{\Theta}) = S_n(\hat{\Theta}) + [R(\hat{\Theta}) - S(\hat{\Theta})] + [S(\hat{\Theta}) - S_n(\hat{\Theta})] \leq S_n(\Theta^\ast) + B(\hat{\Theta}) + \mathcal{E}(\hat{\Theta})
\end{equation}
where $B(\Theta) = | R(\Theta) - S(\Theta) |$,  and $E(\Theta) = | S(\Theta) - S_n(\Theta) |$. 
We can think of $B(\Theta)$ as representing the bias due to SSE, and $E(\Theta)$ as an SSE form of excess risk.
Then by another application of similar bounds,
\begin{equation}
R(\hat{\Theta}) \le R(\Theta^\ast) + B(\hat \Theta) + B(\Theta^\ast) + E(\hat \Theta) + E(\Theta^\ast).
\end{equation}
The high level idea behind the following results is that when the SSE protocol reflects the underlying distribution of the data, then the bias term $B(\Theta)$ is small, and if the SSE transitions are well mixing then the SSE excess risk $E(\Theta)$ will be of smaller order than the standard Rademacher complexity.
This will result in a small excess risk.

%     &= R(\Theta^\ast) + [S_n(\Theta^\ast) - S(\Theta^\ast)] + [S(\Theta^\ast) - R(\Theta^\ast)] + B(\hat{\Theta}) + \mathcal{E}(\hat{\Theta}) \\
%     &= R(\Theta^\ast) + B(\Theta^\ast) + B(\hat{\Theta}) + \mathcal{E}(\Theta^\ast) + \mathcal{E}(\hat{\Theta}) \\
%     &\leq R(\Theta^\ast) + 2 \sup_{\Theta}  B(\Theta) + 2 \sup_{\Theta} \mathcal{E}(\Theta)
% \end{align}, where

\begin{theorem}
\label{thm:rademacher}
Consider SSE-graph with only input transitions.
 
Let $L(\mathbf{E}[\mathbf{j}^i]) = \mathbb E_{Y | X = x^i} \ell(\mathbf{E}[\mathbf{j}^i],Y| \Theta)$ be the expected loss conditional on input $x^i$ and $e(\mathbf{E}[\mathbf{j}^i],y; \Theta) = \ell(\mathbf{E}[\mathbf{j}^i],y| \Theta) - L(\mathbf{E}[\mathbf{j}^i]| \Theta)$ be the residual loss.
Define the conditional and residual SSE empirical Rademacher complexities to be 
   \begin{align}
\label{eq:rade_1}
&\rho_{L,n} = \mathbb E_{\sigma} \sup_{\Theta} \left| \sum_{i} \sigma_{i} \sum_{\mathbf{k}} p(\mathbf{j}^i,\mathbf{k}) \cdot L(\mathbf{E}[\mathbf{k}] | \Theta) \right|, \\
\label{eq:rade_2}
&\rho_{e,n} = \mathbb E_{\sigma} \sup_{\Theta} \left| \sum_{i} \sigma_{i} \sum_{\mathbf{k}} p(\mathbf{j}^i,\mathbf{k}) \cdot e(\mathbf{E}[\mathbf{k}], y_i; \Theta) \right|,
% \\
% \label{eq:rade_3}
% &\rho_{2,n} = \mathbb E_{\sigma, \sigma'} \sup_\Theta \left| \sum_i \sum_{\bo k} (\sigma_i + \sigma_i') p(\bo j^i,\bo k) \ell(\bo E[\bo k],y_i ; \Theta)\right|,
\end{align}
respectively where $\sigma$ is a Rademacher $\pm 1$ random vectors in $\mathbb R^n$.
Then we can decompose the SSE empirical risk into 
\begin{equation}
\label{eq:excess_risk_bd}
\mathbb E \sup_{\Theta} |S_n(\Theta) - S(\Theta)| \le 2 \mathbb E [\rho_{L,n} + \rho_{e,n}].
\end{equation}
% Furthermore, suppose that $p$ is doubly stochastic (i.e.~$\sum_i p(i,j) = \sum_j p(i,j) = 1$), as in SSE-SE, then
% \begin{equation}
% \label{eq:rademacher_bd}
% \mathbb E[\rho_{2,n}] \le 2 \mathbb E \sup_{\Theta} \left| \sum_i \sigma_i l(x_i, y_i; \Theta) \right|,
% \end{equation}
% which is (twice) the standard empirical Rademacher complexity.
\end{theorem}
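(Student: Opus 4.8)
The plan is to reduce this to the classical symmetrization bound for empirical processes, after first splitting the SSE loss into a conditional-mean piece and a residual piece. Since by definition $\ell(\mathbf{E}[\mathbf{k}], y_i \mid \Theta) = L(\mathbf{E}[\mathbf{k}] \mid \Theta) + e(\mathbf{E}[\mathbf{k}], y_i; \Theta)$, substituting this into $S_n(\Theta) = \sum_i \sum_{\mathbf{k}} p(\mathbf{j}^i, \mathbf{k})\, \ell(\mathbf{E}[\mathbf{k}], y_i \mid \Theta)$ writes $S_n = S_n^L + S_n^e$, where $S_n^L$ collects the $L$-terms and $S_n^e$ the $e$-terms; taking expectations gives the matching split $S = S^L + S^e$ with $S^L = \mathbb{E} S_n^L$ and $S^e = \mathbb{E} S_n^e$. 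A single application of the triangle inequality inside the supremum then yields $\mathbb{E}\sup_\Theta |S_n - S| \le \mathbb{E}\sup_\Theta |S_n^L - S^L| + \mathbb{E}\sup_\Theta |S_n^e - S^e|$, so it suffices to bound each piece by $2\,\mathbb{E}\rho_{L,n}$ and $2\,\mathbb{E}\rho_{e,n}$ respectively.

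The observation that enables symmetrization is that each of $S_n^L$ and $S_n^e$ is an empirical sum of i.i.d.\ summands indexed by the data. Writing $f_i^L(\Theta) = \sum_\mathbf{k} p(\mathbf{j}^i,\mathbf{k}) L(\mathbf{E}[\mathbf{k}]\mid\Theta)$, which depends on the sample only through $x_i$ (hence through $\mathbf{j}^i$), and $g_i^e(\Theta) = \sum_\mathbf{k} p(\mathbf{j}^i,\mathbf{k}) e(\mathbf{E}[\mathbf{k}], y_i; \Theta)$, which depends on $(x_i, y_i)$, we have $S_n^L = \sum_i f_i^L$ and $S_n^e = \sum_i g_i^e$ as sums of independent random functions of $\Theta$. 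I would then run the standard two-step symmetrization on each: introduce an independent ghost sample, write $S^L(\Theta) = \mathbb{E}'(f_i^L)'(\Theta)$ summed over $i$, pull the ghost expectation out of the supremum by Jensen's inequality to get $\mathbb{E}\sup_\Theta |S_n^L - S^L| \le \mathbb{E}\sup_\Theta |\sum_i (f_i^L(\Theta) - (f_i^L)'(\Theta))|$, note that each difference is a symmetric random variable so that multiplying by a Rademacher sign $\sigma_i$ leaves the joint law unchanged, and finally split by the triangle inequality and use that the two resulting terms are identically distributed. This produces the factor $2$ and the Rademacher average $\mathbb{E}\rho_{L,n}$; the identical computation with $g_i^e$ gives $\mathbb{E}\rho_{e,n}$, and summing the two bounds is exactly \eqref{eq:excess_risk_bd}.

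The argument is essentially bookkeeping layered on top of classical symmetrization, so the main work is in getting the probabilistic setup right rather than in any hard inequality. The step requiring the most care is clarifying what is random and what has been integrated out: the SSE transition $\mathbf{k}\sim p(\mathbf{j}^i, \cdot)$ is already summed inside $S_n$, so $S_n$ is deterministic given the data and the only randomness is the i.i.d.\ draw of $(x_i,y_i)$ together with the artificially introduced $\sigma$ (independent of the data). I would make explicit that the $\mathbb{E}$ in $S = \mathbb{E} S_n$ and in the final bound is over the data distribution, that $\rho_{L,n}$ and $\rho_{e,n}$ are data-dependent so the outer $\mathbb{E}$ on the right-hand side of \eqref{eq:excess_risk_bd} averages them over the data, and that $f_i^L, g_i^e$ are integrable so the ghost-sample/Jensen exchange is legitimate (no boundedness of $\ell$ is needed). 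A useful cross-check is that one may instead symmetrize $S_n$ first and then split inside the Rademacher supremum via the triangle inequality, giving $\rho_n \le \rho_{L,n} + \rho_{e,n}$ and the same final bound; I would note this to confirm the decomposition into the conditional-mean and residual parts is consistent.
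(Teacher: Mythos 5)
Your proof is correct and follows essentially the same route as the paper's: your split $S_n = S_n^L + S_n^e$ is term-for-term the paper's decomposition $S_n = \mathbb E_{Y|X}[S_n] + \bigl(S_n - \mathbb E_{Y|X}[S_n]\bigr)$ (noting $S^e = 0$), followed by ghost-sample symmetrization on each piece to produce the factor $2$ and the complexities $\rho_{L,n}$, $\rho_{e,n}$. The only cosmetic difference is that the paper symmetrizes the residual piece conditionally on the inputs (fresh labels $y_i'$ with the same $x_i$, so the $L$-terms cancel in the difference), whereas you symmetrize over the joint pair $(x_i,y_i)$ using $\mathbb E\, g_i^e = 0$; both are standard and yield the identical bound.
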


\begin{remark}
The transition probabilities in \eqref{eq:rade_1}, \eqref{eq:rade_2} act to smooth the empirical Rademacher complexity.  To see this, notice that we can write the inner term of \eqref{eq:rade_1} as $(P \sigma)^\top L$, where we have vectorized $\sigma_i, L(x_i;\Theta)$ and formed the transition matrix $P$.  Transition matrices are contractive and will induce dependencies between the Rademacher random variables, thereby stochastically reducing the supremum.  
In the case of no label noise, namely that $Y | X$ is a point mass, $e(x,y;\Theta) = 0$, and $\rho_{e,n} = 0$.  
The use of $L$ as opposed to the losses, $\ell$, will also make $\rho_{L,n}$ of smaller order than the standard empirical Rademacher complexity.
We demonstrate this with a partial simulation of $\rho_{L,n}$ on the Movielens1m dataset in Figure~\ref{fig:sim} of the Appendix.
%In this case, the factor of $2$ in the \eqref{eq:rademacher_bd} can be dropped, and the notion of complexity is a strict improvement.
\end{remark}

\begin{theorem}
\label{thm:main}
Let the SSE-bias be defined as
\[
\mathcal B = \sup_{\Theta} \left| \mathbb E \left[ \sum_i \sum_{\mathbf{k}} p(\mathbf{j}^i,  \mathbf{k}) \cdot \left( \ell(\mathbf{E}[\mathbf{k}],y_i|\Theta) - \ell(\mathbf{E}[\mathbf{j}^i],y_i|\Theta) \right) \right] \right|.
\]
Suppose that $0 \le \ell(.,.;\Theta) \le b$ for some $b > 0$,
then 
\[
\mathbb P \left\{ R(\hat \Theta) > R(\Theta^*) + 2 \mathcal B + 4 \mathbb E [\rho_{L,n} + \rho_{e,n}] + \sqrt n u \right\} \le e^{-\frac{u^2}{2 b^2}}.
\]
\end{theorem}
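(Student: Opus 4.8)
The plan is to convert the risk decomposition already stated just before the theorem into a high-probability bound by controlling the single remaining stochastic quantity, namely the deviation $\sup_\Theta |S(\Theta) - S_n(\Theta)|$, via a bounded-differences (McDiarmid) argument combined with the symmetrization bound of Theorem~\ref{thm:rademacher}. Recall the chain of inequalities preceding the statement: by optimality of $\hat\Theta$ for $S_n$ and two applications of the triangle inequality,
\[
R(\hat\Theta) \le R(\Theta^\ast) + B(\hat\Theta) + B(\Theta^\ast) + E(\hat\Theta) + E(\Theta^\ast),
\]
where $B(\Theta) = |R(\Theta) - S(\Theta)|$ and $E(\Theta) = |S(\Theta) - S_n(\Theta)|$. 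The first observation is that each bias term is bounded by the SSE-bias: $B(\Theta) = |\mathbb E[R_n(\Theta) - S_n(\Theta)]| \le \mathcal B$ for every fixed $\Theta$, since $R_n - S_n$ is exactly the per-sample difference $\sum_i \sum_{\mathbf k} p(\mathbf j^i,\mathbf k)(\ell(\mathbf E[\mathbf k],y_i|\Theta) - \ell(\mathbf E[\mathbf j^i],y_i|\Theta))$ whose expectation is controlled by $\mathcal B$. This disposes of $B(\hat\Theta) + B(\Theta^\ast) \le 2\mathcal B$ deterministically.

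Next I would handle the two excess-risk terms together by passing to the uniform deviation: $E(\hat\Theta) + E(\Theta^\ast) \le 2 \sup_\Theta |S(\Theta) - S_n(\Theta)|$. The core step is then a concentration argument for $Z := \sup_\Theta |S_n(\Theta) - S(\Theta)|$. I would verify the bounded-differences condition: changing a single training pair $(x_i,y_i)$ alters $S_n(\Theta) = \tfrac{1}{?}\sum_i \sum_{\mathbf k} p(\mathbf j^i,\mathbf k)\ell(\mathbf E[\mathbf k],y_i|\Theta)$ only in its $i$th summand, and since $0 \le \ell \le b$ and the transition weights $p(\mathbf j^i,\cdot)$ sum to one, each summand lies in $[0,b]$; hence $Z$ changes by at most $b$ when one sample is replaced (here I am treating $S_n$ as the sum over $n$ samples, so the per-coordinate bounded difference is $b$). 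McDiarmid's inequality then gives
\[
\mathbb P\{ Z > \mathbb E Z + \sqrt n\, u \} \le \exp\!\left( -\frac{2 n^2 u^2}{n b^2} \right) = \exp\!\left(-\frac{2 n u^2}{b^2}\right),
\]
and after rescaling the deviation term to match the theorem's $\sqrt n u$ tail and $e^{-u^2/(2b^2)}$ bound (the precise constant bookkeeping is routine and I would align the normalization of $S_n$ accordingly). Finally, I would invoke Theorem~\ref{thm:rademacher} to bound $\mathbb E Z \le 2\mathbb E[\rho_{L,n} + \rho_{e,n}]$, so that $2\,\mathbb E Z \le 4\,\mathbb E[\rho_{L,n} + \rho_{e,n}]$, which supplies exactly the Rademacher term in the statement.

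Assembling these pieces: with probability at least $1 - e^{-u^2/(2b^2)}$,
\[
R(\hat\Theta) \le R(\Theta^\ast) + 2\mathcal B + 4\,\mathbb E[\rho_{L,n} + \rho_{e,n}] + \sqrt n\, u,
\]
which is the claim. The main obstacle I anticipate is the bookkeeping around the normalization of $S_n$ versus $R_n$ (sum versus average over the $n$ samples), because the exact form of the sub-Gaussian tail $e^{-u^2/(2b^2)}$ and the $\sqrt n u$ scaling must be made consistent with the bounded-difference constant; getting this factor of $n$ right is where a careless application of McDiarmid would go wrong. A secondary subtlety is ensuring the symmetrization leading to Theorem~\ref{thm:rademacher} is applied to the correct centered process and that the supremum over $\Theta$ (including the embedding parameters themselves) is measurable and compatible with the bounded-differences setup; I would address this by noting the bounded-differences property holds uniformly in $\Theta$ before taking the supremum, so measurability of $Z$ suffices rather than any regularity of the minimizer $\hat\Theta$.
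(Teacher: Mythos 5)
Your proposal follows essentially the same route as the paper's own proof: bound the two bias terms by $2\mathcal{B}$, reduce the excess-risk terms to the uniform deviation $Z=\sup_\Theta|S_n(\Theta)-S(\Theta)|$, control $\mathbb{E}Z$ via Theorem~\ref{thm:rademacher}, and concentrate $Z$ with McDiarmid using the bounded-differences property that follows from $0\le \ell\le b$ and $\sum_{\mathbf{k}}p(\mathbf{j}^i,\mathbf{k})=1$. The bookkeeping you deferred is routine and works out: since $E(\hat\Theta)+E(\Theta^*)\le 2Z$, the relevant deviation for $Z$ is $t=\sqrt{n}\,u/2$, and McDiarmid with your per-sample constant $b$ (each summand is a convex combination of values in $[0,b]$, which is in fact sharper than the paper's computed constant $2b$) gives exponent $-2t^2/(n b^2)=-u^2/(2b^2)$, exactly the stated tail.
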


\begin{remark}
The price for `smoothing' the Rademacher complexity in Theorem \ref{thm:rademacher} is that SSE may introduce a bias.  This will be particularly prominent when the SSE transitions have little to do with the underlying distribution of $Y,X$.  On the other extreme, suppose that $p(\textbf{j}, \textbf{k})$ is non-zero over a neighborhood $\mathcal N_{\textbf{j}}$ of $\textbf{j}$, and that for data $x',y'$ with encoding $\textbf{k} \in \mathcal N_{\textbf{j}}$, $x', y'$ is identically distributed with $x_i,y_i$, then $\mathcal B = 0$.  In all likelihood, the SSE transition probabilities will not be supported over neighborhoods of iid random pairs, but with a well chosen SSE protocol the neighborhoods contain approximately iid pairs and $\mathcal B$ is small.
\end{remark}

\begin{table}
 \caption{Compare SSE-Graph and SSE-SE against ALS-MF with Graph Laplacian Regularization. The $p_u$ and $p_i$ are the SSE probabilities for user and item embedding tables respectively, as in \eqref{eq:sse-se}. Definitions of $\rho_u$ and $\rho_i$ can be found in \eqref{eq:transition1}. Movielens10m does not have user graphs. %{\color{red}(Cho: how about organizing this table into two parts 1) Algorithms using graph (Grahp laplacian, SSE-Graph) 2) Algorithms without graph (MF, MF+dropout(do we want to comapre with dropout? You are comparing it in BPR so maybe we should add it to all the comparisons? ), MF+SSE}}
% {\color{red}(Cho: need to say what is $\rho_u, \rho_i$. )
}
  \label{tb:sse-se_sse-graph2}
  \centering
 \resizebox{0.8\columnwidth}{!}{
  \begin{tabular}{cccccccccccc}
    \toprule
    &\multicolumn{5}{c}{Movielens1m} & \multicolumn{5}{c}{Movielens10m}  \\
    \cmidrule(r){2-6}   \cmidrule(r){7-11}
    Model    & RMSE & $ \rho_{u}$& $\rho_{i}$ & $p_{u}$ & $p_{i}$ & RMSE & $ \rho_{u}$& $\rho_{i}$ & $p_{u}$ & $p_{i}$\\
    \midrule
    % ALS-MF      & 1.1565& - &  -    & - &  - & 2.1185&- &  - & -   &  -\\
    SGD-MF            & 1.0984 & - &     - & - & - & 1.9490 & - & -   & -   &  -\\
  
        % \midrule
%GCN (2 hidden layers)& 1.0346 & - &  - &  -    & -  &  1.8632&- &  - & -   &  -\\
    Graph Laplacian + ALS-MF   & 1.0464 & - &  - &  -    & -  & 1.9755&- &  - & -   &  -\\
    
%  SSE-Graph + SGD-MF & \textbf{1.0172}   &  1000  & 0 & 0.005 & \textbf{1.9019}& 500  & 0.01   &  0.01 \\
SSE-Graph + SGD-MF & \textbf{1.0145}& 500   &  200  & 0.005 & 0.005 & \textbf{1.9019}& 1 & 500  & 0.01   &  0.01 \\
  SSE-SE + SGD-MF    & 1.0150 & 1 & 1 & 0.005 & 0.005 & 1.9085& 1 & 1& 0.01   &  0.01\\
    \bottomrule
  \end{tabular}
}
%\vspace{-8pt}
\end{table}

%\subsubsection{Evaluate SSE-SE Performances}

\begin{table}
  \caption{SSE-SE outperforms Dropout for Neural Networks with One Hidden Layer such as Matrix Factorization Algorithm regardless of dimensionality we use. $p_s$ is the SSE probability for both user and item embedding tables and $p_d$ is the dropout probability.}
  \label{tb:mf}
  \centering
 \resizebox{0.7\columnwidth}{!}{
  \begin{tabular}{cccccccccc}
    \toprule
    & \multicolumn{3}{c}{Douban} & \multicolumn{3}{c}{Movielens10m}  & \multicolumn{3}{c}{Netflix}          \\
    \cmidrule(r){2-4} \cmidrule(r){5-7}  \cmidrule(r){8-10} 
    Model     & RMSE & $p_{d}$ & $p_{s}$ & RMSE & $p_{d}$ & $p_{s}$ & RMSE & $p_{d}$ & $p_{s}$\\
    \midrule
    MF   & 0.7339 & - & - & 0.8851  & -  & - & 0.8941 & -  & - \\
    Dropout + MF   & 0.7296 & 0.1 & - & 0.8813  & 0.1  & - &  0.8897 & 0.1  & -   \\
    SSE-SE + MF   & 0.7201 & - & 0.008  & 0.8715 & -   & 0.008  &  0.8842 &  -   & 0.008 \\
    SSE-SE + Dropout + MF   & \textbf{0.7185} & 0.1 & 0.005  & \textbf{0.8678} & 0.1  & 0.005   & \textbf{0.8790} & 0.1  & 0.005 \\
    \bottomrule
  \end{tabular}
}
 %\vspace{-7pt}
\end{table}

\begin{table}
  \caption{SSE-SE outperforms dropout for Neural Networks with One Hidden Layer such as Bayesian Personalized Ranking Algorithm regardless of dimensionality we use. We report the metric precision for top $k$ recommendations as $P@k$.}
  \label{tb:bpr}
  \centering
 \resizebox{0.75\columnwidth}{!}{
  \begin{tabular}{cccccccccc}
    \toprule
    & \multicolumn{3}{c}{Movielens1m}   & \multicolumn{3}{c}{Yahoo Music}   & \multicolumn{3}{c}{Foursquare}        \\
    \cmidrule(r){2-4} \cmidrule(r){5-7}  \cmidrule(r){8-10}  
    Model     & $P@1$ & $P@5$ & $P@10$ & $P@1$ & $P@5$ & $P@10$ & $P@1$ & $P@5$ & $P@10$ \\
    \midrule
    % Weighted-MF & 0.5469 & 0.4942 &  0.4612 & 0.3908 & 0.3102 & 0.2701 & 0.0218 & 0.0155 & 0.0141  \\
    SQL-Rank (2018) & \textbf{0.7369} & 0.6717 & 0.6183  &  \textbf{0.4551}   &  \textbf{0.3614}   & \textbf{0.3069}  & 0.0583  & 0.0194   & \textbf{0.0170}  \\
    \midrule
    BPR   & 0.6977 & 0.6568 &  0.6257   & 0.3971 & 0.3295 &   0.2806 & 0.0437 & 0.0189 &   0.0143  \\
    Dropout + BPR   & 0.7031 & 0.6548 &  0.6273  & 0.4080 & 0.3315 &  0.2847 & 0.0437 & 0.0184 &  0.0146  \\
    SSE-SE + BPR   & 0.7254 & \textbf{0.6813} &  \textbf{0.6469}   & 0.4297 & 0.3498 & 0.3005 & \textbf{0.0609} & \textbf{0.0262} & 0.0155  \\
    \bottomrule
  \end{tabular}
}
%\vspace{-10pt}
\end{table}

% {\color{red}Cho: I think we were planning to discuss the relationship between SSE and other reuglarization techniques? We can say it's similar to graph laplacian but focusing on loss difference instead of Euclidean distance between embeddings. We should probably also say BERT already used this but no randomization between iterations? Maybe also say this is label smoothing when considering label embeddings? }
\section{Experiments}\label{sec:experiments}
% A central question here is what methods we want to compare against. Listing a bunch of methods does not help support our arguments. 
% One possibility is that we can limit our comparisons to parameter sharing and state we can incorporate weight decay and dropout better than other embedding regularization methods. 
We have conducted extensive experiments on 6 tasks, including 3 recommendation tasks (explicit feedback, implicit feedback and sequential recommendation) and 3 NLP tasks (neural machine translation, BERT  pre-training, and BERT fine-tuning for sentiment classification) and found that our proposed SSE can effectively improve generalization performances on a wide variety of tasks. 
Note that the details about datasets and parameter settings can be found in the appendix.  %detailed experimental settings can be found in the 

% \vspace{-.3cm}
\subsection{Neural Networks with One Hidden Layer (Matrix Factorization and BPR)}\label{sec:simple_nn}
% Results on MF and BPR. First, we show the results of SSE with graphs before showing the results of simplified SSE algorithm. For the graph comparisons, we may want to use Graph regularized alternating least squares and graph convolutional matrix completion as baselines. We analyze what impact SSE has on embedding distributions. Use tool \url{https://projector.tensorflow.org/} to visualize embeddings via PCA or t-SNE. Add plots here or in intro.

% Omit BPR if running out of space. Just state that the loss we used here does not matter. Even with a pairwise ranking loss, we observe similar results. Also, it does not matter how many different embeddings we draw from each embedding look-up table for each SGD update. 

%We start by examining some simple neural networks with one hidden layer only. 
Matrix Factorization Algorithm (MF) \cite{mnih2008probabilistic} and Bayesian Personalized Ranking Algorithm (BPR) \cite{rendle2009bpr} can be viewed as neural networks with one hidden layer (latent features) and are quite popular in recommendation tasks.
%, motivating many deeper recommendation neural networks thereafter. 
MF uses the squared loss designed for explicit feedback data while BPR uses the pairwise ranking loss designed for implicit feedback data. 
%The main difference between the two is the loss function they utilize: MF uses the squared loss while BPR uses the pairwise ranking loss. One is suited for explicit feedback data such as user ratings and the other is suited for implicit feedback data such as user clicks.

First, we conduct experiments on two explicit feedback datasets: Movielens1m and Movielens10m.
For these datasets, we can construct graphs based on actors/actresses starring the movies. % (see more details in Appendix \ref{app:mf}). 
We compare SSE-graph and the popular Graph Laplacian Regularization (GLR) method~\cite{rao2015collaborative} in Table~\ref{tb:sse-se_sse-graph2}. The results  show that SSE-graph consistently outperforms GLR. 
This indicates that our SSE-Graph has greater potentials over graph Laplacian regularization as we do not explicitly penalize the distances across embeddings, but rather we implicitly penalize the effects of similar embeddings on the loss.
Furthermore, we show that even without existing knowledge graphs of embeddings, our SSE-SE  performs only slightly worse than SSE-Graph but still much better than GLR and MF. 

In general, SSE-SE is a good alternative when graph information is not available.
We then show that our proposed SSE-SE can be used together with standard regularization techniques such as dropout and weight decay to improve recommendation results regardless of the loss functions and dimensionality of embeddings.  This is evident in Table~\ref{tb:mf} and Table~\ref{tb:bpr}. With the help of SSE-SE, BPR can perform better than the state-of-art listwise approach SQL-Rank \cite{wu2018sql} in most cases. We include the optimal SSE parameters in the table for references and leave out other experiment details to the appendix. In the rest of the paper, we would mostly focus on SSE-SE as we do not have high-quality graphs of embeddings on most datasets.

\begin{table}
  \caption{SSE-SE has two tuning parameters: probability $p_x$ to replace embeddings associated with input $x_i$ and probability $p_y$ to replace embeddings associated with output $y_i$. We use the dropout probability of $0.1$, weight decay of $1e^{-5}$, and learning rate of $1e^{-3}$ for all experiments.}
  \label{tb:sasrec}
  \centering
 \resizebox{0.75\columnwidth}{!}{
  \begin{tabular}{ccccccc}
    \toprule
    & \multicolumn{2}{c}{Movielens1m } & Dimension& $\#$ of Blocks  & \multicolumn{2}{c}{SSE-SE Parameters}             \\
    \cmidrule(r){2-3} \cmidrule(r){4-5}  \cmidrule(r){6-7}
    Model     & NDCG$@10$ & Hit Ratio$@10$ & $d$ & $b$ & $p_x$  & $p_y$ \\
    \midrule
   % SASRec            & 0.5910 & 0.8209 & 50 & 2  & -  & - \\
    SASRec   & 0.5941 & 0.8182 & 100 & 2  & -  & -\\
    % SASRec   & 0.5857 & 0.8088 & 200 & 2  & -  & -\\
    SASRec   & 0.5996 & 0.8272 & 100 & 6  & -  & -\\
    \midrule
   % SSE-SE + SASRec   & 0.6058 & 0.8305 & 50  & 2 & 0  & 0.1  \\
    SSE-SE + SASRec   & 0.6092 & 0.8250 & 100  & 2 & 0.1  & 0  \\
    SSE-SE + SASRec   & 0.6085 & 0.8293 & 100  & 2 & 0  & 0.1  \\
    SSE-SE + SASRec   & 0.6200 & 0.8315 & 100  & 2 & 0.1  & 0.1  \\
    \midrule
    SSE-SE + SASRec   & \bfseries{0.6265} & \bfseries{0.8364} & 100  & 6 & 0.1  & 0.1  \\
    %  \midrule
    % SSE-SE + SASRec   & 0.6367 & 0.8474 & 100  & 6 & 0 (item)  & 0.1  \\
    % SSE-SE + PT       & 0.6392 & 0.8535 & 50 + 50  & 6 & 0.92 (user) + 0 (item)  & 0.1 \\
    % SSE-SE + NLP-SASRec   & 0.6383 & 0.8490 & 100  & 6 & 0 (item) & 0.1  \\
    % SSE-SE + NLP-PT          & 0.6412 & 0.8538 & 50 + 25 + 25  & 6 & 0.92 (user) + 0 (item) & 0.1 \\
    \bottomrule
  \end{tabular}
}
%\vspace{-8pt}
\end{table}

\subsection{Transformer Encoder Model for Sequential Recommendation}\label{sec:sasrec}
% One thing we definitely should add is that SSE makes the model's generalization power not sensitive to the number of hidden units used for embeddings. SSE enables us to use a larger number of hidden units for embedding representations so that we will benefit more from using a deeper neural networks. Show the results when we extend from 2 block, 7 layers neural nets to 6 blocks, 19 layers neural nets. Remember to state that we use layer normalization instead of batch normalization as the former gives favorable results. 

% Cross Entropy loss is used here and again the loss does not change anything. We also use a different optimizer Adam and draw many more embeddings from the items' embedding look-up table. None of the differences matter and our algorithm is the only one working well when we try to increase the number of hidden units for embeddings. All the existing regularization techniques failed. Add a line plot here.

SASRec \cite{kang2018self} is the state-of-the-arts algorithm for sequential recommendation task. It applies the transformer model \cite{vaswani2017attention}, where a sequence of items purchased by a user can be viewed as a sentence in transformer, and next item prediction is equivalent to next word prediction in the language model.
%We use dropout probability of 0.1, weight decay of $1e^{-5}$ and learning rate of $1e^{-3}$ for all experiments. The embeddings of items are associated with both input $x_i$ and output $y_i$ here as the inner product of item embeddings are calculated. We use SSE probability of 0.1 for SSE-SE. 
In Table~\ref{tb:sasrec}, we perform SSE-SE on input embeddings ($p_x=0.1$, $p_y=0$), output embeddings ($p_x=0.1$, $p_y=0$) and both embeddings ($p_x=p_y=0.1$), and observe that all of them significantly improve over state-of-the-art SASRec ($p_x=p_y=0$). 
%We find that in Table~\ref{tb:sasrec} that using SSE-SE for embeddings associated with input and/or output only will improve ranking results: NDCG@10 from 0.5941 to 0.6092 and 0.6085 respectively and Hit Ratio@10 from 0.8182 to 0.8250 and 0.8293 respectively. But when we use SSE-SE for embeddings associated with both input and output, we will see the improvement is larger: 0.6212 for NDCG@10 and 0.8324 for Hit Ratio@10. 
The regularization effects of SSE-SE is even more obvious when we increase the number of self-attention blocks from 2 to 6, as this will lead to a more sophisticated model with many more parameters. This leads to the model overfitting terribly even with dropout and weight decay. We can see in Table~\ref{tb:sasrec} that when both methods use dropout and weight decay, SSE-SE + SASRec is doing much better than SASRec without SSE-SE.
%: NDCG@10 is 0.6265 versus 0.5996 and Hit Ratio@10 is 0.8364 versus 0.8272.  

\begin{table}
  \caption{Our proposed SSE-SE helps the Transformer achieve better BLEU scores on English-to-German in 10 out of 11 newstest data between 2008 and 2018.}
  \label{tb:nmt}
  \centering
  \resizebox{0.8\columnwidth}{!}{
  \begin{tabular}{cccccccccccc}
    \toprule
    & \multicolumn{11}{c}{Test BLEU}                   \\
    \cmidrule(r){2-12} 
    Model     & 2008 & 2009 & 2010 & 2011 & 2012   & 2013  & 2014 & 2015 & 2016 & 2017 & 2018 \\
    \midrule
    Transformer            & 21.0 &      20.7  & 22.7 
                        & 20.6 & 20.6 & \bfseries{25.3}   & 26.2 & 28.4 & 32.1 & 27.2 &  38.8  \\
    SSE-SE + Transformer    & \bfseries{21.4} & \bfseries{21.1} & \bfseries{23.0} 
                        & \bfseries{21.0} & \bfseries{20.8}  & 25.2  & \bfseries{27.2}  & \bfseries{29.2} & \bfseries{33.1} & \bfseries{27.9} &  \bfseries{39.9} \\
    \bottomrule
  \end{tabular}
  }
  %\vspace{-12pt}
\end{table}

%\vspace{-.3cm}
\subsection{Neural Machine Translation}\label{sec:nmt}
% {\color{red}(Cho: maybe switch this with BERT result, since there are more datasets in here.)}
We use the transformer model \cite{vaswani2017attention} as the backbone for our experiments. 
The baseline model is the standard 6-layer transformer architecture and we apply SSE-SE to both encoder, and decoder by replacing corresponding vocabularies' embeddings in the source and target sentences. We trained on the standard WMT 2014 English to German dataset which consists of roughly 4.5 million parallel sentence pairs and tested on WMT 2008 to 2018 news-test sets. 
We use the OpenNMT implementation in our experiments. We use the same dropout rate of 0.1 and label smoothing value of 0.1 for the baseline model and our SSE-enhanced model. 
% Both models have dimensionality of embeddings as $d = 512$. When decoding, we use beam search with beam size of 4 and length penalty of 0.6 and replace unknown words using attention. For both models, we average last 5 checkpoints (we save checkpoints every 10,000 step) and evaluate the model's performances on the test datasets using BLEU scores \cite{post-2018-call}. 
The only difference between the two models is whether or not we use our proposed SSE-SE with $p_0 = 0.01$ in \eqref{eq:sse-se} for both encoder and decoder embedding layers. We evaluate both models' performances on the test datasets using BLEU scores \cite{post-2018-call}.

%The control group is the standard transformer encoder-decoder architecture with self-attention. In experiment group, we apply SSE-SE towards both encoder and decoder by replacing corresponding vocabularies' embeddings in the source and target sentences. We trained on the standard WMT 2014 English to German dataset which consists of roughly 4.5 million parallel sentence pairs and tested on WMT 2008 to 2018 news-test sets. Sentences were encoded into 32,000 tokens using byte-pair encoding. We use the SentencePiece, OpenNMT and SacreBLEU implementations in our experiments. We trained the 6-layer transformer base model on a single machine with 4 NVIDIA V100 GPUs for 20,000 steps. We use the same dropout rate of 0.1 and label smoothing value of 0.1 for the baseline model and our SSE-enhanced model. Both models have dimensionality of embeddings as $d = 512$. When decoding, we use beam search with beam size of 4 and length penalty of 0.6 and replace unknown words using attention. For both models, we average last 5 checkpoints (we save checkpoints every 10,000 step) and evaluate the model's performances on the test datasets using BLEU scores. The only difference between the two models is whether or not we use our proposed SSE-SE with $p = 0.01$ in Equation~\ref{eq:sse-se} for both encoder and decoder embedding layers.
%{\color{red}(put some of these details in appendix .)}

We summarize our results in Table~\ref{tb:nmt} and find that SSE-SE helps improving accuracy and BLEU scores on both dev and test sets in 10 out of 11 years from 2008 to 2018. In particular, on the last 5 years' test sets from 2014 to 2018, the transformer model with SSE-SE improves BLEU scores by 0.92 on average when compared to the baseline model without SSE-SE.

% \vspace{-.3cm}
\subsection{BERT for Sentiment Classification}\label{sec: bert}
% Can we show SSE improves pre-training stage of Bert? We can try to apply SSE to pre-training on movie reviews data and see how this helps movie sentiment tasks. Use wordNet instead of uniformly random? 

% Pretraining and Fine tuning. SSE can be applied to both. 3 cases: baseline, SSE applied to pretraining only, fine-tuning only and both.

BERT's model architecture \cite{devlin2018bert} is a multi-layer bidirectional Transformer encoder based on the Transformer model in neural machine translation.
%The main architecture difference between BERT and the model in previous section are two fold: first, BERT is bidirectional so there are no masked attention matrices; second, the loss is quite different, in pre-training stage BERT utilizes the sum of two losses: one for the masked language models and the other for the next sentence prediction.
Despite SSE-SE can be used for both pre-training and fine-tuning stages of BERT, we want to mainly focus on pre-training as fine-tuning bears more similarity to the previous section. We use SSE probability of 0.015 for embeddings (one-hot encodings) associated with labels and SSE probability of 0.015 for embeddings (word-piece embeddings) associated with inputs. One thing worth noting is that even in the original BERT model's pre-training stage, SSE-SE is already implicitly used for token embeddings. In original BERT model, the authors masked 15\% of words for a maximum of 80 words in sequences of maximum length of 512 and 10\% of the time replaced the [mask] token with a random token. That is roughly equivalent to SSE probability of 0.015 for replacing input word-piece embeddings. 

% We do not think the BERT authors explained the benefits accurately here. The authors stated their motivation of doing so is to mitigate a mismatch between pre-training and fine-tuning since [mask] token is never seen in fine-tuning stage. But we think it is due to the fact that replacing [mask] token with a random token has the same effects of replacing the tokens' embeddings as SSE-SE does and this would allow us to mitigate over-parameterization in embedding layer and to learn better embedding representations without reducing the number of parameters of the model.

We continue to pre-train Google pre-trained BERT model on our crawled IMDB movie reviews with and without SSE-SE and compare downstream tasks performances. In Table~\ref{tb:bert-imdb}, we find that SSE-SE pre-trained BERT base model helps us achieve the state-of-the-art results for the IMDB sentiment classification task, which is better than the previous best in \cite{howard2018universal}. We report test set accuracy of 0.9542 after fine-tuning for one epoch only. For the similar SST-2 sentiment classification task in Table~\ref{tb:bert-sst2}, we also find that SSE-SE can improve BERT pre-trains better. Our SSE-SE pre-trained model achieves 94.3\% accuracy on SST-2 test set after 3 epochs of fine-tuning while the standard pre-trained BERT model only reports 93.8 after fine-tuning. Furthermore, we show that SSE-SE with SSE probability 0.01 can also improve dev and test accuracy in the fine-tuning stage. If we are using SSE-SE for both pre-training and fine-tuning stage of the BERT base model, we can achieve 94.5\% accuracy on the SST-2 test set, approaching the 94.9\% accuracy by the BERT large model. We are optimistic that our SSE-SE can be applied to BERT large model as well in the future. 

% We also tried to use SSE-SE in an ongoing Kaggle competition (Jigsaw Unintended Bias in Toxicity Classification Competition) and find that SSE-SE can improve the ensemble model's generalization results, helping us rank top 5 in more than 1700 teams. We reported some results in Table~\ref{tb:ensemble} in Appendix but omitted most details.  
% {\color{red}(tricky to say since ranking might change. Also improvement is not very significant so maybe we can remove. )}

% SST2: 128 max sequence length and 3 epochs fine-tuning \\
% IMDB: 512 max sequence length and 1 epochs fine-tuning (2 epochs for google pre-trained model)
% both use learning rate of $2e^{-5}$, dropout probability of $0.1$

\begin{table}
  \caption{Our proposed SSE-SE applied in the pre-training stage on our crawled IMDB data improves the generalization ability of pre-trained IMDB model and helps the BERT-Base model outperform current SOTA results on the IMDB Sentiment Task after fine-tuning.}
  \label{tb:bert-imdb}
  \centering
 \resizebox{0.65\columnwidth}{!}{
  \begin{tabular}{cccc}
    \toprule
    & \multicolumn{3}{c}{IMDB Test Set}                \\
    \cmidrule(r){2-4}  
    Model     & AUC & Accuracy & F1 Score \\
    \midrule
    ULMFiT \cite{howard2018universal} & - & 0.9540 & - \\
    \midrule
    Google Pre-trained Model + Fine-tuning            & 0.9415 &  0.9415    & 0.9419  \\
    Pre-training + Fine-tuning   & 0.9518 & 0.9518 &  0.9523 \\
    (SSE-SE + Pre-training) + Fine-tuning   & \bfseries{0.9542}  & \bfseries{0.9542}    & \bfseries{0.9545}    \\
    \bottomrule
  \end{tabular}
  %\vspace{-10pt}
 }
\end{table}

\begin{table}
  \caption{SSE-SE pre-trained BERT-Base models on IMDB datasets turn out working better on the new unseen SST-2 Task as well. }
%   Note that the first two rows result are copied from the BERT paper and our reported results are based on the BERT-Base models.}
  \label{tb:bert-sst2}
  \centering
\resizebox{0.8\columnwidth}{!}{
  \begin{tabular}{ccccc}
    \toprule
    & \multicolumn{3}{c}{SST-2 Dev Set} & \multicolumn{1}{c}{SST-2 Test Set}                 \\
    \cmidrule(r){2-4}  \cmidrule(r){5-5} 
    Model     & AUC & Accuracy & F1 Score & Accuracy (\%) \\
    \midrule 
    % BERT-Base & - &  -  & -  & 93.5 \\
    % BERT-Large  & - &  -  & -  & 94.9  \\
    % \midrule
    Google Pre-trained + Fine-tuning             & 0.9230 &      0.9232  & 0.9253  &   93.6  \\
    Pre-training + Fine-tuning   & 0.9265 & 0.9266 & 0.9281 &   93.8 \\
    (SSE-SE + Pre-training) + Fine-tuning   & 0.9276 &  0.9278  & 0.9295  & 94.3  \\
    (SSE-SE + Pre-training) + (SSE-SE + Fine-tuning) & \bfseries{0.9323} &  \bfseries{0.9323}  & \bfseries{0.9336}  & \bfseries{94.5} \\

    %Reviews Pre-training +  Fine-tuning (SSE)   &  &      &   &  &        &  \\
    %Reviews Pre-training (SSE) + SSE Fine-tuning (SSE)   &  &      &   &  &        &  \\
    \bottomrule
  \end{tabular}
}
\end{table}

\begin{figure*}
\centering
\begin{tabular}{cc}
\hspace{-8pt}
\includegraphics[width=0.45\linewidth]{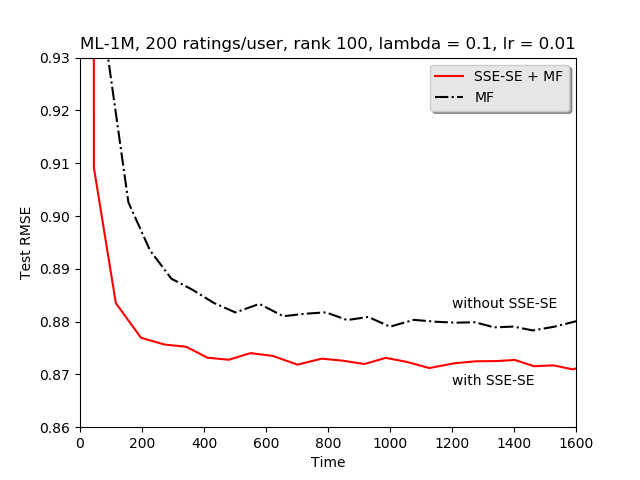} &
%\hspace{-8pt}
\includegraphics[width=0.45\linewidth]{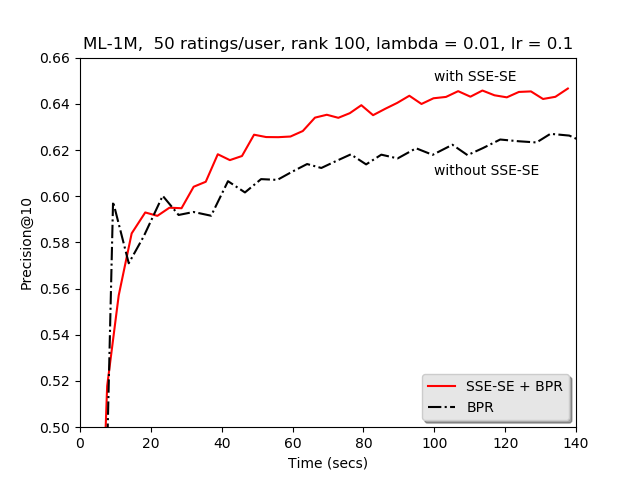} 
\end{tabular}
\vspace{-10pt}\caption{Compare Training Speed of Simple Neural Networks with One Hidden Layer, i.e. MF and BPR, with and without SSE-SE. }
\label{fig:sse-speed}
% \vspace{-10pt}
\end{figure*}

%\subsection{Speed and Convergence Comparisons}
% \vspace{-.3cm}
\subsection{Speed and convergence comparisons. }
% Put speed plots here. Two plots side by side.
In Figure~\ref{fig:sse-speed}, it is clear to see that our one-hidden-layer neural networks with SSE-SE are achieving much better generalization results than their respective standalone versions. One can also easily spot that SSE-version algorithms converge at much faster speeds with the same learning rate.

% \vspace{-.3cm}
\section{Conclusion} We have proposed Stochastic Shared Embeddings, which is a data-driven approach to regularization, that stands in contrast to brute force regularization such as Laplacian and ridge regularization.  Our theory is a first step towards explaining the regularization effect of SSE, particularly, by `smoothing' the Rademacher complexity.  The extensive experimentation demonstrates that SSE can be fruitfully integrated into existing deep learning applications.

\chapter{SSE-PT: Sequential Recommendation Via Personalized Transformer}

\section{Introduction}

The sequential recommendation problem has been an important open research question, yet using temporal information to improve recommendation performance has proven to be challenging. 
%the importance of temporal information to improving recommendation performance is increasingly apparent. 
SASRec, proposed by \cite{kang2018self} for sequential recommendation problems, has achieved state-of-the-art results and enjoyed more than 10x speed-up when compared to earlier CNN/RNN-based methods. 
However, the model used in SASRec is the standard Transformer which is inherently an un-personalized model. 
%not personalized, in the sense that the attentions between items are not personalized to each user. 
In practice, it is important to include a personalized Transformer in SASRec especially for recommender systems, but 
\cite{kang2018self} found that adding additional personalized embeddings did not improve the performance of their Transformer model, and postulate that the failure of adding personalization is due to the fact that they already use the user history and the user embeddings only contribute to overfitting.
In this work, we propose a novel method, Personalized Transformer (SSE-PT), that successfully introduces personalization into self-attentive neural network architectures.

%The prevailing belief before this paper is that personalization is not helpful for the Transformer model in the temporal collaborative ranking setting . We find that this is not true. 
Introducing user embeddings into the standard transformer model is intrinsically difficult with existing regularization techniques, as unavoidably a large number of user parameters are introduced, which is often at the same scale of the number of training data. But we show that personalization can greatly improve ranking performance with a recent regularization technique called Stochastic Shared Embeddings (SSE) \cite{wu2019stochastic}. 
The personalized Transformer (SSE-PT) model with SSE regularization works well for all 5 real-world datasets we consider without overfitting, outperforming previous state-of-the-art algorithm SASRec by almost 5\% in terms of NDCG@10. Furthermore, after examining some random users' engagement history, we find our model is not only more interpretable but also able to focus on recent engagement patterns for each user. Moreover, our SSE-PT model with a slight modification, which we call SSE-PT++, can handle extremely long sequences and outperform SASRec in ranking results with comparable training speed, striking a balance between performance and speed requirements.

\section{Related Work}

\subsection{Session-based and Sequential Recommendation}
Both session-based and sequential (i.e., next-basket) recommendation algorithms take advantage of additional temporal information to make better personalized recommendations. The main difference between session-based recommendations \cite{hidasi2015session} and sequential recommendations \cite{kang2018self} is that the former assumes that the user ids are not recorded and therefore the length of engagement sequences are relatively short. Therefore, session-based recommendations normally do not consider user factors. On the other hand, sequential recommendation treats each sequence as a user's engagement history \cite{kang2018self}.
% The train/dev/test splitting and evaluation metrics of session-based and sequential recommendations are slightly different: \cite{kang2018self} uses last item of each sequence as test data and measures the ranking quality of next item recommendation among one positive candidate and sampled negative candidates, using ranking metrics like Normalized Discounted Cumulative Gain (NDCG), Hit Rate;  \cite{hidasi2015session} samples sequences into train/dev/test sets, and measures the ranking performances of test sequences using ranking metrics like Mean Reciprocal Rank (MRR), Precision and Recall.    
Both settings, do not explicitly require time-stamps: only the relative temporal orderings are assumed known (in contrast to, for example, timeSVD++ \cite{koren2009collaborative} using time-stamps).
% , which allows general methods to be proposed for various datasets instead of tailed for one specific dataset. 
Initially, sequence data in temporal order are usually modelled with Markov models, in which a future observation is conditioned on the last few observed items \cite{rendle2010factorizing}. 
%Normally, Markov Chains have been defined un-personalized, i.e. independently of the user. 
In \cite{rendle2010factorizing}, a personalized Markov model with user latent factors is proposed for more personalized results. 
% Markov Chains based models enjoy the advantages of good interpretability. 

In recent years, deep learning techniques, borrowed from natural language processing (NLP) literature, are getting widely used in tackling sequential data. Like word sentences in NLP, item sequences in recommendations can be similarly modelled by recurrent neural networks (RNN) \cite{hidasi2015session, hidasi2018recurrent} and convolutional neural network (CNN) \cite{tang2018personalized} models. Recently, attention models are increasingly used in both NLP \cite{vaswani2017attention, devlin2018bert} and recommender systems \cite{liu2018stamp, kang2018self}. SASRec \cite{kang2018self} is a recent method with state-of-the-art performance among the many deep learning models. 
Motivated by the Transformer model in neural machine translation \cite{vaswani2017attention}, SASRec utilizes a similar architecture to the encoder part of the Transformer model. Our proposed model, SSE-PT, is a personalized extension of the transformer model.

% Until now, most deep learning models so far have been defined un-personalized: two items' embeddings are exactly the same in two different users' engagement history as long as they share the same item ID. Unlike words and sentences in natural languages, personalization is key to better recommendations. For example, fixing user embeddings in Matrix Factorization model would dramatically worsen the model's recommendation performances \cite{mnih2008probabilistic}. If personalization only comes from engagement history of similar items, the personalization is not strong enough. 
% Therefore, to address the issues, we propose a Personalized Transformer model with learnable user embeddings in this paper, which can outperform current state-of-the-art deep learning and non-deep-learning models. We call our problem setting temporal collaborative ranking to emphasize 3 aspects: temporal, personalization and ranking, without distinguishing whether data is from user sessions or not.     

\subsection{Regularization Techniques}
In deep learning, models with many more parameters than data points can easily overfit to the training data. This may prevent us from adding user embeddings as additional parameters into complicated models like the Transformer model \cite{kang2018self}, which can easily have 20 layers 
% with 6 self-attention blocks and 
with millions of parameters for a medium-sized dataset like Movielens10M \cite{harper2016movielens}. 
$\ell_2$ regularization~\cite{hoerl1970ridge} is the most widely used approach and has been used in many matrix factorization models in recommender systems; $\ell_1$ regularization~\cite{tibshirani1996regression} is used when a sparse model is preferred. For deep neural networks, it has been shown that $\ell_p$  regularizations are often too weak, while dropout~\cite{hinton2012improving,srivastava2014dropout} is more effective in practice.  There are many other regularization techniques, including parameter sharing \cite{goodfellow2016deep}, max-norm regularization \cite{srebro2005maximum}, gradient clipping \cite{pascanu2013difficulty}, etc.
Very recently, a new regularization technique called Stochastic Shared Embeddings (SSE) \cite{wu2019stochastic} is proposed as a new means of regularizing embedding layers.  We find that the base version SSE-SE is essential to the success of our Personalized Transformer (SSE-PT) model.

% With SSE-SE, dropout and weight decay regularization techniques, our Personalized Transformer model empirically achieves state-of-the-art performances in 5 real datasets we consider in the temporal collaborative ranking setting. 

\section{Methodology}
\subsection{Sequential Recommendation}
%Before, we have talked about how Temporal Collaborative Ranking is different from session-based recommendations and standard collaborative ranking problems in Section~\ref{sec:related}.
%Let us formally define the Sequential Recommendation problem as:
Given $n$ users and each user engaging with a subset of $m$ items in a temporal order, the goal of sequential recommendation is to 
learn a good  personalized ranking of top $K$ items out of total $m$ items for any given user at any given time point. 
% It differs from the standard collaborative ranking problem in literature because in those works temporal information are usually not utilized.
% It is closely related to session-based recommendations, except that in most session-based or sequential recommendation works, user information is usually neglected and the ranking results are not obtained in a collaborative fashion. This leads to inferior performances for many models, including the RNN-based methods \cite{hidasi2015session, hidasi2018recurrent} and the latest Transformer-based model called SASRec\cite{kang2018self}.  
We assume  data in the format of $n$ item sequences: 
\begin{equation}\label{eq:input}
  s_i=(j_{i1}, j_{i2}, \dots, j_{iT}) \text{ for } 1 \leq i \leq n.  
\end{equation}
 Sequences $s_i$ of length $T$ contain indices of the last $T$ items that user $i$ has interacted with in the temporal order (from old to new). 
%If we have sequences of length less than $T$, we would add padding to the left of the sequences such that the length of sequences is always $T$. If the length is larger than $T$, we would also only consider the latest $T$ items in intersection sequences. We can set $T$ to be an arbitrary integer number greater than 1 and view the ratings after time point $T$ as test set. 
For different users, the sequence lengths can vary, but we can  pad the shorter sequences so all of them have length $T$. We cannot simply randomly split data points into train/validation/test sets because they come in temporal orders. Instead, we need to make sure our training data is before validation data which is before test data temporally.
We use last items in sequences as test sets, second-to-last items as validation sets and the rest as training sets.  We use ranking metrics such as NDCG@$K$ and Recall$@K$ for evaluations, which are defined in the Appendix.
% \eqref{eq:ndcg} and \eqref{eq:recall}. 
%As one can see, this paper is dealing with implicit feedback datasets with known temporal orderings.

\subsection{Personalized Transformer Architecture}
% TALK ABOUT MAIN DIFFERENCE FROM \cite{kang2018self}.
Our model, which we call SSE-PT, is motivated by the Transformer model in \cite{vaswani2017attention} and \cite{kang2018self}. It also utilizes a new regularization technique called stochastic shared embeddings \cite{wu2019stochastic}.
In the following sections, we are going to examine each important component of our Personalized Transformer (SSE-PT) model, especially the embedding layer, and the novel application of stochastic shared embeddings (SSE) regularization technique.

% self-attention layer, pointwise feed-forward layer, prediction layer, layer normalization, dropout, weight decay, stochastic shared embeddings, and so on. 

\paragraph{Embedding Layer}
We define a learnable user embedding look-up table $U \in R^{n \times d_u}$ and item embedding look-up table $V \in R^{m \times d_i}$, where  $d_u$, $d_i$ are the number of hidden units for user and item respectively. We also specify learnable positional encoding table $P \in R^{T \times d}$, where $d = d_u + d_i$.
% {\color{red} Is position encoding also used in the previous ICDM paper? If not, we can say that is new. }. 
So each input sequence 
$s_i \in R^T$ will be represented by the following embedding: 
\begin{equation}\label{eq:emb}
    E = \begin{bmatrix}
    [v_{j_{i1}}\text{; } u_i] + p_1   \\
    [v_{j_{i2}}\text{; } u_i] + p_2 \\
    \vdots \\
    [v_{j_{iT}}\text{; } u_i] + p_T
    \end{bmatrix} \in R^{T \times d},
\end{equation} 
where $[v_{j_{it}}; u_i]$ represents concatenating item embedding $v_{j_{it}} \in R^{d_i}$ and user embedding $u_i \in R^{d_u}$ into embedding $E_t \in R^d$ for time $t$. Note that the main difference between our model and \cite{kang2018self} is that we introduce the user embeddings $u_i$, making our model personalized.

\begin{figure}[ht]
\vskip -0.1in
\begin{center}
\centerline{\includegraphics[width=0.6\columnwidth]{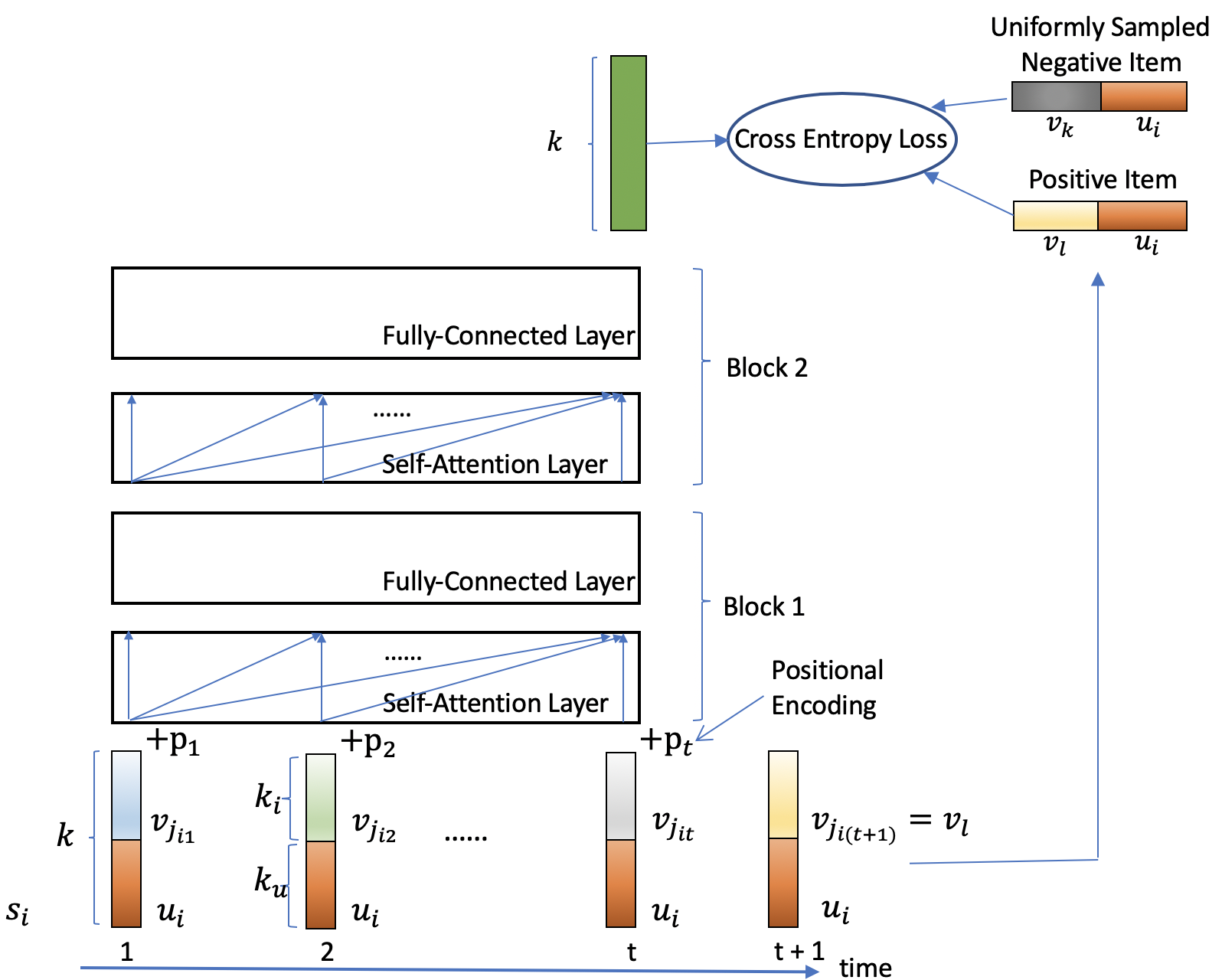}}  
\end{center}
\vskip -0.2in
\caption{Illustration of our proposed SSE-PT model}
\label{fig:SSE-PT}
%\vskip -0.2in
\vspace{-10pt}\end{figure}

\paragraph{Transformer Encoder}
On top of the embedding layer, we have $B$ blocks of self-attention layers and fully connected layers, where each layer extracts features for each time step based on the previous layer's outputs.
Since this part is identical to the Transformer encoder used in the original papers~\cite{vaswani2017attention, kang2018self}, we will skip the details. 
%The fully-connected layer and self-attention layer forms the transformer encoder. We skip the details as it is the same as that in the original paper \cite{vaswani2017attention, kang2018self}.

\paragraph{Prediction Layer}
At time $t$, the predicted probability of user $i$ engaged item $l$ is: 
%as $1$ is given by:
\vspace{-5pt}\begin{equation}
    p_{itl} = \sigma(r_{itl}),
    % = \text{sigmoid}(F_{t-1}^{B} \cdot [v_l\text{; } u_i]),
\end{equation} where
$\sigma$ is the sigmoid function and $r_{itl}$ is the predicted score of item $l$ by user $l$ at time point $t$, defined as:
\begin{equation}\label{eq:pred}
        r_{itl} = F_{t-1}^{B} \cdot [v_l\text{; } u_i],
\end{equation}
where $F_{t-1}^{B}$ is the output hidden units associated with the transformer encoder at the last timestamp. 
Although we can use another set of user and item embedding look-up tables for the $u_i$ and $v_l$, we find it better to use the same set of embedding look-up tables $U, V$ as in the embedding layer. But regularization for those embeddings can be different. To distinguish the $u_i$ and $v_l$ in \eqref{eq:pred} from $u_i, v_j$ in \eqref{eq:emb}, we call embeddings in \eqref{eq:pred} output embeddings and those in \eqref{eq:emb} input embeddings.

% There are multiple ways to define the loss of our model, previously a popular loss is the BPR loss \cite{rendle2009bpr, hidasi2018recurrent}:
% \begin{equation}\label{eq:bprloss}
%     \sum\nolimits_{i}\sum\nolimits_{t=1}^{T-1} -\sum_{k \in \Omega} \log \big[ \sigma (r_{itl} - r_{itk})\big],
% \end{equation} where $\sigma$ is the sigmoid function, $r_{itl}$ is the predicted score of the positive item $l$ at time point $t$ for user $i$, $r_{itk}$ is the predicted score of the negative item, and set of negative items is defined as $\Omega = \{1 \leq j \leq m : j \neq j_{it}, \forall 1 \leq t \leq T \}$. At time point $t$, the positive item index is $l = j_{i(t+1)}$ in \eqref{eq:input}, and negative item index $k$ satisfies $k \in \Omega$.

% We find the BPR loss does not perform as well as the binary cross entropy loss in practice.
% % {\color{red}(Cho: should we say call this listwise cross entropy loss?Sounds like pointwise)} 
The binary cross entropy loss between predicted probability for the positive item $l = j_{i (t+1)}$ and one uniformly sampled negative item $k \in \Omega$ is given as
$-[\log (p_{itl}) + \log(1 - p_{itk})]$. 
% % We call it listwise loss because sampled $k \in \Omega$ is always at bottom of list. 
%If we sum the term over each $s_i$ and each time $1\leq t\leq T - 1$, 
Summing over $s_i$ and $t$, we obtain the objective function that we want to minimize is:
\begin{equation}
    \sum\nolimits_{i}\sum\nolimits_{t=1}^{T-1} \sum_{k\in \Omega}-\big[\log (p_{itl}) +  \log(1 - p_{itk})\big].
\end{equation}
At the inference time, top-$K$ recommendations for user $i$ at time $t$ can be made by sorting scores  $r_{itl}$ for all items $\ell$ and recommending the first $K$ items in the sorted list.

% \subsection{Personalized Transformer Regularization Techniques}

\paragraph{Novel Application of Stochastic Shared Embeddings}
% Unlike previous SASRec model \cite{kang2018self}, we use one more regularization technique in our SSE-PT model specifically for embedding layer in addition to the ones listed earlier:

The most important regularization technique to SSE-PT model is 
the Stochastic Shared Embeddings (SSE) \cite{wu2019stochastic}. The main idea of SSE is to stochastically replace embeddings with another embedding with some pre-defined probability during SGD, which has the effect of regularizing the embedding layers.
Without SSE, all the existing well-known regularization techniques like layer normalization, dropout and weight decay fail and cannot prevent the model from over-fitting badly after introducing user embeddings. \cite{wu2019stochastic} develops two versions of SSE, SSE-Graph and SSE-SE. In the simplest uniform case, SSE-SE replaces one embedding with another embedding uniformly with probability $p$, which is called SSE probability in \cite{wu2019stochastic}.
Since we don't have knowledge graphs for user or items, we simply apply the SSE-SE 
% {\color{red}(We didn't say what is SSE-SE)} 
to our SSE-PT model. We find SSE-SE
makes possible training this personalized model with $O(n d_u)$ additional parameters.

%Here, we would use the SSE-SE version as we do not have access to very good knowledge graphs of embeddings. 
 There are 3 different places in our model that SSE-SE can be applied. We can apply SSE-SE to input/output user embeddings, input item embeddings, and output item embeddings with probabilities $p_u$, $p_i$ and $p_y$ respectively. Note that input user embedding and output user embedding are always replaced at the same time with SSE probability $p_u$.
%  Usually, we also set $p_i = p_y$ for simplicity. 
Empirically, we find that SSE-SE to user embeddings and output item embeddings always helps, but SSE-SE to input item embeddings is only useful when the average sequence length is large, e.g., more than 100 in Movielens1M and Movielens10M datasets.
% {\color{red}(You might need to explicitly list what regularization/normalization you used for each layer? The above paragraph will make it sounds like there are lot of tricks and it's hard to reproduce the experiments. Let's discuss)}

\paragraph{Other Regularization Techniques}
Besides the {\it SSE} \cite{wu2019stochastic}, we also utilized other widely used regularization techniques, including {\it layer normalization} \cite{ba2016layer}, {\it batch normalization} \cite{ioffe2015batch}, {\it residual connections} \cite{he2016deep}, {\it weight decay} \cite{krogh1992simple}, and {\it dropout} \cite{srivastava2014dropout}. Since they are used in the same way in the previous paper \cite{kang2018self}, we omit the details to the Appendix.

\subsection{Handling Long Sequences: SSE-PT++}
 To handle extremely long sequences, a slight modification can be made on the base SSE-PT model in terms of how input sequences $s_i$'s are fed into the SSE-PT neural network. We call the enhanced model SSE-PT++ to distinguish it from the previously discussed SSE-PT model, which cannot handle sequences longer than $T$.   
 
 The motivation of SSE-PT++ over SSE-PT comes from: sometimes we want to make use of extremely long sequences, $s_i=(j_{i1}, j_{i2}, \dots, j_{it}) \text{ for } 1 \leq i \leq n$, where $t > T$, but our SSE-PT model can only handle sequences of maximum length of $T$. 
 The simplest way is to sample starting index $1 \leq v \leq t$ uniformly and use $s_i=(j_{iv}, j_{i(v+1)}, \dots, j_{iz})$, where $z = \min (t, v + T - 1)$. Although sampling the starting index uniformly from $[1, t]$ can accommodate long sequences of length $t > T$, this does not work well in practice. Uniform sampling does not take into account the importance of recent items in a long sequence. To solve this dilemma, we introduce an additional hyper-parameter $p_s$ which we call {\it sampling probability}. It implies that with probability $p_s$, we sample the starting index $v$ uniformly from $[1, t - T]$ and use sequence $s_i=(j_{iv}, j_{i(v+1)}, \dots, j_{i(v+T-1)})$ as input. With probability $1 - p_s$, we simply use the recent $T$ items  $(j_{i(t-T+1)}, \dots, j_{it})$ as input. If the sequence $s_i$ is already shorter than $T$, then we always use the recent input sequence for user $i$.
 
 Our proposed SSE-PT++ model can work almost as well as SSE-PT with a much smaller $T$. One can see in Table~\ref{tb:dl-ml2} with $T = 100$, SSE-PT++ can perform almost as well as SSE-PT. The time complexity of the SSE-PT model is of order $O(T^2 d + T d^2)$. Therefore, reducing $T$ by one half would lead to a theoretically 4x speed-up in terms of the training and inference speeds. As to the model's space complexity, both SSE-PT and SSE-PT++ are of order $O(nd_u + md_i + Td + d^2)$. 
%  When the number of users and items scales up, Tensorflow will automatically store the user and item embedding look-up tables in RAM instead of GPU memory.

% \subsection{Regularization of User Embeddings}
% TALK ABOUT PREVIOUS PERSONALIZATION EFFORTS BY \cite{kang2018self} AND WHY THEY FAIL
% ALSO TALK ABOUT MAIN DIFFERENCE

\section{Experiments}
In this section, we compare our proposed algorithms, Personalized Transformer (SSE-PT) and SSE-PT++, with other state-of-the-art algorithms on real-world datasets. We implement our codes in Tensorflow and conduct all our experiments on a server with 40-core Intel Xeon E5-2630 v4 @ 2.20GHz CPU, 256G RAM and Nvidia GTX 1080 GPUs.
% Google Cloud with Nvidia V100 GPUs and 
% delete this because speed is on GTX 1080

\paragraph{Datasets}
We use 5 datasets. The first 4 have exactly the same train/dev/test splits as in \cite{kang2018self}. The datasets are: {\it Beauty} and {\it Games} categories from Amazon product review datasets\footnote{\url{http://jmcauley.ucsd.edu/data/amazon/}}; {\it Steam} dataset introduced in \cite{kang2018self}, which contains reviews crawled from a large video game distribution platform; {\it Movielens1M} dataset \cite{harper2016movielens}, a widely used benchmark datasets containing one million user movie ratings; {\it Movielens10M} dataset with ten million user ratings cleaned by us.
% \begin{itemize}
%     \item Beauty category from Amazon product review datasets. \footnote{\url{http://jmcauley.ucsd.edu/data/amazon/}} 
%     \item Games category from the same source.
%     % After preprocessing, there are 52,024 users, 57,289 items with average sequence length of 7.6 and maximum length of 291 in the Beauty category. As to the Games category, there are 31,013 users 23,715 items with average sequence length of 7.3 and maximum length of 858.
%     \item Steam dataset introduced in \cite{kang2018self}. It contains reviews crawled from a large video game distribution platform. 
%     % The dataset contains 334,730 users and 13,047 items with average sequence length of 11.0 and maximum length of 1229. 
%     \item Movielens1M dataset \cite{harper2016movielens}, a widely used benchmark datasets containing one million user movie ratings.
%     \item Movielens10M dataset with ten million user ratings.
%     % We test on both Movielen1M and Movielens10M datasets. 
%     % In Movielens1M dataset, there are 6,040 users and 3,416 items with average sequence length of 163.5 and maximum length of 2275. In Movielens10M dataset, there are 69,878 users, 65,133 items with average sequence length of 141.1 and maximum length of 7357. 
% \end{itemize}
Detailed dataset statistics are given in Table~\ref{tb:datasets}. One can easily see that the first 3 datasets have short sequences (average length < 12) while the last 2 datasets have very long sequences (> 10x longer).

\paragraph{Evaluation Metrics}
The evaluation metrics we use are standard ranking metrics, namely NDCG and Recall for top recommendations (See Appendix).
% \begin{figure*}[t]
% \centering
% \begin{tabular}{cc}
% \hspace{-8pt}
% \includegraphics[width=0.5\linewidth]{sasrec_layer1.png} &
% %\hspace{-8pt}
% \includegraphics[width=0.48\linewidth]{pt_layer1.png} 
% \end{tabular}
% %\vspace{-10pt}
% \caption{Compare Attention Maps for Layer-1 between current state-of-the-art SASRec (Left) and our proposed SSE-PT (Right). }
% \label{fig:attention}
% %\vspace{-10pt}
% \end{figure*}
% In the temporal collaborative ranking setting, at time point $t$, the rating matrix $R$ can be formed in two ways. One is that we include all ratings after $t$ vinagre2014fast}, the other is to include only ratings at time point $t + 1$ kang2018self}. We use the latter, which is the same setting as \cite{kang2018self}. 
We follow the same evaluation setting as the previous paper \cite{kang2018self}: predicting ratings at time point $t + 1$ given the previous $t$ ratings.   
% {\color{red}(Cho: is it the same setting with the previous paper? If so you can say it's following the same setting withthe previous paper. )}
% , because this requires the predicted item to be engaged exactly at time point $t + 1$ instead of any future points.
For a large dataset with numerous users and items, the evaluation procedure would be slow because \eqref{eq:ndcg} would require computing the ranking of all items based on their predicted scores for every single user. As a means of speed-up evaluations, we sample a fixed number $C$ (e.g., 100) of negative candidates while always keeping the positive item that we know the user will engage next. This way, both $R_{ij}$ and $\Pi_i$ will be narrowed down to a small set of item candidates, and prediction scores will only be computed for those items through a single forward pass of the neural network.

Ideally, we want both NDCG and Recall to be as close to $1$ as possible, because NDCG@$K = 1$ means the positive item is always put on the top-$1$ position of the top-$K$ ranking list, and Recall@$K = 1$ means the positive item is always contained by the top-$K$ recommendations the model makes.

\begin{table*}[ht]
\centering
% \vskip -0.1in
\caption{Comparing various state-of-the-art temporal collaborative ranking algorithms on various datasets. The (A) to (E) are non-deep-learning methods, the (F) to (K) are deep-learning methods and the (L) to (O) are our variants. We did not report SSE-PT++ results for beauty, games and steam, as the input sequence lengths are very short (see Table~\ref{tb:datasets}), so there is no need for SSE-PT++.}
% Note that (A) to (F) are deep learning methods and we bold fonts to highlight best overall result.}
%\vskip -0.1in
\label{tb:dl-combined}
\begin{center}
\begin{small}
\begin{sc}
\resizebox{\textwidth}{!}{
\begin{tabular}{l:c:c:c:c}
\toprule

Dataset & BEAUTY & GAMES & STEAM & ML-1M \\  
Metric & Recall@10 \; NDCG@10 & Recall@10 \; NDCG@10 & Recall@10 \; NDCG@10 & Recall@10 \; NDCG@10\\
% Methods & \;  &  \;  &  \;  &  \; \\
\midrule
(A) POPRec & 
\;\;0.4003 \;\;\;\;\;\;\;\; 0.2277 & 
\;\;0.4724 \;\;\;\;\;\;\;\; 0.2779 & 
\;\;0.7172 \;\;\;\;\;\;\;\; 0.4535 & 
\;\;0.4329 \;\;\;\;\;\;\;\; 0.2377 \\
(B) BPR & 
\;\;0.3775 \;\;\;\;\;\;\;\; 0.2183 & 
\;\;0.4853 \;\;\;\;\;\;\;\; 0.2875 & 
\;\;0.7061 \;\;\;\;\;\;\;\; 0.4436 & 
\;\;0.5781 \;\;\;\;\;\;\;\; 0.3287 \\
(C) FMC &
\;\;0.3771 \;\;\;\;\;\;\;\; 0.2477 & 
\;\;0.6358 \;\;\;\;\;\;\;\; 0.4456 & 
\;\;0.7731 \;\;\;\;\;\;\;\; 0.5193 & 
\;\;0.6983 \;\;\;\;\;\;\;\; 0.4676 \\
(D) FPMC &
\;\;0.4310 \;\;\;\;\;\;\;\; 0.2891 & 
\;\;0.6802 \;\;\;\;\;\;\;\; 0.4680 & 
\;\;0.7710 \;\;\;\;\;\;\;\; 0.5011 & 
\;\;0.7599 \;\;\;\;\;\;\;\; 0.5176 \\
(E) TRANSREC &
\;\;0.4607 \;\;\;\;\;\;\;\; 0.3020 & 
\;\;0.6838 \;\;\;\;\;\;\;\; 0.4557 & 
\;\;0.7624 \;\;\;\;\;\;\;\; 0.4852 & 
\;\;0.6413 \;\;\;\;\;\;\;\; 0.3969 \\
\midrule
(F) GRU4REC &
\;\;0.2125 \;\;\;\;\;\;\;\; 0.1203 & 
\;\;0.2938 \;\;\;\;\;\;\;\; 0.1837 & 
\;\;0.4190 \;\;\;\;\;\;\;\; 0.2691 & 
\;\;0.5581 \;\;\;\;\;\;\;\; 0.3381 \\
(G) STAMP &
\;\;0.4607 \;\;\;\;\;\;\;\; 0.3020 & 
\;\;0.6838 \;\;\;\;\;\;\;\; 0.4557 & 
\;\;0.7624 \;\;\;\;\;\;\;\; 0.4852 & 
\;\;0.6413 \;\;\;\;\;\;\;\; 0.3969 \\
(H) GRU4REC+ &
\;\;0.3949 \;\;\;\;\;\;\;\; 0.2556 & 
\;\;0.6599 \;\;\;\;\;\;\;\; 0.4759 & 
\;\;0.8018 \;\;\;\;\;\;\;\; 0.5595 & 
\;\;0.7501 \;\;\;\;\;\;\;\; 0.5513 \\
(I) CASER &
\;\;0.4264 \;\;\;\;\;\;\;\; 0.2547 &  
\;\;0.5282 \;\;\;\;\;\;\;\; 0.3214 & 
\;\;0.7874 \;\;\;\;\;\;\;\; 0.5381 & 
\;\;0.7886 \;\;\;\;\;\;\;\; 0.5538 \\
(J) SASREC &
\;\;0.4837 \;\;\;\;\;\;\;\; 0.3220 & 
\;\;0.7434 \;\;\;\;\;\;\;\; 0.5401 & 
\;\;0.8732 \;\;\;\;\;\;\;\; 0.6293 & 
\;\;0.8233 \;\;\;\;\;\;\;\; 0.5936 \\
(K) HGN &
\;\;0.4469 \;\;\;\;\;\;\;\; 0.2994 & 
\;\;0.7164 \;\;\;\;\;\;\;\; 0.5209 & 
\;\;0.7426 \;\;\;\;\;\;\;\; 0.4871 & 
\;\;0.7584 \;\;\;\;\;\;\;\; 0.5241 \\
\midrule
(L) SSE-SASREC &
\;\;0.4878 \;\;\;\;\;\;\;\; 0.3342 & 
\;\;0.7517 \;\;\;\;\;\;\;\; 0.5535 & 
\;\;0.8697 \;\;\;\;\;\;\;\; 0.6333 & 
\;\;0.8230 \;\;\;\;\;\;\;\; 0.5995 \\
(M) PT &
\;\;0.3954 \;\;\;\;\;\;\;\; 0.2449 & 
\;\;0.6427 \;\;\;\;\;\;\;\; 0.4434 & 
\;\;0.7535 \;\;\;\;\;\;\;\; 0.4853 & 
\;\;0.7658 \;\;\;\;\;\;\;\; 0.5241 \\
(N) SSE-PT &
\;\;\textbf{0.5028} \;\;\;\;\;\;\;\; \textbf{0.3370} & 
\;\;\textbf{0.7757} \;\;\;\;\;\;\;\; \textbf{0.5660} & 
\;\;\textbf{0.8772} \;\;\;\;\;\;\;\; \textbf{0.6378} & 
\;\;0.8341 \;\;\;\;\;\;\;\; 0.6281 \\
(O) SSE-PT++ &
 \; \; -- \;\;\;\;\;\;\;\;\;\;\;\;\;\;\;\;\; -- & 
 \; \; -- \;\;\;\;\;\;\;\;\;\;\;\;\;\;\;\;\; -- & 
 \; \; -- \;\;\;\;\;\;\;\;\;\;\;\;\;\;\;\;\; -- & 
\;\;\textbf{0.8389} \;\;\;\;\;\;\;\; \textbf{0.6292} \\

\bottomrule
\end{tabular}
}

\end{sc}
\end{small}
\end{center}
\vskip 0.1in
\end{table*}

\begin{table}[ht]
\centering
%\vskip -0.05in
\caption{Comparing SASRec, SSE-PT and SSE-PT++ on Movielens1M Dataset while varying the maximum length allowed and dimension of embeddings.}
% Use Sampling probability of 0.3 for SASREC++ and PT++.
%\vskip -0.1in
%MovieLens 1M dataset (It is clear to see that PT is over-fitting badly without SSE).}
\label{tb:dl-ml2}

\begin{center}
\begin{small}
\begin{sc}
\resizebox{0.8\textwidth}{!}{
\begin{tabular}{lcccccc}
\toprule
 Methods & NDCG@$10$ & Recall@$10$ & Max Len & user dim & item dim \\
\midrule
SASREC   &  0.5769  & 0.8045   & 100 & N/A & 100 \\
SASREC   &  0.5936  & 0.8233 &200  & N/A & 50 \\
SASREC   &  0.5919 & 0.8202  &200 & N/A & 100 \\
   \midrule
SSE-PT & 0.6142 & 0.8212 & 100 &  50 & 100  \\
SSE-PT & 0.6191 & 0.8358 &200 & 50 & 50  \\
SSE-PT & 0.6281 & 0.8341 &200 &  50 & 100  \\
  \midrule
SSE-PT++ & 0.6186 & 0.8318 & 100 &  50 & 100  \\
SSE-PT++ & 0.6208 & 0.8358 & 200 &  50 & 50  \\
SSE-PT++ & \bfseries{0.6292} & \bfseries{0.8389} &200 &  50 & 100  \\

%  SASREC++   &  0.5842 &0.8096   & 100 & N/A & 100 \\

\bottomrule
\end{tabular}
}
\end{sc}
\end{small}
\end{center}
\vskip -0.15in
\end{table}

\paragraph{Baselines}
We include 5 non-deep-learning and 6 deep-learning algorithms in our comparisons.

\paragraph{Non-deep-learning Baselines} The simplest baseline is {\it PopRec}, basically ranking items according to their popularity. More advanced methods such as matrix factorization based baselines include Bayesian personalized ranking for implicit feedback \cite{rendle2009bpr}, namely {\it BPR}; Factorized Markov Chains and Personalized Factorized Markov Chains models \cite{rendle2010factorizing} also known as {\it FMC} and {\it PFMC}; and translation based method \cite{he2017translation} called {\it TransRec}.

\paragraph{Deep-learning Baselines}
Recent years have seen many advances in deep learning for sequential recommendations. {\it GRU4Rec} is the first RNN-based method proposed for this problem \cite{hidasi2015session}; {\it GRU4Rec$^+$} \cite{hidasi2018recurrent} later is proposed to address some shortcomings of the initial version. {\it Caser} is the corresponding CNN-based method \cite{tang2018personalized}. {\it STAMP} \cite{liu2018stamp} utilizes the attention mechanism without using RNN or CNN as building blocks. Very recently, {\it SASRec} utilizes state-of-art Transformer encoder \cite{vaswani2017attention} with self-attention mechanisms. Hierarchical gating networks, also known as {\it HGN} \cite{ma2019hierarchical} are also proposed to solve this problem.

\begin{table}[ht]
\centering
%\vskip -0.1in
\caption{Comparing Different Regularizations for SSE-PT on Movielen1M Dataset. NO REG stands for no regularization.
PS stands for parameter sharing across all users while PS(AGE) means PS is used within each age group. 
% BEST is also referred as SSE-PT.
%in other sections. 
SASRec is added to last row after all SSE-PT results as a baseline.}
%\vskip -0.03in
\label{tb:reg}
% \vskip 0.1in
\begin{center}
\begin{small}
\begin{sc}
\resizebox{0.7\textwidth}{!}{
\begin{tabular}{lcccccr}
\toprule
Regularization & NDCG@$5$ & $\%$ GAIN & Recall@$5$ & $\%$ GAIN \\
\midrule
NO REG (BASELINE)  & 0.4855 & - & 0.6500 &  -   \\
PS            & 0.5065 & 4.3 & 0.6656 & 2.4  \\
PS (JOB)       & 0.4938 & 1.7 & 0.6570 & 1.1  \\
PS (GENDER)    & 0.5110 & 5.3 & 0.6672 & 2.6  \\
PS (AGE)       & 0.5133 & 5.7 & 0.6743 & 3.7  \\
$l_2$            & 0.5149 & 6.0 & 0.6786 & 4.4  \\
DROPOUT       & 0.5165 & 6.4 & 0.6823 & 5.0  \\
%dim reduction & 0.5319 & 9.6 & 0.6955 & 7.0  \\
$l_2$  + DROPOUT & 0.5293 & 9.0  & 0.6921 & 6.5  \\
 SSE-SE            & 0.5393 & 11.1 & 0.6977 & 7.3  \\
% BEST
%SSE-PT
$l_2$ + SSE-SE + DROPOUT &  \bfseries{0.5870} & \textbf{20.9} & \textbf{0.7442} & \textbf{14.5}  \\
%& & & & \\
\midrule
SASRec ($l_2$ + DROPOUT)& 0.5601 & & 0.7164 &  \\

\bottomrule
\end{tabular}
}
\end{sc}
\end{small}
\end{center}
% \vskip -0.3in
\end{table}

\paragraph{Experiment Setup}
We use the same datasets as in \cite{kang2018self} and follow the same procedure in the paper: use last items for each user as test data, second-to-last as validation data and the rest as training data.  We implemented our method in Tensorflow and solve it with Adam Optimizer \cite{kingma2014adam} with a learning rate of $0.001$, momentum exponential decay rates $\beta_1 = 0.9, \beta_2 = 0.98$ and a batch size of $128$. In Table~\ref{tb:dl-combined}, since we use the same data, the performance of previous methods except STAMP have been reported in \cite{kang2018self}. We tune the dropout rate, and SSE probabilities $p_u, p_i, p_y$ for input user/item embeddings and output embeddings on validation sets 
% {\color{red}(Cho: I think you use ``dev set'' in the previous section. Maybe use the same notation (validation set) through the ppaer? )}
and report the best NDCG and Recall for top-$K$ recommendations on test sets. 
% As mentioned before, we sampled $C=100$ negative items to speed up the evaluation. 
For a fair comparison,
we restrict all algorithms to use up to 50 hidden units for item embeddings.
For the SSE-PT and SASRec models, we use the same number of transformer encoder blocks (i.e. $B = 2$) and set the maximum length $T = 200$ for Movielens 1M and 10M dataset and $T = 50$ for other datasets.
We use top-$K$ with $K = 10$ and the number of negatives $C = 100$ in the evaluation procedure. In practice, using a different $K$ and $C$ does not affect our conclusions.

\paragraph{Comparisons} One can easily see from Table~\ref{tb:dl-combined} that our proposed SSE-PT has the best performance over all previous methods on all four datasets. On most datasets, our SSE-PT improves NDCG by more than 4\% when compared with SASRec \cite{kang2018self} and more than 20\% when compared to non-deep-learning methods.
SSE-SE, together with dropout and weight decay, is the best choice for regularization, which is evident from Table~\ref{tb:reg}. SSE-SE is a more effective way to regularize our neural networks than any existent techniques including parameter sharing, dropout, weight decay. In practice, these SSE probabilities, just like dropout rate, can be treated as tuning parameters and easily tuned. Movielens10M results are left to Table~\ref{tb:ml10m} in the Appendix.

\begin{figure*}[ht]
\begin{center}
\centerline{\includegraphics[width=1\columnwidth]{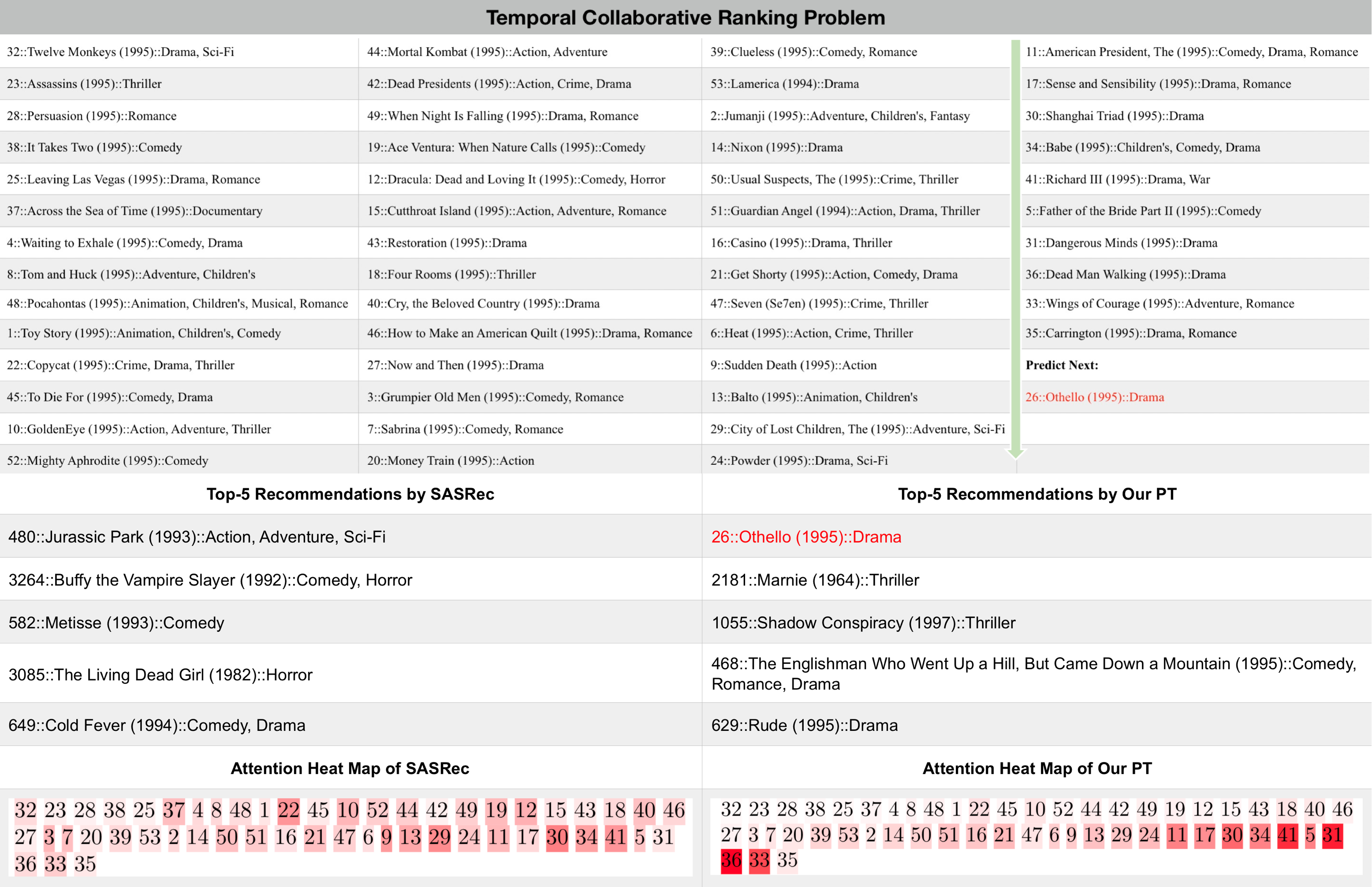}}
\end{center}
%\vskip -0.15in
\caption{Illustration of how SASRec (Left) and SSE-PT (Right) differs on utilizing the Engagement History of A Random User in Movielens1M Dataset.}
\label{fig:example}
%\vskip -0.1in
\end{figure*}

\subsection{Attention Mechanism Visualization}
 Apart from evaluating our SSE-PT against SASRec using well-defined ranking metrics on real-world datasets, we also visualize the differences between both methods in terms of their attention mechanisms.
% we use 2 other ways to visualize the comparisons. 
% The first way is to visualize the attention maps of both methods and compare them.  Note that the attention map is a lower triangular matrix as we only allow attention at present is paid to the past, but not to the future. The attention maps for the first layer in Figure~\ref{fig:attention} show that our SSE-PT paid more attention to recent items in a long sequence than SASRec. This is evident by comparing the attention intensity level of the two plots (right bottom).
% We find the property that the PT paying more attention to recent items in a long sequence allows our PT model to achieve more personalized recommendations on real datasets.
% We examine some random users' engagement histories to see the top-$K$ recommendations the two models give.
In Figure~\ref{fig:example}, a random user's engagement history in Movielens1M dataset is given in temporal order (column-wise). We hide the last item whose index is 26 in test set and hope that a temporal collaborative ranking model can figure out item-26 is the one this user will watch next using only previous engagement history. One can see for a typical user; they tend to look at a different style of movies at different times. Earlier on, they watched a variety of movies, including Sci-Fi, animation, thriller, romance, horror, action, comedy and adventure. But later on, in the last two columns of Figure~\ref{fig:example}, drama and thriller are the two types they like to watch most, especially the drama type. In fact, they watched 9 drama movies out of recent 10 movies. For humans, it is natural to reason that the hidden movie should probably also be drama type. So what about the machine's reasoning?

For our SSE-PT, the hidden item indexed 26 is put in the first place among its top-5 recommendations. Intelligently, the SSE-PT recommends 3 drama movies, 2 thriller movies and mixing them up in positions.  Interestingly, the top recommendation is `Othello', which like the recently watched `Richard III', is an adaptation of a Shakespeare play, and this dependence is reflected in the attention weight.  On the contrast, SASRec cannot provide top-5 recommendations that are personalized enough. It recommends a variety of action, Sci-Fi, comedy, horror, and drama movies but none of them match item-26. Although this user has watched all these types of movies in the past, they do not watch these anymore as one can easily tell from his recent history. Unfortunately, SASRec cannot capture this and does not provide personalized recommendations for this user by focusing more on drama and thriller movies. 
% What we see from this particular example is consistent with the previous findings from examining the attention maps. Attention heat maps for both models during inference are included in Figure~\ref{fig:example}. 
It is easy to see that in contrast, our SSE-PT model shares with human reasoning that more emphasis should be placed on recent movies. 
% {\color{red}(Cho: this sentence is not clear)} 

% Top-5 Recommendations by SASRec:
% \begin{compactitem}
%     \item 480::Jurassic Park (1993)::Action, Adventure, Sci-Fi
%     \item 3264::Buffy the Vampire Slayer (1992)::Comedy, Horror
%     \item 582::Metisse (1993)::Comedy
%     \item 3085::The Living Dead Girl (1982)::Horror
%     \item 649::Cold Fever (1994)::Comedy, Drama
% \end{compactitem}

% Top-5 Recommendations by Our PT:
% \begin{compactitem}
%     \item 26::Othello (1995)::Drama
%     \item 2181::Marnie (1964)::Thriller
%     \item 1055::Shadow Conspiracy (1997)::Thriller
%     \item 468::The Englishman Who Went Up a Hill, But Came Down a Mountain (1995)::Comedy, Romance, Drama
%     \item 629::Rude (1995)::Drama
% \end{compactitem}

% \begin{figure}[ht]
% \vskip -0.1in
% \begin{center}
% \centerline{\includegraphics[width=\columnwidth]{}}  
% \end{center}
% \vskip -0.2in
% \caption{TODO}
% \label{fig:top_prediction}
% %\vskip -0.2in
% \vspace{-10pt}\end{figure}

\begin{figure*}[ht]
% \begin{wrapfigure}{r}{0.4\textwidth}
  \begin{center}
%   \vspace{-20pt}  
  \includegraphics[width=0.8\textwidth]{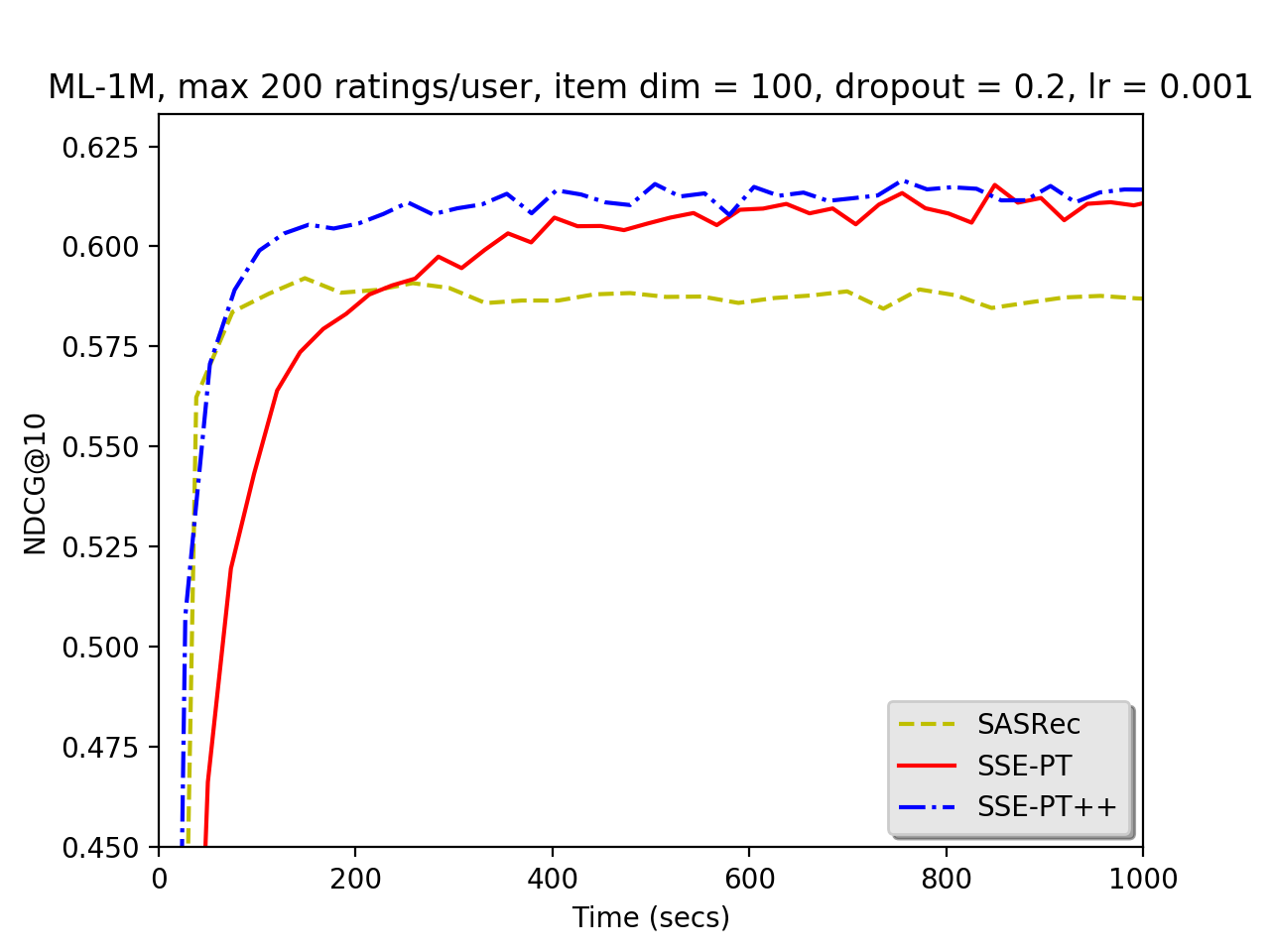}
  \end{center}
%   \vspace{-20pt}
  \caption{Illustration of the speed of SSE-PT}
  \label{fig:speed6}
% \end{wrapfigure}
\end{figure*}

\subsection{Training Speed}
In \cite{kang2018self}, it has been shown that SASRec is about 11 times faster than Caser and 17 times faster than GRU4Rec$^+$ and achieves much better NDCG@10 results so we did not include Caser and GRU4Rec$^+$ in our comparisons. In Figure~\ref{fig:speed6}, we only compare the training speeds and ranking performances among SASRec, SSE-PT and SSE-PT++ for Movielens1M dataset.
Given that we added additional user embeddings into our SSE-PT model, it is expected that it will take slightly longer to train our model than un-personalized SASRec.
% We find empirically that training speed of the PT model is almost comparable to that of SASRec. 
% Moreover, the training speed of our PT++ model is even faster than that of SASRec because PT++ uses a reduced maximum length $T$: {\color{red}(Cho: but I think from the figure PT++ has the same speed with SASRec..)} 
% In Figure~\ref{fig:speed6}, max length $T=100$ is used for SSE-PT++, and $T=200$ is used for SSE-PT and SASRec.
We find empirically that training speed of the SSE-PT and SSE-PT++ model are comparable to that of SASRec, with SSE-PT++ being the fastest and the best performing model. 
It is clear that our SSE-PT and SSE-PT++ achieve much better ranking performances than our baseline SASRec using the same training time.
% in Figure~\ref{fig:speed6}. 

\subsection{Ablation Study}

\paragraph{SSE probability} Given the importance of SSE regularization for our SSE-PT model, we carefully examined the SSE probability for input user embedding in Table~\ref{tb:ssep} in Appendix. We find that the appropriate hyper-parameter SSE probability is not very sensitive: anywhere between 0.4 and 1.0 gives good results, better than parameter sharing and not using SSE-SE. This is also evident based on comparison results in Table~\ref{tb:reg}.

\paragraph{Sampling Probability} Recall that the sampling probability is unique to our SSE-PT++ model. We show in Table~\ref{tb:sampling} in Appendix using an appropriate sampling probability like $0.2\to 0.3$ would allow it to outperform SSE-PT when the same maximum length is used.

\paragraph{Number of Attention Blocks} We find for our SSE-PT model, a larger number of attention blocks is preferred. One can easily see in Table~\ref{tb:block} in Appendix, the optimal ranking performances are achieved at $B = 4$ or $5$ for Movielens1M dataset and at $B = 6$ for Movielens10M dataset. 

\paragraph{Personalization and Number of Negatives Sampled}
% We want to make sure the number of negatives sampled during evaluation or difference in the usage of regularization techniques does not affect our final conclusion. So 
% we add another set of experiments to remove personalization for our SSE-PT model while keeping all the regularization techniques we used. 
Based on the results in Table~\ref{tb:negative} in Appendix, we are positive that the personalized model always outperforms the un-personalized one when we use the same regularization techniques. This holds true regardless of how many negatives sampled or what ranking metrics are used during evaluation.
% {\color{red}(during each training step?)}.
% \subsection{Comparisons among Different Regularization Methods}

\section{Conclusion}
In this chapter, we propose a novel neural network architecture called Personalized Transformer for the temporal collaborative ranking problem. It enjoys the benefits of being a personalized model, therefore achieving better ranking results for individual users than the current state-of-the-art. By examining the attention mechanisms during inference, the model is also more interpretable and tends to pay more attention to recent items in long sequences than un-personalized deep learning models.

\chapter{Conclusions}
\section{Summary of Contributions}
The goal of this thesis is to cover some recent advances in collaborative filtering and ranking research and demonstrate that there are various types of orthogonality in which one can contribute to the field.
During my PhD study, I was fortunate to explore many directions within collaborative filtering and ranking research. For the first 2 years of my research, I conducted foundational collaborative ranking research on improving the optimization procedure of the pairwise ranking loss in chapter 3 and designing new listwise loss objective functions in chapter 4 for collaborative ranking. For my last 2 years of PhD, I was exploring directions on incorporating additional side information such as graphs and temporal orderings. In chapter 5, we came up with a new graph encoding method in chapter 2 to enhance existing graph-based collaborative filtering, allowing them to encode deep graph information and therefore achieve better recommendation performances. We made the temporal collaborative ranking model personalized in chapter 7 by incorporating user embeddings. In the process, motivated by the need to prevent over-fitting caused by the additional parameters, we introduced a general regularization technique for embedding layers in deep learning in chapter 6, which was shown to be useful for many other models with lots of embedding parameters both within and outside recommendations. 

\section{Future Work}
Despite that the dissertation is lengthy and has contained many important research directions, it is by no means complete and I feel there are still many interesting and important directions worth pursuing but not done within this dissertation. I imagine at the end of day, assuming the computing power catches up, it is very likely we can discard most of the feature engineering and directly learn from raw data formats such as texts, images and videos, just like what happened in computer vision and natural language processing fields with the advances of deep learning techniques. This has not yet happened in neither academics or industry. To make this happen, I believe that there are a few shortcomings in current research practices:
\begin{itemize}
    \item There do not exist unified metrics to evaluate across papers. Some papers use root mean square errors (RMSE) as accuracy metric while others use ranking metrics. Then even within the ranking metrics, people tend to use different metrics: from AUC score to NDCG, precision and recall, not to mention that different top $k$ can be used for NDCG, precision and recall. 
    \item There do not exist unified datasets to test on across papers. There are many datasets outside there, from Netflix to Movielens, and Douban to Flixster, etc.. Different papers tend to use different training-test splits: some do random splits while others do splits based on temporal orderings. Moreover, the recommender systems community does not really have a shared and well-maintained leader-board on the winning solutions. The results of various proposed algorithms are very likely to perform differently on different datasets.
    \item Most datasets do not come with good features nor complete raw input data. To some extent, it is understandable because of the strict privacy and copyright concerns. But this has put academic researches at a disadvantage and often industry applied research will not take as much risk to pursue long-term projects. On the other hand, academics are able to take more risks to pursue longer-term research topics but unfortunately are limited by the constraints of good datasets and powerful computing resources.
\end{itemize}
While addressing these shortcomings, I think it would be very exciting to see different levels of interactions between the recommendation field and other AI fields, including natural language understanding and computer vision. News/Book recommendation, image recommendation, video recommendation and audio recommendation are some promising examples that may see breakthroughs of new models, just as what happened during Netflix competition about 10 years ago \cite{bennett2007netflix}. But to do that, we need a well-defined problem, dataset and metric, and lots of people participating both from academics and industry by combining strengths from both parties. 
%I sincerely wish this could happen again to the recommendation community.

% chapter 8
\chapter{Appendix}
\section{Appendix to Chapter 2}

\begin{algorithm}[H]
\caption{A Standard Bloom Filter}
\label{alg:bloom-filter}
  \begin{compactitem}[leftmargin=*]
  %\item[] $\bb \in \cbr{0,1}^c$, $c$, $k$: the number of hash functions
  \item[] {\bf class} {\tt BloomFilter:}
    \begin{compactitem}[leftmargin=*]
      \item[] {\bf def} $\mathtt{constructor}(\texttt{self}, c, \cbr{\mathtt{h_t}(\cdot): t = 1,\ldots,k})$:
        \begin{compactitem}[leftmargin=*]
          \item[] $\mathtt{self.b}[i] = 0\quad \forall i=1,\ldots,c$
          \item[] $\mathtt{self.h_{t} = h_{t}}\quad \forall i = 1,\ldots,k$
        \end{compactitem}
      \item[]
      \item[] {\bf def} $\mathtt{add(self, x)}$:
        \begin{compactitem}[leftmargin=*]
          \item[] $\mathtt{self.b}[\mathtt{self.h_t(x)}] = 1\quad\forall t = 1,\ldots,k$
        \end{compactitem}
      \item[]
      \item[] {\bf def} $\mathtt{union(self, bf)}$:
        \begin{compactitem}[leftmargin=*]
          \item[] $\mathtt{self.b}[i] \leftarrow \mathtt{self.b}[i] \mid \mathtt{bf}.\mathtt{b}[i]\quad \forall i=1,\ldots,c$
        \end{compactitem}
      \item[]
      \item[] {\bf def} $\mathtt{size(self)}$:
        \begin{compactitem}[leftmargin=*]
        \item[] {\bf return} $\ceil{-\frac{c}{k} \log\rbr{1 - \frac{\mathtt{nnz(self.b)}}{c}}}$
        \end{compactitem}
      \begin{comment}
      \item[] {\bf def} $\mathtt{contains(self)}$:
        \begin{compactitem}[leftmargin=*]
        \item[] {\bf return} $\mathtt{all}(\mathtt{self.b[self.h_t(x)]} \neq 0 \quad t = 1,\ldots,k)$
        \end{compactitem}
      %
      \end{comment}
    \end{compactitem}
  \end{compactitem}
\end{algorithm}

\begin{figure*}
\begin{tabular}{cc}
\hspace{-8pt}
\includegraphics[width=0.5\linewidth]{figs/speed_douban_v1.png} &
%\hspace{-8pt}
\includegraphics[width=0.5\linewidth]{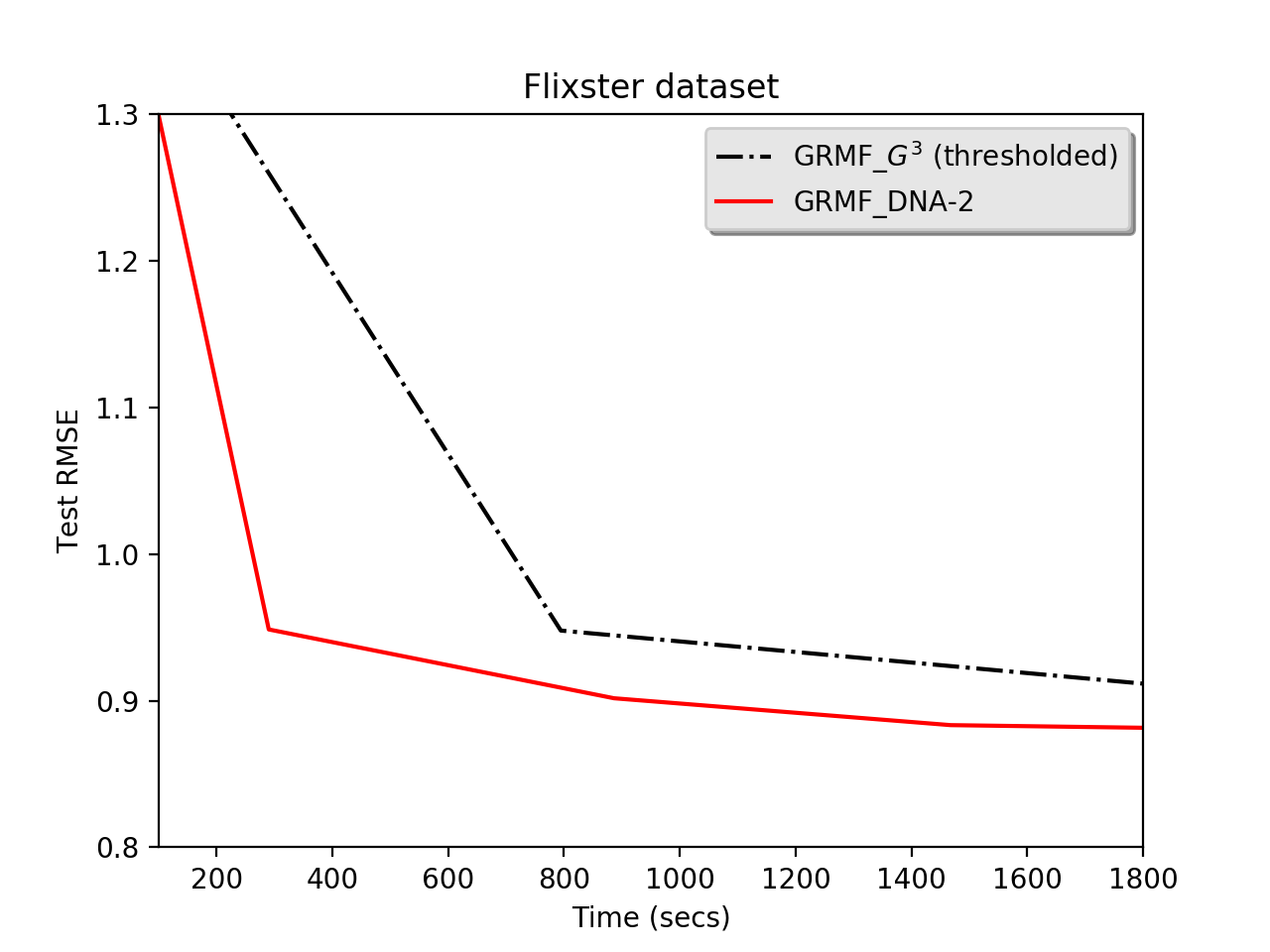} 
\end{tabular}
\caption{Compare Training Speed of GRMF, with and without Graph DNA. }
\label{fig:speed2}
% \vspace{-10pt}
\end{figure*}

\subsection{Simulation Study}
In the simulation we carried out, we set the number of users $n = 10,000$ and the number of items $m = 2,000$. We uniformly sample $5\%$ for training and $2\%$ for testing out of the total $nm$ ratings. We choose $T = 3$ so the graph contains at most $6$-hop information among $n$ users. We use rank $r = 50$ for both user and item embeddings. We set influence weight $w = 0.6$, i.e. in each propagation step, $60\%$ of one user's preference is decided by its friends (i.e. neighbors in the friendship graph). We set $p = 0.001$, which is the probability for each of the possible edges being chosen in Erd$\tilde{o}$s-R$\acute{e}$nyi graph $G$. A small edge probability $p$, influence weight $w < 1.0$, and a not too-large $T$ is needed, because we don't want that all users become more or less the same after $T$ propagation steps.

\subsection{Metrics}\label{sec:metric}

We omit the definitions of RMSE, Precision@$k$, NDCG@$k$, MAP as those can be easily found online.
HLU: Half-Life Utility \cite{breese1998empirical, shani2008mining} is defined as:
        \begin{equation}
            \text{HLU} = \frac{1}{n}\sum_{i = 1}^n \text{HLU}_i,
        \end{equation} where $n$ is the number of users and $\text{HLU}_i$ is given by:
        \begin{equation}
            \text{HLU}_i = \sum_{l = 1}^{k} \frac{ max(R_{i\Pi_{il}} - d, 0)}{2^{(j - 1) / (\alpha - 1)}},
        \end{equation}
        where $R_{i\Pi_{il}}$ follows previous definition, $d$ is the neural vote (usually the rating average), and $\alpha$ is the viewing halflife. The halflife is the number of the item on the list such that there is a $50$-$50$ chance the user will review that item~\cite{breese1998empirical}.

\begin{algorithm}
\caption{Simulation of Synthetic Data}
\label{alg:sim}
\begin{algorithmic}[1]
\Require $n$ users, $m$ items, rank $r$, influence weight $w$, $T$ propagation steps
\Ensure $R_{\text{tr}} \in \dR^{n \times m}$, $R_{\text{te}} \in\dR^{n \times m}$, $G \in\dR^{n \times n}$
\State Randomly initialize $U \in \dR^{n \times r}, V \in \dR^{m \times r}$ from standard normal distribution
\State Generate a random undirected Erd$\tilde{o}$s-R$\acute{e}$nyi graph $G$ with each edge being chosen with probability $p$
\For{$t = 1, ..., T$}
  \For{$i = 1, ..., n$}
      \State $\tilde{U}_i = w \cdot \sum_{j: (i, j) \in G}U_j + (1 - w) \cdot U_i$
  \EndFor
  \State Set $U = \tilde{U}$
\EndFor
\State Generate rating matrix $R = U V^T$
\State Random sample observed user/item indices in training and test data: $\Omega_{\text{tr}}, \Omega_{\text{te}}$
\State Obtain $R_{\text{tr}} = \Omega_{\text{tr}} \circ R, R_{\text{te}} = \Omega_{\text{te}} \circ R$
\State \textbf{return} rating matrices $R_{\text{tr}}, R_{\text{te}}$, user graph $G$
\end{algorithmic}
\end{algorithm}

\begin{table*}
  \caption{Compare Bloom filters of different depths and sizes an on Synthesis Dataset. Note that the number of bits of Bloom filter is decided by Bloom filter's maximum capacity and tolerable error rate (i.e. false positive error, we use $0.2$ as default).}
%   \vskip -0.1in
  \label{tab:sim_bits}
  \resizebox{0.8\textwidth}{!}{
  \begin{tabular}{lcccccccc}
    \toprule
    methods & max capacity & $c$ bits & nnz ratio & RMSE ($ \times 10^{-3}$) & \% Relative Graph Gain  \\
    \midrule
    GRMF\_$G^2$ & -  & - & -   & 2.6543  & 59.5903\\
    GRMF\_DNA-1 & 20 & 135 & 0.217  & 2.4303  & 163.8734 \\
    GRMF\_DNA-1 & 50 & 336 & 0.093  &  2.4795 &  140.9683 \\
    GRMF\_DNA-2 & 20 & 135 & 0.880 & 2.4921  & 135.1024 \\
    GRMF\_DNA-2 & 50 & 336  & 0.608  & 2.4937  & 134.3575 \\
    GRMF\_DNA-2 & 100 & 672  & 0.381  & 2.4510  & 154.2365 \\
    GRMF\_DNA-2 & 200 & 1,341 & 0.215  & 2.4541  & 152.7933  \\
    GRMF\_DNA-3 & 200 & 1,341 & 0.874  & 2.4667  & 146.9274 \\
    GRMF\_DNA-3 & 600 & 4,020 & 0.525  & 2.4572  & 151.3500 \\
    GRMF\_DNA-3 & 1,000 & 6,702 & 0.364  & 2.4392  & 159.7299 \\
    GRMF\_DNA-3 & \bfseries{1,500} & \bfseries{10,050}& \bfseries{0.262}  & \bfseries{2.4247}  & \bfseries{166.4804} \\
    GRMF\_DNA-4 & 2,000 & 13,401& 0.743 & 2.5532 &  106.6573 \\
    GRMF\_DNA-4 & 4,000 & 26,799& 0.499  & 2.4466 & 156.2849 \\
  \bottomrule
\end{tabular}
}
\end{table*}

\begin{table*}
    \caption{Compare nnz of different methods on Douban and Flixster datasets. GRMF\_$G^4$ and GRMF\_DNA-2 are using the same $4$-hop information in the graph but in different ways. Note that we do not exclude potential overlapping among columns.}
    \vskip -0.1in
    \label{tab:nnz}
     \resizebox{1.0\textwidth}{!}{
    \begin{tabular}{lllccccccr}
    \toprule
    Dataset & methods & $R_{\text{tr}}$ & $G$ & $G^2$ & $G^3$ & $G^4$ & $B$ & total nnz \\
    \midrule
    \multirow{8}{*}{Douban}
    &MF   & 9,803,098 &  -  & -   & -     & - & -    & 9,803,098  \\
    &GRMF\_$G$   & 9,803,098 & 1,711,780 & -   & -  & -   & -     & 11,514,878  \\
    &GRMF\_$G^2$  & 9,803,098 & 1,711,780 & 106,767,776   & -  & -   & -     & 118,282,654  \\
    &GRMF\_$G^3$ & 9,803,098 & 1,711,780 & 106,767,776   & 2,313,572,544     & -   & -  & 2,431,855,198  \\
    &GRMF\_$G^4$ & \bfseries{9,803,098} & \bfseries{1,711,780} & \bfseries{106,767,776}   & \bfseries{2,313,572,544}     & \bfseries{8,720,553,105}    & - &  \bfseries{11,152,408,303} \\
    &GRMF\_DNA-1  & 9,803,098 &  0  & -   & -   & -  &    8,834,740  &  18,637,838 \\
    &GRMF\_DNA-2  & 9,803,098 &  1,711,780  & -   & -   & -  &   142,897,900   &  154,412,778 \\
    &GRMF\_DNA-3  & 9,803,098 &  1,711,780  & -   & -  & -   &   928,159,604   &  939,674,482 \\
    \midrule
    \multirow{8}{*}{Flixster}
    &MF   & 3,619,304 &  -  & -   & -     & -  & -   &  3,619,304 \\
    &GRMF\_$G$   & 3,619,304 & 2,538,746 & -   & -     & -  & -   &  6,158,050 \\
    &GRMF\_$G^2$  & 3,619,304 &2,538,746 & 130,303,379   & -     & -  & -   & 136,461,429  \\
    &GRMF\_$G^3$ & 3,619,304 & 2,538,746 &130,303,379   & 2,793,542,551     & -  & -   & 3,060,307,359   \\
    &GRMF\_$G^4$ & \bfseries{3,619,304} & \bfseries{2,538,746} & \bfseries{130,303,379}   & \bfseries{2,793,542,551}      & \bfseries{12,691,844,513}   & - &  \bfseries{15,752,151,872} \\
    &GRMF\_DNA-1  & 3,619,304 &  0  & -   & -   & -  &  12,664,952    &  16,284,256 \\
    &GRMF\_DNA-2  & 3,619,304 &  2,538,746  & -   & -   & -  &   181,892,883   &  188,050,933 \\
    &GRMF\_DNA-3   & 3,619,304 &  2,538,746  & -   & -  & -   &   1,185,535,529   & 1,191,693,579  \\
    \bottomrule
\end{tabular}
}
% \vskip -0.15in
\end{table*}

\subsection{Graph Regularized Weighted Matrix Factorization for Implicit feedback}
%\subsection{Implicit feedback}
% \label{sec:exp-implicit}

We use the rank $r = 10$, negatives' weight $\rho = 0.01$ and measure the prediction performance with metrics MAP, HLU, Precision@$k$ and NDCG$@k$ (see definitions of metrics in Appendix~\ref{sec:metric}).

We follow the similar procedure to what is done before in GRMF and co-factor: we run all combinations of tuning parameters of $\lambda_l \in \{0.01, 0.1, 1, 10, 100\}$ and $\lambda_g \in \{0.01, 0.1, 1, 10, 100\}$ for each method on validation data for fixed number $40$ epochs and choose the best combination as the parameters to use on test data. We then report the best prediction results during first $40$ epochs on test data with the chosen parameter combination.

%\subsection{Reproducibility}
\subsection{Reproducibility}
\label{sec:reproduce}
To reproduce results reported in the paper, one need to download data (douban and flixster) and third-party C++ Matrix Factorization library from the link \url{https://www.csie.ntu.edu.tw/~cjlin/papers/ocmf-side/}. One can simply follow README there to compile the codes in Matlab and run one-class matrix factorization library in different modes (both explicit feedback and implicit feedback works). The advantage of using this library is that the codes support multi-threading and runs quite fast with very efficient memory space allocations. It also supports with graph or other side information. All three methods' baseline can be simply run with the tuning parameters we reported in the Table~\ref{tab:repro-2}, ~\ref{tab:repro-3}, ~\ref{tab:repro-4} in Appendix.

To reproduce results of our DNA methods, one need to generate Bloom filter matrix $B$ following Algorithm~\ref{alg:graph-bloom}. %We provide code snippets for this in Section~\ref{sec:code}.
We will provide our python codes implementing Algorithm~\ref{alg:graph-bloom} and Matlab codes converting into the formats the library requires.
%With those, one can easily repeat our experiments on real datasets such as douban and flixster using our parameter sweeping scheme mentioned in Section-\ref{sec:exp}: try every possible combinations for $\lambda_l \in \{0.01, 0.1, 1, 10, 100\}$ and $\lambda_g \in \{0.01, 0.1, 1, 10, 100\}$, $\alpha \in \{0.0001, 0.001, 0.01, 0.1, 0.3, 0.7, 1\}$, for $\beta \in \{0.005, 0.01, 0.03, 0.05, 0.1 \}$. 

For baselines and our DNA methods, We perform a parameter sweep for $\lambda_l \in \{0.01, 0.1, 1, 10, 100\}$, $\lambda_g \in \{0.01, 0.1, 1, 10, 100\}$, $\alpha \in \{0.0001, 0.001, 0.01, 0.1, 0.3, 0.7, 1\}$, for $\beta \in \{0.005, 0.01, 0.03, 0.05, 0.1 \}$ when needed. We run all combinations of tuning parameters for each method on validation set for $40$ epochs and choose the best combination as the parameters to use on test data. We then report the best test RMSE in first $40$ epochs on test data with the chosen parameter combination. 
We provide all the chosen combinations of tuning parameters that achieves reported optimal results in results tables in the Table~\ref{tab:repro-2}, ~\ref{tab:repro-3}, ~\ref{tab:repro-4} in Appendix. %, and Table~\ref{tab:implicit_res} in Section~\ref{sec:exp-implicit}.
One just need to exactly follow our procedures in Section~\ref{sec:app} to construct new $\dot{G}, \dot{U}$ to replace the $G, U$ in baseline methods before feeding into Matlab.

As to simulation study, we will also provide python codes to repeat our Algorithm~\ref{alg:sim} to generate synthesis dataset. One can easily simulate the data before converting into Matlab data format and running the codes as before. The optimal parameters can be found in Table~\ref{tab:repro-1}. For all the methods, we select the best parameters $\lambda_l$ and $\lambda_g$ from $\{0.01, 0.1, 1, 10, 100\}$. For method GRMF\_$G^2$, we  tune an additional parameter $\alpha\in \{0.0001, 0.001, 0.01, 0.1, 0.3, 0.7, 1\}$.
For the thrid-order method %$l_2 + G + \alpha \cdot G^2 + \beta \cdot G^3$
GRMF\_$G^3$, we tune $\beta\in \{0.005, 0.01, 0.03, 0.05, 0.1 \}$ in addition to $\lambda_l, \lambda_G, \alpha$.
Due to the speed constraint, we are not able to tune a broader range of choices for $\alpha$ and $\beta$ as it is too time-consuming to do so especially for douban and flixster datasets. For example, it takes takes about 3 weeks using 16-cores CPU to tune both $\alpha, \beta$ on flixster dataset. We run each method with every possible parameter combination for fixed $80$ epochs on the same training data, tune the best parameter combination based on a small predefined validation data and report the best RMSE results on test data with the best tuning parameters during the first $80$ epochs. Note that only on the small synthesis dataset, we calculate full $G^3$ and report the results. On real datasets, there is no way to calculate full $G^4$ to utilize the complete $4$-hop information, because one can easily spot in Table~\ref{tab:nnz} the number of non-zero elements (nnz) is growing exponentially when the hop increases by $1$, which makes it impossible for one to utilize complete $3$-hop and $4$-hop information.

In Table~\ref{tab:repro-2}, one can compare magnitude of optimal $\alpha$ and $\beta$ to have a good idea of whether $G$ or $G^2$ is more useful. $G$ represents shallow graph information and $G^2$ represents deep graph information. If one already run GRMF\_$G^2$, one can then use this as a preliminary test to decide whether to go deep with
DNA-3 ($d = 3$) to capture deep graph information or simply go ahead with DNA-1 ($d = 1$) to fully utilize shallow information. %{\color{red}(Cho:not sure this is good. Ideally we just want to run DNA, not $G, G^2$)} 
For douban dataset, we have $\alpha = 0.05 >  0.0005 = \beta$, which implies shallow information is important and we should fully utilize it. It explains why DNA-1 is performing well both in terms of performance and speed on douban dataset. It is worth noting that %Bloom filter of depth $1$
GRMF\_DNA-1's Bloom filter matrix $B$ contains much more nnz than that of $G$ in Table~\ref{tab:nnz} though $20\%$ less than that of $G^2$. On the other hand, for flixster dataset, we have $\alpha = 0.01 <  0.1 = \beta$, which implies in this dataset deeper information is more important and we should go deeper. That explains why here %Bloom filter of depth $3$
GRMF\_DNA-3 ($6$-hop) achieves about $10$ times more gain than using $1$-hop %$G$
GRMF\_$G$.  %{\color{red}(Cho: maybe remove this paragraph? Since we don't have $\alpha, \beta$ in main text now.)}

\begin{table*}
  \caption{Compare Matrix Factorization for Explicit Feedback on Synthesis Dataset. The synthesis dataset has $10,000$ users and $2,000$ items with user friendship graph of size $10,000 \times 10,000$. Note that the graph only contains at most $6$-hop valid information. GRMF\_$G^6$ means GRMF with $G + \alpha \cdot G^2 + \beta \cdot G^3 + \gamma \cdot G^4 + \epsilon \cdot G^5 + \omega \cdot G^6.$ GRMF\_DNA-$d$ means depth $d$ is used.}
  \label{tab:repro-1}
  \resizebox{0.8\textwidth}{!}{
  \begin{tabular}{lccccccccc}
    \toprule
    methods & test RMSE ($ \times 10^{-3}$) & $\lambda_l$ & $\lambda_g $ & $\alpha$ & $\beta$ & $\gamma$ & $\epsilon$ &$\omega$ &\% gain over baseline \\
    \midrule
    MF                  & 2.9971 & 0.01  & -    & -     & - & - & -  & - & -  \\
    GRMF\_$G$               & 2.7823 & 0.01  & 0.01 & -  & -  & - & -  & - & 7.16693\\
    GRMF\_$G^2$  & 2.6543 & 0.01  & 0.01 & 0.3   & - & - & - &-  & 11.43772\\
    GRMF\_$G^3$ & 2.5687 & 0.01 & 0.01 & 0.01 & 0.05 & - &- &  -  &14.29382\\
    GRMF\_$G^4$ & 2.5562 & 0.01 & 0.01 & 0.01 & 0.05 & 0.1& - &-  & 14.71088\\
    GRMF\_$G^5$  & 2.4853 & 0.01 & 0.01 & 0.01 & 0.05 & 0.1 & 0.1&-  & 17.07651\\
    GRMF\_$G^6$  & 2.4852 & 0.01 & 0.01 & 0.01 & 0.05 & 0.1 & 0.1& 0.01 &17.07984\\
    GRMF\_DNA-1   & 2.4303 & 0.01 & 0.01  & - & - & - & - & -  &18.91161 \\
    GRMF\_DNA-2  & 2.4510 & 0.01 & 0.01  & - & - & - & - & -  &18.22095 \\
    GRMF\_DNA-3  & \bfseries{2.4247} & \bfseries{0.01} & \bfseries{0.01}  & - & - & - &- & -  &\bfseries{19.09846}\\
    GRMF\_DNA-4  & 2.4466 & 0.01 & 0.01  & - & - & - & - & -  &18.36776\\

  \bottomrule
\end{tabular}
}
\end{table*}

\begin{table*}
  \caption{Compare Matrix Factorization methods for Explicit Feedback on Douban and Flixster data. We use rank $r = 10$.} % for both methods.}
  \label{tab:repro-2}
  \resizebox{0.8\textwidth}{!}{
  \begin{tabular}{lllcccccc}
    \toprule
    Dataset & methods & test RMSE ($ \times 10^{-1}$) & $\lambda_l$ & $\lambda_g $ & $\alpha$ & $\beta$ & \% gain over baseline \\
    \midrule
    \multirow{8}{*}{Douban}
    &MF              & 7.3107 & 1   & -   & -     & -  & -     \\
    &GRMF\_$G$                 & 7.2398 & 0.1 & 100 & -     & -  & 0.9698\\
    &GRMF\_$G^2$ & 7.2381 & 0.1 & 100 & 0.001 & -  & 0.9930\\
    &GRMF\_$G^3$ (full)  & 7.2432  & 0.1 & 100 & 0.05 & 0.0005 & 0.9350\\
    &GRMF\_$G^3$ (thresholded) & 7.2382 & 0.1 & 100 & 0.05 & 0.0005 & 0.9917\\
    &GRMF\_DNA-1    & 7.2191 & 0.1 & 100 & - & - & 1.2689 \\
    &GRMF\_DNA-2   & 7.2359 & 1 & 10 & - & - & 1.0232 \\
    &GRMF\_DNA-3  & \bfseries{7.2095} & \bfseries{0.01} & \bfseries{100} & - & - & \bfseries{1.3843}\\
    \midrule
    \multirow{8}{*}{Flixster}
    &MF                & 8.8111 & 0.1 & 1   & -    & -     & -      \\
    &GRMF\_$G$               & 8.8049 & 0.01 & 1    & -     & -  & 0.0704\\
    &GRMF\_$G^2$ & 8.7849 & 0.01 & 1  & 0.05 & -  & 0.2974\\
    &GRMF\_$G^3$  (full) & 8.7932 & 0.1 & 1 & 0.01 & 0.1 & 0.2032\\
    &GRMF\_$G^3$  (thresholded) & 8.7920 & 0.01 & 1 & 0.01 & 0.1 & 0.2168\\
    &GRMF\_DNA-1    & 8.8013  & 0.01 & 1   & - & - & 0.1112  \\
    &GRMF\_DNA-2   & 8.8007 & 0.1 & 1  & - & - & 0.1180 \\
    &GRMF\_DNA-3 & \bfseries{8.7453} & \bfseries{0.1} & \bfseries{100} & - & - & \bfseries{0.7468}\\
  \bottomrule
\end{tabular}
}
\end{table*}

\begin{table*}
  \caption{Compare Co-factor Methods for Explicit Feedback on Douban and Flixster Datasets. We use rank $r = 10$ for both methods.}
 \label{tab:repro-3}
  \resizebox{0.6\textwidth}{!}{
  \begin{tabular}{llcccc}
    \toprule
    Dataset & methods & test RMSE ($ \times 10^{-1}$) & $\lambda_l$  & \% gain over baseline  \\
    \midrule
    \multirow{2}{*}{Douban}
    & co-factor\_$G$                     & 7.2743 & 1   & -     \\
    & co-factor\_DNA-$3$ & \bfseries{7.2674} & \bfseries{1}   & \bfseries{0.5923}\\
    \midrule
    \multirow{2}{*}{Flixster}
    & co-factor\_$G$                  & 8.7957 & 0.01   & -     \\
    & co-factor\_DNA-$3$ & \bfseries{8.7354} & \bfseries{0.01}   & \bfseries{0.8591}\\
  \bottomrule
\end{tabular}
}
\end{table*}

\begin{table*}
  \caption{Compare Weighted Matrix Factorization with Graph for Implicit Feedback on Douban and Flixster Datasets. We use rank $r = 10$ for both methods and all metric results are in $\%$.}
  \vskip -0.1in
  \label{tab:repro-4}
  \resizebox{0.7\textwidth}{!}{
  \begin{tabular}{llccccccccc}
    \toprule
    Dataset & Methods & MAP & HLU & P@$1$ & P@$5$ & NDCG@$1$ & NDCG@$5$ & $\lambda_l$ & $\lambda_g $ \\
    \midrule
    \multirow{2}{*}{Douban}
    &WMF\_$G$               & 8.340 & 13.033 & 14.944 & 10.371 & 14.944 & 12.564 & 0.01 & 10    \\
    &WMF\_DNA-3 & \bfseries{8.400} & \bfseries{13.110} & \bfseries{14.991} & \bfseries{10.397} & \bfseries{14.991} & \bfseries{12.619} & \bfseries{1}    & \bfseries{1}     \\
     \midrule
    \multirow{2}{*}{Flixster}
    &WMF\_$G$              & 10.889 & 14.909 & 12.303 & 7.9927 & 12.303 & 12.734 & 10 & 0.1    \\
     &WMF\_DNA-3 & \bfseries{11.612} & \bfseries{15.687} & \bfseries{12.644} & \bfseries{8.1583} & \bfseries{12.644} & \bfseries{13.399} & \bfseries{1}    & \bfseries{1}     \\

    % \multirow{2}{*}{Douban}
    % &$G$               & 8.340 & 13.033 & 14.944 & 10.371 & 14.944 & 12.564 & 0.01 & 10    \\
    % &Bloom filter of depth 3 & \bfseries{8.400} & \bfseries{13.110} & \bfseries{14.991} & \bfseries{10.397} & \bfseries{14.991} & \bfseries{12.619} & \bfseries{1}    & \bfseries{1}     \\
    %  \midrule
    % \multirow{2}{*}{Flixster}
    % &$G$               & 10.889 & 14.909 & 12.303 & 7.9927 & 12.303 & 12.734 & 10 & 0.1    \\
    % &Bloom filter of depth 3 & \bfseries{11.612} & \bfseries{15.687} & \bfseries{12.644} & \bfseries{8.1583} & \bfseries{12.644} & \bfseries{13.399} & \bfseries{1}    & \bfseries{1}     \\
  \bottomrule
\end{tabular}
}
% \vskip -0.15in
\end{table*}

\subsection{Code}\label{sec:code}
Part of our code is already made available on Github: \url{https://github.com/wuliwei9278/Graph-DNA}.

% \section{Appendix to Chapter 3}
\section{Appendix to Chapter 4}
We include pseudo-codes for Algorithm~\ref{alg:gradv}, \ref{alg:update} and Figures~\ref{time2}, ~\ref{sqp}, ~\ref{explicit_topk} in the appendix to chapter 4.

\begin{algorithm}[tb]
  \caption{Compute gradient for $V$ when $U$ fixed}
  \label{alg:gradv}
\begin{algorithmic}
  \State {\bfseries Input:} $\Pi$, $U$, $V$, $\lambda, \rho$
  \State {\bfseries Output:} $g$ \Comment{$g\in \R^{r \times m}$ is the gradient for $f(V)$}
    \State $g = \lambda \cdot V$
  \For{$i = 1$ {\bfseries to} $n$}
     \State Precompute $h_t = u_i^T v_{\Pi_{it}}$ for $1 \leq t \leq \bar{m}$ \Comment{For implicit feedback, it should be $(1 + \rho) \cdot \tilde{m}$ instead of $\tilde{m}$, since $\rho \cdot \tilde{m}$ $0$'s are appended to the back}
     \State Initialize $total = 0$, $tt = 0$ 
     \For{$t = \bar{m}$ {\bfseries to} $1$}
        \State $total \mathrel{+}= \exp(h_t)$
        \State $tt \mathrel{+}= 1/total$
     \EndFor
     \State Initialize $c[t] = 0$ for $1 \leq t \leq \bar{m}$ 
     \For{$t = \bar{m}$ {\bfseries to} $1$}
        \State $c[t] \mathrel{+}= h_t \cdot (1 - h_t) $
        \State $c[t] \mathrel{+}= \exp(h_t) \cdot h_t \cdot (1 - h_t) \cdot tt$
        \State $total \mathrel{+}= \exp(h_t)$
        \State $tt \mathrel{-}= 1 / total$
     \EndFor
     \For{$t = 1$ {\bfseries to} $\bar{m}$}
        \State $g[:, \Pi_{it}] \mathrel{+}= c[t] \cdot u_i$
    \EndFor
  \EndFor 
  \State {\bfseries Return} $g$
\end{algorithmic}
\end{algorithm}

\begin{algorithm}[tb]
  \caption{Gradient update for $V$ (Same procedure for updating $U$)}
  \label{alg:update}
\begin{algorithmic}
    \State {\bfseries Input:} $V, ss$, $rate$ \Comment{$rate$ refers to the decaying rate of the step size $ss$}
    \State {\bfseries Output:} $V$
    \State Compute gradient $g$ for $V$ \Comment{see alg~\ref{alg:gradv}}
    \State $V \mathrel{-}= ss \cdot g$
    \State $ss \mathrel{*}= rate$
    \State {\bfseries Return} $V$
\end{algorithmic}
\end{algorithm}

\begin{figure}[ht]
%\vskip 0.2in
\begin{center}
\centerline{\includegraphics[width=\columnwidth]{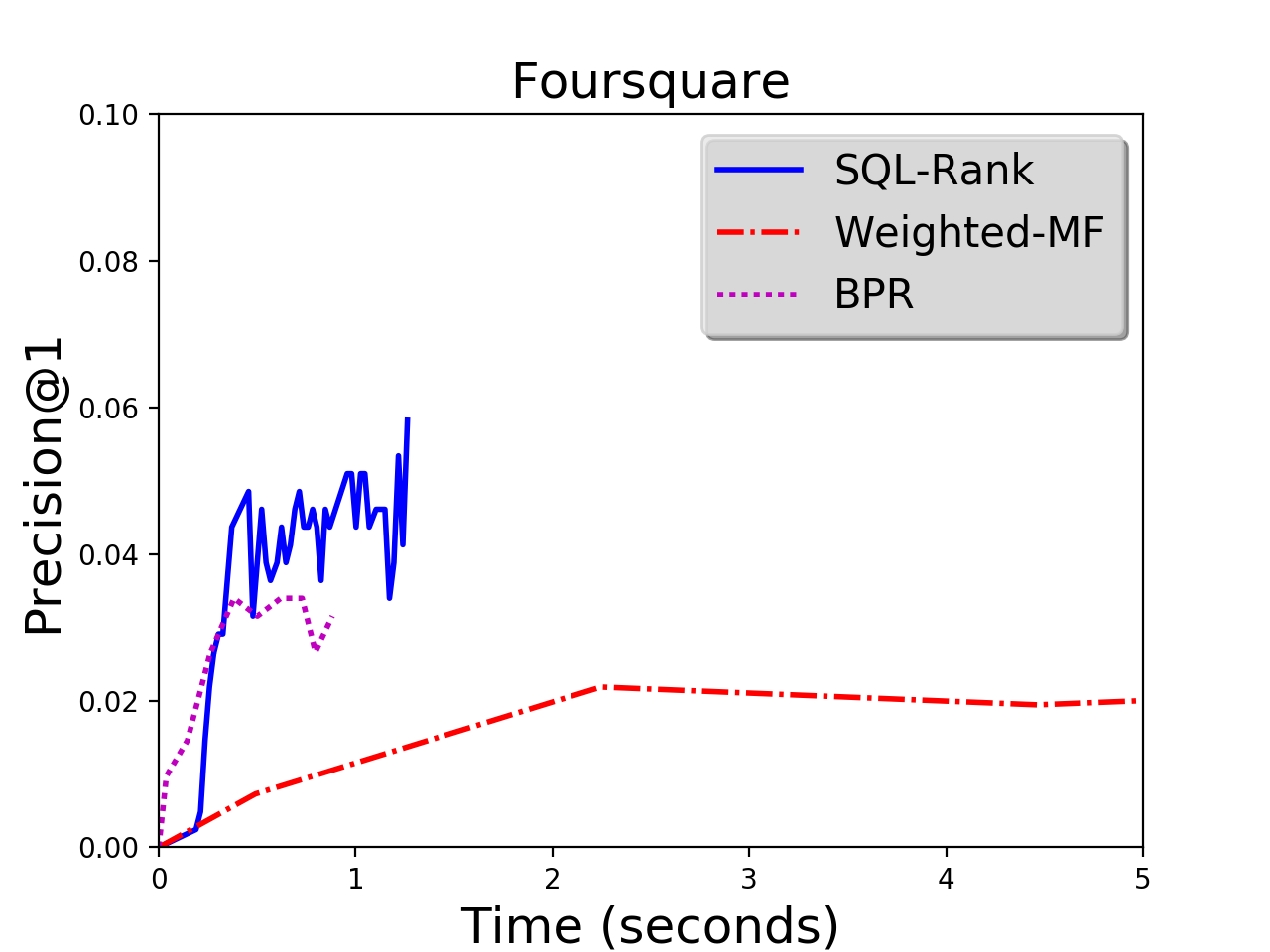}}
\caption{Comparing implicit feedback methods.}
\label{time2}
\end{center}
%\vskip -0.2in
\end{figure}

\begin{figure}[ht]
\vskip 0.2in
\begin{center}
\centerline{\includegraphics[width=\columnwidth]{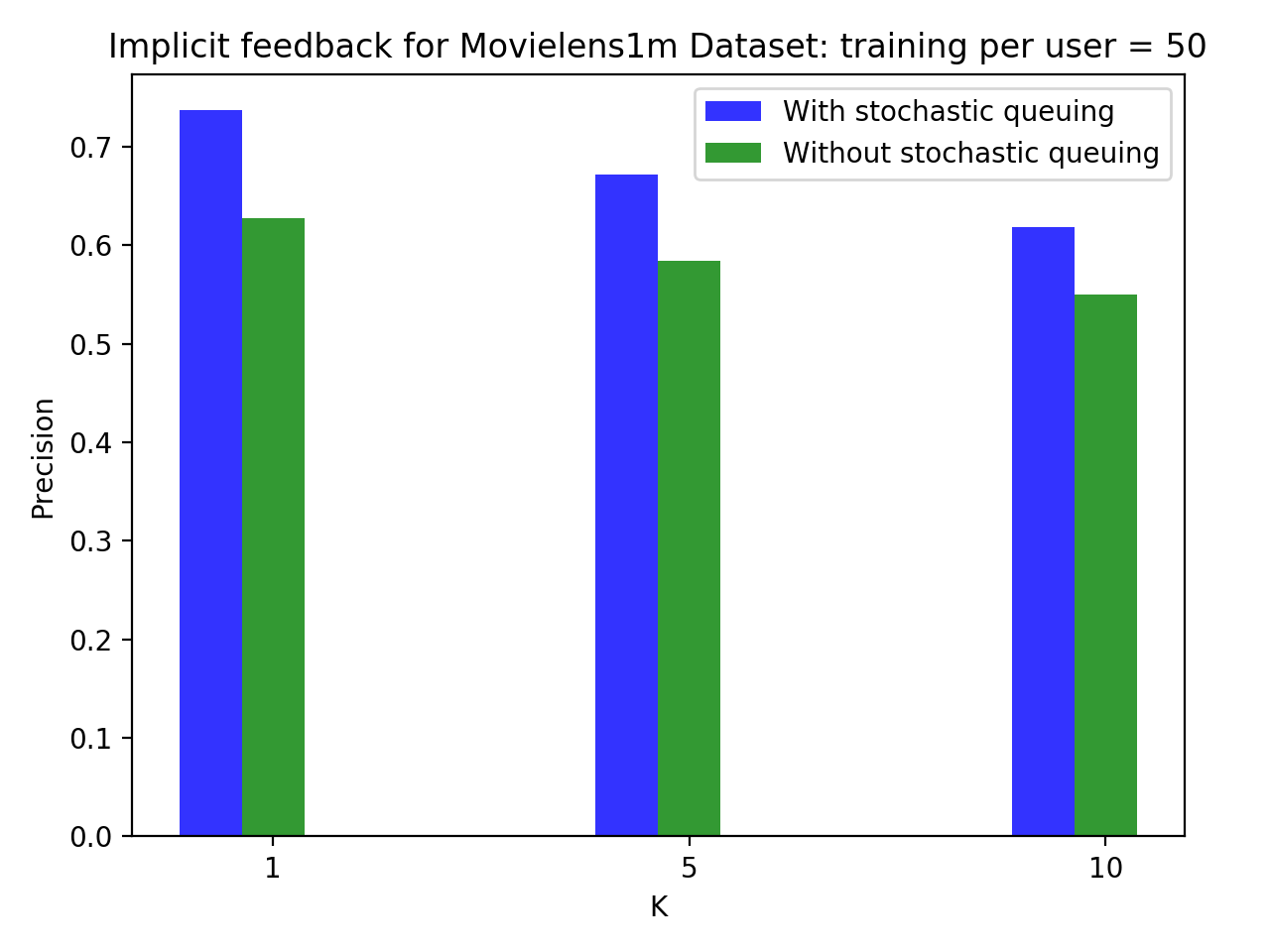}}
\caption{Effectiveness of Stochastic Queuing Process.}
\label{sqp}
\end{center}
\vskip -0.2in
\end{figure}

\begin{figure}[ht]
\vskip 0.2in
\begin{center}
\centerline{\includegraphics[width=\columnwidth]{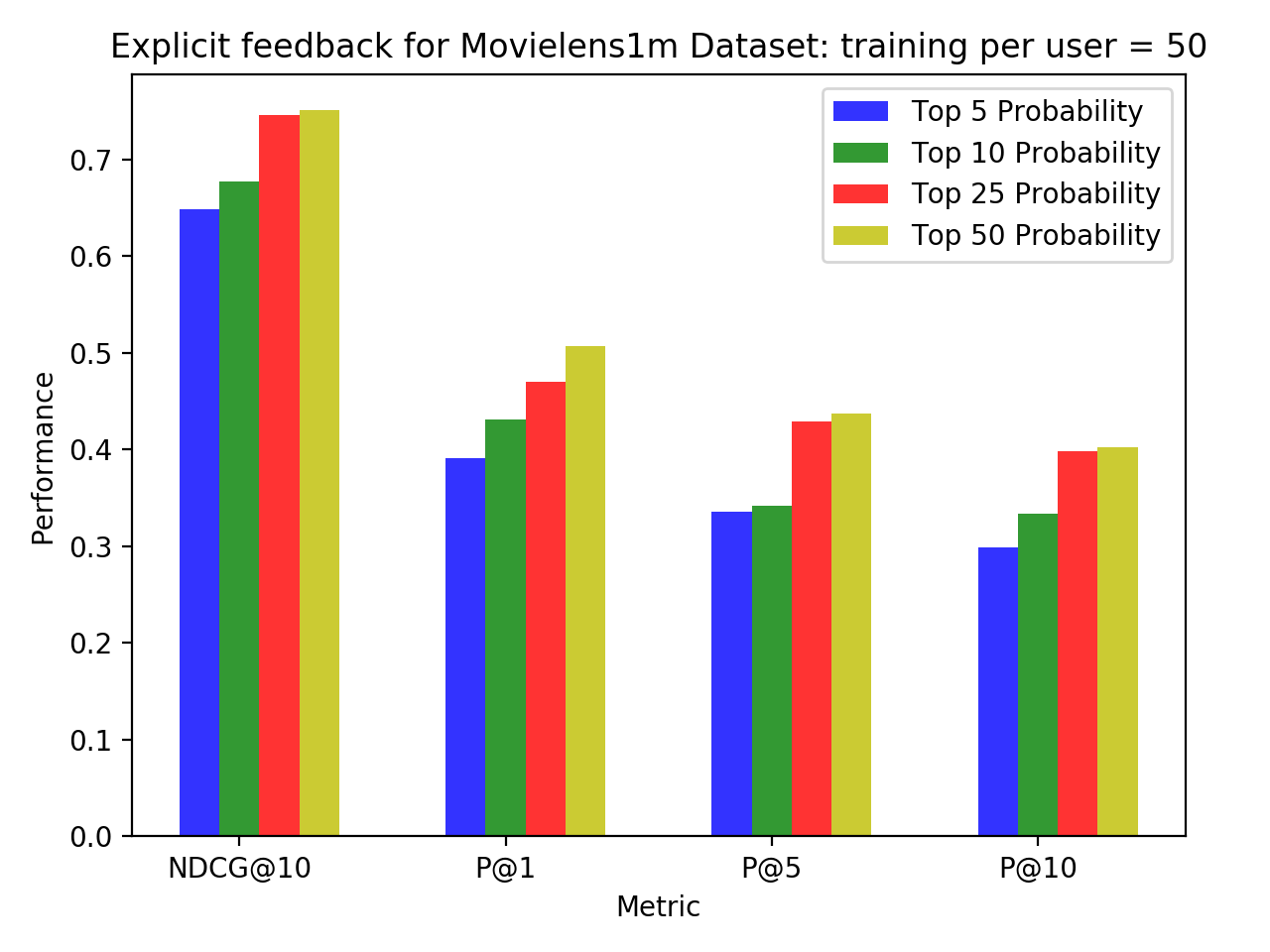}}
\caption{Effectiveness of using full lists.}
\label{explicit_topk}
\end{center}
\vskip -0.2in
\end{figure}

\section{Appendix to Chapter 5}
For experiments in Section~\ref{sec:simple_nn}, we use Julia and C++ to implement SGD. For experiments in Section~\ref{sec:sasrec}, and Section~\ref{sec: bert}, we use Tensorflow and SGD/Adam Optimizer. For experiments in Section~\ref{sec:nmt}, we use Pytorch and Adam with noam decay scheme and warm-up. We find that none of these choices affect the strong empirical results supporting the effectiveness of our proposed methods, especially the SSE-SE. In any deep learning frameworks, we can introduce stochasticity to the original embedding look-up behaviors and easily implement SSE-Layer in Figure~\ref{fig:train_test} as a custom operator. 

\subsection{Neural Networks with One Hidden Layer}
\label{app:mf}
To run SSE-Graph, we need to construct good-quality knowledge graphs on embeddings. We managed to match movies in Movielens1m and Movielens10m datasets to IMDB websites, therefore we can extract plentiful information for each movie, such as the cast of the movies, user reviews and so on. For simplicity reason, we construct the knowledge graph on item-side embeddings using the cast of movies. Two items are connected by an edge when they share one or more actors/actresses. For user side, we do not have good quality graphs: we are only able to create a graph on users in Movielens1m dataset based on their age groups but we do not have any side information on users in Movielens10m dataset. When running experiments, we do a parameter sweep for weight decay parameter and then fix it before tuning the parameters for SSE-Graph and SSE-SE. We utilize different $\rho$ and $p$ for user and item embedding tables respectively. The optimal parameters are stated in Table~\ref{tb:sse-se_sse-graph2} and Table~\ref{tb:mf}. We use the learning rate of 0.01 in all SGD experiments.

In the first leg of experiments, we examine users with fewer than 60 ratings in Movielens1m and Movielens10m datasets. In this scenario, the graph should carry higher importance. One can easily see from Table~\ref{tb:sse-se_sse-graph2} that without using graph information, our proposed SSE-SE is the best performing matrix factorization algorithms among all methods, including popular ALS-MF and SGD-MF in terms of RMSE. With Graph information, our proposed SSE-Graph is performing significantly better than the Graph Laplacian Regularized Matrix Factorization method. This indicates that our SSE-Graph has great potentials over Graph Laplacian Regularization as we do not explicitly penalize the distances across embeddings but rather we implicitly penalize the effects of similar embeddings on the loss.

In the second leg of experiments, we remove the constraints on the maximum number of ratings per user. We want to show that SSE-SE can be a good alternative when graph information is not available. We follow the same procedures in \cite{wu2017large, wu2018sql}. In Table~\ref{tb:mf}, we can see that SSE-SE can be used with dropout to achieve the smallest RMSE across Douban, Movielens10m, and Netflix datasets. In Table~\ref{tb:bpr}, one can see that SSE-SE is more effective than dropout in this case and can perform better than STOA listwise approach SQL-Rank \cite{wu2018sql} on 2 datasets out of 3. 

In Table~\ref{tb:mf}, SSE-SE has two tuning parameters: probability $p_u$ to replace embeddings associated with user-side embeddings and probability $p_i$ to replace embeddings associated with item side embeddings because there are two embedding tables. But here for simplicity, we use one tuning parameter $p_{s} = p_u = p_i$. We use dropout probability of $p_{d}$, dimension of user/item embeddings $d$, weight decay of $\lambda$ and learning rate of $0.01$ for all experiments, with the exception that the learning rate is reduced to $0.005$ when both SSE-SE and Dropout are applied. For Douban dataset, we use $d = 10, \lambda = 0.08$. For Movielens10m and Netflix dataset, we use $d = 50, \lambda = 0.1$.

\subsection{Neural Machine Translation}
We use the transformer model \cite{vaswani2017attention} as the backbone for our experiments. The control group is the standard transformer encoder-decoder architecture with self-attention. In the experiment group, we apply SSE-SE towards both encoder and decoder by replacing corresponding vocabularies' embeddings in the source and target sentences. We trained on the standard WMT 2014 English to German dataset which consists of roughly 4.5 million parallel sentence pairs and tested on WMT 2008 to 2018 news-test sets. Sentences were encoded into 32,000 tokens using a byte-pair encoding. We use the SentencePiece, OpenNMT and SacreBLEU implementations in our experiments. We trained the 6-layer transformer base model on a single machine with 4 NVIDIA V100 GPUs for 20,000 steps. We use the same dropout rate of 0.1 and label smoothing value of 0.1 for the baseline model and our SSE-enhanced model. Both models have dimensionality of embeddings as $d = 512$. When decoding, we use beam search with the beam size of 4 and length penalty of 0.6 and replace unknown words using attention. For both models, we average last 5 checkpoints (we save checkpoints every 10,000 steps) and evaluate the model's performances on the test datasets using BLEU scores. The only difference between the two models is whether or not we use our proposed SSE-SE with $p = 0.01$ in Equation~\ref{eq:sse-se} for both encoder and decoder embedding layers.

\subsection{BERT}
In the first leg of experiments, we crawled one million user reviews data from IMDB and pre-trained the BERT-Base model (12 blocks) for $500,000$ steps using sequences of maximum length 512 and batch size of 8, learning rates of $2e^{-5}$ for both models using one NVIDIA V100 GPU. Then we pre-trained on a mixture of our crawled reviews and reviews in IMDB sentiment classification tasks (250K reviews in train and 250K reviews in test) for another $200,000$ steps before training for another $100,000$ steps for the reviews in IMDB sentiment classification task only. In total, both models are pre-trained on the same datasets for $800,000$ steps with the only difference being our model utilizes SSE-SE. In the second leg of experiments, we fine-tuned the two models obtained in the first-leg experiments on two sentiment classification tasks: IMDB sentiment classification task and SST-2 sentiment classification task. The goal of pre-training on IMDB dataset but fine-tuning for SST-2 task is to explore whether SSE-SE can play a role in transfer learning.

The results are summarized in Table~\ref{tb:bert-imdb} for IMDB sentiment task. In experiments, we use maximum sequence length of 512, learning rate of $2e^{-5}$, dropout probability of $0.1$ and we run fine-tuning for 1 epoch for the two pre-trained models we obtained before. For the Google pre-trained BERT-base model, we find that we need to run a minimum of 2 epochs. This shows that pre-training can speed up the fine-tuning. We find that Google pre-trained model performs worst in accuracy because it was only pre-trained on Wikipedia and books corpus while ours have seen many additional user reviews. We also find that SSE-SE pre-trained model can achieve accuracy of 0.9542 after fine-tuning for one epoch only. On the contrast, the accuracy is only 0.9518 without SSE-SE for embeddings associated with output $y_i$. 

For the SST-2 task, we use maximum sequence length of 128, learning rate of $2e^{-5}$, dropout probability of 0.1 and we run fine-tuning for 3 epochs for 
all 3 models in Table~\ref{tb:bert-sst2}. We report AUC, accuracy and F1 score for dev data. For test results, we submitted our predictions to Glue website for the official evaluation. We find that even in transfer learning, our SSE-SE pre-trained model still enjoys advantages over Google pre-trained model and our pre-trained model without SSE-SE. Our SSE-SE pre-trained model achieves 94.3\% accuracy on SST-2 test set versus 93.6 and 93.8 respectively. If we are using SSE-SE for both pre-training and fine-tuning, we can achieve 94.5\% accuracy on the SST-2 test set, which approaches the 94.9 score reported by the BERT-Large model. SSE probability of 0.01 is used for fine-tuning.

% \begin{table}
%   \caption{Our proposed SSE-SE helps the ensemble model perform better during Kaggle Jigsaw Unintended Bias in Toxicity Classification Competition. We report the public leaderboard scores. Ensemble model consists 4 single models initialized from the different pre-training checkpoints. }
%   \label{tb:ensemble}
%   \centering
%   \resizebox{0.8\columnwidth}{!}{
%   \begin{tabular}{ccc}
%     \toprule
%     & \multicolumn{2}{c}{Bias AUC}                   \\
%     \cmidrule(r){2-3} 
%     Model     & Single Model & Ensemble Model   \\
%     \midrule
%     Multi-task De-bias Bert            & 0.94132 &      0.94192  \\
%     SSE-SE + Multi-task De-bias Bert   & \bfseries{0.94139} & \bfseries{0.94216} \\
%     \bottomrule
%   \end{tabular}
%   }
% \end{table}

%\section{Appendix}
\subsection{Proofs}

\begin{figure}
  \centering
  \includegraphics[width=0.835\linewidth]{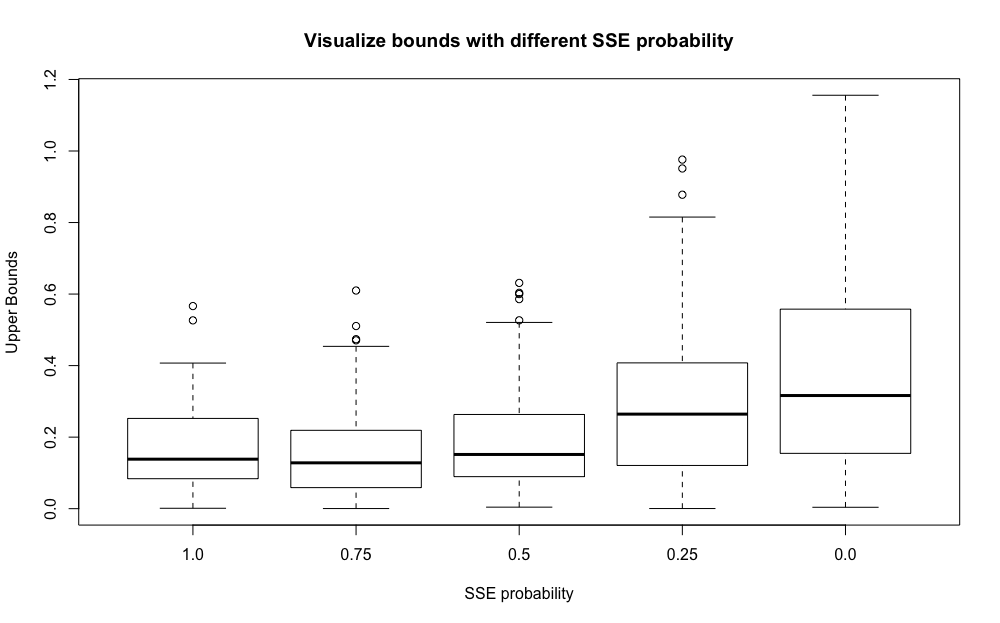}
  \caption{Simulation of a bound on $\rho_{L,n}$ for the movielens1M dataset.  Throughout the simulation, $L$ is replaced with $\ell$ (which will bound $\rho_{L,n}$ by Jensen's inequality).  The SSE probability parameter dictates the probability of transitioning.  When this is $0$ (box plot on the right), the distribution is that of the samples from the standard Rademacher complexity (without the sup and expectation).  As we increase the transition probability, the values for $\rho_{L,n}$ get smaller.
%   The actual transition probabilities to be used is determined by the knowledge graphs and specific datasets and requires only minimal tuning.
  }\vspace{-12pt}
  \label{fig:sim}
\end{figure}

Throughout this section, we will suppress the probability parameters, $p(.,.|\Phi) = p(.,.)$.

\begin{proof}[Proof of Theorem \ref{thm:rademacher}]
Consider the following variability term,
\begin{equation}
   \sup_\Theta | S(\Theta) - S_n(\Theta) |. 
\end{equation}
Let us break the variability term into two components
\[
\mathbb{E}_{X,Y} \sup_\Theta \left| S_n(\Theta) - \mathbb E_{Y|X} [S_n(\Theta)] \right| + \mathbb{E}_{X,Y} \sup_\Theta \left| \mathbb E_{Y|X} [S_n(\Theta)] - S(\Theta) \right|,
\]
where $X,Y$ represent the random input and label.
To control the first term, we introduce a ghost dataset $(x_{i}, y_{i}')$, where $y_{i}'$ are independently and identically distributed according to $y_i | x_i$.
Define
\begin{equation}
    S'_n(\Theta) = \sum_{i} \sum_{\mathbf{k}} p(\mathbf{j}^i, \mathbf{k}) \ell(\mathbf{E}[\mathbf{k}],y_i'|\Theta)
\end{equation} be the empirical SSE risk with respect to this ghost dataset.
 
 We will rewrite $\mathbb E_{Y|X} [S_n(\Theta)]$ in terms of the ghost dataset and apply Jensen's inequality and law of iterated conditional expectation:
\begin{align}
    &\mathbb{E} \sup_\Theta \left| \mathbb E_{Y|X} [S_n(\Theta)] -  S_n(\Theta) \right| \\
    &= \mathbb{E} \sup_\Theta \left| \mathbb{E}_{Y' | X} \left[S'_n(\Theta) - S_n(\Theta) \right] \right| \\
    &\leq \mathbb{E}\mathbb{E}_{Y' | X} \left[ \sup_\Theta \left| S'_n(\Theta) - S_n(\Theta) \right| \right] \\
    &= \mathbb{E} \sup_\Theta  \left| S'_n(\Theta) - S_n(\Theta) \right|.
\end{align}

Notice that 
\begin{align*}
S'_n(\Theta) - S_n(\Theta) = \sum_{i} \sum_{\mathbf{k}} p(\mathbf{j}^i, \mathbf{k}) \left( \ell(\mathbf{E}[\mathbf{k}],y_i'|\Theta) - \ell(\mathbf{E}[\mathbf{k}],y_i|\Theta)\right)\\
= \sum_{i} \sum_{\mathbf{k}} p(\mathbf{j}^i, \mathbf{k}) \left( e(\mathbf{E}[\mathbf{k}],y_i'|\Theta) - e(\mathbf{E}[\mathbf{k}],y_i|\Theta)\right).
\end{align*}
Because $y_i, y_i' | X$ are independent the term $(\sum_{\mathbf{k}} p(\mathbf{j}^i, \mathbf{k}) \left( e(\mathbf{E}[\mathbf{k}],y_i'|\Theta) - e(\mathbf{E}[\mathbf{k}],y_i|\Theta)\right))_i$ is a vector of symmetric independent random variables.
Thus its distribution is not effected by multiplication by arbitrary Rademacher vectors $\sigma_i \in \{-1,+1\}$.
\[
\mathbb{E} \sup_\Theta  \left| S'_n(\Theta) - S_n(\Theta) \right| = \mathbb{E} \sup_\Theta \left| \sum_{i} \sigma_i \sum_{\mathbf{k}} p(\mathbf{j}^i, \mathbf{k}) \left( e(\mathbf{E}[\mathbf{k}],y_i'|\Theta) - e(\mathbf{E}[\mathbf{k}],y_i|\Theta)\right) \right|.
\]
But this is bounded by 
\[
2 \mathbb{E} \mathbb E_\sigma \sup_\Theta \left| \sum_{i} \sigma_i \sum_{\mathbf{k}} p(\mathbf{j}^i, \mathbf{k}) e(\mathbf{E}[\mathbf{k}],y_i|\Theta) \right|.
\]

For the second term,
\[
\mathbb{E} \sup_\Theta \left| \mathbb E_{Y|X} [S_n(\Theta)] - S(\Theta) \right|
\]
we will introduce a second ghost dataset $x_i',y_i'$ drawn iid to $x_i,y_i$.
Because we are augmenting the input then this results in a new ghost encoding $\bo j'^{i}$.
Let
\begin{equation}
    S'_n(\Theta) = \sum_{i} \sum_{\mathbf{k}} p(\mathbf{j}'^{i}, \mathbf{k}) \ell(\mathbf{E}[\mathbf{k}],y_i'|\Theta)
\end{equation} be the empirical risk with respect to this ghost dataset.
Then we have that 
\[
S(\Theta) = \mathbb E_{X'} \mathbb E_{Y' | X'} S_n'(\Theta) 
\]
Thus,
\begin{align}
    &\mathbb{E} \sup_\Theta \left| \mathbb E_{Y|X} [S_n(\Theta)] -  S(\Theta) \right| \\
    &= \mathbb{E} \sup_\Theta \left| \mathbb{E}_{X'} \left[E_{Y|X} [S_n(\Theta)] - E_{Y'|X'} [S'_n(\Theta)] \right] \right| \\
    &\leq \mathbb{E}\mathbb{E}_{X'} \left[ \sup_\Theta \left| E_{Y|X} [S_n(\Theta)] - E_{Y'|X'} [S'_n(\Theta)] \right| \right] \\
    &= \mathbb{E} \sup_\Theta \left| E_{Y|X} [S_n(\Theta)] - E_{Y'|X'} [S'_n(\Theta)] \right|.
\end{align}
Notice that we may write,
\[
E_{Y|X} [S_n(\Theta)] - E_{Y'|X'} [S'_n(\Theta)] = \sum_{i} \sum_{\mathbf{k}} \left( p(\mathbf{j}^{i}, \mathbf{k}) - p(\mathbf{j}'^{i},\mathbf{k}) \right) L(\mathbf{E}[\mathbf{k}]|\Theta)
\]
Again we may introduce a second set of Rademacher random variables $\sigma'_i$, which results in 
\[
\mathbb{E} \sup_\Theta \left| E_{Y|X} [S_n(\Theta)] - E_{Y'|X'} [S'_n(\Theta)] \right| \le 2 \mathbb E \mathbb E_{\sigma'} \sup_\Theta \left| \sum_{i} \sigma'_i \sum_{\mathbf{k}} p(\mathbf{j}^{i}, \mathbf{k}) L(\mathbf{E}[\mathbf{k}]|\Theta) \right|.
\]
And this is bounded by 
\[
2 \mathbb E \mathbb E_{\sigma'} \sup_\Theta \left| \sum_{i} \sigma'_i \sum_{\mathbf{k}} p(\mathbf{j}^{i}, \mathbf{k}) L(\mathbf{E}[\mathbf{k}]|\Theta) \right| \le 2 \mathbb E \sup_\Theta \left| \sum_{i} \sigma'_i \sum_{\mathbf{k}} p(\mathbf{j}^{i}, \mathbf{k}) \ell(\mathbf{E}[\mathbf{k}],y_i|\Theta) \right|
\]
by Jensen's inequality again.
\end{proof}

\begin{proof}[Proof of Theorem \ref{thm:main}]
It is clear that $2 \mathcal B \ge B(\hat \Theta) + B(\Theta^*)$.
It remains to show our concentration inequality.
Consider changing a single sample, $(x_i,y_i)$ to $(x_i',y_i')$, thus resulting in the SSE empirical risk, $S_{n,i}(\Theta)$.
Thus,
\begin{align*}
&S_n(\Theta) - S_{n,i}(\Theta) = \sum_{\mathbf{k}} p(\mathbf{j}^i, \mathbf{k}) \cdot \ell(\mathbf{E}[\mathbf{k}],y_i|\Theta) - \sum_{\mathbf{k}} p(\mathbf{j}'^i, \mathbf{k}) \cdot \ell(\mathbf{E}[\mathbf{k}],y_i'|\Theta)\\
&= \sum_{\mathbf{k}} p(\mathbf{j}^i, \mathbf{k}) \cdot \left( \ell(\mathbf{E}[\mathbf{k}],y_i|\Theta) - \ell(\mathbf{E}[\mathbf{k}],y_i'|\Theta)\right) + \sum_{\mathbf{k}} \left( p(\mathbf{j}'^i, \mathbf{k}) - p(\mathbf{j}^i, \mathbf{k}) \right) \cdot \ell(\mathbf{E}[\mathbf{k}],y_i'|\Theta)\\
&\le b \left( \sum_{\mathbf{k}} p(\mathbf{j}^i, \mathbf{k}) + \sum_{\mathbf{k}} p(\mathbf{j}'^i, \mathbf{k}) \right) \le 2b.
\end{align*}
Then the result follows from McDiarmid's inequality.

\end{proof}
\section{Appendix to Chapter 6}

\begin{itemize}
    \item NDCG$@K$: defined as:
        \begin{equation}\label{eq:ndcg}
            \text{NDCG}@K = \frac{1}{n} \sum_{i = 1}^{n} \frac{\text{DCG}@K(i, \Pi_i)}{\text{DCG}@K(i, \Pi_i^*)}, 
        \end{equation} where $i$ represents $i$-th user and
        \begin{equation} 
        \text{DCG}@K(i, \Pi_i)= \sum_{l = 1}^{K} \frac{2^{R_{i\Pi_{il}}} - 1}{\log_2(l + 1)}. 
        \end{equation}
        In the DCG definition, $\Pi_{il}$ represents the index of the $l$-th ranked item for user $i$ in test data based on the learned score matrix $X$. $R$ is the rating matrix and $R_{ij}$ is the rating given to item $j$ by user $i$. $\Pi_i^*$ is the ordering provided by the ground truth rating.

    \item Recall@$K$: defined as a fraction of positive items retrieved by the top $K$ recommendations the model makes: 
        \begin{equation}\label{eq:recall}
            \text{Recall}@K = \frac{\sum_{i=1}^{n} \mathbbm{1} \{  \exists 1 \leq l \leq K : R_{i\Pi_{il}} = 1 \}} {n},
        \end{equation}
        here we already assume there is only a single positive item that user will engage next and the indicator function $\mathbbm{1} \{  \exists 1 \leq l \leq k : R_{i\Pi_{il}} = 1 \}$ is defined to indicate whether the positive item falls into the top $K$ position in our obtained ranked list using scores predicted in \eqref{eq:pred}. 
\end{itemize}

 \paragraph{Layer Normalization}
 Layer normalization \cite{ba2016layer} normalizes neurons within a layer. Previous studies \cite{ba2016layer} show it is more effective than batch normalization for training recurrent neural networks (RNNs).
One alternative is the batch normalization \cite{ioffe2015batch} but we find it does not work as well as the layer normalization in practice even for a reasonable large batch size of 128. Therefore, our SSE-PT model adopts layer normalization.  

\paragraph{Residual Connections}
Residual connections are firstly proposed in ResNet for image classification problems \cite{he2016deep}. 
Recent research finds that residual connections can help training very deep neural networks even if they are not convolutional neural networks \cite{vaswani2017attention}.  
% In our PT model, residual connections are used in both self-attention layer and pointwise feed-forward layer. 
Using residual connections allows us to train very deep neural networks here. For example, the best performing model for Movielens10M dataset in Table~\ref{tb:block} is the SSE-PT with 6 attention blocks, in which $1 + 6 * 3 + 1 = 20$ layers are trained end-to-end.

\paragraph{Weight Decay}
Weight decay \cite{krogh1992simple}, also known as $l_2$ regularization \cite{hoerl1970ridge}, is applied to all embeddings, including both user and item embeddings.

\paragraph{Dropout}
Dropout \cite{srivastava2014dropout} is applied to the embedding layer $E$, self-attention layer and pointwise feed-forward layer by stochastically dropping some percentage of hidden units to prevent co-adaption of neurons. Dropout has been shown to be an effective way of regularizing deep learning models. 

In summary, layer normalization and dropout are used in all layers except prediction layer. Residual connections are used in both self-attention layer and pointwise feed-forward layer. SSE-SE is used in embedding layer and prediction layer.

\begin{table}[ht]
\centering
% \vskip -0.15in
\caption{Description of Datasets Used in Evaluations.}
% \vskip -0.1in
\label{tb:datasets}
%\vskip 0.15in
\begin{center}
\begin{small}
\begin{sc}
\resizebox{0.8\textwidth}{!}{
\begin{tabular}{ccccc}
\toprule
 dataset & \#users & \#items & avg sequence len & max sequence len   \\
\midrule
\multirow{1}{*}  Beauty   &  52,024  & 57,289   & 7.6 & 291 \\
games   &  31,013 &  23,715   & 7.3 & 858 \\
%  PT & 0.5487 & 0.7949 &   50 & 50  \\
steam & 334,730 &  13,047 &   11.0 & 1,229  \\
ml-1m  &  6,040  &  3,416 & 163.5  & 2,275  \\
ml-10m  & 69,878  & 65,133 &  141.1 & 7,357  \\
\bottomrule
\end{tabular}
}
\end{sc}
\end{small}
\end{center}
\vskip -0.1in
\end{table}

% The non-deep-learning methods contain the following:
\begin{itemize}
    \item PopRec: ranking items according to their popularity. 
    % This is the simplest ranking method perhaps besides randomly assigning ranking to items.
    \item BPR: Bayesian personalized ranking for implicit feedback setting \cite{rendle2009bpr}. It is a low-rank matrix factorization model with a pairwise loss function.
    % between positives and subsampled negatives. 
    But it does not utilize the temporal information. Therefore, it serves as a strong baseline for non-temporal methods.
     \item FMC: Factorized Markov Chains: a first-order Markov Chain method, in which 
    %  factorizes item transition probabilities for any two items that are one after the other in time proximity. P
     predictions are made only based on previously engaged item.  
     \item PFMC: a personalized Markov chain model \cite{rendle2010factorizing} that combines matrix factorization and first-order Markov Chain to take advantage of both users' latent long-term preferences as well as short-term item transitions.
    \item TransRec: a first-order sequential recommendation method \cite{he2017translation} in which items are embedded into a transition space and users are modelled as translation vectors operating on item sequences.
\end{itemize}
% We omit some relevant but not essential non-deep-learning algorithms in comparisons, such as 
SQL-Rank \cite{wu2018sql} and item-based recommendations \cite{sarwar2001item}  are omitted
because the former is similar to BPR \cite{rendle2009bpr} except using the listwise loss function instead of the pairwise loss function and the latter has been shown inferior to TransRec \cite{he2017translation}.

\begin{table*}[t]
\vskip -0.1in
  \caption{Comparing our SSE-PT, SSE-PT++ with SASRec on Movielen1M dataset. We use number of negatives $C = 100$, dropout probability of $0.2$ and learning rate of $1e^{-3}$ for all experiments while varying others. $p_u, p_i, p_u$ are SSE probabilities for user embedding, input item embedding and output item embedding respectively.}
  \vskip -0.01in
  \label{tb:ml1m}
  \centering
 \resizebox{1\columnwidth}{!}{
  \begin{tabular}{lccccccccc}
    \toprule
    & \multicolumn{2}{c}{Movielens1m }  & \multicolumn{2}{c}{Dimensions} & Number of Blocks & Sampling Probability & \multicolumn{3}{c}{SSE-SE Parameters}             \\
    \cmidrule(r){2-3} \cmidrule(r){4-5} \cmidrule(r){6-6} \cmidrule(r){7-7}  \cmidrule(r){8-10}
    Model     & NDCG$@10$ & Recall$@10$ & $d_u$ & $d_i$ & $b$ & $p_s$ & $p_u$ & $p_i$  & $p_y$ \\
    \midrule
    SASRec            & 0.5961 & 0.8195& - & 50 & 2 & - & -  & - & - \\
    SASRec   & 0.5941 & 0.8182 & - & 100 & 2  & - & - & -  & -\\
    % SASRec   & 0.5857 & 0.8088 & 200 & 2  & -  & -\\
    SASRec   & \bfseries{0.5996} & \bfseries{0.8272} & - & 100 & 6 & - & - & -  & -\\
     \midrule
%   % SSE-SE + SASRec   & 0.6058 & 0.8305 & 50  & 2 & 0  & 0.1  \\
%     SSE-SE + SASRec   & 0.5997 & 0.8275 & - & 50  & 2 & - & 0.1  & 0.1  \\
%     SSE-SE + SASRec   & 0.6092 & 0.8250 & - & 100  & 2 & - & 0.1  & 0  \\
%     SSE-SE + SASRec   & 0.6085 & 0.8293 & - & 100  & 2 & - & 0  & 0.1  \\
%     SSE-SE + SASRec   & \bfseries{0.6200} & \bfseries{0.8315} & - & 100  & 2 & - & 0.1  & 0.1  \\
%     %  SSE-SE + SASRec   & \bfseries{0.6265} & \bfseries{0.8364} & - & 100  & 6 & 0.1  & 0.1  \\
%   \midrule
    SSE-PT  & 0.6101 & 0.8343 & 50 &  50 & 2 & -& 0.92 & 0.1  & 0 \\
    SSE-PT  & 0.6164 & 0.8336 & 50 &  50 & 2 & -& 0.92 & 0  & 0.1 \\
    SSE-PT  & 0.5832 & 0.8091 & 50 &  50 & 2 & -& 0 & 0.1  & 0.1 \\
    SSE-PT  & \bfseries{0.6174}  & \bfseries{0.8351} & 50 &  50 & 2& - & 0.92 & 0.1  & 0.1 \\
    % PT  & \bfseries{0.6203}  & \bfseries{0.8377} & 50 &  50 & 2& - & 0.92 & 0.1  & 0.1 \\
    \midrule
     SSE-PT  & 0.5949  & 0.8205 & 75 &  25 & 2 & -& 0.92 & 0.1  & 0.1 \\
     %PT  & 0.5978  & 0.8270 & 75 &  25 & 2 & -& 0.6 & 0.1  & 0.1 \\
     
    % SSE-SE + PT  & 0.6148  & \bfseries{0.8384} & 50 &  50 & 2 & 0.9 & 0.1  & 0.1 \\
    % SSE-SE + PT  & 0.6144  & 0.8295 & 50 &  50 & 2 & 0.85 & 0.1  & 0.1 \\
   
   SSE-PT  & \bfseries{0.6214}  & \bfseries{0.8359} & 25 &  75 & 2& - & 0.92 & 0.1  & 0.1 \\
   %  PT  & \bfseries{0.6278}  & \bfseries{0.8419} & 25 &  75 & 2& - & 0.6 & 0.1  & 0.1 \\
    \midrule
    SSE-PT  & 0.6281 & 0.8341 & 50 & 100 & 2 & -& 0.92 & 0.1  & 0.1 \\
    %PT  & 0.6318 &0.8401 & 50 & 100 & 2 & -& 0.6 & 0.1  & 0.1 \\
    SSE-PT++ & \bfseries{0.6292} & \bfseries{0.8389} &  50 & 100 & 2& 0.3 & 0.92 & 0.1  & 0.1 \\
%   \midrule
   % rho = 2
%   SSE-Graph + PT  & \bfseries{0.6270} & 0.8359 & 50 &  100 & 2 & 0.92 & 0.1  & 0.1 \\
%   SSE-SE + PT  & 0.6239 &  \bfseries{0.8377} & 50 &  100 & 6 & 0.1  & 0.1 \\
    %  \midrule
    % SSE-SE + SASRec   & 0.6367 & 0.8474 & 100  & 6 & 0 (item)  & 0.1  \\
    % SSE-SE + PT       & 0.6392 & 0.8535 & 50 + 50  & 6 & 0.92 (user) + 0 (item)  & 0.1 \\
    % SSE-SE + NLP-SASRec   & 0.6383 & 0.8490 & 100  & 6 & 0 (item) & 0.1  \\
    % SSE-SE + NLP-PT          & 0.6412 & 0.8538 & 50 + 25 + 25  & 6 & 0.92 (user) + 0 (item) & 0.1 \\
    \bottomrule
  \end{tabular}
}
%\vspace{-1pt}
\end{table*}

\begin{table*}[t]
  \caption{Comparing our SSE-PT with SASRec on Movielens10M dataset. Unlike Table~\ref{tb:ml1m}, we use the number of negatives $C = 500$ instead of $100$ as $C = 100$ is too easy for this dataset and it gets too difficult to tell the differences between different methods: Hit Ratio@10 approaches 1.}
  \label{tb:ml10m}
  \centering
 \resizebox{\columnwidth}{!}{
  \begin{tabular}{ccccccccc}
    \toprule
    & \multicolumn{2}{c}{Movielens1m }  & \multicolumn{2}{c}{Dimensions} & Number of Blocks  & \multicolumn{3}{c}{SSE-SE Parameters}             \\
    \cmidrule(r){2-3} \cmidrule(r){4-5} \cmidrule(r){6-6}  \cmidrule(r){7-9}
    Model     & NDCG$@10$ & Hit Ratio$@10$ & $d_u$ & $d_i$ & $b$ & $p_u$ & $p_i$  & $p_y$ \\
    \midrule
    SASRec   & 0.7268 & 0.9429 & - & 50 & 2  & - & -  & -\\
    SASRec   & 0.7413 & 0.9474 & - & 100 & 2  & - & -  & -\\
    \midrule
%   SSE-SE + SASRec  & 0.7468 & 0.9475 & - &  100 & 2 & - & 0.01  & 0.01 \\
%   SSE-SE + SASRec  & 0.7446 & 0.9485 & - &  100 & 2 & - & 0.005  & 0.005 \\
%   \midrule
   SSE-PT  & 0.7199 & 0.9331 & 50 &  100 & 2 & PS & 0.01  & 0.01 \\
   SSE-PT  & 0.7169 & 0.9296 & 50 &  100 & 2 & 0.0 & 0.01  & 0.01 \\
%   PT  & 0.7270 & 0.9381 & 50 &  100 & 2 & 0.1 & 0.01  & 0.01 \\
    SSE-PT  & 0.7398 & 0.9418 & 50 &  100 & 2 & 0.2 & 0.01  & 0.01 \\
    SSE-PT  & 0.7500 & 0.9500 & 50 &  100 & 2 & 0.4 & 0.01  & 0.01 \\
    SSE-PT  & 0.7484 & 0.9480 & 50 &  100 & 2 & 0.6 & 0.01  & 0.01 \\
    SSE-PT  & \bfseries{0.7529} & 0.9485 & 50 &  100 & 2 & 0.8 & 0.01  & 0.01 \\
    % PT  & 0.7496 & 0.9442 & 50 &  100 & 2 & 0.9 & 0.01  & 0.01 \\
%   SSE-SE + PT  & 0.7506 & 0.9481 & 50 &  100 & 2 & 0.99 & 0.01  & 0.01 \\
    SSE-PT  & 0.7503 &  \bfseries{0.9505} & 50 &  100 & 2 & 1.0 & 0.01  & 0.01 \\
    \bottomrule
  \end{tabular}
}
% \vspace{-10pt}
\end{table*}
\subsubsection{Deep-learning baselines}
% Besides the non-deep-learning methods we have listed above, we also included many state-of-the-art deep learning in recent two, three years. Many of them are motivated by the progress in natural language processing literature on handling sequences data. These include recurrent neural networks (RNN) and its variants like LSTM and GRU, convolutional neural networks (CNN), attention mechanisms, the Transformer architecture with self-attention and so on. The detailed descriptions of the deep-learning methods are given below:  
\begin{itemize}
    \item GRU4Rec: the first RNN-based method proposed for the session-based recommendation problem \cite{hidasi2015session}. It utilizes the GRU structures \cite{chung2014empirical} initially proposed for speech modelling.
    \item GRU4Rec$^+$: follow-up work of GRU4Rec by the same authors: the model has a very similar architecture to GRU4Rec but has a more complicated loss function \cite{hidasi2018recurrent}.
    \item Caser: a CNN-based method \cite{tang2018personalized} which embeds a sequence of recent items in both time and latent spaces forming an `image' before learning local features through horizontal and vertical convolutional filters. In \cite{tang2018personalized}, user embeddings are included in the prediction layer only. On the contrast, in our Personalized Transformer, user embeddings are also introduced in the lowest embedding layer so they can play an important role in self-attention mechanisms as well as in prediction stages.
    \item STAMP: a session-based recommendation algorithm \cite{liu2018stamp} using attention mechanism. \cite{liu2018stamp} only uses fully connected layers with one attention block that is not self-attentive. 
    %We use Tensorflow codes released by the authors. %\footnote{\url{https://github.com/uestcnlp/STAMP}}.
    \item SASRec: a self-attentive sequential recommendation method \cite{kang2018self} motivated by Transformer in NLP \cite{vaswani2017attention}. Unlike our method SSE-PT, SASRec does not incorporate user embedding and therefore is not a personalized method. SASRec paper \cite{kang2018self} also does not utilize SSE \cite{wu2019stochastic} for further regularization: only dropout and weight decay are used.
    \item HGN: hierarchical gating networks method to solve the sequential recommendation problem \cite{ma2019hierarchical}, which incorporates the user embeddings and gating networks for better personalization than the SASRec model.
\end{itemize}

\begin{table}[ht]
\centering
% \vskip -0.1in
\caption{Comparing Different SSE probability for user embeddings for SSE-PT on Movielens1M Dataset. Embedding hidden units of 50 for users and 100 for items, attention blocks of 2, SSE probability of 0.01 for item embeddings, dropout probability of 0.2 and max length of 200 are used.}
% \vskip -0.1in
\label{tb:ssep}
\begin{center}
\begin{small}
\begin{sc}
\resizebox{0.6\textwidth}{!}{
\begin{tabular}{cccc}
\toprule
 User-Side SSE-SE Probability & NDCG@$10$ & Recall@$10$ \\
\midrule
Parameter Sharing & 0.6188 & 0.8294 \\ 
\midrule
 1.0 & 0.6258  &  0.8346  \\
 0.9 & \bfseries{0.6275}  &  0.8321  \\
 0.8 & 0.6244 & 0.8359 \\
 0.6 & 0.6256 & 0.8341 \\
 0.4 & 0.6237 & \bfseries{0.8369} \\
 0.2 & 0.6163 & 0.8281 \\
 0.0 & 0.5908 & 0.8048 \\
 
\bottomrule
\end{tabular}
}
\end{sc}
\end{small}
\end{center}
\end{table}

\begin{table}[ht]
\centering
% \vskip -0.1in
\caption{Comparing Different Sampling Probability, $p_s$, of SSE-PT++ on Movielens1M Dataset. Hyper-parameters the same as Table~\ref{tb:ssep}, except that the max length $T$ allowed is set 100  instead of 200 to show effects of sampling sequences.}
% User-embedding dimension of 50, Item-embedding dimension of 100, attention blocks of 2, Item-side SSE probability of 0.01, User-side SSE probability of 0.01, dropout probability of 0.2 and max length of 100 are used. 
% \vskip -0.1in
\label{tb:sampling}
\begin{center}
\begin{small}
\begin{sc}
\resizebox{0.6 \textwidth}{!}{
\begin{tabular}{cccc}
\toprule
 Sampling Probability & NDCG@$10$ & Recall@$10$ \\
\midrule
SASRec ($T=100$) & 0.5769 &  0.8045\\ 
SSE-PT ($T=100$) & 0.6142 & 0.8212  \\
\midrule
 1.0 & 0.5697   &  0.7977  \\
 0.8 &0.5735  & 0.7801  \\
 0.6 & 0.6062  & 0.8242 \\
 0.4 &  0.6113 &  0.8273 \\
 0.3 &  0.6186 & \bfseries{0.8318} \\
 0.2 & \bfseries{0.6193} & 0.8233 \\
 0.0  & 0.6142 & 0.8212  \\
 
\bottomrule
\end{tabular}
}
\end{sc}
\end{small}
\end{center}
\end{table}

% SSE probability for item embeddings and output is kept as 0.01. We use 50-dimensional user embeddings and 100-dimensional item embeddings. Dropout probability of 0.2 is used for all experiments.

\begin{table}[ht]
\centering
% \vskip -0.1in
\caption{Comparing Different Number of Blocks for SSE-PT while Keeping The Rest Fixed on Movielens1M and Movielens10M Datasets.}
% \vskip -0.1in
\label{tb:block}
\begin{center}
\begin{small}
\begin{sc}
\resizebox{0.6\textwidth}{!}{
\begin{tabular}{ccccc}
\toprule
Datasets & \# of blocks & NDCG@$10$ & Recall@$10$ \\
\midrule
\multirow{7}{*}{Movielens1M}
& SASREC (6 blocks) & 0.5984 & 0.8207 \\ 
\cmidrule(r){2-4} 
& 1 & 0.6162  &  0.8301  \\
& 2 & 0.6280  &  0.8365  \\
& 3 & 0.6293 & 0.8376 \\
& 4 & 0.6270 & \bfseries{0.8401} \\
& 5 & \bfseries{0.6308} & 0.8361 \\
& 6 & 0.6270 & 0.8397 \\
\midrule
\multirow{7}{*}{Movielens10M}
& SASRec (6 blocks) & 0.7531 & 0.9490 \\
\cmidrule(r){2-4} 
& 1 &  0.7454 & 0.9478    \\
& 2 &  0.7512 &  0.9522  \\
& 3 &  0.7543 & 0.9491  \\
& 4 &  0.7608 & 0.9485  \\
& 5 & 0.7619 & 0.9524  \\
& 6 & \bfseries{0.7683} & \bfseries{0.9537}  \\
\bottomrule
\end{tabular}
}
\end{sc}
\end{small}
\end{center}
\end{table}

\begin{table}[t]
\centering
% \vskip -0.1in
\caption{Varying number of negatives $C$ in evaluation on Movielens1M dataset. 
% Exactly same number of item embedding hidden units, SSE probability, dropout rate and weight decay are used
Other hyper-parameters are fixed for a fair comparison. }
% \vskip -0.1in
\label{tb:negative}
\begin{center}
\begin{small}
\begin{sc}
\resizebox{0.6\textwidth}{!}{
\begin{tabular}{lcccc}
\toprule
 METRIC & NDCG@$10$ & Recall@$10$ & $C$ \\
\midrule
Un-Personalized   &  0.3787 &  0.6119   & 500 \\
 Personalized  & \bfseries{0.3846} & \bfseries{0.6171}  & 500 \\
 \midrule
 Un-Personalized   &  0.2791 &  0.4781    & 1000 \\
 Personalized  & \bfseries{0.2860} & \bfseries{0.4929} & 1000 \\
 \midrule
 Un-Personalized   &  0.1939 &  0.3515    & 2000 \\
 Personalized  & \bfseries{0.1993} & \bfseries{0.3667} & 2000 \\
%   \midrule
%  Non-Personalized   &  0.1243 &  0.2400  & 4000 \\
%  Personalized  & \bfseries{0.1269} & \bfseries{0.2406} & 4000 \\
\bottomrule
\end{tabular}
}
\end{sc}
\end{small}
\end{center}
% \vskip -0.15in
\end{table}

\bibliography{ucdavisthesis}

\begin{thebibliography}{100}

\bibitem{abbassi2007recommender}
Zeinab Abbassi and Vahab~S Mirrokni.
\newblock A recommender system based on local random walks and spectral
  methods.
\newblock In {\em Proceedings of the 9th WebKDD and 1st SNA-KDD 2007 workshop
  on Web mining and social network analysis}, pages 102--108. ACM, 2007.

\bibitem{agarwal2006ranking}
Shivani Agarwal.
\newblock Ranking on graph data.
\newblock In {\em Proceedings of the 23rd international conference on Machine
  learning}, pages 25--32. ACM, 2006.

\bibitem{almeida2007scalable}
Paulo~S{\'e}rgio Almeida, Carlos Baquero, Nuno Pregui{\c{c}}a, and David
  Hutchison.
\newblock Scalable bloom filters.
\newblock {\em Information Processing Letters}, 101(6):255--261, 2007.

\bibitem{ba2016layer}
Jimmy~Lei Ba, Jamie~Ryan Kiros, and Geoffrey~E Hinton.
\newblock Layer normalization.
\newblock {\em arXiv preprint arXiv:1607.06450}, 2016.

\bibitem{balakrishnan2012collaborative}
Suhrid Balakrishnan and Sumit Chopra.
\newblock Collaborative ranking.
\newblock In {\em Proceedings of the fifth ACM international conference on Web
  search and data mining}, pages 143--152. ACM, 2012.

\bibitem{bennett2007netflix}
James Bennett, Stan Lanning, et~al.
\newblock The netflix prize.
\newblock In {\em In KDD Cup and Workshop in conjunction with ACM SIGKDD
  International Conference on Knowledge Discovery and Data Mining}, volume
  2007, page~35. New York, NY, USA., 2007.

\bibitem{Bennett07thenetflix}
James Bennett, Stan Lanning, and Netflix Netflix.
\newblock The netflix prize.
\newblock In {\em In KDD Cup and Workshop in conjunction with ACM SIGKDD
  International Conference on Knowledge Discovery and Data Mining}, 2007.

\bibitem{berg2017graph}
Rianne van~den Berg, Thomas~N Kipf, and Max Welling.
\newblock Graph convolutional matrix completion.
\newblock {\em arXiv preprint arXiv:1706.02263}, 2017.

\bibitem{bloom1970space}
Burton~H Bloom.
\newblock Space/time trade-offs in hash coding with allowable errors.
\newblock {\em Communications of the ACM}, 13(7):422--426, 1970.

\bibitem{borthakur2011apache}
Dhruba Borthakur, Jonathan Gray, Joydeep~Sen Sarma, Kannan Muthukkaruppan,
  Nicolas Spiegelberg, Hairong Kuang, Karthik Ranganathan, Dmytro Molkov,
  Aravind Menon, Samuel Rash, et~al.
\newblock Apache hadoop goes realtime at facebook.
\newblock In {\em Proceedings of the 2011 ACM SIGMOD International Conference
  on Management of data}, pages 1071--1080. ACM, 2011.

\bibitem{breese1998empirical}
John~S Breese, David Heckerman, and Carl Kadie.
\newblock Empirical analysis of predictive algorithms for collaborative
  filtering.
\newblock In {\em Proceedings of the Fourteenth conference on Uncertainty in
  artificial intelligence}, pages 43--52. Morgan Kaufmann Publishers Inc.,
  1998.

\bibitem{broder2004network}
Andrei Broder and Michael Mitzenmacher.
\newblock Network applications of bloom filters: A survey.
\newblock {\em Internet mathematics}, 1(4):485--509, 2004.

\bibitem{cai2011graph}
Deng Cai, Xiaofei He, Jiawei Han, and Thomas~S Huang.
\newblock Graph regularized nonnegative matrix factorization for data
  representation.
\newblock {\em IEEE transactions on pattern analysis and machine intelligence},
  33(8):1548--1560, 2011.

\bibitem{cao2015grarep}
Shaosheng Cao, Wei Lu, and Qiongkai Xu.
\newblock Grarep: Learning graph representations with global structural
  information.
\newblock In {\em Proceedings of the 24th ACM international on conference on
  information and knowledge management}, pages 891--900. ACM, 2015.

\bibitem{cao2007learning}
Zhe Cao, Tao Qin, Tie-Yan Liu, Ming-Feng Tsai, and Hang Li.
\newblock Learning to rank: from pairwise approach to listwise approach.
\newblock In {\em Proceedings of the 24th international conference on Machine
  learning}, pages 129--136. ACM, 2007.

\bibitem{chang2008bigtable}
Fay Chang, Jeffrey Dean, Sanjay Ghemawat, Wilson~C Hsieh, Deborah~A Wallach,
  Mike Burrows, Tushar Chandra, Andrew Fikes, and Robert~E Gruber.
\newblock Bigtable: A distributed storage system for structured data.
\newblock {\em ACM Transactions on Computer Systems (TOCS)}, 26(2):4, 2008.

\bibitem{chapelle2010efficient}
Olivier Chapelle and S~Sathiya Keerthi.
\newblock Efficient algorithms for ranking with svms.
\newblock {\em Information Retrieval}, 13(3):201--215, 2010.

\bibitem{chen2018stochastic}
Jianfei Chen, Jun Zhu, and Le~Song.
\newblock Stochastic training of graph convolutional networks with variance
  reduction.
\newblock In {\em International Conference on Machine Learning}, pages
  941--949, 2018.

\bibitem{chiang2015matrix}
Kai-Yang Chiang, Cho-Jui Hsieh, and Inderjit~S Dhillon.
\newblock Matrix completion with noisy side information.
\newblock In {\em Advances in Neural Information Processing Systems}, pages
  3447--3455, 2015.

\bibitem{chung2014empirical}
Junyoung Chung, Caglar Gulcehre, KyungHyun Cho, and Yoshua Bengio.
\newblock Empirical evaluation of gated recurrent neural networks on sequence
  modeling.
\newblock {\em arXiv preprint arXiv:1412.3555}, 2014.

\bibitem{cisse2013robust}
Moustapha~M Cisse, Nicolas Usunier, Thierry Artieres, and Patrick Gallinari.
\newblock Robust bloom filters for large multilabel classification tasks.
\newblock In {\em Advances in Neural Information Processing Systems}, pages
  1851--1859, 2013.

\bibitem{courbariaux2015binaryconnect}
Matthieu Courbariaux, Yoshua Bengio, and Jean-Pierre David.
\newblock Binaryconnect: Training deep neural networks with binary weights
  during propagations.
\newblock In {\em Advances in neural information processing systems}, pages
  3123--3131, 2015.

\bibitem{defferrard2016convolutional}
Micha{\"e}l Defferrard, Xavier Bresson, and Pierre Vandergheynst.
\newblock Convolutional neural networks on graphs with fast localized spectral
  filtering.
\newblock In {\em Advances in Neural Information Processing Systems}, pages
  3844--3852, 2016.

\bibitem{deshpande2004item}
Mukund Deshpande and George Karypis.
\newblock Item-based top-n recommendation algorithms.
\newblock {\em ACM Transactions on Information Systems (TOIS)}, 22(1):143--177,
  2004.

\bibitem{devlin2018bert}
Jacob Devlin, Ming-Wei Chang, Kenton Lee, and Kristina Toutanova.
\newblock Bert: Pre-training of deep bidirectional transformers for language
  understanding.
\newblock {\em arXiv preprint arXiv:1810.04805}, 2018.

\bibitem{duchi2011adaptive}
John Duchi, Elad Hazan, and Yoram Singer.
\newblock Adaptive subgradient methods for online learning and stochastic
  optimization.
\newblock {\em Journal of Machine Learning Research}, 12(Jul):2121--2159, 2011.

\bibitem{PF94a}
P.~M. Fenwick.
\newblock A new data structure for cumulative frequency tables.
\newblock {\em Software: Practice and Experience}, 1994.

\bibitem{freund2003efficient}
Yoav Freund, Raj Iyer, Robert~E Schapire, and Yoram Singer.
\newblock An efficient boosting algorithm for combining preferences.
\newblock {\em Journal of machine learning research}, 4(Nov):933--969, 2003.

\bibitem{gemulla2011large}
Rainer Gemulla, Erik Nijkamp, Peter~J Haas, and Yannis Sismanis.
\newblock Large-scale matrix factorization with distributed stochastic gradient
  descent.
\newblock In {\em Proceedings of the 17th ACM SIGKDD international conference
  on Knowledge discovery and data mining}, pages 69--77. ACM, 2011.

\bibitem{goodfellow2016deep}
Ian Goodfellow, Yoshua Bengio, Aaron Courville, and Yoshua Bengio.
\newblock {\em Deep learning}, volume~1.
\newblock MIT press Cambridge, 2016.

\bibitem{gori2007itemrank}
Marco Gori, Augusto Pucci, V~Roma, and I~Siena.
\newblock Itemrank: A random-walk based scoring algorithm for recommender
  engines.
\newblock In {\em International Joint Conferences on Artificial Intelligence},
  volume~7, pages 2766--2771, 2007.

\bibitem{grover2016node2vec}
Aditya Grover and Jure Leskovec.
\newblock node2vec: Scalable feature learning for networks.
\newblock In {\em Proceedings of the 22nd ACM SIGKDD international conference
  on Knowledge discovery and data mining}, pages 855--864. ACM, 2016.

\bibitem{gunasekar2016preference}
Suriya Gunasekar, Oluwasanmi~O Koyejo, and Joydeep Ghosh.
\newblock Preference completion from partial rankings.
\newblock In {\em Advances in Neural Information Processing Systems}, pages
  1370--1378, 2016.

\bibitem{hacker2009matchin}
Severin Hacker and Luis Von~Ahn.
\newblock Matchin: eliciting user preferences with an online game.
\newblock In {\em Proceedings of the SIGCHI Conference on Human Factors in
  Computing Systems}, pages 1207--1216. ACM, 2009.

\bibitem{hamilton2017inductive}
Will Hamilton, Zhitao Ying, and Jure Leskovec.
\newblock Inductive representation learning on large graphs.
\newblock In {\em Advances in Neural Information Processing Systems}, pages
  1024--1034, 2017.

\bibitem{hamilton2017representation}
William~L Hamilton, Rex Ying, and Jure Leskovec.
\newblock Representation learning on graphs: Methods and applications.
\newblock {\em arXiv preprint arXiv:1709.05584}, 2017.

\bibitem{han2015deep}
Song Han, Huizi Mao, and William~J Dally.
\newblock Deep compression: Compressing deep neural networks with pruning,
  trained quantization and huffman coding.
\newblock {\em arXiv preprint arXiv:1510.00149}, 2015.

\bibitem{harper2016movielens}
F~Maxwell Harper and Joseph~A Konstan.
\newblock The movielens datasets: History and context.
\newblock {\em Acm transactions on interactive intelligent systems (tiis)},
  5(4):19, 2016.

\bibitem{he2016deep}
Kaiming He, Xiangyu Zhang, Shaoqing Ren, and Jian Sun.
\newblock Deep residual learning for image recognition.
\newblock In {\em Proceedings of the IEEE conference on computer vision and
  pattern recognition}, pages 770--778, 2016.

\bibitem{he2017translation}
Ruining He, Wang-Cheng Kang, and Julian McAuley.
\newblock Translation-based recommendation.
\newblock In {\em Proceedings of the Eleventh ACM Conference on Recommender
  Systems}, pages 161--169. ACM, 2017.

\bibitem{he2017neural}
Xiangnan He, Lizi Liao, Hanwang Zhang, Liqiang Nie, Xia Hu, and Tat-Seng Chua.
\newblock Neural collaborative filtering.
\newblock In {\em Proceedings of the 26th International Conference on World
  Wide Web}, pages 173--182. International World Wide Web Conferences Steering
  Committee, 2017.

\bibitem{hidasi2018recurrent}
Bal{\'a}zs Hidasi and Alexandros Karatzoglou.
\newblock Recurrent neural networks with top-k gains for session-based
  recommendations.
\newblock In {\em Proceedings of the 27th ACM International Conference on
  Information and Knowledge Management}, pages 843--852. ACM, 2018.

\bibitem{hidasi2015session}
Bal{\'a}zs Hidasi, Alexandros Karatzoglou, Linas Baltrunas, and Domonkos Tikk.
\newblock Session-based recommendations with recurrent neural networks.
\newblock {\em arXiv preprint arXiv:1511.06939}, 2015.

\bibitem{hill1995recommending}
Will Hill, Larry Stead, Mark Rosenstein, and George Furnas.
\newblock Recommending and evaluating choices in a virtual community of use.
\newblock In {\em Proceedings of the SIGCHI conference on Human factors in
  computing systems}, pages 194--201. ACM Press/Addison-Wesley Publishing Co.,
  1995.

\bibitem{hinton2012improving}
Geoffrey~E Hinton, Nitish Srivastava, Alex Krizhevsky, Ilya Sutskever, and
  Ruslan~R Salakhutdinov.
\newblock Improving neural networks by preventing co-adaptation of feature
  detectors.
\newblock {\em arXiv preprint arXiv:1207.0580}, 2012.

\bibitem{hoerl1970ridge}
Arthur~E Hoerl and Robert~W Kennard.
\newblock Ridge regression: Biased estimation for nonorthogonal problems.
\newblock {\em Technometrics}, 12(1):55--67, 1970.

\bibitem{howard2018universal}
Jeremy Howard and Sebastian Ruder.
\newblock Universal language model fine-tuning for text classification.
\newblock {\em arXiv preprint arXiv:1801.06146}, 2018.

\bibitem{hsieh2015pu}
Cho-Jui Hsieh, Nagarajan Natarajan, and Inderjit Dhillon.
\newblock Pu learning for matrix completion.
\newblock In {\em International Conference on Machine Learning}, pages
  2445--2453, 2015.

\bibitem{hu2008collaborative}
Yifan Hu, Yehuda Koren, and Chris Volinsky.
\newblock Collaborative filtering for implicit feedback datasets.
\newblock In {\em Data Mining, 2008. ICDM'08. Eighth IEEE International
  Conference on}, pages 263--272. Ieee, 2008.

\bibitem{huang2015listwise}
Shanshan Huang, Shuaiqiang Wang, Tie-Yan Liu, Jun Ma, Zhumin Chen, and Jari
  Veijalainen.
\newblock Listwise collaborative filtering.
\newblock In {\em Proceedings of the 38th International ACM SIGIR Conference on
  Research and Development in Information Retrieval}, pages 343--352. ACM,
  2015.

\bibitem{ioffe2015batch}
Sergey Ioffe and Christian Szegedy.
\newblock Batch normalization: Accelerating deep network training by reducing
  internal covariate shift.
\newblock {\em arXiv preprint arXiv:1502.03167}, 2015.

\bibitem{jamali2009trustwalker}
Mohsen Jamali and Martin Ester.
\newblock Trustwalker: a random walk model for combining trust-based and
  item-based recommendation.
\newblock In {\em Proceedings of the 15th ACM SIGKDD international conference
  on Knowledge discovery and data mining}, pages 397--406. ACM, 2009.

\bibitem{TJ02a}
T.~Joachims.
\newblock Optimizing search engines using clickthrough data.
\newblock In {\em ACM SIGKDD}, 2002.

\bibitem{joachims2002optimizing}
Thorsten Joachims.
\newblock Optimizing search engines using clickthrough data.
\newblock In {\em Proceedings of the eighth ACM SIGKDD international conference
  on Knowledge discovery and data mining}, pages 133--142. ACM, 2002.

\bibitem{kabbur2013fism}
Santosh Kabbur, Xia Ning, and George Karypis.
\newblock Fism: factored item similarity models for top-n recommender systems.
\newblock In {\em Proceedings of the 19th ACM SIGKDD international conference
  on Knowledge discovery and data mining}, pages 659--667. ACM, 2013.

\bibitem{kang2018self}
Wang-Cheng Kang and Julian McAuley.
\newblock Self-attentive sequential recommendation.
\newblock {\em arXiv preprint arXiv:1808.09781}, 2018.

\bibitem{kingma2014adam}
Diederik~P Kingma and Jimmy Ba.
\newblock Adam: A method for stochastic optimization.
\newblock {\em arXiv preprint arXiv:1412.6980}, 2014.

\bibitem{kipf2016semi}
Thomas~N Kipf and Max Welling.
\newblock Semi-supervised classification with graph convolutional networks.
\newblock {\em arXiv preprint arXiv:1609.02907}, 2016.

\bibitem{koren2008factorization}
Yehuda Koren.
\newblock Factorization meets the neighborhood: a multifaceted collaborative
  filtering model.
\newblock In {\em Proceedings of the 14th ACM SIGKDD international conference
  on Knowledge discovery and data mining}, pages 426--434. ACM, 2008.

\bibitem{koren2009collaborative}
Yehuda Koren.
\newblock Collaborative filtering with temporal dynamics.
\newblock In {\em Proceedings of the 15th ACM SIGKDD international conference
  on Knowledge discovery and data mining}, pages 447--456. ACM, 2009.

\bibitem{koren2009matrix}
Yehuda Koren, Robert Bell, and Chris Volinsky.
\newblock Matrix factorization techniques for recommender systems.
\newblock {\em Computer}, (8):30--37, 2009.

\bibitem{koyejo2013retargeted}
Oluwasanmi Koyejo, Sreangsu Acharyya, and Joydeep Ghosh.
\newblock Retargeted matrix factorization for collaborative filtering.
\newblock In {\em Proceedings of the 7th ACM conference on Recommender
  systems}, pages 49--56. ACM, 2013.

\bibitem{krizhevsky2012imagenet}
Alex Krizhevsky, Ilya Sutskever, and Geoffrey~E Hinton.
\newblock Imagenet classification with deep convolutional neural networks.
\newblock In {\em Advances in neural information processing systems}, pages
  1097--1105, 2012.

\bibitem{krogh1992simple}
Anders Krogh and John~A Hertz.
\newblock A simple weight decay can improve generalization.
\newblock In {\em Advances in neural information processing systems}, pages
  950--957, 1992.

\bibitem{lan2009generalization}
Yanyan Lan, Tie-Yan Liu, Zhiming Ma, and Hang Li.
\newblock Generalization analysis of listwise learning-to-rank algorithms.
\newblock In {\em Proceedings of the 26th Annual International Conference on
  Machine Learning}, pages 577--584. ACM, 2009.

\bibitem{lehmann2015dbpedia}
Jens Lehmann, Robert Isele, Max Jakob, Anja Jentzsch, Dimitris Kontokostas,
  Pablo~N Mendes, Sebastian Hellmann, Mohamed Morsey, Patrick Van~Kleef,
  S{\"o}ren Auer, et~al.
\newblock Dbpedia--a large-scale, multilingual knowledge base extracted from
  wikipedia.
\newblock {\em Semantic Web}, 6(2):167--195, 2015.

\bibitem{liang2016factorization}
Dawen Liang, Jaan Altosaar, Laurent Charlin, and David~M Blei.
\newblock Factorization meets the item embedding: Regularizing matrix
  factorization with item co-occurrence.
\newblock In {\em Proceedings of the 10th ACM conference on recommender
  systems}, pages 59--66. ACM, 2016.

\bibitem{liu2018stamp}
Qiao Liu, Yifu Zeng, Refuoe Mokhosi, and Haibin Zhang.
\newblock Stamp: short-term attention/memory priority model for session-based
  recommendation.
\newblock In {\em Proceedings of the 24th ACM SIGKDD International Conference
  on Knowledge Discovery \& Data Mining}, pages 1831--1839. ACM, 2018.

\bibitem{ma2019hierarchical}
Chen Ma, Peng Kang, and Xue Liu.
\newblock Hierarchical gating networks for sequential recommendation.
\newblock {\em arXiv preprint arXiv:1906.09217}, 2019.

\bibitem{ma2011recommender}
Hao Ma, Dengyong Zhou, Chao Liu, Michael~R Lyu, and Irwin King.
\newblock Recommender systems with social regularization.
\newblock In {\em Proceedings of the fourth ACM international conference on Web
  search and data mining}, pages 287--296. ACM, 2011.

\bibitem{mikolov2013distributed}
Tomas Mikolov, Ilya Sutskever, Kai Chen, Greg~S Corrado, and Jeff Dean.
\newblock Distributed representations of words and phrases and their
  compositionality.
\newblock In {\em Advances in neural information processing systems}, pages
  3111--3119, 2013.

\bibitem{miller1995wordnet}
George~A Miller.
\newblock Wordnet: a lexical database for english.
\newblock {\em Communications of the ACM}, 38(11):39--41, 1995.

\bibitem{mnih2008probabilistic}
Andriy Mnih and Ruslan~R Salakhutdinov.
\newblock Probabilistic matrix factorization.
\newblock In {\em Advances in neural information processing systems}, pages
  1257--1264, 2008.

\bibitem{monti2017geometric}
Federico Monti, Michael Bronstein, and Xavier Bresson.
\newblock Geometric matrix completion with recurrent multi-graph neural
  networks.
\newblock In {\em Advances in Neural Information Processing Systems}, pages
  3697--3707, 2017.

\bibitem{nowlan1992simplifying}
Steven~J Nowlan and Geoffrey~E Hinton.
\newblock Simplifying neural networks by soft weight-sharing.
\newblock {\em Neural computation}, 4(4):473--493, 1992.

\bibitem{page1999pagerank}
Lawrence Page, Sergey Brin, Rajeev Motwani, and Terry Winograd.
\newblock The pagerank citation ranking: Bringing order to the web.
\newblock Technical report, Stanford InfoLab, 1999.

\bibitem{pahikkala2009efficient}
Tapio Pahikkala, Evgeni Tsivtsivadze, Antti Airola, Jouni J{\"a}rvinen, and
  Jorma Boberg.
\newblock An efficient algorithm for learning to rank from preference graphs.
\newblock {\em Machine Learning}, 75(1):129--165, 2009.

\bibitem{pan2008one}
Rong Pan, Yunhong Zhou, Bin Cao, Nathan~N Liu, Rajan Lukose, Martin Scholz, and
  Qiang Yang.
\newblock One-class collaborative filtering.
\newblock In {\em Data Mining, 2008. ICDM'08. Eighth IEEE International
  Conference on}, pages 502--511. IEEE, 2008.

\bibitem{pan2013gbpr}
Weike Pan and Li~Chen.
\newblock Gbpr: Group preference based bayesian personalized ranking for
  one-class collaborative filtering.
\newblock In {\em International Joint Conferences on Artificial Intelligence},
  volume~13, pages 2691--2697, 2013.

\bibitem{pan2017transfer}
Weike Pan, Qiang Yang, Yuchao Duan, Ben Tan, and Zhong Ming.
\newblock Transfer learning for behavior ranking.
\newblock {\em ACM Transactions on Intelligent Systems and Technology (TIST)},
  8(5):65, 2017.

\bibitem{park2015preference}
Dohyung Park, Joe Neeman, Jin Zhang, Sujay Sanghavi, and Inderjit Dhillon.
\newblock Preference completion: Large-scale collaborative ranking from
  pairwise comparisons.
\newblock In {\em International Conference on Machine Learning}, pages
  1907--1916, 2015.

\bibitem{pascanu2013difficulty}
Razvan Pascanu, Tomas Mikolov, and Yoshua Bengio.
\newblock On the difficulty of training recurrent neural networks.
\newblock In {\em International Conference on Machine Learning}, pages
  1310--1318, 2013.

\bibitem{pennington2014glove}
Jeffrey Pennington, Richard Socher, and Christopher Manning.
\newblock Glove: Global vectors for word representation.
\newblock In {\em Proceedings of the 2014 conference on empirical methods in
  natural language processing (EMNLP)}, pages 1532--1543, 2014.

\bibitem{perozzi2014deepwalk}
Bryan Perozzi, Rami Al-Rfou, and Steven Skiena.
\newblock Deepwalk: Online learning of social representations.
\newblock In {\em Proceedings of the 20th ACM SIGKDD international conference
  on Knowledge discovery and data mining}, pages 701--710. ACM, 2014.

\bibitem{post-2018-call}
Matt Post.
\newblock A call for clarity in reporting {BLEU} scores.
\newblock In {\em Proceedings of the Third Conference on Machine Translation:
  Research Papers}, pages 186--191, Belgium, Brussels, October 2018.
  Association for Computational Linguistics.

\bibitem{pozo2016item}
Manuel Pozo, Raja Chiky, Farid Meziane, and Elisabeth M{\'e}tais.
\newblock An item/user representation for recommender systems based on bloom
  filters.
\newblock In {\em 2016 IEEE Tenth International Conference on Research
  Challenges in Information Science (RCIS)}, pages 1--12. IEEE, 2016.

\bibitem{radford2018improving}
Alec Radford, Karthik Narasimhan, Tim Salimans, and Ilya Sutskever.
\newblock Improving language understanding by generative pre-training.
\newblock {\em URL https://s3-us-west-2. amazonaws.
  com/openai-assets/research-covers/languageunsupervised/language understanding
  paper. pdf}, 2018.

\bibitem{radford2019language}
Alec Radford, Jeffrey Wu, Rewon Child, David Luan, Dario Amodei, and Ilya
  Sutskever.
\newblock Language models are unsupervised multitask learners.
\newblock {\em URL https://openai. com/blog/better-language-models}, 2019.

\bibitem{rao2015collaborative}
Nikhil Rao, Hsiang-Fu Yu, Pradeep~K Ravikumar, and Inderjit~S Dhillon.
\newblock Collaborative filtering with graph information: Consistency and
  scalable methods.
\newblock In {\em Advances in neural information processing systems}, pages
  2107--2115, 2015.

\bibitem{rendle2010factorization}
Steffen Rendle.
\newblock Factorization machines.
\newblock In {\em Data Mining (ICDM), 2010 IEEE 10th International Conference
  on}, pages 995--1000. IEEE, 2010.

\bibitem{rendle2009bpr}
Steffen Rendle, Christoph Freudenthaler, Zeno Gantner, and Lars Schmidt-Thieme.
\newblock Bpr: Bayesian personalized ranking from implicit feedback.
\newblock In {\em Proceedings of the twenty-fifth conference on uncertainty in
  artificial intelligence}, pages 452--461. AUAI Press, 2009.

\bibitem{rendle2010factorizing}
Steffen Rendle, Christoph Freudenthaler, and Lars Schmidt-Thieme.
\newblock Factorizing personalized markov chains for next-basket
  recommendation.
\newblock In {\em Proceedings of the 19th international conference on World
  wide web}, pages 811--820. ACM, 2010.

\bibitem{salakhutdinov2007restricted}
Ruslan Salakhutdinov, Andriy Mnih, and Geoffrey Hinton.
\newblock Restricted boltzmann machines for collaborative filtering.
\newblock In {\em Proceedings of the 24th international conference on Machine
  learning}, pages 791--798. ACM, 2007.

\bibitem{sarwar2001item}
Badrul~Munir Sarwar, George Karypis, Joseph~A Konstan, John Riedl, et~al.
\newblock Item-based collaborative filtering recommendation algorithms.
\newblock {\em World Wide Web Conference}, 1:285--295, 2001.

\bibitem{schafer2007collaborative}
J~Ben Schafer, Dan Frankowski, Jon Herlocker, and Shilad Sen.
\newblock Collaborative filtering recommender systems.
\newblock In {\em The adaptive web}, pages 291--324. Springer, 2007.

\bibitem{serra2017getting}
Joan Serr{\`a} and Alexandros Karatzoglou.
\newblock Getting deep recommenders fit: Bloom embeddings for sparse binary
  input/output networks.
\newblock In {\em Proceedings of the Eleventh ACM Conference on Recommender
  Systems}, pages 279--287. ACM, 2017.

\bibitem{shah2009gossip}
Devavrat Shah et~al.
\newblock Gossip algorithms.
\newblock {\em Foundations and Trends{\textregistered} in Networking},
  3(1):1--125, 2009.

\bibitem{shani2008mining}
Guy Shani, Max Chickering, and Christopher Meek.
\newblock Mining recommendations from the web.
\newblock In {\em Proceedings of the 2008 ACM conference on Recommender
  systems}, pages 35--42. ACM, 2008.

\bibitem{shi2009hash}
Qinfeng Shi, James Petterson, Gideon Dror, John Langford, Alex Smola, and SVN
  Vishwanathan.
\newblock Hash kernels for structured data.
\newblock {\em Journal of Machine Learning Research}, 10(Nov):2615--2637, 2009.

\bibitem{shi2010list}
Yue Shi, Martha Larson, and Alan Hanjalic.
\newblock List-wise learning to rank with matrix factorization for
  collaborative filtering.
\newblock In {\em Proceedings of the fourth ACM conference on Recommender
  systems}, pages 269--272. ACM, 2010.

\bibitem{shinde2016user}
Anita Shinde and Ila Savant.
\newblock User based collaborative filtering using bloom filter with mapreduce.
\newblock In {\em Proceedings of International Conference on ICT for
  Sustainable Development}, pages 115--123. Springer, 2016.

\bibitem{si2016goal}
Si~Si, Kai-Yang Chiang, Cho-Jui Hsieh, Nikhil Rao, and Inderjit~S Dhillon.
\newblock Goal-directed inductive matrix completion.
\newblock In {\em ACM SIGKDD International Conference on Knowledge Discovery
  and Data Mining}, 2016.

\bibitem{singh2008relational}
Ajit~P Singh and Geoffrey~J Gordon.
\newblock Relational learning via collective matrix factorization.
\newblock In {\em Proceedings of the 14th ACM SIGKDD international conference
  on Knowledge discovery and data mining}, pages 650--658. ACM, 2008.

\bibitem{srebro2005maximum}
Nathan Srebro, Jason Rennie, and Tommi~S Jaakkola.
\newblock Maximum-margin matrix factorization.
\newblock In {\em Advances in neural information processing systems}, pages
  1329--1336, 2005.

\bibitem{srebro2004maximum}
Nathan Srebro, Jason~DM Rennie, and Tommi~S Jaakkola.
\newblock Maximum-margin matrix factorization.
\newblock In {\em Advances in Neural Information Processing Systems},
  volume~17, pages 1329--1336, 2004.

\bibitem{srivastava2014dropout}
Nitish Srivastava, Geoffrey Hinton, Alex Krizhevsky, Ilya Sutskever, and Ruslan
  Salakhutdinov.
\newblock Dropout: a simple way to prevent neural networks from overfitting.
\newblock {\em The Journal of Machine Learning Research}, 15(1):1929--1958,
  2014.

\bibitem{szegedy2016rethinking}
Christian Szegedy, Vincent Vanhoucke, Sergey Ioffe, Jon Shlens, and Zbigniew
  Wojna.
\newblock Rethinking the inception architecture for computer vision.
\newblock In {\em Proceedings of the IEEE conference on computer vision and
  pattern recognition}, pages 2818--2826, 2016.

\bibitem{tang2018personalized}
Jiaxi Tang and Ke~Wang.
\newblock Personalized top-n sequential recommendation via convolutional
  sequence embedding.
\newblock In {\em Proceedings of the Eleventh ACM International Conference on
  Web Search and Data Mining}, pages 565--573. ACM, 2018.

\bibitem{tibshirani1996regression}
Robert Tibshirani.
\newblock Regression shrinkage and selection via the lasso.
\newblock {\em Journal of the Royal Statistical Society. Series B
  (Methodological)}, pages 267--288, 1996.

\bibitem{vaswani2017attention}
Ashish Vaswani, Noam Shazeer, Niki Parmar, Jakob Uszkoreit, Llion Jones,
  Aidan~N Gomez, {\L}ukasz Kaiser, and Illia Polosukhin.
\newblock Attention is all you need.
\newblock In {\em Advances in Neural Information Processing Systems}, pages
  5998--6008, 2017.

\bibitem{volkovs2012collaborative}
Maksims Volkovs and Richard~S Zemel.
\newblock Collaborative ranking with 17 parameters.
\newblock In {\em Advances in Neural Information Processing Systems}, pages
  2294--2302, 2012.

\bibitem{wang2017irgan}
Jun Wang, Lantao Yu, Weinan Zhang, Yu~Gong, Yinghui Xu, Benyou Wang, Peng
  Zhang, and Dell Zhang.
\newblock Irgan: A minimax game for unifying generative and discriminative
  information retrieval models.
\newblock In {\em Proceedings of the 40th International ACM SIGIR conference on
  Research and Development in Information Retrieval}, pages 515--524. ACM,
  2017.

\bibitem{weimer2008cofi}
Markus Weimer, Alexandros Karatzoglou, Quoc~V Le, and Alex~J Smola.
\newblock Cofi rank-maximum margin matrix factorization for collaborative
  ranking.
\newblock In {\em Advances in neural information processing systems}, pages
  1593--1600, 2008.

\bibitem{weimer2007maximum}
Markus Weimer, Alexandros Karatzoglou, Quoc~Viet Le, and Alex Smola.
\newblock Maximum margin matrix factorization for collaborative ranking.
\newblock {\em Advances in neural information processing systems}, pages 1--8,
  2007.

\bibitem{wu2017large}
Liwei Wu, Cho-Jui Hsieh, and James Sharpnack.
\newblock Large-scale collaborative ranking in near-linear time.
\newblock In {\em Proceedings of the 23rd ACM SIGKDD International Conference
  on Knowledge Discovery and Data Mining}, pages 515--524. ACM, 2017.

\bibitem{wu2018sql}
Liwei Wu, Cho-Jui Hsieh, and James Sharpnack.
\newblock Sql-rank: A listwise approach to collaborative ranking.
\newblock In {\em Proceedings of Machine Learning Research (35th International
  Conference on Machine Learning)}, volume~80, 2018.

\bibitem{wu2019stochastic}
Liwei Wu, Shuqing Li, Cho-Jui Hsieh, and James Sharpnack.
\newblock Stochastic shared embeddings: Data-driven regularization of embedding
  layers.
\newblock {\em arXiv preprint arXiv:1905.10630}, 2019.

\bibitem{wu2019temporal}
Liwei Wu, Shuqing Li, Cho-Jui Hsieh, and James Sharpnack.
\newblock Temporal collaborative ranking via personalized transformer.
\newblock {\em arXiv preprint arXiv:1908.05435}, 2019.

\bibitem{wu2019graph}
Liwei Wu, Hsiang-Fu Yu, Nikhil Rao, James Sharpnack, and Cho-Jui Hsieh.
\newblock Graph dna: Deep neighborhood aware graph encoding for collaborative
  filtering.
\newblock {\em arXiv preprint arXiv:1905.12217}, 2019.

\bibitem{wu2016collaborative}
Yao Wu, Christopher DuBois, Alice~X Zheng, and Martin Ester.
\newblock Collaborative denoising auto-encoders for top-n recommender systems.
\newblock In {\em Proceedings of the Ninth ACM International Conference on Web
  Search and Data Mining}, pages 153--162. ACM, 2016.

\bibitem{xia2008listwise}
Fen Xia, Tie-Yan Liu, Jue Wang, Wensheng Zhang, and Hang Li.
\newblock Listwise approach to learning to rank: theory and algorithm.
\newblock In {\em Proceedings of the 25th international conference on Machine
  learning}, pages 1192--1199. ACM, 2008.

\bibitem{xie2015edge}
Wenlei Xie, David Bindel, Alan Demers, and Johannes Gehrke.
\newblock Edge-weighted personalized pagerank: breaking a decade-old
  performance barrier.
\newblock In {\em Proceedings of the 21th ACM SIGKDD International Conference
  on Knowledge Discovery and Data Mining}, pages 1325--1334. ACM, 2015.

\bibitem{ying2018graph}
Rex Ying, Ruining He, Kaifeng Chen, Pong Eksombatchai, William~L Hamilton, and
  Jure Leskovec.
\newblock Graph convolutional neural networks for web-scale recommender
  systems.
\newblock In {\em Proceedings of the 24th ACM SIGKDD International Conference
  on Knowledge Discovery \& Data Mining}, pages 974--983. ACM, 2018.

\bibitem{yu2017selection}
Hsiang-Fu Yu, Mikhail Bilenko, and Chih-Jen Lin.
\newblock Selection of negative samples for one-class matrix factorization.
\newblock In {\em Proceedings of the 2017 SIAM International Conference on Data
  Mining}, pages 363--371. SIAM, 2017.

\bibitem{yu2015scalable}
Hsiang-Fu Yu, Cho-Jui Hsieh, Hyokun Yun, SVN Vishwanathan, and Inderjit~S
  Dhillon.
\newblock A scalable asynchronous distributed algorithm for topic modeling.
\newblock In {\em Proceedings of the 24th International Conference on World
  Wide Web}, pages 1340--1350. ACM, 2015.

\bibitem{yu2017unified}
Hsiang-Fu Yu, Hsin-Yuan Huang, Inderjit~S Dhillon, and Chih-Jen Lin.
\newblock A unified algorithm for one-class structured matrix factorization
  with side information.
\newblock In {\em AAAI}, pages 2845--2851, 2017.

\bibitem{Zafarani+Liu:2009}
R.~Zafarani and H.~Liu.
\newblock Social computing data repository at {ASU}, 2009.

\bibitem{zhou2012kernelized}
Tinghui Zhou, Hanhuai Shan, Arindam Banerjee, and Guillermo Sapiro.
\newblock Kernelized probabilistic matrix factorization: Exploiting graphs and
  side information.
\newblock In {\em Proceedings of the 2012 SIAM international Conference on Data
  mining}, pages 403--414. SIAM, 2012.

\end{thebibliography}

% The UMI abstract uses square brackets!
\UMIabstract[In this dissertation, we cover some recent advances in collaborative filtering and ranking. 
In chapter 1, we give a brief introduction of the history and the current landscape of collaborative filtering and ranking; chapter 2 we first talk about pointwise collaborative filtering problem with graph information, and how our proposed new method can encode very deep graph information which helps four existing graph collaborative filtering algorithms; chapter 3 is on the pairwise approach for collaborative ranking and how we speed up the algorithm to near-linear time complexity; chapter 4 is on the new listwise approach for collaborative ranking and how the listwise approach is a better choice of loss for both explicit and implicit feedback over pointwise and pairwise loss; chapter 5 is about the new regularization technique Stochastic Shared Embeddings (SSE) we proposed for embedding layers and how it is both  theoretically sound and empirically effectively for 6 different tasks across recommendation and natural language processing; chapter 6 is how we introduce personalization for the state-of-the-art sequential recommendation model with the help of SSE, which plays an important role in preventing our personalized model from overfitting to the training data; chapter 7, we summarize what we have achieved so far and predict what the future directions can be; chapter 8 is the appendix to all the chapters.]

\end{document}